\newtheorem{theorem}{Theorem}
\newtheorem{prop}{Proposition}
\newtheorem{corollary}{Corollary}
\newtheorem{lemma}{Lemma}
\newtheorem{assumption}{Assumption}
\newtheorem{defn}{Definition}
\newtheorem{remark}{Remark}
\newcommand{\cp}{C_{P}}
\newcommand{\smin}{\lambda_{\min}}
\newcommand{\smax}{\lambda_{\max}}
\newcommand{\bE}{\mathbb{E}}
\newcommand{\cE}{\mathcal{E}}
\newcommand{\R}{\mathbb{R}}
\newcommand{\smo}{\mathcal{O}} 
\newcommand{\generator}{\mathcal{L}}
\newcommand{\gmle}{\Gamma_{\mbox{MLE}}}
\newcommand{\Cov}{\mathrm{Cov}}
\newcommand{\Var}{\mathrm{Var}}
\newcommand{\E}{\bE}      % Expectation
\newcommand{\KL}{\mathop{\bf KL\/}}
\newcommand{\bmax}{\beta_{\max}}
\DeclareMathOperator{\Tr}{Tr}
\DeclareMathOperator{\kl}{KL}
\DeclareMathOperator{\poly}{poly}
\def\shownotes{0}  \ifnum\shownotes=1
\newcommand{\authnote}[2]{[#1:#2]}
\newcommand{\authnote}[2]{}
\newcommand{\anote}[1]{{\color{blue}\authnote{Andrej:}{#1}}}
\newcommand{\fnote}[1]{{\color{red}\authnote{Yilong:}{#1}}}
\definecolor{RoyalBlue}{RGB}{0,100,170}
\definecolor{peach}{rgb}{1, 0.56, 0.56}
\definecolor{midgray}{RGB}{150,150,150}
\definecolor{EasternBlue}{RGB}{37,150,190}
\definecolor{sand}{RGB}{250,150,120}
\definecolor{grass}{RGB}{120, 190, 50}
\definecolor{sky}{RGB}{50,150,250}
\definecolor{Orange}{RGB}{250,150,50}
\definecolor{Cerulean}{RGB}{80,150,220}
\definecolor{Emerald}{RGB}{62,156,94}
\definecolor{Rouge}{RGB}{250,95,95}
\definecolor{ColorDef}{RGB}{80, 180, 150}
\definecolor{RevisionRed}{RGB}{240,35,35}
\definecolor{myGold}{RGB}{231,141,20}
\definecolor{myBlue}{rgb}{0.19,0.41,.65}
\definecolor{myPurple}{RGB}{175,0,124}
\definecolor{Gred}{RGB}{219, 50, 54}
\definecolor{Ggreen}{RGB}{60, 186, 84}
\definecolor{Gblue}{RGB}{72, 133, 237}
\definecolor{Gyellow}{RGB}{247, 178, 16}
\providecommand{\tdotoggle}{1}
\newcommand{\mytodo}[1]{\ifnum\tdotoggle=1{#1}\fi}
\newcommand{\unsure}[1]{\mytodo{\todo[linecolor=myGold,backgroundcolor=myGold!25,bordercolor=myGold]{#1}}}
\newcommand{\tableoftodos}{\ifnum\tdotoggle=1 \listoftodos[Comments/To Do's] \fi}
\newcommand*\circled[1]{\tikz[baseline=(char.base)]{
    \node[shape=circle,draw,inner sep=1pt] (char) {#1};}}% colors
\title{Fit Like You Sample: \\ Sample-Efficient Generalized Score Matching\\ from Fast Mixing Diffusions}
\newcommand{\oneauthor}[2]{\begin{tabular}{c}{#1} \\ \small{#2}\end{tabular}}
\author{
\begin{tabular}{ccc}
  \oneauthor{Yilong Qin\thanks{\texttt{yilongq@cs.cmu.edu}.}}{Carnegie Mellon University}
  &
  \oneauthor{Andrej Risteski\thanks{\texttt{aristesk@andrew.cmu.edu}. This work was in part supported by NSF awards IIS-2211907, CCF-2238523, and Amazon Research Award.}}{Carnegie Mellon University}
\end{tabular}
}
\begin{document}
\doparttoc % Tell to minitoc to generate a toc for the parts
\faketableofcontents % Run a fake tableofcontents command for the partocis

\maketitle
\begin{abstract}
Score matching is an approach to learning probability distributions parametrized up to a constant of proportionality (e.g. Energy-Based Models). The idea is to fit the score of the distribution (i.e. $\nabla_x \log p(x)$), rather than the likelihood, thus avoiding the need to evaluate the constant of proportionality. While there's a clear algorithmic benefit, the statistical cost can be steep: recent work by \cite{koehler2022statistical} showed that for distributions that have poor isoperimetric properties (a large Poincar\'e or log-Sobolev constant), score matching is substantially statistically less efficient than maximum likelihood. However, many natural realistic distributions, e.g. multimodal distributions as simple as a mixture of two Gaussians in one dimension---have a poor Poincar\'e constant. 

In this paper, we show a close connection between the mixing time of a \emph{broad class of} Markov processes with generator $\generator$ and stationary distribution $p$, and an appropriately chosen \emph{generalized score matching loss} that tries to fit $\frac{\smo p}{p}$. In the special case of $\smo = \nabla_x$, and $\generator$ being the generator of Langevin diffusion, this generalizes and recovers the results from \cite{koehler2022statistical}. This allows us to adapt techniques to speed up Markov chains to construct better score-matching losses. In particular, ``preconditioning'' the diffusion can be translated to an appropriate ``preconditioning'' of the score loss. Lifting the chain by adding a temperature like in simulated tempering can be shown to result in a Gaussian-convolution annealed score matching loss, similar to \cite{song2019generative}.  
%If $\generator$ corresponds to a Markov process corresponding to a continuous version of simulated tempering, we show the corresponding generalized score matching loss is a Gaussian-convolution annealed score matching loss, akin to the one proposed in \cite{song2019generative}. 
Moreover, we show that if the distribution being learned is a finite mixture of Gaussians in $d$ dimensions with a shared covariance, the sample complexity of annealed score matching is polynomial in the ambient dimension, the diameter of the means, and the smallest and largest eigenvalues of the covariance---obviating the Poincar\'e constant-based lower bounds of the basic score matching loss shown in \cite{koehler2022statistical}. %This is the first result characterizing the benefits of annealing for score matching---a crucial component in more sophisticated score-based approaches like \citep{song2019generative, song2020score}.

\end{abstract}
\section{Introduction}

Energy-based models (EBMs) are parametric families of probability distributions parametrized up to a constant of proportionality, namely $p_{\theta}(x) \propto \exp(E_{\theta}(x))$ for some energy function $E_{\theta}(x)$. Fitting $\theta$ from data by using the standard approach of maximizing the likelihood of the training data with a gradient-based method requires evaluating $\nabla_{\theta} \log Z_{\theta} = \mathbb{E}_{p_{\theta}} [\nabla_{\theta} E_{\theta} (x)]$ --- which cannot be done in closed form, and instead Markov Chain Monte Carlo methods are used.  

Score matching \citep{hyvarinen2005estimation} obviates the need to estimate a partition function, by instead fitting the score of the distribution $\nabla_x \log p(x)$. While there is algorithmic gain, the statistical cost can be substantial. In recent work, \cite{koehler2022statistical} show that score matching is statistically much less efficient (i.e. the estimation error, given the same number of samples is much bigger) than maximum likelihood when the distribution being estimated has poor isoperimetric properties (i.e. a large Poincar\'e constant). However, even very simple multimodal distributions like a mixture of two Gaussians with far away means---have a very large Poincar\'e constant. As many distributions of interest (e.g. images) are multimodal in nature, the score matching estimator is likely to be statistically untenable. 

The seminal paper by \cite{song2019generative} proposes a way to deal with multimodality and manifold structure in the data by annealing: namely, estimating the scores of convolutions of the data distribution with different levels of Gaussian noise. The intuitive explanation they propose is that the distribution smoothed with more Gaussian noise is easier to estimate (as there are no parts of the distribution that have low coverage by the training data), which should help estimate the score at lower levels of Gaussian noise. However, making this either quantitative or formal seems very challenging. Moreover, \cite{song2019generative} propose annealing as a fix to another issue: using the score to sample from the distribution using Langevin dynamics is also problematic, as Langevin mixes slowly in the presence of multimodality and low-dimensional manifold structure. 

In this paper, we show that there is a deep connection between the \emph{mixing time} of broad classes of continuous, time-homogeneous Markov processes with stationary distribution $p$ and generator $\generator$, and the \emph{statistical efficiency} of an appropriately chosen generalized score matching loss \citep{lyu2012interpretation} that tries to match $\frac{\smo p}{p}$. In the case that $\generator$ is the generator of Langevin diffusion, and $\smo = \nabla_x$, we recover the results of \cite{koehler2022statistical}. This ``dictionary'' allows us to design score losses with better statistical behavior, by adapting techniques for speeding up Markov chain convergence --- e.g. preconditioning a diffusion and lifting the chain by introducing additional variables. 
% Moreover, we show that a Markov process corresponding to a continuous version of simulated tempering \citep{marinari1992simulated} gives rise to a corresponding generalized score matching loss closely related to the annealed score matching loss in \cite{song2019generative} with a particular choice of weighting for the different levels of noise. We also show that when the stationary distribution is a mixture of Gaussians with a shared covariance, the simulated tempering chain mixes in time polynomial in the ambient dimension, the norm of the means and the smallest and largest eigenvalue of the covariance---obviating the poor statistical complexity of basic score matching shown in \cite{koehler2022statistical}. This is the first result formally showing the statistical benefits of annealing for score matching.

In summary, our contributions are as follows: 
\begin{enumerate}[leftmargin=*]
\vspace{-0.2cm}
\item {\bf A general framework} for designing generalized score matching losses with good sample complexity from fast-mixing diffusions. Precisely, for a broad class of diffusions with generator $\generator$ and Poincar\'e constant $\cp$, we can choose a linear operator $\smo$, such that the generalized score matching loss $\frac{1}{2} \E_p \left \Vert \frac{\smo p}{p} - \frac{\smo p_{\theta}}{p_{\theta}} \right \Vert ^2 _2$ has statistical complexity that is a factor $\cp^2$ worse than that of maximum likelihood. (Recall, $\cp$ characterizes the mixing time of the Markov process with generator $\generator$ in chi-squared distance.) In particular, for diffusions that look like ``preconditioned'' Langevin, this results in ``appropriately preconditioned'' score loss.

%Compared to \cite{koehler2022statistical}, our results hold for arbitrary parametric families of distributions, and not only exponential families. 
\item We analyze a {\bf lifted} diffusion, which introduces a new variable for temperature and provably show {\bf statistical benefits of annealing for score matching}. Precisely, we exhibit continuously-tempered Langevin, a Markov process which mixes in time $\mbox{poly}(D, d, 1/\smin, \smax)$ for finite mixtures of Gaussians in ambient dimension $d$ with identical covariances whose smallest and largest eigenvalues are lower and upper bounded by $\smin$ and $\smax$ respectively, and means lying in a ball of radius $D$. (Note, the bound has no dependence on the number of components.) Moreover, the corresponding generalized score matching loss is a form of annealed score matching loss \citep{song2019generative, song2020score}, with a particular choice of weighing for the different amounts of Gaussian convolution. This is the first result formally showing the statistical benefits of annealing for score matching.
\end{enumerate}

\subsection{Discussion and related work} 

Our work draws on and brings together, theoretical developments in understanding score matching, as well as designing and analyzing faster-mixing Markov chains based on strategies in annealing. 

{\bf Score matching:} Score matching was originally proposed by \cite{hyvarinen2005estimation}, who also provided some conditions under which the estimator is consistent and asymptotically normal. Asymptotic normality is also proven for various kernelized variants of score matching in \cite{barp2019minimum}. Recent work by \cite{koehler2022statistical} proves that when the family of distributions being fit is rich enough, the statistical sample complexity of score matching is comparable to the sample complexity of maximum likelihood \emph{only} when the distribution satisfies a Poincar\'e inequality. In particular, even simple bimodal distributions in 1 dimension (like a mixture of 2 Gaussians) can significantly worsen the sample complexity of score matching (\emph{exponential} with respect to mode separation). For restricted parametric families (e.g. exponential families with sufficient statistics consisting of bounded-degree polynomials), recent work \citep{pabbaraju2023provable} showed that score matching can be comparably efficient to maximum likelihood, by leveraging the fact that a restricted version of the Poincar\'e inequality suffices for good sample complexity.   

On the empirical side, breakthrough work by \cite{song2019generative} proposed an annealed version of score matching, in which they proposed fitting the scores of the distribution convolved with multiple levels of Gaussian noise. They proposed this as a mechanism to alleviate the poor estimate of the score inbetween modes for multimodal distributions, as well in the presence of low-dimensional manifold structure in the data. They also proposed using the learned scores to sample via annealed Langevin dynamics, which uses samples from Langevin at higher levels of Gaussian convolution as a warm start for running a Langevin at lower levels of Gaussian convolution. Subsequently, this line of work developed into score-based diffusion models \citep{song2020score}, which can be viewed as a ``continuously annealed'' version of the approach in \cite{song2019generative}. 

Theoretical understanding of annealed versions of score matching is still very impoverished. A recent line of work \citep{lee2022convergence, lee2023convergence, chen2022improved} explores how accurately one can sample using a learned (annealed) score, \emph{if the (population) score loss is successfully minimized}. This line of work can be viewed as a kind of ``error propagation'' analysis: namely, how much larger the sampling error with a score learned up to some tolerance. It does not provide insight on when the score can be efficiently learned, either in terms of sample complexity or computational complexity. 
\vspace{-0.2cm}
\paragraph{Sampling by annealing:} There are a plethora of methods proposed in the literature that use temperature heuristics \citep{marinari1992simulated, neal1996sampling, earl2005parallel} to alleviate the slow mixing of various Markov Chains in the presence of multimodal structure or data lying close to a low-dimensional manifold. A precise understanding of when such strategies have provable benefits, however, is fairly nascent. Most related to our work, in \cite{ge2018simulated, lee2018beyond}, the authors show that when a distribution is (close to) a mixture of $K$ Gaussians with identical covariances, the classical simulated tempering chain \citep{marinari1992simulated} with temperature annealing (i.e. scaling the log-pdf of the distribution), along with Metropolis-Hastings to swap the temperature in the chain mixes in time $\mbox{poly}(K)$. In subsequent work  \citep{moitra2020fast}, the authors show that for distributions sufficiently concentrated near a manifold of positive Ricci curvature, Langevin mixes fast.

%\paragraph{Background}: score matching underlies EBM-learning while avoiding partition fn calculation. Basis for subsequent annealing strategies and later diffusion models. 

%\paragraph{Theoretical results}: consistency results are well-known (cite). Statistical complexity is only known for the basic strategy (paper w Fred) --- surprising connection to efficiency of sampler (Langevin). Song-Ermon conjectures that annealing via Gaussian convolution should help both sampling and estimation of score --- but it's unclear what the mechanism for this is. A separate line of work (Sitan et al, Holden et al) consider the quality of the sampler, assuming the training worked well (though this doesn't explain when you can explain training to go well). 

%\paragraph{Our work}: Surprising connection between mixing of {\bf any} (time-homogeneous) Markov Chain, as captured through Poincar\'e constant, and sample-efficiency of a generalized score-matching loss. Gives way to convert fast-mixing chains into sample-efficient training objectives. As a special case, we get sample-efficient score loss for learning mixtures of Gaussians (w dependence polynomial on num of components, inverse variance) --- which resembles the annealed score matching loss in Song-Ermon w/ second-order correction.
\section{Preliminaries} 

First, we will review several definitions and classical results that will be used in our results. 

\subsection{Generalized Score Matching}

The conventional score-matching objective \citep{hyvarinen2005estimation} is defined as
\begin{align}
    D_{SM} \left( p, q \right) =  \frac{1}{2} \E_p \left \Vert \nabla_x \log p - \nabla_x \log q \right \Vert ^2 _2 = \frac{1}{2} \E_p \left \Vert \frac{\nabla_x p}{p} - \frac{\nabla_x q}{q} \right \Vert ^2 _2 
\label{l:standardscore}
\end{align}
Note, in this notation, the expression is asymmetric: $p$ is the data distribution, $q$ is the distribution that is being fit. 
Written like this, it is not clear how to minimize this loss, when we only have access to data samples from $p$. The main observation of \cite{hyvarinen2005estimation} is that the objective can be rewritten (using integration by parts) in a form that is easy to fit given samples: 
\begin{align}
    D_{SM}(p,q) = \E_{X \sim p}\left[\Tr \nabla_x^2 \log q + \frac{1}{2} \|\nabla_x \log q\|^2\right] + K_p
    \label{l:ibpstandard} 
\end{align}
where $K_p$ is some constant independent of $q$. To turn this into an algorithm given samples, one simply solves 
\[\min_{q \in \mathcal{Q}} \E_{X \sim \hat{p}}\left[\Tr \nabla_x^2 \log q + \frac{1}{2} \|\nabla_x \log q\|^2\right] \] 
for some parametrized family of distributions $\mathcal{Q}$, where $\hat{p}$ denotes the uniform distribution over the samples from $p$. This objective can be calculated efficiently given samples from $p$, so long as the gradient and Hessian of the log-pdf of $q$ can be efficiently calculated.\footnote{In many score-based modeling approaches, e.g. \citep{song2019generative, song2020score} one directly parametrizes the score $\nabla \log q$ instead of the distribution $q$.}

Generalized Score Matching, first introduced in \cite{lyu2012interpretation}, generalizes $\nabla_x$ to an arbitrary linear operator $\smo$: 
\begin{defn} 
Let $\mathcal{F}^1$ and $\mathcal{F}^m$ be the space of all scalar-valued and m-variate functions of $x \in \R^d$, respectively. The Generalized Score Matching (GSM) loss with a general linear operator $\smo: \mathcal{F}^1 \rightarrow \mathcal{F}^m$ is defined as
\begin{align}
    D_{GSM} \left( p, q \right) = \frac{1}{2} \E_p \left \Vert \frac{\smo p}{p} - \frac{\smo q}{q} \right \Vert ^2 _2 \label{eq:gsm}
\end{align}
\label{def:gsm}
\end{defn} 

In this paper, we will be considering operators $\smo$, such that $(\smo g)(x) = B(x) \nabla g(x)$. In other words, the generalized score matching loss will have the form:
\begin{equation}
     D_{GSM} \left( p, q \right) = \frac{1}{2} \E_p \left \Vert B(x) \left(\nabla_x \log p - \nabla_x \log q\right)  \right \Vert ^2 _2 \label{eq:precondsm}
\end{equation}
This can intuitively be thought of as a ``preconditioned'' version of the score matching loss, notably with a preconditioner function $B(x)$ that is allowed to change at every point $x$.

The generalized score matching loss can also be turned into an expression that doesn't require evaluating the pdf of the data distribution (or gradients thereof), using a similar ``integration-by-parts'' identity: 
\begin{lemma}[Integration by parts, \cite{lyu2012interpretation}] The GSM loss satisfies 
\begin{align}
    D_{GSM} \left( p, q \right) = \frac{1}{2} \E_p \left [ \left \Vert \frac{\smo q}{q} \right \Vert ^2 _2 - 2 \smo^+ \left ( \frac{\smo q}{q} \right ) \right ] + K_p \label{eq:gsm_ibp}
\end{align}
where $\smo^+$ is the adjoint of $\smo$ defined by $\left \langle \smo f, g \right \rangle_{L^2} = \left \langle f, \smo^+g \right \rangle_{L^2}$.
\label{l:ibplyu}
\end{lemma}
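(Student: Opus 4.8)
The plan is to expand the squared norm in the definition \eqref{eq:gsm} and sort the resulting terms according to whether or not they depend on $q$. Viewing $\frac{\smo p}{p}$ and $\frac{\smo q}{q}$ as $\R^m$-valued functions on $\R^d$, bilinearity of the inner product gives
\begin{align}
D_{GSM}(p,q) = \frac{1}{2}\,\E_p\left\Vert \frac{\smo p}{p}\right\Vert_2^2 \;-\; \E_p\left\langle \frac{\smo p}{p}, \frac{\smo q}{q}\right\rangle \;+\; \frac{1}{2}\,\E_p\left\Vert \frac{\smo q}{q}\right\Vert_2^2 .
\end{align}
The first term is a constant $K_p := \frac12\,\E_p\Vert \smo p/p\Vert_2^2$ independent of $q$, and the last term already appears verbatim in \eqref{eq:gsm_ibp}. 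So the whole content of the lemma is to rewrite the cross term into a form that refers only to $q$, and not to $\smo p$ (equivalently, not to derivatives of the data density $p$).

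For the cross term, I would first cancel one copy of $p$ against the expectation over $p$: since $\E_p[\phi] = \int p(x)\,\phi(x)\,dx$, we have
\begin{align}
\E_p\left\langle \frac{\smo p}{p}, \frac{\smo q}{q}\right\rangle = \int \left\langle \smo p(x),\, \frac{\smo q(x)}{q(x)}\right\rangle dx = \left\langle \smo p,\, \frac{\smo q}{q}\right\rangle_{L^2}.
\end{align}
Now apply the defining property of the adjoint $\smo^+$, namely $\langle \smo f, g\rangle_{L^2} = \langle f, \smo^+ g\rangle_{L^2}$, with $f = p$ and $g = \frac{\smo q}{q}$, to obtain $\langle \smo p, \frac{\smo q}{q}\rangle_{L^2} = \langle p, \smo^+(\frac{\smo q}{q})\rangle_{L^2} = \E_p\big[\smo^+(\frac{\smo q}{q})\big]$. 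Substituting this back and folding the $q$-independent piece into $K_p$ yields \eqref{eq:gsm_ibp}. As a sanity check, in the special case $\smo = \nabla_x$ one has $\smo^+ g = -\nabla\cdot g$, so $\smo^+(\frac{\smo q}{q}) = -\nabla\cdot(\nabla\log q) = -\Tr\nabla^2\log q$, and the identity reduces exactly to \eqref{l:ibpstandard}.

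The one step deserving care — and the only place anything could go wrong — is the use of the adjoint identity. For a differential operator $\smo$, the equality $\langle \smo f, g\rangle_{L^2} = \langle f, \smo^+ g\rangle_{L^2}$ is integration by parts and holds only when the boundary terms at infinity vanish; one therefore needs the standard regularity hypotheses (as in \cite{hyvarinen2005estimation, lyu2012interpretation}) that $p$ and $q$ are differentiable and strictly positive, that $p(x)\,\frac{\smo q(x)}{q(x)}$ decays at infinity, and that the integrals involved are absolutely convergent (e.g. $\E_p\Vert \smo p/p\Vert_2^2 < \infty$ and $\E_p\Vert \smo q/q\Vert_2^2 < \infty$). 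Under these assumptions every manipulation above is justified term by term, so beyond this bookkeeping there is no substantive obstacle.
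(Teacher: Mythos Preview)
Your proof is correct and is exactly the standard derivation: expand the square, absorb the $q$-independent term into $K_p$, cancel the factor of $p$ in the cross term against the density in the expectation to turn it into an $L^2$ inner product, and then move $\smo$ across via its adjoint. The paper does not give its own proof of this lemma (it is stated as a result from \cite{lyu2012interpretation}), so there is nothing to compare against; your write-up, including the sanity check recovering \eqref{l:ibpstandard} and the remark on the boundary/integrability hypotheses needed for the adjoint identity, is the expected argument.
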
 

Again, for the special case of the family of operators $\mathcal{O}$ in \eqref{eq:precondsm}, the integration by parts form of the objective can be easily written down explicitly (the proof is provided in Appendix \ref{a:dirichletform}): 
\begin{lemma}[Integration by parts for the GSM in \eqref{eq:precondsm}] The generalized score matching objective in \eqref{eq:precondsm} satisfies the equality 
\[D_{GSM}(p,q) = \frac{1}{2} \left[ \mathbb{E}_p \|B(x) \nabla_x \log q\|^2 + 2 \mathbb{E}_p \mbox{div}\left(B(x)^2 \nabla_x \log q\right) \right] + K_p \]
\end{lemma}

\subsection{Continous-time Markov Processes}

In this section, we introduce the key definitions related to continuous-time Markov chains and diffusion processes: 

\begin{defn}[Markov semigroup]
We say that a family of functions $\{P_t(x, y)\}_{t \geq 0}$ on a state space $\Omega$ is a Markov semigroup if $P_t(x, \cdot)$ is a distribution on $\Omega$ and 
\begin{align*}
    P_{t+s}(x, dy) = \int_{\Omega} P_t(x, dz) P_s(z, dy) 
\end{align*}
for all $x, y \in \Omega$ and $s, t \geq 0$. 
\end{defn}

\begin{defn}[Time-homogeneous  Markov processes]
A time-homogeneous Markov process $(X_t)_{t\ge 0}$ on state space $\Omega$ is defined by a Markov semigroup $\{P_t(x, y)\}_{t \geq 0}$ as follows: for any measurable $A \subseteq \Omega$
\begin{align*}
    \Pr(X_{s+t} \in A | X_s = x) = P_t(x, A) = \int_A P_t(x, dy) 
\end{align*}
%(Note that it necessarily satisfies $P_{t+u}(x,z) = \int_\Om P_t(x,y)P_u(y,z)\dy$.)
%Define stationary distributions, reversibility, $P_tf$, and variance as in the discrete case.
Moreover, $P_t$ can be thought of as acting on a function $g$ as
\begin{align*}
    (P_t g)(x) &= \E_{P_t(x,\cdot)} [g(y)] = \int_{\Omega}  g(y) P_t(x,dy)
\end{align*}
Finally, we say that $p(x)$ is a stationary distribution if $X_0\sim p$ implies that $X_t\sim p$ for all $t$. 
\end{defn}

A particularly important class of time-homogeneous Markov processes is given by It\^o diffusions, namely stochastic differential equations of the form 
\begin{equation}
    dX_t = b(X_t) dt + \sigma(X_t) dB_t \label{eq:itodiff}
\end{equation} 
for a \emph{drift} function $b$, and a \emph{diffusion coefficient} function. In fact, a classical result due to Dynkin (\cite{rogers2000diffusions}, Theorem 13.3) states that {\bf any} ``sufficiently regular'' time-homogeneous Markov process (specifically, a process whose semigroup is Feller-Dynkin) can be written in the above form.  

We will be interested in It\^o diffusions, whose stationary distribution is a given distribution $p(x) \propto \exp(-f(x))$. Perhaps the most well-known example of such a diffusion is Langevin diffusion: 

\begin{defn} [Langevin diffusion] Langevin diffusion is the following stochastic process: 
\begin{align*} 
    d X_t = - \nabla f(X_t) dt + \sqrt{2} d B_t 
\end{align*} 
where $f: \R^d \to \R$, $d B_t$ is Brownian motion in $\R^d$ with covariance matrix $I$. 
Under mild regularity conditions on $f$, the stationary distribution of this process is $p(x): \R^N \to \R$, s.t. $p(x) \propto e^{-f(x)}$. 
\label{def:Langevin}
\end{defn} 

In fact, a completeness result due to \cite{ma2015complete} states that we can characterize \emph{all} It\^o diffusions whose stationary distribution is $p(x) \propto exp(-f(x))$. Precisely, they show:

\begin{theorem}[It\^o diffusions with a given stationary distribution, \cite{ma2015complete}] 
    Any It\^o diffusion with stationary distribution $p(x) \propto \exp(-f(x))$\todo{There's some integrability conditions on the coefficients that need to be stated} can be written in the form:
    \begin{equation}
        dX_t = \left( -(D(X_t) + Q(X_t)) \nabla f(X_t) + \Gamma(X_t) \right) dt + \sqrt{2 D(X_t)} dB_t
    \label{eq:statito}
    \end{equation}
    where $\forall x \in \mathbb{R}^d, D(x) \in \mathbb{R}^{d \times d}$ is a positive-definite matrix, $\forall x \in \mathbb{R}^d, Q(x)$ is a skew-symmetric matrix, $D, Q$ are differentiable, and:   
    \begin{align*}
        \Gamma_i(x) := \sum_{j} \partial_j (D_{ij}(x) + Q_{ij}(x)).
    \end{align*}
    \label{thm:complete_sde}
\end{theorem}
Intuitively, $D(x)$ can be viewed as ``reshaping'' the diffusion, whereas $Q$ and $\Gamma$ are ``correction terms'' to the drift so that the stationary distribution is preserved.

%\begin{proof} 
%We have $\mbox{KL}(p,q) = \E_{p} \log(p/q)$, and $\chi^2(p,q) $.  
%\end{proof} 

\subsection{Dirichlet forms and Poincar\'e inequalities} 

\begin{defn}
The generator $\mathcal{L}$ corresponding to Markov semigroup is
\begin{align*}
\mathcal{L} g = \lim_{t \to 0} \frac{P_t g - g}{t}.
\end{align*}
Moreover, if $p$ is the unique stationary distribution, the Dirichlet form and the variance are
\begin{align*}
    \mathcal{E}(g,h) = -\E_p\langle g,  \mathcal{L} h \rangle \mbox{ and } \Var_p(g) = \E_p(g-\E_p g)^2
\end{align*}
respectively. We will use the shorthand $\mathcal{E}(g):=\mathcal{E}(g,g)$. 
%The spectral gap is defined as in the discrete case with this definition of $\sE_M$. The eigenvalues of $M$ are defined as the eigenvalues of $-\sL$.\footnote{Note that $\cL=I-P$ in the discrete case corresponds to $-\sL$ in the continuous case.} 
\end{defn}

By It\^o's Lemma, the generator of diffusions of the form \eqref{eq:statito} have the form:
    \begin{equation}
        (\generator g) (x) = \langle - [D(x) + Q(x)] \nabla f(x) + \Gamma(x), \nabla g(x) \rangle + \Tr (D(x) \nabla ^2 g(x)) \label{eq:genitostat}
    \end{equation}
The Dirichlet form for diffusions of the form \eqref{eq:statito} also has a very convenient form: 

\begin{lemma}[Dirichlet form of continuous Markov Process]
    An It\^o diffusion of the form \eqref{eq:statito} has a Dirichlet form:
    \begin{align*}
        \mathcal{E}(g) 
        = \E_p \| \sqrt{D(x)} \nabla g(x) \|_2^2
    \end{align*}
    Notably, for Langevin diffusion, the Dirichlet form is just the $l_2$ norm of $\nabla g$:  $\mathcal{E}(g) = \mathbb{E}_p \|\nabla g\|_2^2$.
    \label{l:dirichletstatito}
\end{lemma}
For a general diffusion of the form \eqref{eq:statito}, we can think of $D(x)$ as a (point-specific) preconditioner, specifying the norm with respect to which to measure $\nabla g$. The proof of this lemma is given in Appendix~\ref{a:dirichletform}.

%For the family of It\^o diffusions \eqref{eq:statito}, we can explicitly write down the quadratic form associated with $\mathcal{L}$: 

Finally, we define the Poincar\'e constant, which captures the mixing time of the process in the $\chi^2$-sense:  
\begin{defn} [Poincar\'e inequality]
A continuous-time Markov process satisfies a Poincar\'e inequality with constant $C$ if for all functions $g$ such that $\mathcal{E}(g)$ is defined (finite),\footnote{We will implicitly assume this condition whenever we discuss Poincar\'e inequalities. } 
\begin{align*}
    \mathcal{E}(g) \ge \frac{1}{C} \Var_p(g).
\end{align*}
We will abuse notation, and for a Markov process with stationary distribution $p$, denote by $C_P$ the \emph{Poincar\'e constant of $p$}, the smallest $C$ such that above Poincar\'e inequality is satisfied.
\label{d:poincare}
\end{defn}
The Poincar\'e inequality implies exponential ergodicity for the $\chi^2$-divergence, namely:
\begin{align*}
    \chi^2(p_t, p) \le e^{-2t/C_{P}} \chi^2(p_0,p).
\end{align*}
where $p$ is the stationary distribution of the chain and $p_t$ is the distribution after running the Markov process for time $t$, starting at $p_0$.

\iffalse
It is well known that the Langevin diffusion has a particularly simple form. In this work, we will also derive the Dirichlet form of \emph{any} continuous Markov Process.

\begin{prop}[\cite{bakry2014analysis}]
The Dirichlet form corresponding to Langevin has the form $\mathcal{E} (f) = \E_p \Vert \nabla f \Vert_2^2$.
\label{l:dirichletl}
\end{prop} 
\fi
We will analyze mixing times using a decomposition technique similar to the ones employed in \cite{ge2018simulated, moitra2020fast}. Intuitively, these results ``decompose'' the Markov chain by partitioning the state space into sets, such that: (1) the mixing time of the Markov chain inside the sets is good; (2) the ``projected'' chain, which transitions between sets with probability equal to the probability flow between sets, also mixes fast. 

An example of such a result is Theorem 6.1 from \cite{ge2018simulated}: 

\begin{theorem}[Decomposition of Markov Chains, Theorem 6.1 in \cite{ge2018simulated}]
\label{thm:gap-prod}
Let $M=(\Omega, \generator)$ be a continuous-time Markov chain with stationary distribution $p$ and Dirichlet form $\mathcal{E}(g,g)=-\langle g,\generator g \rangle_p$. Suppose the following hold.
\begin{enumerate}[leftmargin=*]
\item
The Dirichlet form for $\generator$ decomposes as $\langle f, \generator g \rangle_p = \sum_{j=1}^m w_j \langle f, \generator_j g \rangle_{p_j}$, where  
\[p= \sum_{j=1}^m w_j p_j\] 
and $\generator_j$ is the generator for some Markov chain $M_j$ on $\Omega$ with stationary distribution $p_{j}$.
\item
(Mixing for each $M_j$) The Dirichlet form $\mathcal{E}_j(f,g) = - \langle f,\generator g \rangle_{p_j}$ satisfies the Poincar\'e inequality
\begin{align*}
\Var_{p_j}(g) &\le C \mathcal{E}_j (g,g).
\end{align*}
\item
(Mixing for projected chain) 
Define the $\chi^2$-projected chain $\bar{M}$ as the Markov chain on $[m]$ generated by $\bar{\generator}$, where $\bar{\generator}$ acts on $g\in L^2([m])$ by
\begin{align}
\bar{\generator} \bar{g}(j) &= \sum_{1\le k\le m, k\ne j} [\bar g(k)-\bar g(j)]
\bar P(j,k), \hspace{0.1cm} \text{where }
\bar P(j,k) = \frac{w_k}{\max\{\chi^2(p_j, p_k), \chi^2(p_k, p_j), 1 \}}. \nonumber
\end{align}
%(I.e., the rate of diffusion from $j$ to $k$ is given by $\bar P(j,k)$.)

Let $\bar p$ be the stationary distribution of $\bar M$.
Suppose $\bar M$ satisfies the Poincar\'e inequality $\Var_{\bar p} (\bar g) \le \bar C \bar{\mathcal{E}}(g,g)$. 
\end{enumerate}
Then $M$ satisfies the Poincar\'e inequality
\[\Var_p(g) \le C \left(1+\frac{\bar{C}}{2}\right) \mathcal{E}(g,g).\]
\label{t:decomp}
\end{theorem}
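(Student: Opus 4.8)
The plan is the classical Markov-chain decomposition argument: bound $\Var_p(g)$ by splitting it into a ``within-component'' part and a ``between-component'' part, control each by the corresponding Poincar\'e inequality from the hypotheses, and finally compare the projected Dirichlet form back to $\mathcal{E}(g,g)$. Fix a test function $g$ on $\Omega$ with $\mathcal{E}(g,g)<\infty$ and set $\bar g(j):=\E_{p_j}[g]$, a function on $[m]$. Preliminarily, observe that the stationary distribution $\bar p$ of the projected chain is the weight vector $w=(w_1,\dots,w_m)$: the denominator $\max\{\chi^2(p_j,p_k),\chi^2(p_k,p_j),1\}$ in $\bar P$ is symmetric in $j,k$, so $w_j\bar P(j,k)=w_k\bar P(k,j)$, i.e.\ $w$ is reversible for $\bar M$, and in particular $\bar{\mathcal{E}}(\bar g,\bar g)=\tfrac12\sum_{j\ne k} w_j\bar P(j,k)(\bar g(j)-\bar g(k))^2$. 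Since $p=\sum_j w_j p_j$ with $\sum_j w_j=1$, the law of total variance gives the exact identity
\begin{align*}
\Var_p(g)=\sum_{j=1}^m w_j\Var_{p_j}(g)+\Var_{w}(\bar g).
\end{align*}

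For the first term, apply hypothesis (2) to each $M_j$ and then the Dirichlet-form decomposition of hypothesis (1) with $f=g$: $\sum_j w_j\Var_{p_j}(g)\le C\sum_j w_j\mathcal{E}_j(g,g)=C\,\mathcal{E}(g,g)$. For the second term, hypothesis (3) gives $\Var_w(\bar g)\le\bar C\,\bar{\mathcal{E}}(\bar g,\bar g)$, so it remains only to bound $\bar{\mathcal{E}}(\bar g,\bar g)$ in terms of the within-component variances, and then by $\mathcal{E}(g,g)$ again.

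The key step---and the place where the choice of $\chi^2$-divergence in the projected chain's rates is essential---is the pointwise comparison
\begin{align*}
(\bar g(j)-\bar g(k))^2\le\chi^2(p_k,p_j)\,\Var_{p_j}(g),
\end{align*}
and symmetrically with $j,k$ swapped. This follows by writing $\E_{p_k}[g]-\E_{p_j}[g]=\E_{p_j}\!\big[(g-\E_{p_j}g)(\tfrac{p_k}{p_j}-1)\big]$ (using $\E_{p_j}[\tfrac{p_k}{p_j}-1]=0$) and applying Cauchy--Schwarz, since $\E_{p_j}[(\tfrac{p_k}{p_j}-1)^2]=\chi^2(p_k,p_j)$. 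Consequently, for each $j\ne k$,
\begin{align*}
\bar P(j,k)(\bar g(j)-\bar g(k))^2&=\frac{w_k(\bar g(j)-\bar g(k))^2}{\max\{\chi^2(p_j,p_k),\chi^2(p_k,p_j),1\}}\\
&\le w_k\min\{\Var_{p_j}(g),\Var_{p_k}(g)\}\le\frac{w_k}{2}\big(\Var_{p_j}(g)+\Var_{p_k}(g)\big),
\end{align*}
where we used $\chi^2(p_k,p_j)/\max\{\dots\}\le 1$ (and the $\max$ with $1$ keeps the rates bounded, and the cancellation valid, when components are close). Summing over $j\ne k$ and using symmetry together with $\sum_k w_k=1$ yields $\bar{\mathcal{E}}(\bar g,\bar g)\le\tfrac12\sum_j w_j\Var_{p_j}(g)\le\tfrac{C}{2}\mathcal{E}(g,g)$. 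Plugging the two bounds into the variance identity gives $\Var_p(g)\le C\,\mathcal{E}(g,g)+\bar C\cdot\tfrac{C}{2}\mathcal{E}(g,g)=C(1+\tfrac{\bar C}{2})\mathcal{E}(g,g)$, as claimed.

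The main obstacle is precisely the comparison inequality in the last paragraph: one has to recognize that the right normalization of the projected jump rates is the $\chi^2$-divergence (rather than, say, total variation distance), so that the factor produced by Cauchy--Schwarz cancels the denominator exactly. Everything else---the variance decomposition, verifying $\bar p=w$, and the two applications of the hypothesized Poincar\'e inequalities---is routine bookkeeping.
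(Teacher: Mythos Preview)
The paper does not actually prove this theorem; it is quoted verbatim as Theorem~6.1 from \cite{ge2018simulated} and used as a black-box tool. Your argument is correct and is essentially the original decomposition proof from \cite{ge2018simulated}: law of total variance, followed by the Cauchy--Schwarz step $(\bar g(j)-\bar g(k))^2\le\chi^2(p_k,p_j)\Var_{p_j}(g)$, which is precisely why the projected rates are normalized by $\chi^2$-divergence.
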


\subsection{Asymptotic efficiency} 
We will need a classical result about asymptotic convergence of M-estimators, under some mild identifiability and differentiability conditions. For this section, $n$ will denote the number of samples, and $\hat{\E}$ will denote an empirical average, that is the expectation over the $n$ training samples. The following result holds:
\begin{lemma}[\cite{van2000asymptotic}, Theorem 5.23] 
Consider a loss $L: \Theta \mapsto \mathbb{R}$, such that $L(\theta) = \E_p [\ell_{\theta}(x)]$ for $l_{\theta}:\mathcal{X} \mapsto \mathbb{R}$. Let $\Theta^*$ be the set of global minima of $L$, that is 
\[\Theta^* = \{\theta^*: L(\theta^*) = \min_{\theta \in \Theta} L(\theta)\}\]
Suppose the following conditions are met:
\begin{itemize}
\item (Gradient bounds on $l_{\theta}$) The map $\theta \mapsto l_{\theta}(x)$ is measurable and differentiable at every $\theta^* \in \Theta^*$ for $p$-almost every $x$. Furthermore, there exists a function $B(x)$, s.t. $\E B(x)^2 < \infty$ and for every $\theta_1, \theta_2$ near $\theta^*$, we have: 
$$ |l_{\theta_1}(x) - l_{\theta_2}(x)| < B(x) \|\theta_1-\theta_2\|_2$$
\item (Twice-differentiability of $L$) $L(\theta)$ is twice-differentiable at every $\theta^* \in \Theta^*$ with Hessian $\nabla_{\theta}^2 L(\theta^*)$, and furthermore $\nabla_{\theta}^2 L(\theta^*) \succ 0$. 
\item (Uniform law of large numbers) The loss $L$ satisfies a uniform law of large numbers, that is 
\[\sup_{\theta \in \Theta} \left|\hat{\E}l_{\theta}(x) - L(\theta) \right| \xrightarrow{p} 0 \] 
\end{itemize}
Then, for every $\theta^* \in \Theta^*$, and every sufficiently small neighborhood $S$ of $\theta^*$, there exists a sufficiently large $n$, such that there is a unique minimizer $\hat{\theta}_n$ of $\hat{\E}l_{\theta}(x)$ in $S$. Furthermore, $\hat{\theta}_n$ satisfies: 
\begin{align*}
    \sqrt{n} (\hat{\theta}_n - \theta^*) \xrightarrow{d} \mathcal{N} \left(0, (\nabla_\theta^2 L(\theta^*))^{-1} \mbox{Cov}(\nabla_\theta \ell(x; \theta^*)) (\nabla_\theta^2 L(\theta^*))^{-1} \right)
\end{align*}

\iffalse
Suppose the loss $L(\theta) = \E_p [\ell_{\theta}(x)]$ satisfies: 
\begin{itemize}[leftmargin=*] 
\item (Uniqueness of minima) The minimum $\theta^* = \arg\min L(\theta)$ is unique. 
\item (Consistency) $\hat{\theta}_n = \arg\min \hat{\E}l_{\theta}(x)$ satisfy $\hat{\theta}_n \to \theta^*$ as $n \to \infty$. 
\item (Differentiability of $l_{\theta}$) $l_{\theta}(x)$ is measurable and differentiable at $\theta^*$ for $p$-almost every $x$. Furthermore, there exists a function $B(x)$, s.t. $\E B(x)^2 < \infty$ and for every $\theta_1, \theta_2$ near $\theta^*$, we have: 
$$ |l_{\theta_1}(x) - l_{\theta_2}(x)| < B(x) \|\theta_1-\theta_2\|_2$$
\item (Twice-differentiability of $L$) $L(\theta)$ is twice-differentiable at $\theta^*$ with Hessian $\nabla_{\theta}^2 L(\theta^*)$. 
\end{itemize}

Then, we have: 
\begin{align*}
    \sqrt{n} (\hat{\theta}_n - \theta^*) \xrightarrow{d} \mathcal{N} \left(0, (\nabla_\theta^2 L(\theta^*))^{-1} \mbox{Cov}(\nabla_\theta \ell(x; \theta^*)) (\nabla_\theta^2 L(\theta^*))^{-1} \right)
\end{align*}
\fi
\label{l:asymptotics}
\end{lemma}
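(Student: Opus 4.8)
The statement is the classical asymptotic-normality theorem for $M$-estimators \citep{van2000asymptotic}; the plan is to run its standard empirical-process proof---consistency, then $\sqrt n$-consistency via a rate-of-convergence lemma, then convergence of the localized (rescaled) empirical objective to a quadratic process, then the argmax theorem for minimizers of stochastic processes, and finally reading off the limiting normal law---with uniqueness of the local minimizer falling out of the last step. Throughout fix $\theta^*\in\Theta^*$, write $\dot\ell(x):=\nabla_\theta\ell_{\theta^*}(x)$ (which exists for $p$-a.e.\ $x$ by the gradient hypothesis), and set $V:=\nabla_\theta^2 L(\theta^*)\succ 0$. Note the local Lipschitz bound with $\E B^2<\infty$ lets us differentiate under the expectation, so $\E\dot\ell=\nabla_\theta L(\theta^*)=0$ as $\theta^*$ is an interior global minimum of $L$.

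First I would prove \emph{consistency} of an empirical minimizer inside a small fixed neighborhood $S$ of $\theta^*$: the Lipschitz bound makes $\theta\mapsto\hat\E\ell_\theta$ almost surely continuous, so it attains a minimum $\hat\theta_n$ on the compact set $\bar S$; since $V\succ 0$ forces $L(\theta)-L(\theta^*)\ge c\|\theta-\theta^*\|_2^2$ on $\bar S$ after shrinking $S$, the uniform law of large numbers plus the standard consistency argument for minimizers gives $\hat\theta_n\xrightarrow{p}\theta^*$, so $\hat\theta_n\in S$ eventually. I would then upgrade this to $\sqrt n$-consistency, $\|\hat\theta_n-\theta^*\|_2=O_P(n^{-1/2})$, via the rate-of-convergence lemma, whose hypotheses are the quadratic growth $L(\theta)-L(\theta^*)\gtrsim\|\theta-\theta^*\|_2^2$ just noted together with the modulus bound $\E\sup_{\|\theta-\theta^*\|_2\le\delta}\bigl|\sqrt n(\hat\E-\E)(\ell_\theta-\ell_{\theta^*})\bigr|\lesssim\delta$; the latter holds because $\{\ell_\theta-\ell_{\theta^*}\}$ is a finite-dimensionally-parametrized, $\theta$-Lipschitz class with $L^2(p)$-envelope $\lesssim B$ (bounded uniform entropy).

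The technical heart is the \emph{convergence of the localized objective}. Put $r_\theta(x):=\ell_\theta(x)-\ell_{\theta^*}(x)-(\theta-\theta^*)^\top\dot\ell(x)$; the Lipschitz hypothesis gives $|r_\theta(x)|\le 2B(x)\|\theta-\theta^*\|_2$, while pointwise differentiability at $\theta^*$ gives $r_\theta(x)/\|\theta-\theta^*\|_2\to 0$, so $\sup_{\|\theta-\theta^*\|_2\le\delta}\|r_\theta\|_{L^2(p)}=o(\delta)$ by dominated convergence. Since $\{r_\theta\}$ is again a $\theta$-Lipschitz, finite-dimensionally indexed (hence $p$-Donsker) class, the refined maximal inequality exploiting this $o(\delta)$ modulus gives $n\sup_{\|\theta-\theta^*\|_2\le C/\sqrt n}\bigl|(\hat\E-\E)r_\theta\bigr|\xrightarrow{p}0$ for every constant $C$. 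Combining with the second-order expansion $L(\theta)-L(\theta^*)=\tfrac12(\theta-\theta^*)^\top V(\theta-\theta^*)+o(\|\theta-\theta^*\|_2^2)$ and substituting $\theta=\theta^*+u/\sqrt n$ yields, uniformly over $u$ in any compact set,
\[
 n\bigl(\hat\E\ell_{\theta^*+u/\sqrt n}-\hat\E\ell_{\theta^*}\bigr)=\tfrac12\,u^\top V u+u^\top\bigl(\sqrt n\,\hat\E\dot\ell\bigr)+o_P(1).
\]
I expect this refined maximal-inequality step to be the main obstacle: it is exactly where $\E B^2<\infty$ is used essentially, and it needs the uniform-entropy/chaining machinery for parameter-Lipschitz function classes rather than a naive union bound.

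Finally I would \emph{conclude}. By the CLT, $\sqrt n\,\hat\E\dot\ell\xrightarrow{d}Z\sim\mathcal N\bigl(0,\Cov(\nabla_\theta\ell_{\theta^*})\bigr)$, and the limiting process $u\mapsto\tfrac12 u^\top V u+u^\top Z$ is strictly convex (as $V\succ 0$) with unique minimizer $-V^{-1}Z$; by the continuous mapping theorem the localized objective above converges weakly on compacts to this process. Since $\sqrt n(\hat\theta_n-\theta^*)$ is tight (by the $\sqrt n$-consistency of the second step) and minimizes the localized objective, the argmax theorem gives $\sqrt n(\hat\theta_n-\theta^*)\xrightarrow{d}-V^{-1}Z\sim\mathcal N\bigl(0,\,V^{-1}\Cov(\nabla_\theta\ell_{\theta^*})V^{-1}\bigr)$, which is the claimed limit ($V$ being symmetric). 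For uniqueness, any minimizer of $\hat\E\ell_\theta$ over $S$ obeys the same localized expansion, so all minimizers lie within $o_P(n^{-1/2})$ of $\theta^*-V^{-1}\hat\E\dot\ell$; since the localized objective is a strictly convex quadratic up to $o_P(1)$ on compacts, the minimizer over a sufficiently small $S$ is unique for large $n$ with probability tending to one.
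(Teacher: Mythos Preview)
Your proposal is a faithful sketch of the standard empirical-process proof of Theorem~5.23 in \cite{van2000asymptotic}, and the steps you outline (consistency from the uniform law of large numbers and strict convexity at $\theta^*$, $\sqrt{n}$-consistency via the rate-of-convergence lemma, localized quadratic expansion using the Lipschitz/Donsker machinery, and the argmax theorem) are correct and in the right order. Note, however, that the paper does not prove this lemma at all: it is stated as a classical result and simply cited from \cite{van2000asymptotic}, so there is no ``paper's own proof'' to compare against. Your write-up is thus not a comparison target but rather a self-contained justification of a result the paper takes as given; as such it is fine, and the step you flag as the main obstacle (the $o(\delta)$ modulus bound for the remainder class $\{r_\theta\}$ via the square-integrable envelope $B$) is indeed exactly where the $\E B(x)^2<\infty$ hypothesis does its work in van der Vaart's argument.
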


%\noindent {\bf Example:} For the generator corresponding to simulated tempering, the Dirichlet form has the form \anote{write out}. 

\section{A Framework for Analyzing Generalized Score Matching}
\label{l:framework}

The goal of this section is to provide a general framework that provides a bound on the sample complexity of a generalized score matching objective with operator $\smo$ \todo{This does not make sense anymore}, under the assumption that some Markov process with generator $\generator$ mixes fast. Precisely, we will show: 

\begin{theorem}[Main, sample complexity bound] %Under conditions \todo{fill in conditions}, the efficiency of the 
Consider an It\^o diffusion of the form \eqref{eq:statito}
with stationary distribution $p(x) \propto \exp(-f(x))$ and Poincar\'e constant $C_P$ with respect to the generator of the It\^o diffusion. Consider the generalized score matching loss with operator $(\smo g)(x): = \sqrt{D(x)} 
\nabla g(x)$, namely 
\[D_{GSM} \left( p, q \right) = \frac{1}{2} \E_p \left \Vert \sqrt{D(x)} \left(\nabla_x \log p - \nabla_x \log q\right)  \right \Vert ^2 _2 \] 
Suppose we are optimizing this loss over a parametric family $\{p_{\theta}: \theta \in \Theta\}$. 
\begin{enumerate}
    \item (Asymptotic normality) Let $\Theta^*$ be the set of global minima of the generalized score matching loss $D_{GSM}$, that is:  
\[\Theta^* = \{\theta^*: D_{GSM}(p, p_{\theta^*}) = \min_{\theta \in \Theta} D_{GSM}(p,p_{\theta})\}\] 
Suppose the generalized score matching loss is asymptotically normal: namely, for every $\theta^* \in \Theta^*$, and every sufficiently small neighborhood $S$ of $\theta^*$, there exists a sufficiently large $n$, such that there is a unique minimizer $\hat{\theta}_n$ of $\hat{\E}l_{\theta}(x)$ in $S$, where 
\begin{align*}
    l_{\theta}(x) := \frac{1}{2} \left \Vert \frac{\smo p_{\theta}(x)}{p_{\theta}(x)} \right \Vert ^2 _2 - 2 \smo^+ \left ( \frac{\smo p_{\theta}(x)}{p_{\theta}(x)} \right ) =  \frac{1}{2} \left[ \|\sqrt{D(x)} \nabla_x \log p_{\theta}(x)\|^2 + 2  \mbox{div}\left(D(x) \nabla_x \log p_{\theta}(x)\right) \right] 
\end{align*}

Furthermore, assume $\hat{\theta}_n$ satisfies $ \sqrt{n} (\hat{\theta}_n - \theta^*) \xrightarrow{d} \mathcal{N} \left(0, \Gamma_{SM}\right)$. 
    \item (Realizibility) At any $\theta^* \in \Theta^*$, we have $p_{\theta^*} = p$. 
%    \item (Compatibility of $\smo$ and $\generator$) For every vector $w$, the function $g(x) = \langle w, \nabla_{\theta} \log p_{\theta}(x)_{\vert_{\theta = \theta^*}} \rangle$ satisfies 
%$\E_p \| \smo g\|^2 = -\langle g, \generator g \rangle_p.$ Note, in particular, if $\generator$ is self-adjoint, that is $\generator^+ = \generator$, then $\smo =\left(-\generator\right)^{1/2}$ satisfies this property. 
\end{enumerate}

Then, we have: 
\begin{align*}
    \Vert \Gamma_{SM} \Vert_{OP} & \leq 2 \cp^2 \Vert \gmle \Vert_{OP}^2 
    [ \| \mbox{cov}(\nabla_\theta \nabla_x \log p_{\theta}(x) D(x) \nabla_x \log p_{\theta}(x)) \|_{OP} \\
    & + \| \mbox{cov}(\nabla_\theta \nabla_x \log p_{\theta}(x) ^\top \mbox{div}(D(x)))\|_{OP} 
    + \| \mbox{cov}(\nabla_\theta \Tr[D(x) \nabla_x^2 \log p_{\theta}(x)) \|_{OP} ] 
\end{align*}
\label{thm:generic_sample_complexity} 
\end{theorem}
\vspace{-0.5cm}

\begin{remark}
    The two terms on the right hand sides qualitatively capture two intuitive properties necessary for a good sample complexity: the factor involving the covariances can be thought of as a smoothness term capturing how regular the score is as we change the parameters in the family we are fitting; the $\cp$ term captures how the error compounds as we ``extrapolate'' the score into a probability density function.
\end{remark}

\begin{remark}
    This theorem generalizes Theorem 2 in \cite{koehler2022statistical}, who show the above only in the case of $\generator$ being the generator of Langevin (Definition~\ref{def:Langevin}), and $\smo = \nabla_x$, i.e. when $D_{GSM}$ is the standard score matching loss. Furthermore, they only consider the case of $p_{\theta}$ being an exponential family, i.e. $p_{\theta}(x) \propto \exp(\langle \theta, T(x) \rangle)$ for some sufficient statistics $T(x)$. Finally, just as in \cite{koehler2022statistical}, we can get a tighter bound by replacing $\cp$ by the restricted Poincar\'e constant, which is the Poincar\'e constant when considering only the functions of the form $\langle w, \nabla_{\theta} \log p_{\theta}(x)_{\vert_{\theta = \theta^*}} \rangle$.  
\end{remark} 

\begin{remark}
    Note that if we know $\sqrt{n}(\hat{\theta}_n - \theta^*) \xrightarrow{d} \mathcal{N}(0, \Gamma_{SM})$, we can extract bounds on the expected $\ell^2_2$ distance between $\hat{\theta}_n$ and $\theta^*$. Namely, from Markov's inequality (see e.g., Remark~4 in \cite{koehler2022statistical}), we have for sufficiently large $n$, with probability at least $0.99$ it holds that \[\|\hat{\theta}_n- \theta^*\|_2^2 \leq \frac{\Tr(\Gamma_{SM})}{n}.\]
\end{remark} 

Some conditions for asymptotic normality can be readily obtained by 
applying standard results from asymptotic statistics (e.g. \cite{van2000asymptotic}, Theorem 5.23, reiterated as Lemma \ref{l:asymptotics} for completeness).%the details are formalized in Lemma~\ref{l:asymptnormal}. 
From that lemma, when an estimator $\hat{\theta} = \arg\min \hat{\E}l_{\theta}(x)$ is asymptotically normal, we have 
$\sqrt{n} (\hat{\theta} - \theta^*) \xrightarrow{d} \mathcal{N} \left(0, (\nabla_\theta^2 L(\theta^*))^{-1} \mbox{Cov}(\nabla_\theta \ell(x; \theta^*)) (\nabla_\theta^2 L(\theta^*))^{-1} \right)$, where $L(\theta) = \mathbb{E}_{\theta} l(x)$. 
Therefore, to bound the spectral norm of $\Gamma_{SM}$, we need to bound the Hessian and covariance terms in the expression above. The latter is a fairly straightforward calculation, which results in the following Lemma, proven in Appendix~\ref{a:framework}.

\begin{lemma}[Bound on smoothness] For $l_{\theta}(x)$ defined in Theorem \ref{thm:generic_sample_complexity},
\begin{align*}
    \mbox{cov}(\nabla_{\theta} l_{\theta}(x)) 
    & \precsim \mbox{cov}\left(\nabla_\theta \nabla_x \log p_{\theta}(x) D(x) \nabla_x \log p_{\theta}(x)\right)
    + \mbox{cov}\left(\nabla_\theta \nabla_x \log p_{\theta}(x) ^\top \mbox{div}(D(x)) \right) \\
    & + \mbox{cov}\left(\nabla_\theta \Tr[D(x) \Delta \log p_{\theta}(x)\right)
\end{align*}
\label{l:smoothness}
\end{lemma}

The bound on the Hessian is where the connection to the Poincar\'e constant manifests. Namely, we show: 

\begin{lemma}[Bounding Hessian] Let the operators $\smo, \generator$ be such that for every vector $w$, the function $g(x) = \langle w, \nabla_{\theta} \log p_{\theta}(x)_{\vert_{\theta = \theta^*}} \rangle$ satisfies 
$\E_p \| \smo g\|^2 = -\langle g, \generator g \rangle_p.$ 
Then it holds that 
\begin{align*}
    \left[\nabla_\theta^2 D_{GSM} (p, p_{\theta^*})\right]^{-1} \preceq \cp \gmle
\end{align*}
\label{l:boundhessian}
\end{lemma}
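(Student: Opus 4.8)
The plan is to reduce the stated operator inequality to a lower bound on the Hessian of the population GSM loss at the truth, and then to obtain that lower bound from the compatibility hypothesis together with the Poincar\'e inequality. Recall that $\gmle$ is the inverse Fisher information, $\gmle = \left(\E_p[\nabla_\theta \log p_{\theta^*}(\nabla_\theta \log p_{\theta^*})^\top]\right)^{-1}$, and by realizability $p_{\theta^*}=p$, so the score has mean zero and this equals $\left(\Cov_p(\nabla_\theta\log p_{\theta^*})\right)^{-1}$. Since $A\mapsto A^{-1}$ is order-reversing on positive-definite matrices, proving $[\nabla_\theta^2 D_{GSM}(p,p_{\theta^*})]^{-1}\preceq \cp\,\gmle$ is equivalent to proving $\nabla_\theta^2 D_{GSM}(p,p_{\theta^*})\succeq \tfrac{1}{\cp}\Cov_p(\nabla_\theta\log p_{\theta^*})$.

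First I would compute the Hessian at $\theta^*$. Writing $s_\theta := \smo p_\theta/p_\theta$, realizability gives $D_{GSM}(p,p_\theta)=\tfrac12\E_p\|s_{\theta^*}-s_\theta\|_2^2$; differentiating twice in $\theta$, the first-order term and the term carrying $\nabla_\theta^2 s_\theta$ are both weighted by $s_{\theta^*}-s_\theta$ and hence vanish at $\theta^*$, leaving $w^\top\nabla_\theta^2 D_{GSM}(p,p_{\theta^*})\,w = \E_p\|\partial_w s_{\theta^*}\|_2^2$ for every direction $w$, where $\partial_w$ is the $\theta$-directional derivative. Next, since $\smo$ acts only on $x$ and hence commutes with $\partial_\theta$, and $\partial_\theta p_\theta = p_\theta\nabla_\theta\log p_\theta$, a quotient-rule calculation gives $\partial_w s_{\theta^*}=\frac{\smo(p g_w)}{p}-\frac{\smo p}{p}g_w$ with $g_w(x):=\langle w,\nabla_\theta\log p_\theta(x)|_{\theta=\theta^*}\rangle$; for the gradient-type operators of interest (e.g.\ $\smo=\nabla_x$, where by the Leibniz rule this is literally $\smo g_w$) the right-hand side has squared $L^2(p)$-norm $\E_p\|\smo g_w\|_2^2$, so that — invoking the compatibility hypothesis $\E_p\|\smo g\|^2=-\langle g,\generator g\rangle_p$ with $g=g_w$ — one obtains $w^\top\nabla_\theta^2 D_{GSM}(p,p_{\theta^*})\,w = \mathcal{E}(g_w)$.

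Then I would finish with the Poincar\'e inequality for $\generator$: $\mathcal{E}(g_w)\ge\tfrac1{\cp}\Var_p(g_w)$, and since $\E_p\nabla_\theta\log p_{\theta^*}=0$ we have $\Var_p(g_w)=\E_p g_w^2 = w^\top\Cov_p(\nabla_\theta\log p_{\theta^*})\,w = w^\top\gmle^{-1}w$. Chaining these bounds, $w^\top\nabla_\theta^2 D_{GSM}(p,p_{\theta^*})\,w\ge\tfrac1{\cp}w^\top\gmle^{-1}w$ for all $w$, i.e.\ $\nabla_\theta^2 D_{GSM}(p,p_{\theta^*})\succeq\tfrac1{\cp}\gmle^{-1}\succ0$, and inverting this yields the claim.

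The hard part will be the Hessian step, and in particular verifying that the expected squared directional derivative of $s_\theta$ at $\theta^*$ is exactly the Dirichlet form $\mathcal{E}(g_w)$: this needs care in differentiating under the expectation, in commuting $\partial_\theta$ past $\smo$, and it is precisely where the compatibility of $\smo$ with $\generator$ is invoked (in the general self-adjoint case $\smo=(-\generator)^{1/2}$ one additionally has to relate this operator to the relevant derivative of $s_\theta$ rather than relying on the Leibniz rule). A minor side point is the positive-definiteness of the Hessian, needed to invert the inequality, which follows from the asymptotic-normality hypothesis (or directly from $\gmle^{-1}\succ0$ together with the derived lower bound).
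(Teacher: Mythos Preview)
Your proposal is correct and follows essentially the same route as the paper: compute the Hessian quadratic form at $\theta^*$ as $\E_p\|\partial_w s_{\theta^*}\|_2^2$, identify it with $\E_p\|\smo g_w\|_2^2$, invoke the compatibility hypothesis to rewrite this as the Dirichlet form $-\langle g_w,\generator g_w\rangle_p$, apply the Poincar\'e inequality to lower-bound by $\tfrac{1}{\cp}\Var_p(g_w)=\tfrac{1}{\cp}w^\top\gmle^{-1}w$, and finally invert using operator monotonicity of the matrix inverse.

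If anything you are slightly more explicit than the paper at the delicate step. The paper writes $\nabla_\theta\bigl(\tfrac{\smo p_\theta}{p_\theta}\bigr)=\smo(\nabla_\theta\log p_\theta)$ directly and attributes it to the commutation Lemma~\ref{l:commute}; you instead derive the quotient-rule expression $\partial_w s_{\theta^*}=\tfrac{\smo(pg_w)}{p}-\tfrac{\smo p}{p}g_w$ and note that for gradient-type $\smo$ (like $\nabla_x$ or $\nabla_{(x,\beta)}$ in CTLD) the Leibniz rule collapses this to $\smo g_w$, while flagging that the general self-adjoint case $\smo=(-\generator)^{1/2}$ needs a separate argument. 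That diagnosis is accurate and matches the paper's level of rigor at this point.
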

%The asymptotic normality follows by applying Lemma~\ref{l:asymptotics} to the specific example of a generalized score matching loss. This is formalized as Lemma~\ref{l:asymptnormal}.   
%\begin{prop}[Asymptotic normality of generalized score matching]
%Let $l_{\theta}(x) =  \frac{1}{2} \left [ \left \Vert \frac{\smo p_{\theta}(x)}{p_{\theta}(x)} \right \Vert ^2 _2 - 2 \smo^+ \left ( \frac{\smo p_{\theta}(x)}{p_{\theta}(x)} \right ) \right ]$, such that $L(\theta) = \E_p l_{\theta}$.
%    Let $\smo$ satisfy \anote{something}. Then, the generalized score matching estimator satisfies  
%    $$
%    \sqrt{n} (\hat{\theta} - \theta^*) \xrightarrow{d} \mathcal{N} \left(0, (\nabla_\theta^2 L(\theta^*))^{-1} \Cov(\nabla_\theta \ell(x; \theta^*)) (\nabla_\theta^2 L(\theta^*))^{-1} \right)$$
%\
%\label{l:asymptnormal}
%\end{prop}
%\begin{proof}   
%\end{proof}

%\cite{koehler2022statistical} quantified the multiplicative gap between the asymptotic covariances of standard Score Matching $\Gamma_{SM}$ and Maximum Likelihood Estimation $\Gamma_{MLE}$ in terms of the Poincar\'e constant with respect to the generator $\generator$ corresponding to Langevin. In this section, we generalize this result to an arbitrary generator. 

\begin{proof}
To reduce notational clutter, we will drop $_{\vert_{\theta = \theta^*}}$ since all the functions of $\theta$ are evaluated at $\theta^*$. Consider an arbitrary direction $w$. We have: 
\begin{align*}
    &\left \langle w, \nabla_\theta^2 D_{GSM} (p, p_{\theta}) w \right \rangle
    \stackrel{\mathclap{\circled{1}}}{=} \E_p \Vert \sqrt{D(x)} \nabla_x \nabla_\theta \log p_{\theta} (x) w \Vert_2^2 \nonumber \\
    &\stackrel{\mathclap{\circled{2}}}{\geq} \frac{1}{C_P} \Var_p(\langle w, \nabla_{\theta} \log p_{\theta} (x) \rangle)  \nonumber \stackrel{\mathclap{\circled{3}}}{=} \frac{1}{C_P} w^T \Gamma_{MLE}^{-1} w
\end{align*}
$\circled{1}$ follows from a straightforward calculation (in Lemma \ref{lem:hessian_gsm}), $\circled{2}$ follows from the definition of Poincar\'e inequality of a diffusion process with Dirichlet form derived in Lemma \ref{l:dirichletstatito}, applied to the function $\langle w, \nabla_{\theta} \log p_{\theta} \rangle$, and $\circled{3}$ follows since $\Gamma_{MLE} = \left[\E_p \nabla_{\theta} \log p_{\theta} \nabla_{\theta} \log p_{\theta}^{\top}\right]^{-1}$ (i.e. the inverse Fisher matrix \citep{van2000asymptotic}). Since this holds for every vector $w$, we have $\nabla_{\theta}^2 D_{GSM} \succeq \frac{1}{C_P} \Gamma^{-1}_{MLE}$. By monotonicity of the matrix inverse operator \citep{toda2011operator}, the claim of the lemma follows. 
%Finally, since $\circled{3}$ is the Dirichlet form for $\generator$, by the definition of Poincar\'e inequality (Definition~\ref{d:poincare}), the claim of the lemma follows by evaluating the bound at $\theta = \theta^*$. 
\end{proof}

\section{Benefits of Annealing: Continuously Tempered Langevin Dynamics} 
\label{s:overviewctld}
In this section, we 
instantiate the framework from the previous section to the specific case of a Markov process, Continuously Tempered Langevin Dynamics, which is a close relative of simulated tempering \citep{marinari1992simulated}, where the number of ``temperatures'' is infinite, and we temper by convolving with Gaussian noise. We show that the generalized score matching loss corresponding to this Markov process mixes in time $\mbox{poly}(D, d)$ for a mixture of $K$ Gaussians (with identical covariance) in $d$ dimensions, and means in a ball of radius $D$. 
%we show for a sufficiently expressive class of distribution (mixture of Gaussians with the same covariance), the Continuously Tempered Langevin Dynamics (defined below) has \emph{polynomial} asymptotic sample complexity. Our Continuously Tempered Langevin Dynamics is directly inspired by the popular diffusion model paradigm - Score Matching Langevin Dynamics \cite{song2019generative}. This is in contrast to the standard score matching sample complexity without any tempering/diffusion, which is \emph{exponential} in terms of its problem parameters. \fnote{should we show this?}
More precisely, in this section, we will consider the following family of distributions: 

\begin{assumption} Let $p_0$ be a $d$-dimensional Gaussian distribution with mean 0 and covariance $\Sigma$.  
We will assume the data distribution $p$ is a $K$-Gaussian mixture, namely $p = \sum_{i=1}^K w_i p_{i}$, where $p_{i}(x) = p_0(x-\mu_i)$, i.e. a shift of the distribution $p_0$ so its mean is $\mu_i$. 
We will assume the means $\mu_i$ lie within a ball with diameter $D$. We will denote the min and max eigenvalues of covariance with $\smin(\Sigma) = \smin$ and $\smax(\Sigma) = \smax$. We will denote the min and max mixture proportion with $\min_i w_i = w_{\min}$ and $\max_i w_i = w_{\max}$. Let $\Sigma_\beta = \Sigma + \beta \smin I_d$ be the shorthand notation of the covariance of individual Gaussian at temperature $\beta$.
\label{a:mixture}
\end{assumption} 

Mixtures of Gaussians are one of the most classical distributions in statistics---and they have very rich modeling properties. For instance, they are universal approximators in the sense that any distribution can be approximated (to any desired accuracy), if we consider a mixture with sufficiently many components \citep{alspach1972nonlinear}. A mixture of $K$ Gaussians is also the prototypical example of a distribution with $K$ modes --- the shape of which is determined by the covariance of the components.

Note at this point we are just saying that the data distribution $p$ can be described as a mixture of Gaussians, we are not saying anything about the parametric family we are fitting when optimizing the score matching loss---we need not necessarily fit the natural unknown parameters (the means, covariances and weights).  

The primary reason this family of distributions is convenient for technical analysis is a closure property under convolutions: a convolution of a Gaussian mixture with a Gaussian produces another Gaussian mixture. Namely, the following holds:

\begin{prop}[Convolution with Gaussian] Under Assumption~\ref{a:mixture}, the distribution $p \ast \mathcal{N}(x; 0,\sigma^2 I)$ satisfies 
\begin{equation*}
p \ast \mathcal{N}(x; 0,\sigma^2 I) = \sum_i w_i \left(p_0(x - \mu_i) \ast \mathcal{N}(x; 0,\sigma^2 I)\right)  
\end{equation*}
and $(p_0(x - \mu_i) \ast \mathcal{N}(x; 0,\sigma^2 I))$ is a multivariate Gaussian with mean $\mu_i$ and covariance $\Sigma + \sigma^2 I$. 
\end{prop}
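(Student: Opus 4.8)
The plan is to prove this by combining two elementary facts: linearity of convolution, and the closure of Gaussians under convolution. First I would observe that convolution is a bilinear operation, so since $p = \sum_i w_i p_i$ with $p_i(x) = p_0(x-\mu_i)$, we immediately get
\[
p \ast \mathcal{N}(\,\cdot\,; 0,\sigma^2 I) \;=\; \Big(\textstyle\sum_i w_i\, p_i\Big) \ast \mathcal{N}(\,\cdot\,; 0,\sigma^2 I) \;=\; \sum_i w_i\, \big(p_i \ast \mathcal{N}(\,\cdot\,; 0,\sigma^2 I)\big),
\]
which is the first displayed claim with $p_i(x) = p_0(x-\mu_i)$ substituted in. This step is purely formal and requires no regularity beyond integrability, which holds since all the functions involved are probability densities.

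Next I would identify $p_0(x - \mu_i)$ as the density of $\mathcal{N}(\mu_i, \Sigma)$ (by Assumption~\ref{a:mixture}, $p_0$ is the density of $\mathcal{N}(0,\Sigma)$, and shifting the argument by $\mu_i$ shifts the mean to $\mu_i$ while leaving the covariance unchanged). Then the key fact is: if $X \sim \mathcal{N}(\mu_i, \Sigma)$ and $Z \sim \mathcal{N}(0, \sigma^2 I)$ are independent, the density of $X + Z$ is exactly the convolution of their densities, so $p_i \ast \mathcal{N}(\,\cdot\,; 0,\sigma^2 I)$ is the law of $X+Z$. Since a sum of independent Gaussian vectors is Gaussian with means and covariances adding, $X + Z \sim \mathcal{N}(\mu_i, \Sigma + \sigma^2 I)$, which is the second claim. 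I would justify the Gaussianity of the sum either by the characteristic function (the product of $\exp(i t^\top \mu_i - \tfrac12 t^\top \Sigma t)$ and $\exp(-\tfrac12 \sigma^2 \|t\|^2)$ is again of Gaussian form with parameters $\mu_i$ and $\Sigma + \sigma^2 I$), or by directly completing the square in the Gaussian integral defining the convolution; the characteristic-function route is cleanest and avoids matrix-inverse bookkeeping.

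There is no real obstacle here — this is a classical computation, and the only thing to be mildly careful about is the bridge between the analytic statement (convolution of densities) and the probabilistic statement (density of a sum of independent random variables), which is standard. If one wanted to keep everything purely analytic, the alternative is to write out $\int \mathcal{N}(y;\mu_i,\Sigma)\,\mathcal{N}(x-y;0,\sigma^2 I)\,dy$, collect the quadratic form in $y$, complete the square, integrate out $y$ (a Gaussian integral), and read off that the remaining exponent in $x$ is the Gaussian density with mean $\mu_i$ and covariance $\Sigma + \sigma^2 I$; this is slightly more tedious but entirely mechanical. Either way the proposition follows.
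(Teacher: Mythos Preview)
Your proposal is correct and matches the paper's approach: the paper's proof is a one-liner invoking distributivity of convolution (linearity of the integral) for the first claim, and takes the Gaussian-plus-Gaussian fact as standard for the second. You have simply spelled out both steps in more detail than the paper does.
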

\begin{proof}
    This follows from the distributivity property of the convolution operator, which is due to the linearity of an integral.
\end{proof}

The Markov process we will be analyzing (and the corresponding score matching loss) is a continuous-time analog of the Simulated Tempering Langevin Monte Carlo chain introduced in \cite{ge2018simulated}:  

\begin{defn}[Continuously Tempered Langevin Dynamics (CTLD)]
We will consider an SDE over a temperature-augmented state space, that is a random variable $(X_t, \beta_t), X_t \in \mathbb{R}^d, \beta_t \in \mathbb{R}^+$, defined as 
\begin{equation*}
\left\{ \begin{aligned}
    dX_t &= \nabla_x \log p^{\beta}(X_t) dt + \sqrt{2} dB_t \\ 
    d\beta_t &= \nabla_\beta \log r(\beta_t) dt + \nabla_\beta \log p^{\beta}(X_t) dt + \nu_t L(dt) + \sqrt{2} dB_t 
\end{aligned} \right.
\end{equation*} 
where $r: [0, \bmax] \to \R$ denotes the distribution over $\beta$, $r(\beta) \propto \exp \left( - \frac{7D^2}{\smin (1 + \beta)} \right)$ and $\beta_{\max} = \frac{14D^2}{\smin} - 1$. Let $p^{\beta} := p \ast \mathcal{N}(0,\beta \smin I_d)$ denotes the distribution $p$ convolved with a Gaussian of covariance $\beta \smin I_d$. Furthermore,  $L(dt)$ is a measure supported on the boundary of the interval $[0, \bmax]$ and $\nu_t$ is the unit normal at the endpoints of the interval, such that the stationary distribution of this SDE is $p(x, \beta) = r(\beta) p^{\beta}(x)$ \citep{saisho1987stochastic}.
\end{defn}
The above process can be readily seen as a ``continuous-time'' analogue of the usual simulated tempering chain. Namely, in the usual (discrete-time) simulated tempering \citep{lee2018beyond, ge2018simulated}, the tempering chain has one of types of moves (which usually happen with probability 1/2):\begin{itemize}
    \item In the first move, the temperature $\beta$ is fixed, and the position $X$ is evolved according to the kernel which has stationary distribution $p^{\beta}$ (the distribution $p$ ``at temperature'' beta --- which is typically modulated in the sampling literature by scaling the log-pdf by $\beta$, rather than convolving with a Gaussian).
    \item In the second 
move, we keep the position $X$ and try to change the temperature $\beta$ (which has a discrete number of possible values, and we typically move either to the closest higher or lower temperature with equal probability)---though we apply a Metropolis-Hastings filter to preserve the intended stationary distribution. 
\end{itemize}  

In the above chain, there is no Metropolis Hastings filter, since the temperature changes in continuous time---and always lies in the interval $[0, \beta_{\max}]$ due to the $L(dt)$ terms. Moreover, the stationary distribution is $p(x,\beta) = r(\beta) p^{\beta}(x)$, since the updates amount to performing (reflected) Langevin dynamics corresponding to this stationary distribution. We make several more remarks: 

\begin{remark} 
    The existence of the boundary measure is a standard result of reflecting diffusion processes via solutions to the Skorokhod problem \citep{saisho1987stochastic}. If we ignore the boundary reflection term, the updates for CTLD are simply Langevin dynamics applied to the distribution $p(x,\beta)$. $r(\beta)$ specifies the distribution over the different levels of noise and is set up roughly so the Gaussians in the mixture have variance $\beta \Sigma$ with probability $\exp(-\Theta(\beta))$.
\end{remark}

\begin{remark} 
    This chain has several similarities and crucial differences with the chain proposed in \cite{ge2018simulated}. The chain in \cite{ge2018simulated} has a finite number of temperatures and the distribution in each temperature is defined as scaling the log-pdf, rather than convolution with a Gaussian---this is because the mode of access in \cite{ge2018simulated} is the gradient of the log-pdf, whereas in score matching, we have samples from the distribution. The distributions in \cite{ge2018simulated} are geometrically spaced out---so $\beta$ being distributed as $\exp(-\Theta(\beta))$ in our case can be thought of as a natural continuous analogue.
\end{remark}

%This can be thought of as a continuous counterpart of the (discrete) simulated tempering chain in \cite{ge2018simulated}: there, the authors consider a finite number of      

Since CTLD amounts to performing (reflected) Langevin dynamics on the appropriate joint distribution $p(x,\beta)$, the corresponding generator $\generator$ for CTLD is also readily written down:
\begin{prop}[Dirichlet form for CTLD] The Dirichlet form corresponding to CTLD has the form 
\begin{align}
    \mathcal{E}(f(x,\beta)) &= \mathbb{E}_{p(x,\beta)} \|\nabla f(x,\beta)\|^2 \label{eq:stlddirichlet}\\ 
    &= \mathbb{E}_{r(\beta)} \mathcal{E}_{\beta}(f(\cdot, \beta)) \label{eq:stlddirichletmix}
\end{align}
where $\mathcal{E}_{\beta}$ is the Dirichlet form corresponding to the Langevin diffusion (Lemma \ref{l:dirichletstatito}) with stationary distribution $p(x | \beta)$. 
\label{p:ctlddirichlet}
\end{prop}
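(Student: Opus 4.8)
The plan is to recognize CTLD as nothing but the (reflected) Langevin diffusion of Definition~\ref{def:Langevin} on the augmented state space $\R^d \times [0,\bmax]$ run with respect to the joint density $p(x,\beta) = r(\beta)\, p^\beta(x)$, and then to read the Dirichlet form off Proposition~\ref{l:dirichletl}. The first step is to check that the drift of CTLD equals $\nabla_{(x,\beta)} \log p(x,\beta)$: in the $x$-block this is $\nabla_x \log p^\beta(x) = \nabla_x \log p(x,\beta)$ since $r(\beta)$ is constant in $x$, and in the $\beta$-block it is $\nabla_\beta \log r(\beta) + \nabla_\beta \log p^\beta(x) = \nabla_\beta \log p(x,\beta)$. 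Together with the isotropic $\sqrt 2\, dB_t$ noise in each block (independent Brownian motions), this is exactly the Langevin SDE with potential $f = -\log p(\cdot,\cdot)$, modulo the extra term $\nu_t L(dt)$; that term is the Skorokhod reflection (local-time) term that confines $\beta_t$ to $[0,\bmax]$, and by the theory of reflected diffusions (\cite{saisho1987stochastic}) the process is well-posed and, since reflection is along the inward normal, retains $p(x,\beta)$ as its stationary distribution with Neumann boundary conditions built into the domain of $\generator$.

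Next I would invoke Proposition~\ref{l:dirichletl} to conclude $\mathcal{E}(f) = \E_{p(x,\beta)}\|\nabla f(x,\beta)\|^2$, which is \eqref{eq:stlddirichlet}. The one subtlety is the reflecting boundary: the integration-by-parts identity $-\langle f, \generator f\rangle_p = \E_p\|\nabla f\|^2 + (\text{boundary integral})$ has boundary term $\int_{\partial} f\,(\partial_\nu f)\, p$, which vanishes because functions in the domain of $\generator$ satisfy the Neumann condition $\partial_\nu f = 0$. Equivalently, one may simply \emph{define} CTLD as the diffusion associated by Dirichlet-form theory to the closed form $(\E_p\|\nabla \cdot\|^2,\, H^1(p))$, so that \eqref{eq:stlddirichlet} holds by construction, and then verify that this diffusion is the SDE displayed in the definition.

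Finally, for \eqref{eq:stlddirichletmix} I would disintegrate the joint law: since $p(x,\beta) = r(\beta)\, p^\beta(x)$, the $\beta$-marginal is $r$ and the conditional of $x$ given $\beta$ is $p(\cdot\mid\beta) = p^\beta$, so by the tower rule $\E_{p(x,\beta)}\|\nabla f\|^2 = \E_{\beta \sim r}\, \E_{x \sim p(\cdot\mid\beta)}\|\nabla f(x,\beta)\|^2$, and recognizing the inner expectation as the within-temperature Langevin Dirichlet form (Proposition~\ref{l:dirichletl} applied in the $x$-variable with stationary distribution $p(\cdot\mid\beta)$) yields $\mathcal{E}(f) = \E_{r(\beta)}\, \mathcal{E}_\beta(f(\cdot,\beta))$.

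The step I expect to be the main obstacle is the rigorous handling of the reflecting boundary: showing that the local-time term $\nu_t L(dt)$ (a) makes the SDE well-posed with state space exactly $\R^d\times[0,\bmax]$, (b) leaves $p(x,\beta)$ as the unique stationary measure, and (c) contributes no boundary correction to the Dirichlet form. This is precisely where the reflected-diffusion machinery of \cite{saisho1987stochastic} does the real work; matching the drift and disintegrating over $\beta$ are then routine.
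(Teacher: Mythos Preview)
Your approach is essentially the same as the paper's: identify CTLD as (reflected) Langevin on the augmented state space and invoke Proposition~\ref{l:dirichletl} for \eqref{eq:stlddirichlet}, then apply the tower rule for \eqref{eq:stlddirichletmix}. You are more careful than the paper in explicitly verifying the drift decomposition and in flagging the reflecting-boundary subtlety (Neumann condition kills the boundary term), whereas the paper simply asserts these in one line each.
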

\begin{proof}
    Equation~\ref{eq:stlddirichlet} follows from the fact that CTLD is just a (reflected) Langevin diffusion with stationary distribution $p(x,\beta)$. Equation~\ref{eq:stlddirichletmix} follows from the tower rule of expectation and the definition of the Dirichlet form for Langevin from Proposition~\ref{l:dirichletstatito}. 
\end{proof}

Next, we derive the operator $\mathcal{O}$ that corresponds to the CTLD. We show: 
\begin{prop} The generalized score matching loss with $\smo = \nabla_{x, \beta}$,
\begin{align*}
    \left[ \nabla_\theta^2 D_{GSM} (p, p_{\theta^*}) \right]^{-1} \preceq \cp \gmle 
\end{align*}
Moreover, %\unsure{This notation is super long and unwieldy, maybe we can introduce something to shorten?}
\begin{align*}
     &D_{GSM} (p, p_{\theta}) 
     = \mathbb{E}_{\beta \sim r(\beta)} \mathbb{E}_{x \sim p^{\beta}} ( \|\nabla_x \log p(x, \beta) - \nabla_x \log p_{\theta}(x, \beta) \|^2 + 
      \|\nabla_{\beta} \log p(x, \beta) - \nabla_{\beta} \log p_{\theta}(x, \beta) \|^2 ) \\
     & = \mathbb{E}_{\beta \sim r(\beta)} \mathbb{E}_{x \sim p^{\beta}} \|\nabla_x \log p(x | \beta) - \nabla_x \log p_{\theta}(x | \beta) \|^2 \\
     & +  \smin \mathbb{E}_{\beta \sim r(\beta)} \mathbb{E}_{x \sim p^{\beta}} \left(\left(\Tr \nabla_x^2 \log p(x | \beta) - \Tr \nabla_x^2 \log p_{\theta}(x | \beta)\right) + \left(\| \nabla_x \log p(x | \beta) \|_2^2 - \| \nabla_x \log p_{\theta}(x | \beta) \|_2^2\right)\right)^2 \\
     %& + 2 \smin \mathbb{E}_{\beta \sim r(\beta)} \mathbb{E}_{x \sim p^{\beta}} \left(\left(\Tr \nabla_x^2 \log p(x | \beta) - \Tr \nabla_x^2 \log p_{\theta}(x | \beta)\right)\left(\| \nabla_x \log p(x | \beta) \|_2^2 - \| \nabla_x \log p_{\theta}(x | \beta) \|_2^2\right)\right) 
\end{align*} 
\label{p:ctldloss}
\end{prop}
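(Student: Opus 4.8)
The plan is to handle the two assertions of the proposition separately. The curvature bound $[\nabla_\theta^2 D_{GSM}(p,p_{\theta^*})]^{-1}\preceq \cp\gmle$ will follow by checking that CTLD and the operator $\smo$ satisfy the single hypothesis of Lemma~\ref{l:boundhessian}; the explicit formula for $D_{GSM}$ is a direct computation combining the Dirichlet form of CTLD (Proposition~\ref{p:ctlddirichlet}) with the fact that Gaussian convolution evolves by the heat equation.

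For the curvature bound: by construction the drift of CTLD is exactly $\nabla_{(x,\beta)}\log p(x,\beta)$ for $p(x,\beta) = r(\beta)p^\beta(x)$ (the $\nabla_\beta\log r(\beta)$ and $\nabla_\beta\log p^\beta$ pieces together form $\nabla_\beta\log p(x,\beta)$), so CTLD is a reflected Langevin diffusion on $p(x,\beta)$; its generator $\generator$ is self-adjoint in $L^2(p(x,\beta))$ (normal reflection preserves reversibility), and by Proposition~\ref{p:ctlddirichlet} its Dirichlet form is $\mathcal{E}(f)=\E_{p(x,\beta)}\|\nabla_{(x,\beta)}f\|^2$. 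Consequently $\smo=(-\generator)^{1/2}$ satisfies $\E_p\|\smo g\|^2 = \langle g,(-\generator)g\rangle_p = -\langle g,\generator g\rangle_p$ by spectral calculus, and equally the concrete operator $\smo = \nabla_{(x,\beta)}$ satisfies $\E_p\|\smo g\|^2 = \mathcal{E}(g) = -\langle g,\generator g\rangle_p$; either way the compatibility hypothesis of Lemma~\ref{l:boundhessian} holds for $g(x,\beta) = \langle w,\nabla_\theta\log p_\theta(x,\beta)\rangle$ (note $\nabla_\theta\log p_\theta(x,\beta) = \nabla_\theta\log p_\theta^\beta(x)$ since $r$ has no $\theta$-dependence), and the lemma gives the claimed bound with $\cp$ the Poincar\'e constant of CTLD. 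I note in passing that $\nabla_\theta^2 D_{GSM}$ at $\theta^*$ is the same for the two choices of $\smo$ — it depends on $\smo$ only through the quadratic form $g\mapsto\E_p\|\smo g\|^2$ — so it is harmless, in the second half of the proposition, to take the concrete $\smo=\nabla_{(x,\beta)}$ for which the loss is computable.

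For the explicit loss, with $\smo=\nabla_{(x,\beta)}$ we have $\tfrac{\smo p}{p} = \nabla_{(x,\beta)}\log p$, so by Definition~\ref{def:gsm}, and since $p^\beta$ is the conditional $p(\cdot\mid\beta)$ (the marginal of $\beta$ being $r$),
\[
D_{GSM}(p,p_\theta)\;\propto\;\E_{\beta\sim r}\,\E_{x\sim p^\beta}\Big(\|\nabla_x\log p(x,\beta)-\nabla_x\log p_\theta(x,\beta)\|^2 + (\nabla_\beta\log p(x,\beta)-\nabla_\beta\log p_\theta(x,\beta))^2\Big).
\]
Writing $\log p(x,\beta) = \log r(\beta) + \log p^\beta(x)$ (and likewise for $p_\theta$), the $x$-gradient becomes $\nabla_x\log p^\beta(x)$, and in the $\beta$-gradient the $\log r(\beta)$ term cancels between $p$ and $p_\theta$ — this is the first displayed identity. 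For the second, I use that $p^\beta = p\ast\mathcal{N}(0,\beta\smin I_d)$, viewed as a curve in $\beta$, solves the heat equation $\partial_\beta p^\beta = \tfrac{\smin}{2}\Delta_x p^\beta$ (convolution with $\mathcal{N}(0,sI)$ solves $\partial_s u = \tfrac12\Delta_x u$, and here $s=\beta\smin$). Hence $\nabla_\beta\log p^\beta = \tfrac{\smin}{2}\,\tfrac{\Delta_x p^\beta}{p^\beta} = \tfrac{\smin}{2}\big(\Tr\nabla_x^2\log p^\beta + \|\nabla_x\log p^\beta\|^2\big)$, using the pointwise identity $\tfrac{\Delta q}{q} = \Tr\nabla^2\log q + \|\nabla\log q\|^2$; substituting this and the analogous expression for $p_\theta^\beta$ into the $\beta$-term and squaring produces the second displayed identity, after which the $\smin$- and $\tfrac12$-factors are collected.

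The genuinely important step — and the one I would flag as the crux — is the identification of $\beta$-differentiation of the annealing path with heat diffusion in $x$: this is exactly what turns the $\nabla_\beta$ component of the augmented score into a Laplacian-of-log-density plus squared-gradient, i.e. precisely Hyv\"arinen's integration-by-parts integrand, which is why the CTLD-induced generalized score matching loss is an annealed (Gaussian-convolution) score matching loss. The remaining points are routine: differentiating under the convolution integral to justify the heat equation (licit here because $p$ and its derivatives have Gaussian tails), bookkeeping of constants, and — a purely technical wrinkle that is sidestepped by working with $\smo = \nabla_{(x,\beta)}$ — making rigorous sense of $(-\generator)^{1/2}$ for the reflected generator.
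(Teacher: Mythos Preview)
Your proposal is correct and follows essentially the same route as the paper: the Hessian bound is obtained by observing that the CTLD generator is self-adjoint (reflected Langevin) and invoking Lemma~\ref{l:boundhessian}, and the explicit loss is obtained by writing out the Dirichlet form $\E_p\|\nabla_{(x,\beta)}\cdot\|^2$ and then converting the $\beta$-derivative into $x$-derivatives via the heat/Fokker--Planck equation satisfied by the Gaussian-convolution path (this is exactly the content of Lemma~\ref{l:logp_fokker_planck}). Your additional remark that the two realizations $\smo=(-\generator)^{1/2}$ and $\smo=\nabla_{(x,\beta)}$ give the same quadratic form, hence the same Hessian and loss, is a useful clarification the paper leaves implicit; the only discrepancy is a factor of two in the heat equation ($\partial_\beta p^\beta = \tfrac{\smin}{2}\Delta_x p^\beta$ versus the paper's $\smin\Delta_x p^\beta$), which is pure bookkeeping and does not affect the argument.
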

%\todo{maybe combine terms into square?}
\begin{proof}
    The first equality follows as a special case of Langevin on the lifted distribution. The second equality follows by writing $\nabla_{\beta} \log p(x|\beta)$ and $\nabla_{\beta} \log p_{\theta}(x | \beta)$ through the Fokker-Planck equation for $p(x | \beta)$ (see Lemma~\ref{l:logp_fokker_planck}).   
\end{proof}

This loss was derived from first principles from the Markov Chain-based framework in Section~\ref{l:framework}, however, it is readily seen that this loss is a ``second-order'' version of the annealed losses in \cite{song2019generative, song2020score} --- the weights being given by the distribution $r(\beta)$. Additionally, this loss has terms matching ``second order'' behavior of the distributions, namely $\Tr \nabla_x^2 \log p(x|\beta)$ and $\|\nabla_x \log p(x|\beta)\|_2^2$ with a weighting of $\smin$. 

Note this loss would be straightforward to train by the change of variables formula (Proposition~\ref{l:ibpctld})---and we also note that somewhat related ``higher-order'' analogues of score matching have appeared in the literature (without analysis or guarantees), for example, \cite{meng2021estimating}.

\begin{prop}[Integration-by-part Generalized Score Matching Loss for CTLD]
\label{l:ibpctld}    
The loss $D_{GSM}$ in the integration by parts form (Lemma \ref{l:ibplyu}) as: 
\[D_{GSM} \left( p, p_\theta \right) = \mathbb{E}_p l_\theta(x,\beta) + K_p\] 
where 
\begin{align*}
    l_\theta(x,\beta) &= l^1_\theta(x,\beta) + l^2_\theta(x,\beta), \mbox{ and } \\
    l^1_{\theta}(x,\beta) &:= \frac{1}{2} \left \| \nabla_{x} \log p_\theta(x | \beta) \right \| ^2 _2 + \Delta_{x} \log p_\theta(x | \beta) \\
    l^2_{\theta}(x,\beta) &:= \frac{1}{2} (\nabla_{\beta} \log p_\theta(x | \beta))^2 + \nabla_{\beta} \log r(\beta) \nabla_{\beta} \log p_\theta(x | \beta) + \Delta_{\beta} \log p_\theta(x | \beta)
\end{align*}
Moreover, all the terms in the definition of $l^1_\theta(x,\beta)$ and $l^2_\theta(x,\beta)$ can be written as a sum of powers of partial derivatives of $\nabla_x \log p_{\theta}(x | \beta)$.  
\end{prop}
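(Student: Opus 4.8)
The plan is to reduce the statement to Hyvärinen's classical integration-by-parts identity applied on the temperature-augmented domain, and then to unfold the conditional factorization $p_\theta(x,\beta)=r(\beta)\,p_\theta(x\mid\beta)$. \textbf{Step 1 (reduce to joint-space score matching).} By Proposition~\ref{p:ctldloss}, with $\smo=(-\generator)^{1/2}$ the loss $D_{GSM}(p,p_\theta)$ is exactly the standard score matching loss between the joint densities $p(x,\beta)$ and $p_\theta(x,\beta)$ on $\R^d\times[0,\bmax]$, i.e.\ $\tfrac12\,\E_{p(x,\beta)}\bigl\|\nabla_{(x,\beta)}\log p(x,\beta)-\nabla_{(x,\beta)}\log p_\theta(x,\beta)\bigr\|_2^2$. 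Hence it suffices to put this loss in the Hyvärinen form~\eqref{l:ibpstandard}, now with the variable being the pair $(x,\beta)$, which gives
\[
D_{GSM}(p,p_\theta)=\E_{(x,\beta)\sim p}\!\left[\Tr\nabla^2_{(x,\beta)}\log p_\theta(x,\beta)+\tfrac12\bigl\|\nabla_{(x,\beta)}\log p_\theta(x,\beta)\bigr\|_2^2\right]+K_p.
\]
(Equivalently one can apply Lemma~\ref{l:ibplyu} directly, using $\smo^+\smo=-\generator$; the two routes agree.)

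\textbf{Step 2 (boundary terms).} The genuine subtlety is that the integration by parts in the $\beta$ variable runs over the bounded interval $[0,\bmax]$ and therefore a priori produces endpoint contributions at $\beta\in\{0,\bmax\}$. Here I would invoke the reflecting (Skorokhod) construction defining CTLD: the boundary measure $L(dt)$ is precisely what renders $\generator$ self-adjoint on $L^2(p(x,\beta))$ with vanishing boundary flux, so that the Dirichlet-form identity of Proposition~\ref{p:ctlddirichlet} holds with no surface term and the integration by parts above is legitimate. This is the part I expect to require the most care to write rigorously, since $r(\beta)$ does not vanish at the endpoints; the safest route is to observe that any residual endpoint contribution is a functional of $\beta$ and of the boundary values of $p_\theta(\cdot\mid\beta)$ alone, and hence, provided it is finite under $p$, can be absorbed into $K_p$ together with the $\theta$-independent terms produced in Step~3.

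\textbf{Step 3 (unfold the factorization).} Substitute $\log p_\theta(x,\beta)=\log r(\beta)+\log p_\theta(x\mid\beta)$. Then $\nabla_x\log p_\theta(x,\beta)=\nabla_x\log p_\theta(x\mid\beta)$, $\partial_\beta\log p_\theta(x,\beta)=\partial_\beta\log r(\beta)+\partial_\beta\log p_\theta(x\mid\beta)$, and—since off-diagonal second derivatives do not contribute to the trace—$\Tr\nabla^2_{(x,\beta)}\log p_\theta(x,\beta)=\Delta_x\log p_\theta(x\mid\beta)+\partial_\beta^2\log r(\beta)+\partial_\beta^2\log p_\theta(x\mid\beta)$. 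Expanding $\tfrac12\bigl(\partial_\beta\log r(\beta)+\partial_\beta\log p_\theta(x\mid\beta)\bigr)^2$ and collecting terms: the pieces $\tfrac12\|\nabla_x\log p_\theta(x\mid\beta)\|_2^2+\Delta_x\log p_\theta(x\mid\beta)$ are exactly $l^1_\theta(x,\beta)$; the pieces $\tfrac12(\partial_\beta\log p_\theta(x\mid\beta))^2+\partial_\beta\log r(\beta)\,\partial_\beta\log p_\theta(x\mid\beta)+\partial_\beta^2\log p_\theta(x\mid\beta)$ are exactly $l^2_\theta(x,\beta)$ (using $\Delta_\beta=\partial_\beta^2$, as $\beta$ is one-dimensional); and the leftover $\tfrac12(\partial_\beta\log r(\beta))^2+\partial_\beta^2\log r(\beta)$ depends on $\beta$ only, so its expectation under $p(x,\beta)$ is $\theta$-independent and is folded into $K_p$. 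This yields $D_{GSM}(p,p_\theta)=\E_p\,l_\theta(x,\beta)+K_p$ with $l_\theta=l^1_\theta+l^2_\theta$.

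\textbf{Step 4 (the ``moreover'').} For $l^1_\theta$ the claim is immediate: $\|\nabla_x\log p_\theta(x\mid\beta)\|_2^2=\sum_i(\partial_{x_i}\log p_\theta(x\mid\beta))^2$ and $\Delta_x\log p_\theta(x\mid\beta)=\sum_i\partial_{x_i}\bigl(\nabla_x\log p_\theta(x\mid\beta)\bigr)_i$. For $l^2_\theta$, use that $p_\theta(x\mid\beta)$ solves the heat equation $\partial_\beta p_\theta(x\mid\beta)=\tfrac{\smin}{2}\Delta_x p_\theta(x\mid\beta)$ (Lemma~\ref{l:logp_fokker_planck}), which gives $\partial_\beta\log p_\theta(x\mid\beta)=\tfrac{\smin}{2}\bigl(\Delta_x\log p_\theta(x\mid\beta)+\|\nabla_x\log p_\theta(x\mid\beta)\|_2^2\bigr)$—already of the stated form—and differentiating once more in $\beta$, with each fresh $\partial_\beta$ traded via the same relation for $\tfrac{\smin}{2}\Delta_x$, expresses $\partial_\beta^2\log p_\theta(x\mid\beta)$ as a sum of powers of $x$-partial derivatives of $\nabla_x\log p_\theta(x\mid\beta)$ as well. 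Since $\partial_\beta\log r(\beta)$ is an explicit scalar function of $\beta$, every term of $l_\theta$ is of the claimed form. The main obstacle throughout is Step~2; Steps~1, 3 and 4 are essentially bookkeeping once the joint-space identity is in hand.
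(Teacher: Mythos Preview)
Your approach is essentially the same as the paper's: reduce to standard score matching on the joint $(x,\beta)$-space via Proposition~\ref{p:ctldloss}, apply the Hyv\"arinen identity~\eqref{l:ibpstandard}, factor $\log p_\theta(x,\beta)=\log r(\beta)+\log p_\theta(x\mid\beta)$, push the $r(\beta)$-only pieces into $K_p$, and invoke Lemma~\ref{l:logp_fokker_planck} for the ``moreover'' claim. The paper's proof does exactly these steps in the same order, with no additional ideas.

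Two remarks. First, your Step~2 raises a legitimate concern about endpoint terms in the $\beta$-integration, but your proposed resolution---that the boundary contribution ``can be absorbed into $K_p$''---is not right as stated: the boundary term $\bigl[p(x,\beta)\,\partial_\beta\log p_\theta(x,\beta)\bigr]_{0}^{\bmax}$ depends on $p_\theta$ and hence on $\theta$. The paper's proof simply does not address this point; it applies the formula as if the domain were all of $\R^{d+1}$. So you are being more scrupulous than the paper, but the actual justification (Neumann boundary conditions for the reflecting diffusion, or equivalently working with the Dirichlet form of $\generator$ directly rather than via a naive integration by parts) would need to be spelled out. Second, in Step~4 you write $\partial_\beta\log p_\theta(x\mid\beta)=\tfrac{\smin}{2}(\cdots)$; Lemma~\ref{l:logp_fokker_planck} gives the constant as $\smin$, not $\smin/2$ (the heat equation here is $\partial_\beta p=\smin\Delta_x p$, since the convolving Gaussian has covariance $\beta\smin I_d$). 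This does not affect the structure of the argument.
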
 
The proof of this Lemma is a straightforward calculation, and is included in Appendix \ref{a:overviewctld}. We remark that the last point of the proposition implies that this loss can be in principle fit by parametrizing the score $\nabla_x \log p_{\theta}(x | \beta)$ as an explicitly differentiable map (e.g. a neural network).

With this setup in mind, we will prove the following results.

\begin{theorem}[Poincaré constant of CTLD]
Under Assumption \ref{a:mixture}, the Poincaré constant of CTLD $C_P$ enjoys the following upper bound:
\begin{align*}
    \cp \lesssim D^{22} d^2 \smax^9\smin^{-2}
\end{align*}
\label{t:mainpcstld}
\end{theorem}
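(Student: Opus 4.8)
I would prove this by applying the Markov-chain decomposition theorem (Theorem~\ref{t:decomp}) to CTLD, exploiting that convolving a Gaussian mixture with a Gaussian stays a Gaussian mixture. Write the stationary distribution as $p(x,\beta)=r(\beta)\,p^{\beta}(x)=\sum_{i=1}^{K}w_i\,p^{(i)}(x,\beta)$ with $p^{(i)}(x,\beta):=r(\beta)\,p_i^{\beta}(x)$ and $p_i^{\beta}:=p_0(\cdot-\mu_i)\ast\mathcal{N}(0,\beta\smin I_d)=\mathcal{N}(\mu_i,\Sigma_\beta)$, where $\Sigma_\beta=\Sigma+\beta\smin I_d$ (the convolution-with-Gaussian proposition). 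Distributivity of convolution over the mixture, together with Proposition~\ref{p:ctlddirichlet}, shows the CTLD Dirichlet form decomposes as $\mathcal{E}(f)=\sum_i w_i\,\mathcal{E}^{(i)}(f)$, where $\mathcal{E}^{(i)}(f)=\E_{p^{(i)}}\big(\|\nabla_x f\|^2+(\partial_\beta f)^2\big)$ is the reflected-Langevin Dirichlet form with stationary distribution $p^{(i)}$ --- this is exactly hypothesis (1) of Theorem~\ref{t:decomp}. It then remains to bound (a) the Poincar\'e constant $C$ of each component chain $M_i$, and (b) the Poincar\'e constant $\bar C$ of the $\chi^2$-projected chain on $[K]$.

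Part (b) is the easy part. Since all components share the $\beta$-marginal $r$, one computes $\chi^2(p^{(i)},p^{(j)})=\E_{\beta\sim r}\,\chi^2\big(\mathcal{N}(\mu_i,\Sigma_\beta),\mathcal{N}(\mu_j,\Sigma_\beta)\big)=\E_{\beta\sim r}\big[\exp\!\big((\mu_i-\mu_j)^{\top}\Sigma_\beta^{-1}(\mu_i-\mu_j)\big)-1\big]$. Bounding the exponent by $D^2/(\smin(1+\beta))$ (since $\|\mu_i-\mu_j\|\le D$ and $\lambda_{\min}(\Sigma_\beta)\ge\smin(1+\beta)$) and recalling $r(\beta)\propto\exp(-7D^2/(\smin(1+\beta)))$ with $\bmax=14D^2/\smin-1$, the product of integrand and $r(\beta)$ decays like $\exp(-6D^2/(\smin(1+\beta)))$; the $7{:}1$ ratio of exponents and this choice of $\bmax$ (which makes $7D^2/(\smin(1+\bmax))=\tfrac12$, so a constant fraction of the mass of $r$ sits near the top temperature) are exactly engineered so that the $\beta$-integral is $O(1)$ for every pair. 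Hence the projected rates are $\bar P(j,k)=\Theta(w_k)$, the projected chain is reversible with stationary law $\bar p\propto w$, and a one-line variance computation gives $\bar C=O(1)$, with no dependence on $K$ or $w_{\min}$.

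Part (a) is the main obstacle: bounding the Poincar\'e constant of the temperature-augmented single Gaussian $p^{(i)}$ with respect to $\mathcal{E}^{(i)}$. I would use the conditional-variance decomposition $\Var_{p^{(i)}}(f)=\E_{\beta\sim r}\big[\Var_{x\mid\beta}(f)\big]+\Var_{\beta\sim r}\big(\E_{x\mid\beta}f\big)$ and handle the two terms separately. For the first, the Gaussian Poincar\'e inequality gives $\Var_{x\mid\beta}(f)\le\lambda_{\max}(\Sigma_\beta)\,\E_{x\mid\beta}\|\nabla_x f\|^2$ with $\lambda_{\max}(\Sigma_\beta)\le\smax+\bmax\smin\lesssim\smax+D^2$. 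For the second, since $\log r$ is concave on $[0,\bmax]$, $r$ satisfies a one-dimensional Neumann Poincar\'e inequality with constant $\lesssim\bmax^2\asymp D^4/\smin^2$; applied to $h(\beta):=\E_{x\mid\beta}f$, the wrinkle is that $h'(\beta)$ is not simply $\E_{x\mid\beta}[\partial_\beta f]$ but also contains $\E_{x\mid\beta}\big[(f-h(\beta))\,\partial_\beta\log p_i^{\beta}(x)\big]$, where $\partial_\beta\log p_i^{\beta}(x)$ is a quadratic form in $x$ whose conditional variance is a fourth Gaussian moment scaling like $d$ with $\Sigma_\beta^{-1}$-factors $\le 1/\smin$. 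Cauchy--Schwarz then feeds this back into $\Var_{x\mid\beta}(f)$, which is again absorbed by the first term's bound (the reflected-boundary contributions are nonnegative and cause no loss). Tracking all $\Sigma_\beta$-dependent constants uniformly over $\beta\in[0,\bmax]$ yields $C\lesssim\poly(d,D,\smax,\smin^{-1})$, and combining with $\bar C=O(1)$ via the conclusion $\cp\le C(1+\bar C/2)$ of Theorem~\ref{t:decomp} gives the stated $\cp\lesssim D^{22}d^2\smax^9\smin^{-2}$.

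I expect essentially all of the large exponents to come from this within-component step: the $\bmax^2\asymp D^4/\smin^2$ from the Poincar\'e constant of $r$, the $\smax+D^2$ factor (raised to a power through iterated Cauchy--Schwarz) from the Gaussian step, and the $d$ and $1/\smin$ from the fourth-moment estimate. The technical crux --- and the place where tightness is traded for a clean polynomial --- is the control of the cross term $\E_{x\mid\beta}[(f-h(\beta))\,\partial_\beta\log p_i^{\beta}(x)]$ and of its dependence on $\beta$ over the whole interval $[0,\bmax]$.
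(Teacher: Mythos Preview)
Your approach is essentially the paper's: apply the discrete decomposition Theorem~\ref{t:decomp} with $p(x,\beta)=\sum_i w_i\,p(x,\beta\mid i)$, bound the within-component Poincar\'e constant $C_{x,\beta\mid i}$, and bound the projected-chain constant $\bar C$ via $\chi^2$-closeness of the augmented components. Your within-component analysis (law of total variance plus control of the cross term $\E_{x\mid\beta}[(f-h)\,\partial_\beta\log p_i^\beta]$ via Cauchy--Schwarz and a Gaussian fourth moment) is precisely the content of the continuous decomposition theorem (Theorem~\ref{t:decomp-cts}) that the paper invokes as a black box; the ``rate-of-change'' term $\sup_\beta\E_{x\mid\beta}[(\partial_\beta\log p_i^\beta)^2]$ there is exactly your cross-term variance (the paper's Lemma~\ref{lem:change_in_pdf}).

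There are two minor quantitative differences worth noting. First, your projected-chain analysis is \emph{tighter} than the paper's: using the exact formula $\chi^2(\mathcal{N}(\mu_i,\Sigma_\beta),\mathcal{N}(\mu_j,\Sigma_\beta))=\exp((\mu_i-\mu_j)^\top\Sigma_\beta^{-1}(\mu_i-\mu_j))-1$ and the $7{:}1$ ratio built into $r(\beta)$, you correctly get $\chi^2(p^{(i)},p^{(j)})=O(1)$ and hence $\bar C=O(1)$, whereas the paper's cruder bound (Lemma~\ref{lem:chi_square_two_gauss_same_cov}) yields $\chi^2\lesssim D^2/\smin$ and $\bar C\lesssim D^2/\smin$. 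Second, you take the one-dimensional Poincar\'e constant of $r$ as $\lesssim\bmax^2\asymp D^4/\smin^2$ (the Payne--Weinberger/Bebendorf diameter bound), whereas the paper's Lemma~\ref{lem:pc_beta} states $C_\beta\le\bmax/\pi\asymp D^2/\smin$. These two discrepancies exactly offset: your $(C_\beta,\bar C)\asymp(\bmax^2,1)$ and the paper's $(\bmax,\bmax)$ both feed the same product $\asymp D^4/\smin^2$ into the final bound, recovering the stated $D^{22}d^2\smax^9\smin^{-2}$.
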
 

\begin{remark} 
   Note that the above result has \emph{no dependence} on the number of components, or on the smallest component weight $w_{\min}$---only on the diameter $D$, the ambient dimension $d$, and $\lambda_{\min}$ and $\lambda_{\max}$. This result thus applies to very general  distributions, intuitively having an arbitrary number of modes which lie in a ball of radius $D$, and a bound on their ``peakiness'' and ``spread''.    
\end{remark}

To get a bound on the asymptotic sample complexity of generalized score matching, according to the framework from Lemma~\ref{l:framework}, we also need to 
bound the smoothness terms as in Lemma~\ref{l:smoothness}. These terms of course depend on the choice of parametrization for the family of distributions we are fitting. To get a quantitative sense for how these terms might scale, we will consider the natural parametrization for a mixture: 
\begin{assumption}
Consider the case of learning unknown means, such that the parameters to be learned are a vector $\theta = (\mu_1, \mu_2, \dots, \mu_K) \in \mathbb{R}^{dK}$. 
\label{a:parametrization}
\end{assumption}
\begin{remark}
    Note that in this parametrization, we assume that the weights $\{w_i\}_{i=1}^K$ and shared covariance matrix $\Sigma$ are known, though the results can be straightforwardly generalized to the natural parametrization in which we are additionally fitting a vector $\{w_i\}_{i=1}^K$ and matrix $\Sigma$, at the expense of some calculational complexity.
\end{remark}
With this parametrization, the smoothness term can be bounded as follows: 

\begin{theorem}[Smoothness under the natural parameterization]
Under Assumptions \ref{a:mixture} and \ref{a:parametrization}, the smoothness defined in Theorem \ref{thm:generic_sample_complexity} enjoys the upper bound
\begin{align*} 
    \|\Cov\left(\smo \nabla_{\theta} \log p_{\theta}\right)_{\vert_{\theta = \theta^*}}\|_{OP} +  \|\Cov\left((\smo^+ \smo) \nabla_{\theta} \log p_{\theta} \right)_{\vert_{\theta = \theta^*}}\|_{OP}
    \lesssim \poly\left(D,d,\smin^{-1}\right)
\end{align*}
\label{thm:smoothness_bound}
\end{theorem}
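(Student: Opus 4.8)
The plan is to reduce both operator-norm quantities to a single scalar expectation, controlled uniformly over unit directions, and then exploit the explicit Gaussian mixture structure of $p_{\theta^*}(x\mid\beta)$. For a unit vector $w=(w_1,\dots,w_K)\in\R^{dK}$ write $g_w(x,\beta):=\langle w,\nabla_\theta\log p_\theta(x\mid\beta)\rangle|_{\theta=\theta^*}$ (note $r(\beta)$ is $\theta$-independent, so this is also $\langle w,\nabla_\theta\log p_\theta(x,\beta)\rangle$). Since $\Cov(Y)\preceq\E[YY^\top]$ and both $\smo$ and $\smo^+\smo$ are linear, the two terms in Theorem~\ref{thm:generic_sample_complexity} are at most $\sup_{\|w\|=1}\E_p\|\smo g_w\|^2$ and $\sup_{\|w\|=1}\E_p|\smo^+\smo\,g_w|^2$ respectively. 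For CTLD the compatibility property (established in Proposition~\ref{p:ctldloss}) gives $\E_p\|\smo g_w\|^2=\mathcal{E}(g_w)=\E_{p(x,\beta)}\|\nabla_{(x,\beta)}g_w\|^2$ by the CTLD Dirichlet form (Proposition~\ref{p:ctlddirichlet}), and $\smo^+\smo=-\generator$, whose interior action is the Langevin operator $g\mapsto-\Delta_{(x,\beta)}g-\langle\nabla\log p(x,\beta),\nabla_{(x,\beta)}g\rangle$, the reflecting-boundary contribution of $L(dt)$ being handled as in Proposition~\ref{l:ibpctld}; alternatively one may bound $\Cov(\nabla_\theta l_\theta)$ directly via the explicit integration-by-parts form in Proposition~\ref{l:ibpctld}, which is a finite sum of powers of $x$-derivatives of $\nabla_x\log p_\theta(x\mid\beta)$ together with the explicit $r(\beta)$ and hence sidesteps the boundary. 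Either way it suffices to bound the $p(x,\beta)$-expectation of bounded-order $x$- and $\beta$-derivatives of $g_w$, and of $\nabla_{(x,\beta)}\log p(x,\beta)$, uniformly in $w$.

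Next I would make $g_w$ explicit. With $\Sigma_\beta=\Sigma+\beta\smin I_d$ and $p_{\theta^*}(x\mid\beta)=\sum_i w_i\,\mathcal{N}(x;\mu_i,\Sigma_\beta)$, a direct computation gives $\partial_{\mu_j}\log p_{\theta^*}(x\mid\beta)=\gamma_j(x,\beta)\,\Sigma_\beta^{-1}(x-\mu_j)$, where $\gamma_j:=\frac{w_j\mathcal{N}(x;\mu_j,\Sigma_\beta)}{\sum_k w_k\mathcal{N}(x;\mu_k,\Sigma_\beta)}\in[0,1]$ is the posterior weight of component $j$, so $g_w=\sum_j\gamma_j(x,\beta)\langle w_j,\Sigma_\beta^{-1}(x-\mu_j)\rangle$. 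The structural crux, which I expect to be the main obstacle, is that \emph{every derivative of the soft weights $\gamma_j$ is bounded uniformly in $x$} by $\poly(D,d,\smin^{-1})$, with no dependence on $K$ or $w_{\min}$: the $x$-dependent pieces telescope, e.g. $\nabla_x\gamma_j=\gamma_j\sum_k\gamma_k\,\Sigma_\beta^{-1}(\mu_k-\mu_j)$ has norm $\le D/\smin$, and analogous but more involved identities control $\Delta_x\gamma_j$, $\partial_\beta\gamma_j$, and the mixed and higher derivatives that arise. Granting this, $g_w$ and each of its relevant derivatives is a finite sum of products of (i) such uniformly bounded soft-weight factors, (ii) the affine factors $\Sigma_\beta^{-1}(x-\mu_j)$ entering to bounded degree, and (iii) fixed quantities bounded by $\poly(D,d,\smin^{-1},\smax)$ --- using $\|\Sigma_\beta^{-1}\|\le\smin^{-1}$, $\|\Sigma_\beta\|\le\smax+\bmax\smin$, $\bmax\asymp D^2/\smin$, $\|\mu_i-\mu_j\|\le D$, $\sum_j\|w_j\|^2=1$, and the explicit form of $r(\beta)$ --- so each such quantity is pointwise at most $\poly(D,d,\smin^{-1},\smax)\,(1+\max_j\|x-\mu_j\|)^{O(1)}$, uniformly in $w$.

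Finally I would bound the moments. Under $p(x,\beta)=r(\beta)p^\beta(x)$ one has $\beta\le\bmax\asymp D^2/\smin$ deterministically, and conditionally on $\beta$ the vector $x$ is drawn from $\sum_i w_i\mathcal{N}(\mu_i,\Sigma_\beta)$, whose means lie in a ball of radius $D$ and whose covariance has operator norm $\le\smax+\bmax\smin=\poly(D,d,\smax,\smin^{-1})$; hence by standard Gaussian moment bounds and $\|x-\mu_j\|\le\|x-\mu_i\|+D$, for every fixed integer $q$ we get $\E_{p(x,\beta)}\big(1+\max_j\|x-\mu_j\|\big)^{q}\le\poly(D,d,\smax,\smin^{-1})$. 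Combining with the previous step via Cauchy--Schwarz yields $\sup_{\|w\|=1}\E_p\|\smo g_w\|^2$ and $\sup_{\|w\|=1}\E_p|\smo^+\smo\,g_w|^2$ both $\lesssim\poly(D,d,\smin^{-1})$ (absorbing the dependence on $\smax$), which is the asserted bound.

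On difficulty: the moment estimates are routine and the reduction is bookkeeping; the real work is the middle step --- setting up and iterating the telescoping identities so that all derivatives of the posterior weights $\gamma_j$ (softmaxes of quadratics whose individual logits are unbounded in $x$) stay $\poly(D,d,\smin^{-1})$-bounded with no $K$- or $w_{\min}$-dependence, and carefully enumerating the finitely many bounded-order terms produced when $\nabla_\theta$ is applied to the second-order score-matching integrand of Proposition~\ref{l:ibpctld}, where the $\partial_\beta$-derivatives are reduced --- via the heat equation $\partial_\beta p^\beta=\tfrac{\smin}{2}\Delta_x p^\beta$ --- to higher-but-still-bounded-order $x$-derivatives of $\log p_{\theta^*}(x\mid\beta)$.
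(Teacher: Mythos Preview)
Your approach is valid but genuinely different from the paper's. The paper does not work with the posterior weights $\gamma_j$ at all; instead it reduces covariances to second moments via a trivial lemma $\|\Cov(X)\|_{OP}\le 6\,\E\|X\|_2^2$, and then controls mixture moments by the convexity of the \emph{perspective map} $g(u,v)=v\|u/v\|^k$: this yields $\E_{p(x,\beta)}\|Dp_\theta/p_\theta\|^k\le\max_{\beta,i}\E_{p(x|\beta,i)}\|Dp_\theta(\cdot|\beta,i)/p_\theta(\cdot|\beta,i)\|^k$ for any linear operator $D$, cleanly passing from the mixture to a single component. The per-component bounds are then read off from an integral representation of multivariate Hermite polynomials (for $\nabla_\mu^{k_1}\nabla_x^{k_2}\phi/\phi$) combined with a Fa\`a di Bruno formula for logarithmic derivatives, with the $\beta$-derivatives converted to $x$-derivatives via Fokker--Planck as you also do. The perspective-map route is more modular and completely sidesteps the combinatorics of differentiating the softmax; your route is more hands-on but makes the mechanism for $K$-independence (the convex combination $\sum_j\gamma_j=1$ carried through every derivative) very explicit.

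One correction and one sharpening. Your key structural claim --- that \emph{every} derivative of $\gamma_j$ is uniformly bounded in $x$ --- is true for $x$-derivatives (indeed $\nabla_x\gamma_j=\gamma_j\sum_k\gamma_k\Sigma_\beta^{-1}(\mu_j-\mu_k)$, etc.) but fails for $\partial_\beta\gamma_j$: a direct computation gives $\partial_\beta\gamma_j=\gamma_j\,\smin\big[\|\Sigma_\beta^{-1}(x-\mu_j)\|^2-\sum_k\gamma_k\|\Sigma_\beta^{-1}(x-\mu_k)\|^2\big]$, which grows with $x$. This does not break your argument, since the factor is still $\gamma_j$ times a polynomial in the affine quantities $\Sigma_\beta^{-1}(x-\mu_\ell)$ and hence fits your (i)/(ii)/(iii) decomposition, but the claim should be weakened accordingly. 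Relatedly, to land exactly on $\poly(D,d,\smin^{-1})$ with no $\smax$, you should track moments of $\|\Sigma_\beta^{-1}(x-\mu_j)\|$ rather than of $\|x-\mu_j\|$: under each component $x\sim\mathcal N(\mu_i,\Sigma_\beta)$ one has $\Sigma_\beta^{-1}(x-\mu_i)=\Sigma_\beta^{-1/2}z$ with $z\sim\mathcal N(0,I_d)$, so $\E\|\Sigma_\beta^{-1}(x-\mu_i)\|^{2k}\le\smin^{-k}\E\|z\|^{2k}\lesssim(d/\smin)^k$, and $\|\Sigma_\beta^{-1}(x-\mu_j)\|\le\|\Sigma_\beta^{-1}(x-\mu_i)\|+D/\smin$ handles the cross terms; this is exactly how the paper avoids $\smax$ in the smoothness bound.
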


\begin{remark} 
   Note the above result \emph{also has no dependence} on the number of components, or on the smallest component weight $w_{\min}$.   
\end{remark}

\iffalse
\begin{theorem}[Asymptotic normality of generalized score matching for CTLD]
\label{l:ctldasymptotic} 
Let $p_{\mbox{data}} = \sum_{i=1}^K w_i p_0(x - \mu_i^*)$. Then, the generalized score matching loss defined in Proposition~\ref{l:ibpctld}  satisfies: 
\begin{enumerate}
    \item The set of optima 
    \[\Theta^* := \{\theta^* = (\mu_1, \mu_2, \dots, \mu_K) \vert D_{GSM}(p_{\mbox{data}}, p_{\theta^*}) = \min_{\theta} D_{GSM}\left(p_{\mbox{data}}, p_{\theta}\right) \}\]
    satisfies  
\[\theta^* = (\mu_1, \mu_2, \dots, \mu_K) \in \Theta^* \mbox{ if and only if } \exists \pi:[K] \to [K] \mbox{ satisfying } \forall i \in [K], \mu_{\pi(i)} = \mu_i^*, w_{\pi(i)} = w_i\}\]
\item Let $\theta^* \in \Theta^*$ and let $C$ be any compact set containing $\theta^*$. Denote \[C_0 = \{\theta \in C: p_{\theta}(x) = p_{\mbox{data}}(x) \mbox{ almost everywhere }\}\]

Finally, let $D$ be any closed subset of $C$ not intersecting $C_0$. Then, we have: 
\[\lim_{n \to \infty} \mbox{Pr}\left[\inf_{\theta \in D} D_{GSM}(\theta) < D_{GSM}(\theta^*) \right] \to 0\]
\item For every $\theta^* \in \Theta^*$ and every sufficiently small neighborhood $S$ of $\theta^*$, there exists a sufficiently large $n$, such that there is a unique minimizer $\hat{\theta}_n$ of $\hat{\E}l_{\theta}(x)$ in $S$. Furthermore, $\hat{\theta}_n$ satisfies: 
\begin{align*}
    \sqrt{n} (\hat{\theta}_n - \theta^*) \xrightarrow{d} \mathcal{N} \left(0, \Gamma_{SM} \right)
\end{align*}
 for some matrix $\Gamma_{SM}$. 
\end{enumerate}

%there exists a sufficiently small 

%estimator 
%\[\hat{\theta}_n = \mbox{argmin } \hat{\mathbb{E}} l_{\theta}(x,\beta)\]
%where $l_{\theta}$ is defined as in Proposition \ref{l:ibpctld} is asymptotically normal, that is, as the number of samples $n \to \infty$, we have $ \sqrt{n} (\hat{\theta}_n - \theta^*) \xrightarrow{d} \mathcal{N} \left(0, \Gamma_{SM}\right)$ for some matrix $\Gamma_{SM}$. 
\end{theorem}
\fi

Finally, we show that the generalized score matching loss is asymptotically normal. The proof of this is in Appendix \ref{a:ctldasymptotic}, and proceeds by verifying the conditions of Lemma \ref{l:asymptotics}. Putting this together with the Poincar\'e inequality bound Theorem~\ref{t:mainpcstld} and Theorem~\ref{thm:generic_sample_complexity}, we get a complete bound on the sample complexity of the generalized score matching loss with $\smo$: 

\begin{theorem}[Main, Polynomial Sample Complexity Bound of CTLD]

Let the data distribution $p$ satisfy Assumption \ref{a:mixture}. Then, the generalized score matching loss defined in Proposition~\ref{l:ibpctld} with parametrization as in Assumption \ref{a:parametrization} satisfies: 
\begin{enumerate}
    \item The set of optima 
    \[\Theta^* := \{\theta^* = (\mu_1, \mu_2, \dots, \mu_K) \vert D_{GSM}(p, p_{\theta^*}) = \min_{\theta} D_{GSM}\left(p, p_{\theta}\right) \}\]
    satisfies  
\[\theta^* = (\mu_1, \mu_2, \dots, \mu_K) \in \Theta^* \mbox{ if and only if } \exists \pi:[K] \to [K] \mbox{ satisfying } \forall i \in [K], \mu_{\pi(i)} = \mu_i^*, w_{\pi(i)} = w_i\}\]
\item Let $\theta^* \in \Theta^*$ and let $C$ be any compact set containing $\theta^*$. Denote \[C_0 = \{\theta \in C: p_{\theta}(x) = p(x) \mbox{ almost everywhere }\}\]

Finally, let $D$ be any closed subset of $C$ not intersecting $C_0$. Then, we have: 
\[\lim_{n \to \infty} \Pr\left[\inf_{\theta \in D} \widehat{D_{GSM}}(\theta) < \widehat{D_{GSM}}(\theta^*) \right] \to 0\]
\item For every $\theta^* \in \Theta^*$ and every sufficiently small neighborhood $S$ of $\theta^*$, there exists a sufficiently large $n$, such that there is a unique minimizer $\hat{\theta}_n$ of $\hat{\E}l_{\theta}(x)$ in $S$. Furthermore, $\hat{\theta}_n$ satisfies: 
\begin{align*}
    \sqrt{n} (\hat{\theta}_n - \theta^*) \xrightarrow{d} \mathcal{N} \left(0, \Gamma_{SM} \right)
\end{align*}
 for a matrix $\Gamma_{SM}$ satisfying 
 \begin{align*}
    \left\| \Gamma_{SM} \right\Vert_{OP} \leq \poly\left(D, d, \smax, \smin^{-1}\right) \left\| \gmle \right\|_{OP}^2 
\end{align*}
\end{enumerate}
\end{theorem}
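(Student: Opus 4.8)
The plan is to assemble the theorem from three previously established ingredients --- the generic sample-complexity bound of Theorem~\ref{thm:generic_sample_complexity}, the Poincar\'e estimate of Theorem~\ref{t:mainpcstld}, and the smoothness estimate of Theorem~\ref{thm:smoothness_bound} --- after verifying the regularity hypotheses needed to invoke the $M$-estimator machinery of Lemma~\ref{l:asymptotics}. For item~1 (identifying $\Theta^*$), recall $D_{GSM}$ is a nonnegative squared $L^2(p)$ norm, so it vanishes at $\theta$ iff $\frac{\smo p_\theta}{p_\theta} = \frac{\smo p}{p}$ $p$-a.e. By Proposition~\ref{p:ctldloss}, for CTLD this decomposes into the annealed first- and second-order matching terms, each separately nonnegative; matching the $\nabla_x \log p(x|\beta)$ term and letting $\beta \to 0$ forces $\nabla_x \log p = \nabla_x \log p_\theta$, and since both are normalized densities, $p = p_\theta$. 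Conversely, if $\theta$'s means are a relabeling of the true means, $p_\theta = p$ by Assumption~\ref{a:mixture}. Classical identifiability of a Gaussian mixture with known distinct weights and known shared covariance then shows $\Theta^*$ is precisely the (finite, hence discrete) orbit of the true parameter under permutation of components, so it suffices to work in a small neighborhood $S$ of a fixed $\theta^* \in \Theta^*$.

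For items~2 and~3 (asymptotic normality and consistency), I will verify the three bullet hypotheses of Lemma~\ref{l:asymptotics} for the loss $\ell_\theta(x,\beta)$ of Proposition~\ref{l:ibpctld}. (i) \emph{Measurability and local Lipschitz envelope:} every term of $\ell_\theta$ is a polynomial in the partial derivatives of $\nabla_x \log p_\theta(x|\beta)$, which for a shared-covariance Gaussian mixture are softmax-weighted combinations of affine functions of $x$; one obtains $|\ell_{\theta_1} - \ell_{\theta_2}| \le B(x,\beta)\|\theta_1 - \theta_2\|$ for an envelope $B$ with all moments finite under $p^\beta$, uniformly over $\beta \in [0,\bmax]$ and $\theta \in S$, and integrating over $r(\beta)$ preserves $\E B^2 < \infty$. (ii) \emph{Twice-differentiability of $L(\theta) = \E_p \ell_\theta$ with $\nabla_\theta^2 L(\theta^*) \succ 0$:} dominated convergence licensed by (i) gives twice-differentiability, and since the CTLD generator is self-adjoint and $\smo = (-\generator)^{1/2}$ is compatible with it, Lemma~\ref{l:boundhessian} gives $\nabla_\theta^2 D_{GSM}(p,p_{\theta^*}) \succeq \frac{1}{\cp}\gmle^{-1}$, which is strictly positive definite because the mixture's Fisher information in the mean parameters is nonsingular at $\theta^*$ (classical, and needed only locally after quotienting the permutation symmetry). (iii) \emph{Uniform law of large numbers over a compact $C$:} this follows from continuity of $\theta \mapsto \ell_\theta$ and the integrable envelope of (i) via a standard Glivenko--Cantelli/bracketing argument. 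Item~2 is then the well-separated-minimum consistency consequence (a standard argmax-consistency argument), and Lemma~\ref{l:asymptotics} yields a unique local minimizer $\hat\theta_n$ with $\sqrt{n}(\hat\theta_n - \theta^*) \xrightarrow{d} \mathcal{N}(0,\Gamma_{SM})$ and $\Gamma_{SM} = (\nabla_\theta^2 L(\theta^*))^{-1}\Cov(\nabla_\theta \ell(x;\theta^*))(\nabla_\theta^2 L(\theta^*))^{-1}$.

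With asymptotic normality in hand, the remaining hypotheses of Theorem~\ref{thm:generic_sample_complexity} hold (realizability from item~1, compatibility of $\smo,\generator$ from self-adjointness), so
\[
\|\Gamma_{SM}\|_{OP} \;\le\; 2\,\cp^2\,\|\gmle\|_{OP}^2 \Big(\|\Cov(\smo\nabla_\theta \log p_\theta)\|_{OP} + \|\Cov((\smo^+\smo)\nabla_\theta \log p_\theta)\|_{OP}\Big)_{\theta=\theta^*}.
\]
Plugging in Theorem~\ref{t:mainpcstld}, which gives $\cp^2 \lesssim \poly(D,d,\smax,\smin^{-1})$, together with Theorem~\ref{thm:smoothness_bound}, which bounds the bracketed smoothness term by $\poly(D,d,\smin^{-1})$, yields $\|\Gamma_{SM}\|_{OP} \le \poly(D,d,\smax,\smin^{-1})\,\|\gmle\|_{OP}^2$, which is item~3.

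I expect the only genuinely delicate steps to be in the verification of Lemma~\ref{l:asymptotics}'s hypotheses: producing an $L^2(p)$ Lipschitz envelope $B$ with moment bounds that are uniform over $\beta \in [0,\bmax]$ --- the convolved measures $p^\beta$ are increasingly spread out and the score and its derivatives involve softmax-type weights that must be controlled carefully --- and establishing strict positive-definiteness of the population Hessian, i.e.\ non-degeneracy of the mixture Fisher information in the mean parameters, near a fixed optimum despite the permutation symmetry of $\Theta^*$. Everything else is bookkeeping that strings together the three cited theorems.
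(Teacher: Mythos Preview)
Your proposal is correct and follows essentially the same assembly as the paper: establish Parts~1--3 by verifying the hypotheses of Lemma~\ref{l:asymptotics} (identifiability, gradient envelope, uniform law of large numbers), then feed asymptotic normality into Theorem~\ref{thm:generic_sample_complexity} together with the Poincar\'e estimate of Theorem~\ref{t:mainpcstld} and the smoothness estimate of Theorem~\ref{thm:smoothness_bound}. The one noteworthy difference is in Part~1: the paper's Lemma~\ref{l:uniqueness} lower-bounds $D_{SM}(\theta)-D_{SM}(\theta^*)$ by $\frac{1}{\LSI(p_\theta)}\kl(p_{\theta^*},p_\theta)$ (via Proposition~1 of \cite{koehler2022statistical}) and then appeals to finiteness of the log-Sobolev constant for Gaussian mixtures plus classical identifiability, whereas you argue more directly that $D_{GSM}=0$ forces equality of the annealed scores at $\beta=0$ and hence $p_\theta=p$; both routes land on the same identifiability statement, and yours is slightly more self-contained.
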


We provide some brief comments on each parts of this theorem: 
\begin{enumerate}
    \item The first condition is the standard identifiability condition \citep{yakowitz1968identifiability} for mixtures of Gaussians: the means are identifiable up to ``renaming'' the components. This is of course, inevitable if some of the weights are equal; if all the weights are distinct, $\Theta^*$ would in fact only consist of one point, s.t. $\forall i \in [K], \mu_i = \mu_i^*$. 
\item The second condition says that asymptotically, the empirical minimizers of $D_{GSM}$ are the points in $\Theta^*$. It can be viewed as (and follows from) a uniform law of large numbers.
\item Finally, the third point characterizes the sample complexity of minimizers in the neirhborhood of each of the points in $\Theta^*$, and is a consequence of the CTLD Poincar\'e inequality estimate (Theorem \ref{t:mainpcstld}) and the smoothness estimate (Theorem \ref{thm:smoothness_bound}). Note that in fact the RHS of point 3 has \emph{no dependence} on the number of components. This makes the result extremely general: the loss compared to MLE is very mild even for distributions with a large number of modes. \footnote{Of course, in the parametrization in Assumption \ref{a:parametrization}, $\|\Gamma_{MLE}\|_{OP}$ itself will generally have dependence on $K$, which has to be the case since we are fitting $\Omega(K)$ parameters.} 
\end{enumerate}

\subsection{Bounding the Poincaré constant}
\label{sec:ctld_pc}

In this section, we will sketch the proof of Theorem~\ref{t:mainpcstld}. 
\iffalse 
intuitively follows two steps:
\begin{enumerate}
    \item Within each mode $x, \beta | i$, the mixing is fast up and down across different temperatures. This is intuitive as tempering a single mode does not create energy barriers that worsen the mixing time.
    
    \item There is a fast transition among the modes $i$ for a sufficiently high temperature $\beta_{\max}$. This makes sense as for sufficiently high temperatures, the modes have a high overlap with each other, alleviating the energy barriers present in the low-temperature case. 
\end{enumerate}
\fi
%The main tool will be the decomposition Theorem \ref{t:decomp}. 

{\bf Notation:} By slight abuse of notation, we will define the distribution of the ``individual components'' of the mixture at a particular temperature, namely for $i \in [K]$, define: 
\begin{align*}
    p(x, \beta, i) = r(\beta) w_i \mathcal{N}(x; \mu_i, \Sigma + \beta \smin I_d).
\end{align*}
Correspondingly, we will denote the conditional distribution for the $i$-th component by 
\begin{align*}
    p(x, \beta|i) \propto r(\beta) \mathcal{N}(x; \mu_i, \Sigma + \beta \smin I_d).
\end{align*}

%We will use $\Sigma_\beta$ as a shorthand for $\Sigma + \beta \smin I$ throughout the paper. \todo{this probably can be removed since it does not appear in the main paper}

The proof will proceed by applying the decomposition Theorem~\ref{t:decomp} to CTLD. Towards that, we denote by $\mathcal{E}_i$ the Dirichlet form corresponding to Langevin with stationary distribution $p(x, \beta | i)$. By Propositions~\ref{p:ctlddirichlet}, it's easy to see that the generator for CTLD satisfies 
$ \mathcal{E} = \sum_i w_i \mathcal{E}_i$. This verifies condition (1) in Theorem~\ref{t:decomp}. To verify condition (2), we will show Langevin for each of the distributions $p(x, \beta | i)$ mixes fast (i.e. the Poincar\'e constant is bounded). To verify condition (3), we will show the projected chain ``between'' the components (as defined in Theorem~\ref{t:decomp}) mixes fast. We will expand on each of these parts in turn. 
\paragraph{Fast mixing within a component:} The first claim we will show is that we have fast mixing ``inside'' each of the components of the mixture. Formally, we show:

\begin{lemma}
For $i \in [K]$, let $C_{x, \beta | i}$ be the Poincaré constant of $p(x, \beta | i)$. Then, we have $C_{x, \beta | i} \lesssim D^{20} d^2 \smax^9\smin^{-1}$.
    \label{lem:pc_x_beta}
\end{lemma}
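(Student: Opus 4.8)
The plan is to reduce the problem to analyzing a single Gaussian convolved continuously with isotropic noise, i.e. the distribution $p(x,\beta|i) \propto r(\beta)\,\mathcal{N}(x;\mu_i,\Sigma+\beta\smin I_d)$ on the slab $\R^d \times [0,\beta_{\max}]$, and to bound its Poincar\'e constant directly. Since translating $x$ by $\mu_i$ is a measure-preserving isometry that commutes with the (reflected) Langevin generator, I would first assume WLOG $\mu_i = 0$, so that the conditional law is $r(\beta)\,\mathcal{N}(x;0,\Sigma_\beta)$ with $\Sigma_\beta = \Sigma + \beta\smin I_d$. The key structural point is that this is a smooth, ``unimodal'' distribution — no energy barriers are created by tempering a single Gaussian — so one expects a Poincar\'e constant that is polynomial in the relevant parameters, and the work is to make this quantitative with the stated exponents $D^{20} d^2 \smax^9 \smin^{-1}$.

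The main tool I would use is a perturbation/comparison argument against a product reference measure, combined with a bound on the log-density Hessian. Concretely, write $-\log p(x,\beta|i) = f(x,\beta)$ with $f(x,\beta) = \tfrac12 x^\top \Sigma_\beta^{-1} x + \tfrac12 \log\det\Sigma_\beta - \log r(\beta) + \mathrm{const}$. I would (i) compute $\nabla^2_{(x,\beta)} f$ and show it is bounded below by $c\, I$ and above by $C\, I$ on the slab, for explicit $c, C$ depending polynomially on $D, d, \smin, \smax$ — here the lower bound on the $xx$-block is $\smax_\beta^{-1} \geq (\smax + \beta_{\max}\smin)^{-1} \gtrsim D^{-2}$ (using $\beta_{\max} = 14D^2/\smin - 1$), the $\beta\beta$-block involves $\nabla^2_\beta \log r$ and the trace terms, and the cross term $\nabla_x\nabla_\beta f = -\smin \Sigma_\beta^{-2} x$ must be controlled on the effective support where $\|x\| \lesssim \sqrt{d\smax}$ — and then (ii) apply either the Bakry–Émery criterion on the interior (strong log-concavity gives Poincar\'e constant $\leq 1/c$) together with the fact that reflection at the boundary of a convex domain only helps the Poincar\'e constant (Saisho-type reflected diffusions on convex domains inherit the interior log-concavity bound), or, if $f$ fails to be globally convex because of the cross term, use a bounded-perturbation argument (Holley–Stroock): split $f$ into a strongly convex part plus a bounded-oscillation remainder, picking up a multiplicative $e^{\mathrm{osc}}$ factor that I would need to keep polynomial by carefully truncating. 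Tracking the powers of $D$, $\smax$, $\smin$ through these estimates is what produces the stated bound.

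The step I expect to be the main obstacle is controlling the $x$–$\beta$ coupling: the conditional density is \emph{not} a product over $x$ and $\beta$, and the cross-derivative $\nabla_x \nabla_\beta \log p \propto \smin \Sigma_\beta^{-2} x$ grows with $\|x\|$, so $-\log p$ is genuinely not globally strongly convex on all of $\R^d\times[0,\beta_{\max}]$. Handling this cleanly likely requires either (a) a change of variables $y = \Sigma_\beta^{-1/2} x$ that decouples the Gaussian part at the cost of a $\beta$-dependent Jacobian, and then bounding the resulting drift, or (b) integrating out $x$ first: bound $\Var_{p(\cdot,\cdot|i)}(g) \leq \E_\beta \Var_{p(x|\beta,i)}(g) + \Var_{r(\beta)}(\E_x g)$, control the first term by the Gaussian Poincar\'e constant $\smax(\Sigma_\beta) \lesssim D^2$ uniformly in $\beta$, and control the second (a one-dimensional problem on $[0,\beta_{\max}]$ with density $r(\beta)$, whose log is concave in $1/(1+\beta)$) by a one-dimensional Poincar\'e estimate plus a bound on how fast $\E_x[g \mid \beta]$ can vary in $\beta$ in terms of $\E\|\nabla_x g\|^2$ and $\E|\nabla_\beta g|^2$. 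Approach (b) is the one I would pursue, since it isolates the dimension-dependence ($d^2$, from second-moment bounds on $\|x\|^2$ entering the $\beta$-variation estimate) from the $D$-dependence, and it matches the structure of the decomposition theorem being used elsewhere in the paper.
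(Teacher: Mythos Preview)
Your approach (b) is exactly the paper's proof: it applies the continuous decomposition theorem (Theorem~\ref{t:decomp-cts}, i.e.\ Theorem~D.3 of \cite{ge2018simulated}), which packages your variance split together with the quantification of ``how fast $\E_x[g\mid\beta]$ varies in $\beta$'' as the Fisher-information term $\sup_\beta \int \|\nabla_\beta p(x|\beta,i)\|^2/p(x|\beta,i)\,dx \lesssim d^2\max\{\smax^8,D^{16}\}$ (Lemma~\ref{lem:change_in_pdf}). The other two inputs are $C_\beta \le 14D^2/(\pi\smin)$ for the log-concave marginal $r(\beta)$ on $[0,\beta_{\max}]$ (via \cite{bebendorf2003note}) and $C_{x|\beta,i} \le \smax + 14D^2$ for the Gaussian conditional (Bakry--\'Emery), and combining these gives the stated exponents.
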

The proof of this lemma proceeds via another (continuous) decomposition theorem. Intuitively, what we show is that for every $\beta$, $p(x | \beta, i)$ has a good Poincar\'e constant; moreover, the marginal distribution of $\beta$, which is $r(\beta)$, is log-concave and supported over a convex set (an interval), so has a good Poincar\'e constant. Putting these two facts together via a continuous decomposition theorem (Theorem D.3 in \cite{ge2018simulated}), we get the claim of the lemma. The details are in Appendix~\ref{a:pcxbeta}.

%\improve{Some sketch of this lemma. Are we staying with the decomposition proof?}
%\vspace{-0.2cm}
\paragraph{Fast mixing between components:} Next, we show the ``projected'' chain between the components mixes fast:
\begin{lemma}[Poincaré constant of projected chain] Define the projected chain $\bar{M}$ over $[K]$ with transition probability $$ T(i, j) = \frac{w_j}{\max \{ \chi_{\max}^2 (p(x, \beta | i), p(x, \beta | j)), 1 \}} 
$$ where $\chi_{\max}^2(p, q) = \max \{ \chi^2(p, q), \chi^2(q, p) \}$. If $\sum_{j \neq i} T(i,j) < 1$, the remaining mass is assigned to the self-loop $T(i,i)$. 
The stationary distribution $\bar p$ of this chain satisfies $\bar p(i) = w_i$. Furthermore, the projected chain has Poincaré constant 
\begin{align*}
    \bar C \lesssim D^2 \smin^{-1}.
\end{align*}
\label{lem:pc_projected}
\end{lemma}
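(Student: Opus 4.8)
The plan is to show that the Poincar\'e constant of $\bar M$ is controlled by a single scalar --- the worst pairwise divergence $\max_{i\ne j}\chi_{\max}^2(p(x,\beta|i),p(x,\beta|j))$ --- and then to bound that divergence using the product structure of the conditional densities together with the specific choices of $r$ and $\beta_{\max}$. It will turn out that this scalar is in fact $O(1)$, which in particular is $\lesssim D^2\smin^{-1}$.

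\emph{Step 1: reduction to a pairwise $\chi^2$ bound.} Write $M_{ij}=\max\{\chi_{\max}^2(p(x,\beta|i),p(x,\beta|j)),1\}$. Since $\chi_{\max}^2$ is symmetric in its arguments, $M_{ij}=M_{ji}$, so $T$ satisfies detailed balance with respect to $\bar p(i)=w_i$ (the self-loop mass is immaterial), which settles the stationarity claim. For the Poincar\'e constant I would compute the Dirichlet form of $\bar M$ directly from the generator in Theorem~\ref{t:decomp}: $\bar{\mathcal E}(\bar g)=\tfrac12\sum_{i\ne j}w_iw_jM_{ij}^{-1}(\bar g(i)-\bar g(j))^2$. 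Replacing $M_{ij}^{-1}$ by its smallest value $(\max_{k\ne l}M_{kl})^{-1}$ and using the identity $\tfrac12\sum_{i,j}w_iw_j(\bar g(i)-\bar g(j))^2=\Var_{\bar p}(\bar g)$ gives $\bar{\mathcal E}(\bar g)\ge(\max_{k\ne l}M_{kl})^{-1}\Var_{\bar p}(\bar g)$; that is, $\bar M$ dominates, up to this scalar, the trivial chain that resamples from $\bar p$ in one step. Hence $\bar C\le\max_{i\ne j}M_{ij}$, and it remains only to show $\max_{i\ne j}\chi_{\max}^2(p(x,\beta|i),p(x,\beta|j))=O(1)$.

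\emph{Step 2: a single pairwise divergence.} Since $p(x,\beta|i)=r(\beta)\mathcal{N}(x;\mu_i,\Sigma_\beta)$ with $\Sigma_\beta=\Sigma+\beta\smin I_d$, the $r(\beta)$ factors cancel in the density ratio and Fubini gives that $1+\chi^2(p(\cdot|i)\,\|\,p(\cdot|j))$ equals $\int r(\beta)\left(\int\mathcal{N}(x;\mu_i,\Sigma_\beta)^2/\mathcal{N}(x;\mu_j,\Sigma_\beta)\,dx\right)d\beta$. The inner integral is the classical closed form for two equal-covariance Gaussians, namely $\exp\left((\mu_i-\mu_j)^\top\Sigma_\beta^{-1}(\mu_i-\mu_j)\right)$; bounding $\Sigma_\beta^{-1}\preceq(\smin(1+\beta))^{-1}I_d$ and $\|\mu_i-\mu_j\|\le D$ (Assumption~\ref{a:mixture}), this is at most $\exp\left(D^2/(\smin(1+\beta))\right)$. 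By symmetry in $i,j$ the same bound holds for $\chi_{\max}^2$, so $1+\chi_{\max}^2(p(\cdot|i),p(\cdot|j))\le\int r(\beta)\exp\left(D^2/(\smin(1+\beta))\right)d\beta$.

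\emph{Step 3: the key integral estimate, and conclusion.} Writing $a=D^2/\smin$, so that $r(\beta)\propto e^{-7a/(1+\beta)}$ on $[0,\beta_{\max}]$ with $\beta_{\max}=14a-1$, the right-hand side equals the ratio $\left(\int_0^{\beta_{\max}}e^{-6a/(1+\beta)}\,d\beta\right)\big/\left(\int_0^{\beta_{\max}}e^{-7a/(1+\beta)}\,d\beta\right)$ --- and this is exactly where the constants $7$ and $14$ in the definitions are tuned: the ``$7$'' must beat the ``$1$'' contributed by $\|\mu_i-\mu_j\|^2\le D^2$, leaving a positive margin, and $\beta_{\max}$ must be large enough that a constant fraction of the mass of $r$ sits where the exponent is $O(1)$. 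I would bound the numerator by $\beta_{\max}\le 14a$ (the integrand is $\le 1$) and the denominator from below by restricting to $\beta\in[7a-1,\beta_{\max}]$, on which $7a/(1+\beta)\le 1$ and hence the integrand is $\ge e^{-1}$, giving a contribution $\ge 7a/e$; the small-$a$ regime ($1/14\le a<1/7$, where $\beta_{\max}$ is itself small) is handled identically, since then $7a/(1+\beta)\le 7a\le 1$ on all of $[0,\beta_{\max}]$. Thus the ratio is $O(1)$, so $\chi_{\max}^2(p(\cdot|i),p(\cdot|j))\le 2e-1$ for every $i\ne j$, and therefore $\bar C\le 2e-1=O(1)\lesssim D^2\smin^{-1}$, the last step using that $a=D^2/\smin$ is bounded below whenever $\beta_{\max}>0$. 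The only genuinely delicate point is this ratio estimate --- choosing the right sub-interval and checking the edge case; Steps~1 and 2 are routine once the setup is unwound, and I expect the bulk of the bookkeeping to be in verifying these constants.
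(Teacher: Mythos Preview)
Your proof is correct and follows the same high-level strategy as the paper---reduce $\bar C$ to the worst pairwise $\chi^2$ between conditionals, then bound that---but the execution differs in two places worth noting.

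For the Poincar\'e reduction (your Step~1), you compare the Dirichlet form of $\bar M$ directly to that of the i.i.d.\ resampling chain, obtaining $\bar C\le\max_{i\ne j}M_{ij}$ in one line. The paper instead invokes the canonical-paths bound of Diaconis--Stroock with the trivial one-edge paths $\gamma_{ij}=\{(i,j)\}$; this yields exactly the same inequality after simplification but is heavier machinery for the same conclusion. Your direct comparison is the cleaner route here.

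For the $\chi^2$ estimate (your Steps~2--3), you use the exact equal-covariance Gaussian identity $\int\mathcal N(x;\mu_i,\Sigma_\beta)^2/\mathcal N(x;\mu_j,\Sigma_\beta)\,dx=\exp\bigl((\mu_i-\mu_j)^\top\Sigma_\beta^{-1}(\mu_i-\mu_j)\bigr)$ together with $\|\mu_i-\mu_j\|\le D$, which produces the factor $\exp\bigl(a/(1+\beta)\bigr)$ with $a=D^2/\smin$; your ratio-of-integrals argument then gives $\chi_{\max}^2\le 2e-1=O(1)$. The paper instead applies a general two-Gaussian $\chi^2$ formula (Lemma~G.7 of \cite{ge2018simulated}) and a cruder bound $\|2\mu_j-\mu_i\|^2+\|\mu_i\|^2\lesssim D^2$, arriving at the looser $\chi^2(p(x|\beta,i),p(x|\beta,j))\le\exp\bigl(7a/(1+\beta)\bigr)$; this exponent happens to cancel \emph{exactly} against $r(\beta)\propto\exp\bigl(-7a/(1+\beta)\bigr)$, giving $\chi^2\le\beta_{\max}/Z\le 14D^2/\smin$ after a lower bound on the normalizer. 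So the paper's constant $7$ in $r$ is tuned to match their looser inner bound, whereas you show the result holds with room to spare and get an $O(1)$ Poincar\'e constant---stronger than the stated $\lesssim D^2\smin^{-1}$, which you recover at the end via $D^2/\smin\ge 1/14$.
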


The intuition for this claim is that the transition probability graph is complete, i.e. $T(i,j) \neq 0$ for every pair $i, j \in [K]$. Moreover, the transition probabilities are lower bounded, since the $\chi^2$ distances between any pair of ``annealed'' distributions $p(x, \beta | i)$ and $p(x, \beta | j)$ can be upper bounded. The reason for this is that at large $\beta$, the Gaussians with mean $\mu_i$ and $\mu_j$ are smoothed enough so that they have substantial overlap; moreover, the distribution $r(\beta)$ is set up so that enough mass is placed on the large $\beta$. The precise lemma bounding the $\chi^2$ divergence between the components is: 

\begin{lemma} For every $i,j \in [K]$, we have
\begin{align*}
    \chi^2(p(x, \beta | i), p(x, \beta | j)) \leq 14D^2 \smin^{-1}.
\end{align*} 
\label{lem:chi_square_bound}
\end{lemma}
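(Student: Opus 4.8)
The plan is to reduce the bound to a one-dimensional integral over $\beta$. The key structural observation is that $p(x,\beta\mid i)$ and $p(x,\beta\mid j)$ have the \emph{same} marginal $r(\beta)$ in $\beta$, and that conditioned on $\beta$ they are the Gaussians $\mathcal{N}(\mu_i,\Sigma_\beta)$ and $\mathcal{N}(\mu_j,\Sigma_\beta)$ with the \emph{common} covariance $\Sigma_\beta = \Sigma + \beta\smin I_d$. First I would record the elementary identity that when two joint densities share a marginal, the $\chi^2$ divergence is the $r$-average of the conditional $\chi^2$ divergences: writing $p(x,\beta\mid i) = r(\beta)\, p(x\mid\beta,i)$ and likewise for $j$, a one-line computation gives
\begin{align*}
\chi^2\!\left(p(x,\beta\mid i),\, p(x,\beta\mid j)\right) = \E_{\beta\sim r}\!\left[\chi^2\!\left(p(x\mid\beta,i),\, p(x\mid\beta,j)\right)\right].
\end{align*}

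Next I would evaluate the inner divergence in closed form. For two Gaussians sharing a covariance $\Sigma_\beta$, the standard Gaussian integral gives
\begin{align*}
\chi^2\!\left(\mathcal{N}(\mu_i,\Sigma_\beta),\, \mathcal{N}(\mu_j,\Sigma_\beta)\right) = \exp\!\left((\mu_i-\mu_j)^\top \Sigma_\beta^{-1}(\mu_i-\mu_j)\right) - 1.
\end{align*}
Since $\Sigma\succeq\smin I_d$, we have $\Sigma_\beta\succeq \smin(1+\beta)I_d$, hence $\Sigma_\beta^{-1}\preceq \frac{1}{\smin(1+\beta)}I_d$, and using $\|\mu_i-\mu_j\|_2\le D$ the exponent is at most $\frac{D^2}{\smin(1+\beta)}$. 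Combining with the previous display,
\begin{align*}
\chi^2\!\left(p(x,\beta\mid i),\, p(x,\beta\mid j)\right) \le \E_{\beta\sim r}\!\left[\exp\!\left(\frac{D^2}{\smin(1+\beta)}\right) - 1\right].
\end{align*}

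It remains to estimate this one-dimensional expectation using the explicit $r(\beta)\propto\exp\!\left(-\frac{7D^2}{\smin(1+\beta)}\right)$ on $[0,\bmax]$ with $\bmax = \frac{14D^2}{\smin}-1$. The role of the constant $7$ is exactly so that tilting $r$ by $\exp\!\left(\frac{D^2}{\smin(1+\beta)}\right)$ leaves $\exp\!\left(-\frac{6D^2}{\smin(1+\beta)}\right)$, whose exponent is still negative; the resulting ratio of integrals
\begin{align*}
\frac{\int_0^{\bmax}\exp\!\left(-\frac{6D^2}{\smin(1+\beta)}\right)d\beta}{\int_0^{\bmax}\exp\!\left(-\frac{7D^2}{\smin(1+\beta)}\right)d\beta}
\end{align*}
is then $O(1)$: the numerator is at most $\bmax+1=\frac{14D^2}{\smin}$ because its integrand is below $1$, and the denominator is bounded below by a constant by restricting to the portion of $[0,\bmax]$ near $\bmax$ where the exponent exceeds $-1$. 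This yields the stated bound $\frac{14D^2}{\smin}$ whenever $D^2/\smin$ is not too small; in the complementary small regime one simply bounds $\exp\!\left(\frac{D^2}{\smin(1+\beta)}\right)-1\le \exp\!\left(\frac{D^2}{\smin}\right)-1$, which is itself below $\frac{14D^2}{\smin}$ there. I do not anticipate a real obstacle: the Gaussian $\chi^2$ formula, the eigenvalue bound, and the integral estimate are all elementary. The one place to be careful is constant-tracking, so that the tilt by $\exp\!\left(\frac{D^2}{\smin(1+\beta)}\right)$ is genuinely dominated by the decay rate of $r$ --- which is precisely why the rate $7$ (comfortably larger than the $1$ coming from the $\chi^2$ blow-up) is hard-wired into the definition of $r$.
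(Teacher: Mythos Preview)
Your proposal is correct and follows the same overall structure as the paper's proof: decompose the joint $\chi^2$ as an $r(\beta)$-average of the conditional $\chi^2$ between the two Gaussians, bound the conditional quantity in terms of $\frac{D^2}{\smin(1+\beta)}$, and then control the resulting one-dimensional integral against $r(\beta)$ by lower-bounding the normalizer $Z$.

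The one noteworthy difference is at the conditional step. You invoke the exact identity $\chi^2(\mathcal{N}(\mu_i,\Sigma_\beta),\mathcal{N}(\mu_j,\Sigma_\beta))=\exp\!\big((\mu_i-\mu_j)^\top\Sigma_\beta^{-1}(\mu_i-\mu_j)\big)-1$ and bound the exponent by $\frac{D^2}{\smin(1+\beta)}$ using the diameter assumption directly. The paper instead applies a general two-Gaussian $\chi^2$ formula and bounds the individual means, arriving at the looser $\exp\!\big(\frac{7D^2}{\smin(1+\beta)}\big)$. The paper's constant $7$ is engineered to cancel exactly against the exponent in $r(\beta)$, so their integral collapses to $\bmax/Z$; your tighter exponent leaves a residual $\exp\!\big(-\frac{6D^2}{\smin(1+\beta)}\big)$ in the numerator, which you then handle by the crude bound $\le 1$. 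Both routes finish by showing $Z\gtrsim 1$ via restriction to $\beta$ near $\bmax$. Your version is arguably cleaner (and in fact would yield an $O(1)$ bound rather than $O(D^2/\smin)$ if pushed), while the paper's exact cancellation makes the final arithmetic slightly more transparent.
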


The proofs of Lemmas \ref{lem:pc_projected} and \ref{lem:chi_square_bound} are in Appendix~\ref{a:btwcmps}. 

%In fact, the authors of that paper analyze a discrete tempered chain where the number of distinct ``temperatures'' $\beta$ is finite. They decompose the $(x, \beta, i)$-state space into a projected chain over $(\beta, i)$ and individual Langevin chains over $x$ given $(\beta, i)$. The continuous version we analyze makes a lot of the analysis simpler by decomposing $(x, \beta, i)$ into a projected chain over $i$ and individual Langevin chains over $(x, \beta)$ given $i$. Precisely, we will prove the following theorem: 

\subsection{Smoothness under the natural parametrization}
\label{sec:stld_smooth}

To obtain the polynomial upper bound in Theorem \ref{thm:smoothness_bound}, we note the two terms $\|\Cov\left(\smo \nabla_{\theta} \log p_{\theta}\right)\|_{OP}$ and $\|\Cov\left((\smo^+ \smo) \nabla_{\theta} \log p_{\theta} \right)\|_{OP}$ can be completely characterized by bounds on the higher-order derivatives with respect to $x$ and $\mu_i$ of the log-pdf since derivatives with respect to $\beta$ can be related to derivatives with respect to $x$ via the Fokker-Planck equation (Lemma \ref{l:logp_fokker_planck}). We sketch out the main technical tools here, while the complete proofs are in Appendix~\ref{a:smoothness}.

The derivatives of $x$ and $\mu_i$ are handled by a combination of several techniques. 
First, we use the convexity of the so-called \emph{perspective map} to relate derivatives of the mixture to derivatives of the components. For example, we show:  

\begin{lemma}
    Let $D: \mathcal{F}^{1} \rightarrow \mathcal{F}^m$ be a linear operator that maps from the space of all scalar-valued functions to the space of m-variate functions of $x \in \R^d$ and let $\theta$ be such that $p = p_{\theta}$. For $k \in \mathbb{N}$, and any norm $\| \cdot \|$ of interest
    \begin{align*}
        \E_{(x, \beta) \sim p(x, \beta)} \left\| \frac{(D p_{\theta})(x | \beta)}{p_{\theta}(x | \beta)} \right\|^k
        \leq \max_{\beta, i} \E_{x \sim p(x | \beta, i)} \left\| \frac{(D p_{\theta})(x | \beta, i)}{p_{\theta}(x | \beta, i)} \right\|^k
    \end{align*}
    \label{lem:persp_ineq_linear_main}
\end{lemma}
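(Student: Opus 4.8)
The plan is to exploit the fact that convolving a Gaussian mixture with a Gaussian preserves the mixture weights, together with the linearity of $D$, to write the left-hand integrand as a \emph{convex combination} of the per-component quantities appearing on the right, and then to finish with Jensen and Bayes' rule. Concretely, since $p=p_\theta$ we have $p_\theta(x\mid\beta)=\sum_{i=1}^K w_i\,p_\theta(x\mid\beta,i)$, and by linearity of $D$ (which acts on functions of $x$, with $\beta$ a parameter), $(Dp_\theta)(\cdot\mid\beta)=\sum_i w_i\,D\big(p_\theta(\cdot\mid\beta,i)\big)$. Hence for every $x,\beta$,
\begin{align*}
    \frac{(Dp_\theta)(x\mid\beta)}{p_\theta(x\mid\beta)}=\sum_{i=1}^K\alpha_i(x,\beta)\,\frac{(Dp_\theta)(x\mid\beta,i)}{p_\theta(x\mid\beta,i)},\qquad \alpha_i(x,\beta):=\frac{w_i\,p_\theta(x\mid\beta,i)}{\sum_j w_j\,p_\theta(x\mid\beta,j)},
\end{align*}
where $\alpha(x,\beta)$ is a probability vector, namely the posterior over components given $(x,\beta)$.

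Next I would record the relevant convexity fact: for $k\ge 1$ the map $(y,s)\mapsto\|y\|^k/s^{k-1}$ on $\R^m\times\R_{>0}$ is the \emph{perspective} of the convex function $y\mapsto\|y\|^k$, and is therefore jointly convex. Applying Jensen to this perspective map with weights $w_i$ at the points $\big((Dp_\theta)(x\mid\beta,i),\,p_\theta(x\mid\beta,i)\big)$ and then dividing through by $p_\theta(x\mid\beta)$ gives the pointwise bound
\begin{align*}
    \left\|\frac{(Dp_\theta)(x\mid\beta)}{p_\theta(x\mid\beta)}\right\|^k\le\sum_{i=1}^K\alpha_i(x,\beta)\left\|\frac{(Dp_\theta)(x\mid\beta,i)}{p_\theta(x\mid\beta,i)}\right\|^k .
\end{align*}
(Equivalently, this follows from Jensen for the convex increasing function $t\mapsto t^k$ precomposed with the norm, using that $\alpha(x,\beta)$ sums to one.)

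It then remains to integrate. Taking $\E_{x\sim p(x\mid\beta)}$ of the pointwise bound and using Bayes' rule $\alpha_i(x,\beta)\,p_\theta(x\mid\beta)=w_i\,p_\theta(x\mid\beta,i)$ (this is exactly where the hypothesis $p=p_\theta$ enters) converts each posterior-weighted average into a genuine per-component expectation:
\begin{align*}
    \E_{x\sim p(x\mid\beta)}\left\|\frac{(Dp_\theta)(x\mid\beta)}{p_\theta(x\mid\beta)}\right\|^k\le\sum_{i=1}^K w_i\,\E_{x\sim p(x\mid\beta,i)}\left\|\frac{(Dp_\theta)(x\mid\beta,i)}{p_\theta(x\mid\beta,i)}\right\|^k\le\max_{\beta,i}\,\E_{x\sim p(x\mid\beta,i)}\left\|\frac{(Dp_\theta)(x\mid\beta,i)}{p_\theta(x\mid\beta,i)}\right\|^k ,
\end{align*}
where the last inequality uses $\sum_i w_i=1$. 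Finally I would take $\E_{\beta\sim r}$ of both sides: since $\E_{(x,\beta)\sim p}=\E_{\beta\sim r}\,\E_{x\sim p(x\mid\beta)}$ and the right-hand side is constant in $\beta$, this is exactly the claimed inequality.

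The hard part here is not really hard: once the decomposition step is set up, the rest is bookkeeping, and the only genuine ingredient is the joint convexity of the perspective of $\|\cdot\|^k$, which is standard for $k\ge 1$ (and $k\in\mathbb{N}$ suffices). The one subtlety worth flagging is the role of the assumption $p=p_\theta$: it is precisely what makes the sampling density $p(x\mid\beta)$ coincide with $p_\theta(x\mid\beta)$, so that the posterior factors $\alpha_i$ computed from $p_\theta$ act as the Radon–Nikodym derivatives that carry $\E_{x\sim p(x\mid\beta)}[\alpha_i(\cdot)\,(\cdot)]$ to $w_i\,\E_{x\sim p(x\mid\beta,i)}[\cdot]$. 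Nothing changes if $D$ is vector- or matrix-valued; one simply applies the estimate with whichever norm $\|\cdot\|$ is being tracked, and the perspective map is jointly convex for any norm.
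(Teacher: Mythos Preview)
Your proof is correct and follows essentially the same approach as the paper: both arguments hinge on the convexity of the perspective map $g(u,v)=v\|u/v\|^k$ of $\|\cdot\|^k$, combined with the linearity of $D$ to write the mixture quantity as a convex combination of per-component quantities. The only cosmetic difference is that the paper applies Jensen directly to the integral $\int g\big((Dp_\theta)(x|\beta),\,p_\theta(x|\beta)\big)\,dx$, whereas you first derive the pointwise bound via the posterior weights $\alpha_i$ and then integrate; these are the same computation.
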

For proof, and more details, see Section~\ref{a:perspective}. By applying this for various differentiation operators $D$, this reduces showing bounds for the mixture to showing bounds for the individual components. 

Proceeding to the individual components, we can use machinery from Hermite polynomials to get bounds on terms that look like $\frac{ D p(x)}{p(x)}$ for various differentiation operators $D$. (These quantities are also sometimes called higher-order score functions \citep{janzamin2014score}.) For example, we can show the following: 

\begin{lemma} If $\phi(x; \Sigma)$ is the pdf of a $d$-variate Gaussian with mean $0$ and covariance $\Sigma$, we have:
    \begin{align*}
        \left\| \frac{\nabla_\mu^{k_1} \nabla_x^{k_2} \phi(x - \mu; \Sigma)}{\phi(x - \mu; \Sigma)} \right\|_2
        & \lesssim \| \Sigma^{-1} (x - \mu) \|_{2}^{k_1 + k_2} + d^{(k_1 + k_2)/2} \smin^{-(k_1 + k_2)/2}
    \end{align*}
    where the left-hand-side is understood to be shaped as a vector of dimension $\mathbb{R}^{d^{k_1+k_2}}$.    \label{lem:hermite_norm_bound_grad_mu_grad_x_main}
\end{lemma}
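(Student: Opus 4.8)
The plan is to first reduce to a pure spatial derivative, then invoke a Wick-type (multivariate Hermite) expansion for derivatives of a Gaussian density, and finally bound the Frobenius norm term by term over the partitions appearing in that expansion.

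\textbf{Step 1 (reduction).} Since $\phi(x-\mu;\Sigma)$ depends on $x$ and $\mu$ only through $x-\mu$, each partial derivative $\partial_{\mu_i}$ applied to it equals $-\partial_{x_i}$ applied to it; hence $\nabla_\mu^{k_1}\nabla_x^{k_2}\phi(x-\mu;\Sigma) = (-1)^{k_1}\nabla_x^{k_1+k_2}\phi(x-\mu;\Sigma)$. By translation invariance it therefore suffices to bound $\bigl\|\nabla_x^k \phi(x;\Sigma)/\phi(x;\Sigma)\bigr\|_2$ for $k=k_1+k_2$ and then substitute $x \mapsto x-\mu$.

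\textbf{Step 2 (Hermite/Wick expansion).} Write $f(x)=\phi(x;\Sigma)$, so $g_1:=\nabla\log f = -\Sigma^{-1}x$, and let $g_k := \nabla_x^k f / f$ (a symmetric order-$k$ tensor). Differentiating the identity $\nabla_x^k f = g_k\, f$ gives the recursion $g_{k+1} = \nabla g_k + g_k \otimes g_1$. Unrolling this, I will show by induction that $g_k$ is a sum over all partitions $\pi$ of $\{1,\dots,k\}$ into singletons and pairs, where the $\pi$-term is, up to sign and a permutation of tensor slots, $(\Sigma^{-1}x)^{\otimes s(\pi)} \otimes (\Sigma^{-1})^{\otimes m(\pi)}$, with $s(\pi)$, $m(\pi)$ the numbers of singletons and pairs and $s(\pi)+2m(\pi)=k$. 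The inductive step is precisely the bookkeeping that appending the new index $k+1$ to a partition of $\{1,\dots,k\}$ either adds it as a fresh singleton (the $g_k\otimes g_1$ term) or merges it with an existing singleton into a pair (the $\nabla g_k$ term, since $\partial_j (\Sigma^{-1}x)_i = \Sigma^{-1}_{ij}$).

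\textbf{Step 3 (Frobenius bound) and main obstacle.} Permuting tensor indices preserves the $\ell_2$/Frobenius norm, so the $\pi$-term has norm $\|\Sigma^{-1}x\|_2^{s(\pi)}\,\|\Sigma^{-1}\|_F^{m(\pi)}$. Using $\|\Sigma^{-1}\|_F \le \sqrt{d}\,\|\Sigma^{-1}\|_{OP} = \sqrt{d}\,\smin^{-1} \le d\,\smin^{-1}$ (the last inequality since $d\ge 1$) and setting $a=\|\Sigma^{-1}x\|_2$, $b = d^{1/2}\smin^{-1/2}$, the $\pi$-term is at most $a^{s(\pi)} b^{2m(\pi)} \le \max(a,b)^{s(\pi)+2m(\pi)} = \max(a,b)^k \le a^k + b^k = \|\Sigma^{-1}x\|_2^k + d^{k/2}\smin^{-k/2}$. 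Summing over the partitions — of which there are only a constant (depending on $k=k_1+k_2$) many, namely the number of involutions of $k$ elements, which gets absorbed into the $\lesssim$ — and substituting $x\mapsto x-\mu$ yields the claim. Nothing here is deep; the only real work is the tensor-index bookkeeping in Step 2 (setting up the Wick expansion cleanly and checking the inductive merge/create dichotomy), together with the easy-to-overlook point that $d\ge 1$ is what makes $\|\Sigma^{-1}\|_F^{m}$ collapse into exactly $d^{k/2}\smin^{-k/2}$. I note that the alternative change-of-variables route ($y=\Sigma^{-1/2}x$, reducing to the standard Hermite tensor) is genuinely lossier: it produces $\smin^{-k/2}(\|\Sigma^{-1/2}x\|^k + d^{k/2})$ and costs an extra $\smax^{k/2}$ to pass from $\|\Sigma^{-1/2}x\|$ to $\|\Sigma^{-1}x\|$, so the direct partition expansion above is the route to take.
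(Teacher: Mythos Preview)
Your proof is correct but takes a genuinely different route from the paper. The paper relies on the integral (Fourier-type) representation of multivariate Hermite polynomials, namely $H_k(x;\Sigma)=\int (x+iu)^{\otimes k}\phi(u;\Sigma)\,du$ (cited from \cite{holmquist1996d}), which yields $\frac{\nabla_\mu^{k_1}\nabla_x^{k_2}\phi(x-\mu;\Sigma)}{\phi(x-\mu;\Sigma)}=(-1)^{k_2}\E_{u\sim\mathcal N(0,\Sigma)}[\Sigma^{-1}(x-\mu+iu)]^{\otimes(k_1+k_2)}$; the bound then follows from Jensen's inequality for the norm together with the moment estimate $\E_{z\sim\mathcal N(0,I_d)}\|z\|_2^k\lesssim d^{k/2}$. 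Your argument instead unrolls the recursion $g_{k+1}=\nabla g_k+g_k\otimes g_1$ into the Wick/Isserlis expansion over singleton--pair partitions and bounds each term directly. Your route is more self-contained (no external integral identity is needed) and makes the combinatorics explicit, at the cost of a bit more tensor-index bookkeeping; the paper's route is shorter once the integral representation is in hand and sidesteps the partition combinatorics entirely. Both lead to the same final estimate, and your observation that the change-of-variables $y=\Sigma^{-1/2}x$ would cost an extra $\smax^{k/2}$ is a nice diagnostic for why working directly with $\Sigma^{-1}$ is the right move here.
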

For more details and proof, see Appendix \ref{sec:hermite}. 

Finally, to get bounds on derivatives of the log-pdf, we use machinery commonly used in analyzing logarithmic derivatives: higher-order versions of the Fa\'a di Bruno formula \citep{constantine1996multivariate}, which is a combinatorial formula characterizing higher-order analogues of the chain rule. For example, we can show: 

\begin{lemma} For any multi-index $I \in \mathbb{N}^d$, s.t. $|I|$ is a constant, we have \[|\partial_{x_I} \log f(x)| \lesssim  \max \left(1, \max_{J \leq I}\left|\frac{\partial_J f(x)}{f(x)}\right|^{|I|}\right) \]
 where $J \in \mathbb{N}^d$ is a multi-index, and $J \leq I$ iff $\forall i \in d, J_i \leq I_i$.  
 \label{c:hermite_logderivative_main}
\end{lemma}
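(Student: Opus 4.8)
The plan is to apply the multivariate Faà di Bruno formula \citep{constantine1996multivariate} to the composition $\log \circ f$. Writing $h = \log$, this formula expresses $\partial_{x_I}(h \circ f)$ as a finite sum, indexed by the ways of partitioning the ``differentiation slots'' encoded by $I$ into nonempty blocks, of terms of the form $h^{(k)}(f)\prod_{j=1}^k \partial_{x_{I_j}} f$, where $k$ is the number of blocks and $(I_1,\dots,I_k)$ are the multi-indices of the blocks, which satisfy $I_j \neq 0$, $\sum_j I_j = I$, and in particular $I_j \leq I$ componentwise and $k \leq |I|$. Since $h^{(k)}(y) = (-1)^{k-1}(k-1)!\,y^{-k}$, each summand equals, up to sign, $(k-1)!\,f^{-k}\prod_{j=1}^k \partial_{x_{I_j}} f = (k-1)!\prod_{j=1}^k \frac{\partial_{x_{I_j}} f}{f}$ (using $f > 0$ so that $\log f$ is smooth wherever $f$ is).

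Next I would bound a single summand. Each factor $\frac{\partial_{x_{I_j}}f}{f}$ has $I_j \leq I$, hence is bounded in absolute value by $M := \max\!\left(1, \max_{J \leq I} \left|\frac{\partial_J f}{f}\right|\right)$. Since there are $k \leq |I|$ factors and $M \geq 1$, the product is at most $M^{|I|}$; combined with the combinatorial prefactor $(k-1)! \leq (|I|-1)!$, each summand is $O(M^{|I|})$ with an implicit constant depending only on $|I|$.

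Finally I would count the summands: the number of set-partitions of $|I|$ slots is the Bell number $B_{|I|}$, which is a constant because $|I|$ is a constant. Summing $B_{|I|}$ terms, each of size $O_{|I|}(M^{|I|})$, yields $|\partial_{x_I}\log f| \lesssim M^{|I|} = \max\!\left(1, \max_{J\leq I}\left|\frac{\partial_J f}{f}\right|\right)^{|I|}$, which is exactly the claim (the case $|I| = 0$ being trivial).

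The main obstacle is purely bookkeeping: stating the multivariate Faà di Bruno formula precisely enough to read off the two structural facts that do all the work — that every block multi-index $I_j$ satisfies $I_j \leq I$, and that the number of blocks never exceeds $|I|$. (Blocks may coincide as multi-indices, e.g. partitioning $\partial_{x_1}^2$ into two copies of $(1,0,\dots,0)$, but this does not affect either fact.) These are exactly what convert the sum-of-products into the clean ``$\max$-to-the-$|I|$'' bound; there is no analytic difficulty beyond ensuring $f>0$ on the domain of interest.
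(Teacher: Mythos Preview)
Your proposal is correct and follows essentially the same approach as the paper: the paper also derives this as a direct corollary of the multivariate Fa\`a di Bruno formula (Corollary~2.10 of \cite{constantine1996multivariate}), using exactly the two structural facts you isolate---that each block multi-index satisfies $I_j \leq I$ and that the number of blocks is at most $|I|$---together with the constant bound on the number of summands when $|I|$ is fixed. Your write-up in fact supplies more detail than the paper, which simply states the formula and declares the bound a straightforward consequence.
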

For more details, and proof, see Appendix \ref{a:logarithmicder}.

%The polynomial bound follows from a straightforward (albeit slightly tedious) calculation of the higher-order derivatives of the log-pdf. See Appendix~\ref{a:smoothness} for details. 
\section{Conclusion}

In this paper, we provide a general framework about designing statistically efficient generalized score matching losses from fast-mixing Markov Chains. As a demonstration of the power of the framework, we provide the first formal analysis of the statistical benefits of annealing for score matching for multimodal distributions. The framework can be likely used to analyze other common continuous and discrete Markov Chains (and corresponding generalized score losses), like underdamped Langevin dynamics and Gibbs samplers.   

%In terms of limitations, the theoretical results assume the components in the mixture have identical covariance; our bounds, though polynomial in all relevant quantities, are likely to be loose. It would also be interesting to provide a thorough empirical study of the losses provided by the framework, e.g. the loss in Proposition~\ref{p:ctldloss}. Finally, our results are in the asymptotic limit only --- it is likely that the techniques might be adaptable to provide finite-sample bounds as well.   

\bibliographystyle{plainnat}
\bibliography{refs}

\newpage
\appendix

\addcontentsline{toc}{section}{Appendix}
\part{Appendix}
\parttoc 
\newpage
\section{Preliminaries} 
\label{a:preliminaries}

\subsection{Continuous Markov Chain Decomposition}

The Poincar\'e constant bounds we will prove will also use a ``continuous'' version of the decomposition Theorem \ref{t:decomp}, which also appeared in \cite{ge2018simulated}:  

\begin{theorem}[Continuous decomposition theorem, Theorem D.3 in \cite{ge2018simulated}]\label{t:decomp-cts}
Consider a probability measure $\pi$ with $C^1$ density on $\Omega=\Omega^{(1)}\times \Omega^{(2)}$, where $\Omega^{(1)}\subseteq \mathbb{R}^{d_1}$ and $\Omega^{(2)}\subseteq \mathbb{R}^{d_2}$ are closed sets. For $X=(X_1,X_2)\sim P$ with probability density function $p$ (i.e., $P(dx) = p(x)\,dx$ and $P(dx_2|x_1) = p(x_2|x_1)\,dx_2$), suppose that 
\begin{itemize}
\item
The marginal distribution of $X_1$ satisfies a Poincar\'e inequality with constant $C_1$.
\item
For any $x_1\in \Omega^{(1)}$, the conditional distribution $X_2|X_1=x_1$ satisfies a Poincar\'e inequality with constant $C_2$.
\end{itemize}
Then $\pi$ satisfies a Poincar\'e inequality with constant
\begin{align*}
\tilde C&= 
\max \left\{
C_2\left(1+2C_1 \left\|\int_{\Omega^{(2)}} \frac{\|\nabla_{x_1}p(x_2|x_1)\|^2}{p(x_2|x_1)}dx_2\right\|_{L^\infty(\Omega^{(1)})}\right), 2C_1 \right\}
\end{align*}
\end{theorem}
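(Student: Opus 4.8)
The plan is to prove the Poincar\'e inequality $\Var_\pi(g) \le \tilde C\,\E_\pi\|\nabla g\|^2$ directly --- the Dirichlet form here being the gradient (Langevin) form $\E_\pi\|\nabla g\|^2 = \E_\pi\|\nabla_{x_1}g\|^2 + \E_\pi\|\nabla_{x_2}g\|^2$ --- for every sufficiently regular test function $g$, using the \emph{law of total variance} to split the contributions of the two coordinate blocks. Writing $\bar g(x_1) := \E[g(x_1,X_2)\mid X_1=x_1] = \int_{\Omega^{(2)}} g(x_1,x_2)\,p(x_2\mid x_1)\,dx_2$, we have $\Var_\pi(g) = \E_{X_1}\!\left[\Var(g\mid X_1)\right] + \Var_{X_1}(\bar g)$. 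The first term is controlled immediately by the conditional Poincar\'e inequality, $\Var(g\mid X_1=x_1) \le C_2\,\E[\|\nabla_{x_2}g\|^2\mid X_1=x_1]$, and averaging over $X_1$ bounds it by $C_2\,\E_\pi\|\nabla_{x_2}g\|^2$. The second term is controlled by the marginal Poincar\'e inequality, $\Var_{X_1}(\bar g)\le C_1\,\E_{X_1}\|\nabla_{x_1}\bar g\|^2$, so everything reduces to bounding $\|\nabla_{x_1}\bar g(x_1)\|$.

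Next I would differentiate under the integral sign (legitimate by the $C^1$ hypothesis on $p$) to get
\[
\nabla_{x_1}\bar g(x_1) = \int \nabla_{x_1}g(x_1,x_2)\,p(x_2\mid x_1)\,dx_2 + \int g(x_1,x_2)\,\nabla_{x_1}p(x_2\mid x_1)\,dx_2 .
\]
The first integral has squared norm at most $\E[\|\nabla_{x_1}g\|^2\mid X_1=x_1]$ by Jensen's inequality. For the second, the crucial observation is that $\int_{\Omega^{(2)}}\nabla_{x_1}p(x_2\mid x_1)\,dx_2 = \nabla_{x_1}\!\int_{\Omega^{(2)}} p(x_2\mid x_1)\,dx_2 = \nabla_{x_1}(1) = 0$, so one may subtract $\bar g(x_1)$ from $g(x_1,x_2)$ inside the integral without changing it. Then writing $\nabla_{x_1}p(x_2\mid x_1) = \bigl(\nabla_{x_1}p(x_2\mid x_1)/\sqrt{p(x_2\mid x_1)}\bigr)\sqrt{p(x_2\mid x_1)}$ and applying Cauchy--Schwarz yields the bound $\Var(g\mid X_1=x_1)^{1/2}\,J(x_1)^{1/2}$, where $J(x_1) := \int_{\Omega^{(2)}} \|\nabla_{x_1}p(x_2\mid x_1)\|^2/p(x_2\mid x_1)\,dx_2$ is precisely the $L^\infty$-bounded quantity appearing in the statement.

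Putting these together with $(a+b)^2\le 2a^2+2b^2$, bounding $J(x_1)\le\|J\|_{L^\infty(\Omega^{(1)})}$, and invoking the conditional Poincar\'e inequality once more on $\Var(g\mid X_1=x_1)$, I obtain $\E_{X_1}\|\nabla_{x_1}\bar g\|^2 \le 2\,\E_\pi\|\nabla_{x_1}g\|^2 + 2\|J\|_{L^\infty}C_2\,\E_\pi\|\nabla_{x_2}g\|^2$. Substituting back gives $\Var_\pi(g) \le 2C_1\,\E_\pi\|\nabla_{x_1}g\|^2 + C_2\bigl(1+2C_1\|J\|_{L^\infty}\bigr)\,\E_\pi\|\nabla_{x_2}g\|^2$, which is at most $\tilde C\,\E_\pi\|\nabla g\|^2$ with $\tilde C = \max\{\,2C_1,\ C_2(1+2C_1\|J\|_{L^\infty})\,\}$, exactly as claimed.

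I expect the only genuinely delicate point to be the handling of the cross term $\int g\,\nabla_{x_1}p(x_2\mid x_1)\,dx_2$: one must first \emph{center} $g$ (using that the conditional density integrates to one for every $x_1$) before applying Cauchy--Schwarz, since otherwise a non-centered $\E[g^2\mid X_1]$ appears instead of a variance; and one must split the density factor in the right way so that the Fisher-information-type quantity $J(x_1)$ --- rather than an un-integrable expression --- emerges. The remaining issues (differentiating under the integral sign, reducing to a dense class of smooth compactly supported test functions, and finiteness of the integrals involved) are routine consequences of the $C^1$-density assumption together with a standard truncation argument.
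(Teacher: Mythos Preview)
Your proposal is correct and follows the standard route for this kind of continuous decomposition result: law of total variance, marginal and conditional Poincar\'e inequalities, differentiation under the integral for $\bar g$, centering via $\int \nabla_{x_1}p(x_2\mid x_1)\,dx_2=0$, and Cauchy--Schwarz to produce the Fisher-information term $J(x_1)$. The paper does not supply its own proof of this statement --- it is quoted as Theorem~D.3 of \cite{ge2018simulated} and used as a black box --- so there is nothing to compare against beyond noting that your argument is precisely the classical one underlying that reference.
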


\subsection{Hermite Polynomials}
\label{sec:hermite}

To obtain polynomial bounds on the moments of derivatives of Gaussians, we will use the known results on multivariate Hermite polynomials.

%In the one-dimensional case, the Hermite polynomial $H_k$, like its namesake, is a polynomial of order $k$. 
\begin{defn}[Hermite polynomial, \citep{holmquist1996d}]
The multivariate Hermite polynomial of order $k$ corresponding to a Gaussian with mean 0 and covariance $\Sigma$ is given by the Rodrigues formula:
\begin{align*}
    H_k(x; \Sigma) = (-1)^k \frac{(\Sigma \nabla_x)^{\otimes k} \phi(x; \Sigma)}{\phi(x; \Sigma)}
\end{align*}
where $\phi(x; \Sigma)$ is the pdf of a $d$-variate Gaussian with mean $0$ and covariance $\Sigma$, and $\otimes$ denotes the Kronecker product.
    \label{d:hermite}
\end{defn}
Note that $\nabla_x^{\otimes k}$ can be viewed as a formal Kronecker product, so that $\nabla_x^{\otimes k} f(x)$, where $f:\mathbb{R}^d \to \mathbb{R}$ is a $C^k$-smooth function gives a $d^k$-dimensional vector consisting of all partial derivatives of $f$ of order up to $k$.

\begin{prop}[Integral representation of Hermite polynomial, \citep{holmquist1996d}]
The Hermite polynomial $H_k$ defined in Definition~\ref{d:hermite} %establishes a simple relationship between the $k$-th derivative of a Gaussian pdf and the order of $\Sigma$ and $x$.  We will need 
satisfies the integral formula:
\begin{align*}
    H_k (x; \Sigma) = \int (x + i u)^{\otimes k} \phi(u; \Sigma) du
\end{align*}
where $\phi(x; \Sigma)$ is the pdf of a $d$-variate Gaussian with mean $0$ and covariance $\Sigma$. %As an example, for $v \in \mathbb{R}^d$, $v^{\otimes 2} = v v^\top$.
    \label{p:integralform}
\end{prop}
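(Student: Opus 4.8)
The plan is to prove the identity through its exponential generating function. I would fix an arbitrary vector $t \in \mathbb{R}^d$, contract both sides of the claimed identity against the rank-one symmetric tensor $t^{\otimes k}$, and compare the generating series $\sum_{k \geq 0} \frac{1}{k!} \langle t^{\otimes k}, \cdot \rangle$ on each side. Since both $H_k(x;\Sigma)$ as defined by the Rodrigues formula in Definition~\ref{d:hermite} and the tensor $\int (x+iu)^{\otimes k}\phi(u;\Sigma)\,du$ are symmetric in their $k$ slots (partial derivatives commute; a tensor power of a vector is symmetric), it will suffice to check that the two generating series coincide as functions of $t$ near the origin: matching homogeneous degree-$k$ parts and invoking the polarization identity then forces the full tensor identity for every $k$.

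For the Rodrigues side, contracting the definition gives $\langle t^{\otimes k}, H_k(x;\Sigma)\rangle = (-1)^k \phi(x;\Sigma)^{-1} (t^\top \Sigma \nabla_x)^k \phi(x;\Sigma)$, and since $t^\top \Sigma \nabla_x = (\Sigma t)^\top \nabla_x$ is a directional derivative (using symmetry of $\Sigma$), the series $\sum_k \frac{(-1)^k}{k!}(t^\top\Sigma\nabla_x)^k$ is the translation operator $f(x) \mapsto f(x - \Sigma t)$. Hence the generating series equals $\phi(x - \Sigma t;\Sigma)/\phi(x;\Sigma)$, and completing the square in the Gaussian exponent reduces this to $\exp\!\left(t^\top x - \tfrac12 t^\top \Sigma t\right)$.

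For the integral side, I would interchange the summation with the integral over $u$ — legitimate for $t$ in a neighborhood of $0$ by dominated convergence, since $|(t^\top(x+iu))^k|$ grows only polynomially in $u$ while $\phi(u;\Sigma)$ has Gaussian tails — to obtain $\int e^{t^\top(x+iu)} \phi(u;\Sigma)\,du = e^{t^\top x}\, \mathbb{E}_{U \sim \mathcal{N}(0,\Sigma)}\!\left[e^{i t^\top U}\right] = e^{t^\top x} e^{-\frac12 t^\top \Sigma t}$, using the characteristic function of the Gaussian $\mathcal{N}(0,\Sigma)$. This agrees with the Rodrigues side, which completes the argument (and incidentally makes clear that the integral is real-valued, the odd powers of $u$ integrating to zero by symmetry).

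The one place I would be careful is justifying the term-by-term manipulations: the interchange of the infinite sum and the $u$-integral, and the analogous expansion of the translation operator acting on $\phi$. Both are controlled by noting that the relevant power series in $t$ converge absolutely and uniformly for $t$ in a compact neighborhood of the origin, which is all that is needed since the Taylor coefficients at $0$ determine each homogeneous piece. Everything else — the square completion, the Gaussian characteristic function, and turning the scalar polynomial identity in $t$ back into an identity of symmetric tensors via polarization — is routine.
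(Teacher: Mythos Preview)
Your proof is correct. The paper does not actually prove this proposition; it is stated as a known result cited from \cite{holmquist1996d} and used as a black box in the subsequent lemmas. Your generating-function argument via contraction against $t^{\otimes k}$, matching the Rodrigues side $\phi(x-\Sigma t;\Sigma)/\phi(x;\Sigma)=\exp(t^\top x-\tfrac12 t^\top\Sigma t)$ against the integral side $e^{t^\top x}\,\mathbb{E}_{U\sim\mathcal{N}(0,\Sigma)}[e^{it^\top U}]$, and then recovering the tensor identity by polarization, is a standard and complete route to the result. The analytic justifications you flag (dominated convergence for the sum--integral interchange, and entirety of $\phi$ for the translation-operator expansion) are exactly the right ones, and in fact the Taylor series for $\phi$ converges for all $t$, not just near the origin, so the restriction to small $t$ is harmless but not even necessary.
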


% As a consequence of the preceding two equality, we have the following
% \begin{align}
%     \frac{\nabla_x^k \phi(x; \Sigma)}{\phi(x; \Sigma)} = (-1)^k \int [\Sigma^{-1} (x + i u)]^{\otimes k} \phi(u; \Sigma) du
% \end{align}

Note, the Hermite polynomials are either even functions or odd functions, depending on whether $k$ is even or odd: 
\begin{equation}
    H_k(-x; \Sigma) = (-1)^k H_k(x; \Sigma)
    \label{eq:oddeven}
\end{equation}
This property can be observed from the Rodrigues formula, the fact that $\phi(\cdot; \Sigma)$ is symmetric around 0, and the fact that $\nabla_{-x} = - \nabla_x$.

We establish the following relationship between Hermite polynomial and (potentially mixed) derivatives in $x$ and $\mu$, which we will use to bound several smoothness terms appearing in Section~\ref{a:smoothness}.
\begin{lemma} 
    If $\phi(x; \Sigma)$ is the pdf of a $d$-variate Gaussian with mean $0$ and covariance $\Sigma$, we have:
    \begin{align*}
        \frac{\nabla_\mu^{k_1} \nabla_x^{k_2} \phi(x - \mu; \Sigma)}{\phi(x - \mu; \Sigma)} 
        = (-1)^{k_2} \E_{u \sim \mathcal{N}(0, \Sigma)} [\Sigma^{-1} (x - \mu + i u)]^{\otimes (k_1 + k_2)}
    \end{align*}
    where the left-hand-side is understood to be shaped as a vector of dimension $\mathbb{R}^{d^{k_1+k_2}}$.
     \label{lem:hermite_mixed_derivatives}
\end{lemma}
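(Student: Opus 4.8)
The plan is to start from the Rodrigues formula for the Hermite polynomial (Definition~\ref{d:hermite}) and carefully bookkeep the chain rule for the substitution $x \mapsto x - \mu$. First I would observe the basic identities $\nabla_x \phi(x-\mu;\Sigma) = -\Sigma^{-1}(x-\mu)\,\phi(x-\mu;\Sigma)$ and, crucially, $\nabla_\mu \phi(x-\mu;\Sigma) = -\nabla_x \phi(x-\mu;\Sigma)$, since differentiating in $\mu$ is the negative of differentiating in $x$ when the argument is $x-\mu$. Consequently the mixed operator $\nabla_\mu^{k_1}\nabla_x^{k_2}$ applied to $\phi(x-\mu;\Sigma)$ equals $(-1)^{k_1}\nabla_x^{k_1+k_2}\phi(x-\mu;\Sigma)$ (as formal Kronecker-product tensors of the appropriate shape — one should be careful that the ordering of indices is consistent, but since all the $\nabla_x$ and $-\nabla_\mu$ factors act identically this only permutes coordinates in a way compatible with the stated vectorization).

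Next I would divide by $\phi(x-\mu;\Sigma)$ and invoke the Rodrigues formula with the shifted argument: by Definition~\ref{d:hermite}, $\frac{\nabla_x^{k_1+k_2}\phi(x-\mu;\Sigma)}{\phi(x-\mu;\Sigma)} = (-1)^{k_1+k_2}(\Sigma^{-1})^{\otimes(k_1+k_2)} H_{k_1+k_2}(x-\mu;\Sigma)$, where I am using that $(\Sigma\nabla_x)^{\otimes k} = \Sigma^{\otimes k}\nabla_x^{\otimes k}$ so that $\nabla_x^{\otimes k}\phi/\phi = (-1)^k (\Sigma^{-1})^{\otimes k} H_k(\cdot;\Sigma)$. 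Combining with the sign $(-1)^{k_1}$ from the $\mu$-derivatives gives an overall sign of $(-1)^{k_1}\cdot(-1)^{k_1+k_2} = (-1)^{k_2}$, matching the claimed prefactor. Then I would substitute the integral representation from Proposition~\ref{p:integralform}, namely $H_{k_1+k_2}(x-\mu;\Sigma) = \int (x-\mu+iu)^{\otimes(k_1+k_2)}\phi(u;\Sigma)\,du = \E_{u\sim\mathcal{N}(0,\Sigma)}(x-\mu+iu)^{\otimes(k_1+k_2)}$, and push the $(\Sigma^{-1})^{\otimes(k_1+k_2)}$ factor inside the expectation, using $(\Sigma^{-1})^{\otimes m}\,v^{\otimes m} = (\Sigma^{-1}v)^{\otimes m}$, to obtain exactly $(-1)^{k_2}\E_{u\sim\mathcal{N}(0,\Sigma)}[\Sigma^{-1}(x-\mu+iu)]^{\otimes(k_1+k_2)}$.

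The main obstacle I anticipate is purely notational/combinatorial rather than analytic: making sure the Kronecker-product tensor of mixed partials $\nabla_\mu^{k_1}\nabla_x^{k_2}$ really is identified with $\nabla_x^{\otimes(k_1+k_2)}$ up to the intended reshaping into $\mathbb{R}^{d^{k_1+k_2}}$ — i.e. that the index ordering convention in the statement's ``understood to be shaped as a vector'' is the one under which no sign or permutation discrepancy arises. Since $\nabla_\mu = -\nabla_x$ as first-order operators on functions of $x-\mu$, each of the $k_1$ $\mu$-slots contributes the same $-\Sigma^{-1}(x-\mu+iu)$ factor as an $x$-slot (up to the collected sign $(-1)^{k_1}$), so after reshaping there is genuinely nothing to check beyond consistency of conventions; once that is fixed the identity is immediate. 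Everything else — the validity of differentiating under the Gaussian integral, and the elementary tensor identities $(\Sigma\nabla)^{\otimes k}=\Sigma^{\otimes k}\nabla^{\otimes k}$ and $A^{\otimes k}v^{\otimes k}=(Av)^{\otimes k}$ — is routine and follows from smoothness and finiteness of Gaussian moments.
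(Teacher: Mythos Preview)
Your proposal is correct and follows essentially the same route as the paper: reduce the mixed derivatives to a single higher-order derivative via the chain-rule identity $\nabla_\mu = -\nabla_x$ on functions of $x-\mu$, invoke the Rodrigues formula with $(\Sigma\nabla)^{\otimes k}=\Sigma^{\otimes k}\nabla^{\otimes k}$ to express the ratio as $(\Sigma^{-1})^{\otimes(k_1+k_2)}H_{k_1+k_2}(x-\mu;\Sigma)$, track the signs to get $(-1)^{k_2}$, and then substitute the integral representation of $H_{k_1+k_2}$. The only cosmetic difference is that the paper converts the $x$-derivatives into $\mu$-derivatives (establishing separate formulas \eqref{eq:hermitexder} and \eqref{eq:hermitemuder} and then combining them), whereas you convert the $\mu$-derivatives into $x$-derivatives directly; your version is slightly more streamlined but the two arguments are mirror images of one another.
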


\begin{proof}
    %For a non-centered Gaussian with mean $\mu$, we can straightforwardly generalize the formula via a change of variable
    Using the fact that $\nabla_{x-\mu} = \nabla_x$ in Definition~\ref{d:hermite}, we get: 
    \begin{align*}
        H_k(x - \mu; \Sigma) = (-1)^k \frac{(\Sigma \nabla_x)^{\otimes k} \phi(x - \mu; \Sigma)}{\phi(x - \mu; \Sigma)}
    \end{align*}
   Since the Kronecker product satisfies the property $(A \otimes B)(C \otimes D) = (AC) \otimes (BD)$, we have $(\Sigma \nabla_x)^{\otimes k} = \Sigma^{\otimes k} \nabla_x^{\otimes k}$. Thus, we have: \iffalse 
    \begin{align*}
        \nabla_x^{\otimes k} \phi(x - \mu; \Sigma)
        = (\Sigma \nabla_x)^{\otimes k} \phi(x - \mu; \Sigma) (\Sigma^{-1})^{\otimes k}
        = H_k(x - \mu; \Sigma) (\Sigma^{-1})^{\otimes k}
    \end{align*}
    Furthermore, it is easy to see $\nabla_x^{\otimes k} = \nabla_x^k$. This allows us to rearrange the terms:
    \fi
    \begin{equation}
        \frac{\nabla_x^k \phi(x - \mu; \Sigma)}{\phi(x - \mu; \Sigma)} = (-1)^k (\Sigma^{-1})^{\otimes k} H_k(x - \mu; \Sigma) 
        \label{eq:hermitexder}
    \end{equation}
    
    Since $\phi(\mu-x; \Sigma)$ is symmetric in $\mu$ and $x$, taking derivatives with respect to $\mu$ we get:
    \begin{align*}
        H_k(\mu - x; \Sigma) 
        = (-1)^k \frac{(\Sigma \nabla_\mu)^k \phi(\mu - x; \Sigma)}{\phi(\mu - x; \Sigma)}
    \end{align*}
    \iffalse
    By symmetry of Hermite polynomials and symmetry of $\phi$ in exchanging arguments,
    \begin{align*}
        H_k(x - \mu; \Sigma) 
        = \frac{(\Sigma \nabla_\mu)^k \phi(x - \mu; \Sigma)}{\phi(x - \mu; \Sigma)}
    \end{align*}
    \fi
    Rearranging again and using \eqref{eq:oddeven}, we get:
    \begin{equation}
        \frac{\nabla_\mu^k \phi(x - \mu; \Sigma)}{\phi(x - \mu; \Sigma)} 
        = (\Sigma^{-1})^{\otimes k} H_k(x - \mu; \Sigma) 
        \label{eq:hermitemuder}
    \end{equation}
    
    Combining \eqref{eq:hermitexder} and \eqref{eq:hermitemuder}, we get:
    \begin{align*}
        \frac{\nabla_\mu^{k_1} \nabla_x^{k_2} \phi(x - \mu; \Sigma)}{\phi(x - \mu; \Sigma)} 
        & = (-1)^{k_2} \frac{\nabla_\mu^{k_1} [(\Sigma^{-1})^{\otimes k_2} H_{k_2}(x - \mu; \Sigma) \phi(x - \mu; \Sigma)]}{\phi(x - \mu; \Sigma)} \\
        & = (-1)^{k_2} \frac{\nabla_\mu^{k_1} [\nabla_\mu^{k_2} \phi(x - \mu; \Sigma)]}{\phi(x - \mu; \Sigma)} \\
        & = (-1)^{k_2} \frac{\nabla_\mu^{k_1 + k_2} \phi(x - \mu; \Sigma)}{\phi(x - \mu; \Sigma)} \\
        & = (-1)^{k_2} (\Sigma^{-1})^{\otimes (k_1 + k_2)} H_{k_1 + k_2}(x - \mu; \Sigma)
    \end{align*}
    Applying the integral formula from Proposition~\ref{p:integralform}, we have:
    \begin{align*}
        \frac{\nabla_\mu^{k_1} \nabla_x^{k_2} \phi(x - \mu; \Sigma)}{\phi(x - \mu; \Sigma)} 
        & = (-1)^{k_2} \int [\Sigma^{-1} (x - \mu + i u)]^{\otimes (k_1 + k_2)} \phi(u; \Sigma) \,du
    \end{align*}
    as we needed.
\end{proof}

Now we are ready to obtain an explicit polynomial bound for the mixed derivatives for a multivariate Gaussian with mean $\mu$ and covariance $\Sigma$. We have the following bounds: 
\iffalse 
First, we define the following norm for the tensor space. Given a tensor $T \in \otimes_{i=1}^{k} \mathbb{R}^d$, 
\begin{align*}
    \| T \| = \sup_{\forall v_i \in \mathbb{R}^d, \|v_i\|_2 \leq 1} | T[v_1, \cdots, v_k] |
\end{align*}

It is easy to verify that this is indeed a norm on the tensor space and see that for $k=1$ (vector) and $k=2$ (matrix), this recovers $\| \cdot \|_2$ and $\| \cdot \|_{OP}$.
\fi
\begin{lemma}[Lemma \ref{lem:hermite_norm_bound_grad_mu_grad_x_main} restated] If $\phi(x; \Sigma)$ is the pdf of a $d$-variate Gaussian with mean $0$ and covariance $\Sigma$, we have:
    \begin{align*}
        \left\| \frac{\nabla_\mu^{k_1} \nabla_x^{k_2} \phi(x - \mu; \Sigma)}{\phi(x - \mu; \Sigma)} \right\|_2
        & \lesssim \| \Sigma^{-1} (x - \mu) \|_{2}^{k_1 + k_2} + d^{(k_1 + k_2)/2} \smin^{-(k_1 + k_2)/2}
    \end{align*}
    where the left-hand-side is understood to be shaped as a vector of dimension $\mathbb{R}^{d^{k_1+k_2}}$.    \label{lem:hermite_norm_bound_grad_mu_grad_x}
\end{lemma}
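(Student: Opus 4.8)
## Proof Plan for Lemma~\ref{lem:hermite_norm_bound_grad_mu_grad_x}

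The plan is to invoke Lemma~\ref{lem:hermite_mixed_derivatives} to rewrite the mixed derivative ratio as an expectation of a complex tensor power, and then control the $\ell_2$ norm of that tensor by a triangle-inequality / binomial-expansion argument. Writing $k = k_1 + k_2$, Lemma~\ref{lem:hermite_mixed_derivatives} gives
\[
\left\| \frac{\nabla_\mu^{k_1} \nabla_x^{k_2} \phi(x - \mu; \Sigma)}{\phi(x - \mu; \Sigma)} \right\|_2
= \left\| \E_{u \sim \mathcal{N}(0,\Sigma)} [\Sigma^{-1}(x - \mu + iu)]^{\otimes k} \right\|_2,
\]
so the first step is to pull the norm inside the expectation via Jensen, reducing the problem to bounding $\E_u \| [\Sigma^{-1}(x-\mu+iu)]^{\otimes k}\|_2$. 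Here one uses the multiplicativity of the $\ell_2$ (Frobenius) norm under Kronecker products: $\|v^{\otimes k}\|_2 = \|v\|_2^k$, even for complex $v$, where $\|v\|_2 = \sqrt{\sum_j |v_j|^2}$ with $|\cdot|$ the complex modulus. Thus the task becomes bounding $\E_u \|\Sigma^{-1}(x - \mu + iu)\|_2^k$.

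Next I would split $\Sigma^{-1}(x-\mu+iu) = \Sigma^{-1}(x-\mu) + i\Sigma^{-1}u$ into real and imaginary parts. Since the modulus satisfies $\|a + ib\|_2^2 = \|a\|_2^2 + \|b\|_2^2 \le (\|a\|_2 + \|b\|_2)^2$, we get $\|\Sigma^{-1}(x-\mu+iu)\|_2 \le \|\Sigma^{-1}(x-\mu)\|_2 + \|\Sigma^{-1}u\|_2$. Raising to the $k$-th power and expanding (or just using $(a+b)^k \lesssim a^k + b^k$ for constant $k$), it suffices to bound $\E_u \|\Sigma^{-1} u\|_2^k$ for $u \sim \mathcal{N}(0,\Sigma)$. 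Writing $u = \Sigma^{1/2} z$ with $z \sim \mathcal{N}(0, I_d)$, we have $\|\Sigma^{-1}u\|_2 = \|\Sigma^{-1/2} z\|_2 \le \smin^{-1/2} \|z\|_2$, and standard Gaussian moment bounds give $\E\|z\|_2^k \lesssim d^{k/2}$ for constant $k$ (e.g. $\E\|z\|_2^k \le (\E\|z\|_2^2)^{k/2} \cdot C_k = C_k d^{k/2}$ by hypercontractivity / equivalence of Gaussian moments, or directly since $\|z\|_2^2$ is $\chi^2_d$). Combining yields $\E_u \|\Sigma^{-1}u\|_2^k \lesssim d^{k/2}\smin^{-k/2}$, and assembling the pieces gives the claimed bound
\[
\left\| \frac{\nabla_\mu^{k_1} \nabla_x^{k_2} \phi(x - \mu; \Sigma)}{\phi(x - \mu; \Sigma)} \right\|_2
\lesssim \|\Sigma^{-1}(x-\mu)\|_2^{k_1+k_2} + d^{(k_1+k_2)/2}\smin^{-(k_1+k_2)/2}.
\]

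I do not anticipate a serious obstacle here — the argument is essentially bookkeeping with Kronecker norms and Gaussian moments. The one point requiring a little care is the handling of the \emph{complex} tensor power: one must be sure that $\|v^{\otimes k}\|_2 = \|v\|_2^k$ continues to hold when $v \in \mathbb{C}^d$ with the Hermitian ($\sum |v_j|^2$) norm, and that pulling $\E_u$ outside the norm via Jensen is legitimate (it is, since $\|\cdot\|_2$ is convex and the integrand is integrable — all moments of a Gaussian are finite). A secondary minor point is that the $\lesssim$ hides constants depending on $k_1 + k_2$, which is fine since the lemma is applied only for constant-order derivatives; I would state this explicitly. Everything else — the split into real/imaginary parts, the $(a+b)^k$ expansion, the substitution $u = \Sigma^{1/2}z$ — is routine.
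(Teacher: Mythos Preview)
Your proposal is correct and follows essentially the same route as the paper's own proof: invoke Lemma~\ref{lem:hermite_mixed_derivatives}, pull the norm inside the expectation by convexity, use $\|v^{\otimes k}\|_2 = \|v\|_2^k$ together with the split $(a+b)^k \lesssim a^k + b^k$, substitute $u = \Sigma^{1/2}z$, bound $\|\Sigma^{-1/2}z\|_2 \le \smin^{-1/2}\|z\|_2$, and finish with the chi-squared moment bound (Lemma~\ref{lem:chi_squared_moment_bound}). Your extra remarks on the complex-vector norm and the $k$-dependent constants are fine and do not change the argument.
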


\begin{proof}
    We start with Lemma \ref{lem:hermite_mixed_derivatives} and use the convexity of the norm
    \begin{align*}
        \left\| \frac{\nabla_\mu^{k_1} \nabla_x^{k_2} \phi(x - \mu; \Sigma)}{\phi(x - \mu; \Sigma)} \right\|_{2}
        \leq \E_{u \sim \mathcal{N}(0, \Sigma)} \| [\Sigma^{-1} (x - \mu + i u)]^{\otimes (k_1 + k_2)} \|_{2}
    \end{align*}
    Bounding the right-hand side, we have:
    \begin{align*}
        \E_{u \sim \mathcal{N}(0, \Sigma)} \| [\Sigma^{-1} (x - \mu + i u)]^{\otimes (k_1 + k_2)} \|_{2}
        & \lesssim \| \Sigma^{-1} (x - \mu) \|_2^{k_1 + k_2} + \E_{u \sim \mathcal{N}(0, \Sigma)} \| \Sigma^{-1} u \|_2^{k_1 + k_2} \\
        & = \| \Sigma^{-1} (x - \mu) \|_2^{k_1 + k_2} + \E_{z \sim \mathcal{N}(0, I_d)} \| \Sigma^{-\frac{1}{2}} z \|_2^{k_1 + k_2} \\
        & \leq \| \Sigma^{-1} (x - \mu) \|_2^{k_1 + k_2} + \| \Sigma^{-\frac{1}{2}} \|_{OP}^{k_1 + k_2} \E_{z \sim \mathcal{N}(0, I_d)} \| z \|_2^{k_1 + k_2}
    \end{align*}
    Applying Lemma \ref{lem:chi_squared_moment_bound} yields the desired result.
\end{proof}

Similarly, we can bound mixed derivatives involving a Laplacian in $x$:

\begin{lemma} If $\phi(x; \Sigma)$ is the pdf of a $d$-variate Gaussian with mean $0$ and covariance $\Sigma$, we have:
    \begin{align*}
        \left\| \frac{\nabla_\mu^{k_1} \Delta_x^{k_2} \phi(x - \mu; \Sigma)}{\phi(x - \mu; \Sigma)} \right\| \lesssim 
\sqrt{d^{k_2}}\| \Sigma^{-1} (x - \mu) \|_{2}^{k_1 + 2k_2} + d^{(k_1 + 3k_2)/2} \smin^{-(k_1 + 2k_2)/2}
    \end{align*}
     %\lesssim \| \Sigma^{-1} (x - \mu) \|_2^{k_1 + 2 k_2} + d^{(k_1 + 2 k_2)/2} \smin^{-(k_1 + 2 k_2)/2}
\label{lem:hermite_norm_bound_grad_mu_laplace_x}
\end{lemma}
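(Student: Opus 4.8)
The plan is to reduce the statement to the already-proven bound for mixed $\mu$--$x$ derivatives, Lemma~\ref{lem:hermite_norm_bound_grad_mu_grad_x}, by viewing each Laplacian in $x$ as a partial trace of a Hessian. Since $\Delta_x = \sum_{j=1}^d \partial_{x_j}^2$, iterating gives $\Delta_x^{k_2} = \sum_{j_1,\dots,j_{k_2}\in[d]} \partial_{x_{j_1}}^2\cdots\partial_{x_{j_{k_2}}}^2$. Consequently $\nabla_\mu^{k_1}\Delta_x^{k_2}\phi(x-\mu;\Sigma)$ is an order-$k_1$ tensor in $\mathbb{R}^{d^{k_1}}$ each of whose entries is a sum of $d^{k_2}$ entries of the order-$(k_1+2k_2)$ tensor $\nabla_\mu^{k_1}\nabla_x^{2k_2}\phi(x-\mu;\Sigma)$ --- precisely those whose $2k_2$ ``$x$-slots'' are filled by $k_2$ equal consecutive pairs of indices.

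First I would apply Cauchy--Schwarz to this sum of $d^{k_2}$ terms, entrywise, and then sum the resulting bounds over the $d^{k_1}$ free ``$\mu$-slots''. Because the diagonal entries picked out this way form a subset of all entries of $\nabla_\mu^{k_1}\nabla_x^{2k_2}\phi/\phi$, this yields
\[
\left\|\frac{\nabla_\mu^{k_1}\Delta_x^{k_2}\phi(x-\mu;\Sigma)}{\phi(x-\mu;\Sigma)}\right\|_2 \;\le\; \sqrt{d^{k_2}}\;\left\|\frac{\nabla_\mu^{k_1}\nabla_x^{2k_2}\phi(x-\mu;\Sigma)}{\phi(x-\mu;\Sigma)}\right\|_2 .
\]
Next I would apply Lemma~\ref{lem:hermite_norm_bound_grad_mu_grad_x} with its $x$-derivative order taken to be $2k_2$ in place of $k_2$, bounding the right-hand factor by $\|\Sigma^{-1}(x-\mu)\|_2^{k_1+2k_2} + d^{(k_1+2k_2)/2}\smin^{-(k_1+2k_2)/2}$ up to an absolute constant. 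Multiplying through by $\sqrt{d^{k_2}}$ and simplifying $\sqrt{d^{k_2}}\cdot d^{(k_1+2k_2)/2} = d^{(k_1+3k_2)/2}$ produces exactly the claimed inequality.

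There is no real obstacle here --- the argument is a short index-bookkeeping exercise on top of Lemma~\ref{lem:hermite_norm_bound_grad_mu_grad_x} --- and the only point requiring a little care is tracking which tensor indices are contracted by the Laplacians versus left free by $\nabla_\mu^{k_1}$. I note that the factor $\sqrt{d^{k_2}}$ is slack introduced by Cauchy--Schwarz: alternatively one can push the exact integral representation of Lemma~\ref{lem:hermite_mixed_derivatives} through the contraction, obtaining $\tfrac{\nabla_\mu^{k_1}\Delta_x^{k_2}\phi}{\phi} = \E_{u\sim\mathcal N(0,\Sigma)}\big[(v^\top v)^{k_2}\,v^{\otimes k_1}\big]$ with $v = \Sigma^{-1}(x-\mu+iu)$, and then use $|v^\top v|\le \|v\|_2^2$ to remove that factor; but the stated bound is already enough for the smoothness estimates of Appendix~\ref{a:smoothness}, so the Cauchy--Schwarz route gives the cleanest proof.
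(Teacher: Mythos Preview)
Your proposal is correct and essentially identical to the paper's proof: the paper also bounds $\big(\sum_{i_1,\dots,i_{k_2}} \partial_{i_1}^2\cdots\partial_{i_{k_2}}^2 f\big)^2 \le d^{k_2}\sum (\partial_{i_1}^2\cdots\partial_{i_{k_2}}^2 f)^2 \le d^{k_2}\|\nabla_x^{2k_2} f\|_2^2$ (which it calls AM--GM, but is the same Cauchy--Schwarz step you use) and then invokes Lemma~\ref{lem:hermite_norm_bound_grad_mu_grad_x} with $2k_2$ in place of $k_2$. Your side remark on tightening via the integral representation is a valid observation not in the paper.
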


\begin{proof}
By the definition of a Laplacian, and the AM-GM inequality, we have, for any function $f: \mathbb{R}^d \to \mathbb{R}$ 
\begin{align*}
    (\Delta^{k} f(x))^2 &= \left(\sum_{i_1, i_2, \dots, i_k=1}^d \partial^2_{i_1} \partial^2_{i_2} \cdots \partial^2_{i_k} f(x)\right)^2 \\
    &\leq d^k \sum_{i_1, i_2, \dots, i_k=1}^d \left(\partial^2_{i_1} \partial^2_{i_2} \cdots \partial^2_{i_k} f(x)\right)^2\\
    &\leq d^k \|\nabla_x^{2k} f(x)\|^2_2
\end{align*}    
Thus, we have 
\begin{align*}
    \left\| \frac{\nabla_\mu^{k_1} \Delta_x^{k_2} \phi(x - \mu; \Sigma)}{\phi(x - \mu; \Sigma)}\right\|_2 &\leq \sqrt{d^{k_2}} \left\| \frac{\nabla_\mu^{k_1} \nabla_x^{2k_2} \phi(x - \mu; \Sigma)}{\phi(x - \mu; \Sigma)}\right\|_2
 \end{align*}
 Applying Lemma \ref{lem:hermite_norm_bound_grad_mu_grad_x}, the result follows. 
\iffalse    
    First note
    \begin{align*}
        \frac{\nabla_\mu^{k_1} \Delta_x^{k_2} \phi(x - \mu; \Sigma)}{\phi(x - \mu; \Sigma)} 
        & = (I^{\otimes k_1} \otimes \Tr^{\otimes k_2}) \left( \frac{\nabla_\mu^{k_1} \nabla_x^{2 k_2} \phi(x - \mu; \Sigma)}{\phi(x - \mu; \Sigma)} \right) \\
        & = (-1)^{k_2} \E_{u \sim \mathcal{N}(0, \Sigma)} [ (I^{\otimes k_1} \otimes \Tr^{\otimes k_2}) [\Sigma^{-1} (x - \mu + i u)]^{\otimes (k_1 + 2k_2)} ] \\
        & = (-1)^{k_2} \E_{u \sim \mathcal{N}(0, \Sigma)} [ [\Sigma^{-1} (x - \mu + i u)]^{\otimes k_1} \otimes \Tr([\Sigma^{-1} (x - \mu + i u)]^{\otimes 2})^{\otimes k_2}] \\
        & = (-1)^{k_2} \E_{u \sim \mathcal{N}(0, \Sigma)} [ [\Sigma^{-1} (x - \mu + i u)]^{\otimes k_1} [(x - \mu + i u)^\top \Sigma^{-2} (x - \mu + i u)]^{k_2}] \\
        & = (-1)^{k_2} \E_{u \sim \mathcal{N}(0, \Sigma)} [ [\Sigma^{-1} (x - \mu + i u)]^{\otimes k_1} \| \Sigma^{-1} (x - \mu + i u) \|_2 ^{2 k_2}]
    \end{align*}
    Applying the convexity and homogeneity of the norm,
    \begin{align*}
        \left\| \frac{\nabla_\mu^{k_1} \Delta_x^{k_2} \phi(x - \mu; \Sigma)}{\phi(x - \mu; \Sigma)} \right\| 
        & \leq \E_{u \sim \mathcal{N}(0, \Sigma)} \| [\Sigma^{-1} (x - \mu + i u)]^{\otimes k_1} \| \| \Sigma^{-1} (x - \mu + i u) \|_2 ^{2 k_2} \\
        & \leq \E_{u \sim \mathcal{N}(0, \Sigma)} \| \Sigma^{-1} (x - \mu + i u) \|_2^{k_1 + 2 k_2}
    \end{align*}
    We obtain the result by following a similar calculation as Lemma \ref{lem:hermite_norm_bound_grad_mu_grad_x}.
\fi
    
\end{proof}

\subsection{Logarithmic derivatives} 
\label{a:logarithmicder}

Finally, we will need similar bounds for logarithic derivatives---that is, derivatives of $\log p(x)$, where $p$ is a multivariate Gaussian. 

We recall the following result, which is a consequence of the multivariate extension of the Fa\'a di Bruno formula: 
\begin{prop}[\cite{constantine1996multivariate}, Corollary 2.10] 
  Consider a function $f:\mathbb{R}^d \to \mathbb{R}$, s.t. $f$ is $N$ times differentiable in an open neighborhood of $x$ and $f(x) \neq 0$. Then, for any multi-index $I \in \mathbb{N}^d$, s.t. $|I| \leq N$, we have:     
  \iffalse
  \[\partial_{x_I} \log f(x) = \sum_{\pi \in \Pi} (-1)^{|\pi|-1}(|\pi|-1)! \prod_{J \in \pi} \frac{\partial_{x_J} f(x)}{f(x)} \]
  where $\Pi$ denotes the set of all partitions of $I$, and $J \in \pi$ means the variable $J$ indexes all the sets in the partition $\pi$.
  \fi
  \iftrue
  \begin{equation*}
      \partial_{x_I} \log f(x) = \sum_{k,s=1}^{|I|} \sum_{p_s(I,k)} (-1)^{k-1}(k-1)! 
      \prod_{j=1}^s \frac{\partial_{l_j} f(x)^{m_j}}{f(x)^{m_j}} \frac{\prod_{i=1}^d (I_i)!}{m_j! l_j!^{m_j}}
  \end{equation*}
  where $p_s(I,k) = \{\{l_i\}_{i=1}^s \in (\mathbb{N}^d)^s, \{m_i\}_{i=1}^s \in \mathbb{N}^s:  l_1 \prec l_2 \prec \dots \prec l_ s, \sum_{i=1}^s m_i = k, \sum_{i=1}^s m_i l_i = I\}$. 
  
  The $\prec$ ordering on multi-indices is defined as follows: $(a_1, a_2, \dots, a_d):= a \prec b := (b_1, b_2, \dots, b_d)$ if: \begin{enumerate}
      \item $|a| < |b|$
      \item $|a| = |b|$ and $a_1 < b_1$. 
      \item $|a| = |b|$ and $\exists k >=1$, s.t. $\forall j \leq k, a_j = b_j$ and $a_{k+1} < b_{k+1}$. 
  \end{enumerate}
  \fi
\end{prop}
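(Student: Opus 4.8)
The plan is to derive this identity as a specialization of the multivariate Fa\'a di Bruno formula for a composition $g \circ f$ (of which the stated formula is Corollary~2.10 in \cite{constantine1996multivariate}) to the outer function $g = \log$. The starting point is the multivariate chain rule: for a scalar $C^{|I|}$ function $g$ and $f$ as in the hypothesis,
\[
\partial_{x_I}(g \circ f)(x) = \sum_{k=1}^{|I|} g^{(k)}\!\bigl(f(x)\bigr) \sum_{p_s(I,k)} \frac{\prod_{i=1}^d (I_i)!}{\prod_{j=1}^s m_j!\,(l_j!)^{m_j}} \prod_{j=1}^s \bigl(\partial_{l_j} f(x)\bigr)^{m_j},
\]
where $p_s(I,k)$ is exactly the index set described in the statement. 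I would take this formula as given; it is itself proved by induction on $|I|$, differentiating one coordinate at a time and bookkeeping how multi-index partitions grow. Note that the hypothesis $f(x)\neq 0$ together with continuity of $f$ ensures $f$ is nonvanishing, hence $\log f$ is $C^{|I|}$, on a neighborhood of $x$, so the left-hand side makes sense.

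The second step is a short algebraic manipulation. Substituting $g(y) = \log y$ gives $g^{(k)}(y) = (-1)^{k-1}(k-1)!\,y^{-k}$. Since every tuple in $p_s(I,k)$ satisfies $\sum_{j=1}^s m_j = k$, we may write $f(x)^{-k} = \prod_{j=1}^s f(x)^{-m_j}$ and distribute these denominators into the product over $j$, turning $\prod_{j=1}^s (\partial_{l_j}f(x))^{m_j}$ into $\prod_{j=1}^s \bigl(\partial_{l_j} f(x)/f(x)\bigr)^{m_j}$. The factor $(-1)^{k-1}(k-1)!$ and the combinatorial coefficient $\prod_i (I_i)! / \prod_j m_j!(l_j!)^{m_j}$ carry over verbatim, yielding precisely the displayed formula.

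If a self-contained argument is preferred, I would instead prove the logarithmic identity directly by induction on $|I|$, working in the ``labeled set-partition'' form $\partial_{x_I}\log f = \sum_{\pi}(-1)^{|\pi|-1}(|\pi|-1)!\prod_{B\in\pi}\frac{\partial_{x_B}f}{f}$, where $\pi$ ranges over set partitions of the index multiset determined by $I$, and then regrouping partitions by type to recover the multinomial-coefficient form. The base case $|I|=1$ is $\partial_{x_i}\log f = \partial_{x_i}f/f$. For the inductive step, write $I = I'+e_\ell$ and apply $\partial_{x_\ell}$ to the formula for $\partial_{x_{I'}}\log f$; by the product rule, differentiating a factor $\partial_{x_B}f/f$ either replaces $B$ by $B\cup\{\ell\}$ (keeping $|\pi|$ unchanged) or produces a factor $-\partial_{x_\ell}f/f$, which — summed over the $|\pi'|$ choices of which factor is differentiated — amounts to adjoining a new singleton block $\{\ell\}$ with total coefficient $-|\pi'|$ times the old one. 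One then checks that each partition of $I$ is produced exactly once, in exactly one of the two disjoint cases ($\ell$ lying in a block of size $\geq 2$, or $\ell$ a singleton), and that the weights reassemble to $(-1)^{|\pi|-1}(|\pi|-1)!$; this is exactly the defining recursion of the M\"obius function of the partition lattice.

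The main obstacle is this combinatorial bookkeeping: confirming that the partition-growth recursion reproduces the signs and factorials $(-1)^{k-1}(k-1)!$, and that passing from labeled set partitions to types yields the coefficient $\prod_i (I_i)!/\prod_j m_j!(l_j!)^{m_j}$. Everything else — the smoothness of $\log f$ near $x$, and the algebraic rearrangement in the second step — is routine.
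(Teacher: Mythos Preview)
Your proposal is correct, but note that the paper does not actually prove this proposition: it is stated without proof and attributed to \cite{constantine1996multivariate}, Corollary~2.10, as a known result from the literature. Your derivation --- specializing the general multivariate Fa\'a di Bruno formula to $g=\log$ via $g^{(k)}(y)=(-1)^{k-1}(k-1)!\,y^{-k}$ and distributing the $f(x)^{-k}$ factor across the product --- is exactly how that corollary is obtained in the cited reference, so there is nothing to compare against in the present paper itself.
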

As a straightforward corollary, we have the following: 
\begin{corollary}[Restatement of Lemma \ref{c:hermite_logderivative_main}] For any multi-index $I \in \mathbb{N}^d$, s.t. $|I|$ is a constant, we have \[|\partial_{x_I} \log f(x)| \lesssim  \max \left(1, \max_{J \leq I}\left|\frac{\partial_J f(x)}{f(x)}\right|^{|I|}\right) \]
 where $J \in \mathbb{N}^d$ is a multi-index, and $J \leq I$ iff $\forall i \in d, J_i \leq I_i$.  
 \label{c:hermite_logderivative}
\end{corollary}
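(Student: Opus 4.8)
The plan is to derive the bound directly from the multivariate Fa\`a di Bruno formula for logarithmic derivatives quoted in the preceding Proposition. The key observation is that the right-hand side of that formula is a finite sum (over $k, s$ and over the index sets $p_s(I,k)$) of terms, each of which is a product of factors of the form $\partial_{l_j} f(x) / f(x)$ raised to powers $m_j$, multiplied by purely combinatorial coefficients $(-1)^{k-1}(k-1)!\,\prod_i (I_i)! / (m_j!\, l_j!^{m_j})$. Since $|I|$ is a constant, all of these combinatorial coefficients are bounded by a constant, and the number of terms in the sum is also bounded by a constant (it depends only on $|I|$, being controlled by the number of partitions of a multi-index of bounded size). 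So up to a constant, $|\partial_{x_I}\log f(x)|$ is at most the maximum over the finitely many terms of $\prod_{j=1}^s |\partial_{l_j} f(x)/f(x)|^{m_j}$.

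First I would note that in each such term every multi-index $l_j$ appearing satisfies $l_j \leq I$ coordinatewise: indeed $\sum_j m_j l_j = I$ with all $m_j \geq 1$ and all entries nonnegative, so each $m_j l_j \leq I$ and hence $l_j \leq I$. Therefore each factor $|\partial_{l_j} f(x)/f(x)|$ is at most $\max_{J \leq I} |\partial_J f(x)/f(x)|$. Second, I would bound the total exponent: $\sum_j m_j = k \leq |I|$, so the product $\prod_j |\partial_{l_j} f(x)/f(x)|^{m_j}$ is a product of at most $|I|$ factors, each of which is at most $M := \max_{J\leq I}|\partial_J f(x)/f(x)|$. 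Hence each term is bounded by $\max(1, M^{|I|})$ — the $\max$ with $1$ being needed because if $M < 1$ the product could still be as large as... actually it would be even smaller, but taking $\max(1, M^{|I|})$ is a safe upper bound regardless of whether $M \geq 1$ or $M < 1$, since $M^k \leq \max(1, M^{|I|})$ for all $0 \leq k \leq |I|$. Summing the constantly-many terms and absorbing the constant coefficients into the $\lesssim$ gives the claimed inequality.

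The only mild subtlety — and the step I would be most careful about — is making precise the claim that the number of terms and the size of the coefficients are bounded by a constant depending only on $|I|$; this is where the hypothesis ``$|I|$ is a constant'' is used, and it is really just the statement that $p_s(I,k)$ is a finite set whose cardinality is controlled by the number of set partitions of a size-$|I|$ object, together with the elementary bound $(k-1)! \le (|I|-1)!$ and $\prod_i (I_i)! \le (|I|)!$. Everything else is routine bookkeeping with the formula. No integration-by-parts, no analysis, no probabilistic input is needed here — it is purely a consequence of the combinatorial identity already stated.
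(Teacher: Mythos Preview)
Your proposal is correct and is precisely the argument the paper has in mind: the corollary is stated without proof in the paper, as an immediate consequence of the Fa\`a di Bruno formula, and your bookkeeping (that each $l_j \leq I$, that $\sum_j m_j \leq |I|$, and that the number of terms and size of coefficients are $O_{|I|}(1)$) is exactly what is needed to fill in the details.
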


\subsection{Moments of mixtures and the perspective map} 
\label{a:perspective}
%\subsection{Gaussian mixture bounds}

The main strategy in bounding moments of quantities involving a mixture will be to leverage the relationship between the expectation of the score function and the so-called \emph{perspective map}. In particular, this allows us to bound the moments of derivatives of the mixture score in terms of those of the individual component scores, which are easier to bound using the machinery of Hermite polynomials in the prior section. 

Note in this section all derivatives are calculated at $\theta = \theta^*$ and therefore $p(x, \beta) = p_\theta (x, \beta)$. 
%We will not repeat this fact in all the lemma statements to reduce notational clutter.
%\unsure{I don't understand why this matters?}

\begin{lemma}(Convexity of perspective, \citet{boyd2004convex})
    Let $f$ be a convex function. Then, its corresponding \emph{perspective map} $g(u, v) := v f\left( \frac{u}{v} \right)$ with domain $\{ (u, v): \frac{u}{v} \in \mbox{Dom}(f), v > 0 \}$ is convex.
    \label{l:perspective}
\end{lemma}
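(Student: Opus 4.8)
The plan is to prove the classical fact that convexity is preserved under the perspective transformation. I will carry this out directly from the definition of convexity (an epigraph argument works equally well and I will note it at the end).

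First I would fix two points $(u_1, v_1)$ and $(u_2, v_2)$ in the domain of $g$ (so $v_1, v_2 > 0$ and $u_i/v_i \in \mathrm{dom}(f)$) and a scalar $\lambda \in [0,1]$. Write $\bar v := \lambda v_1 + (1-\lambda) v_2 > 0$ and $\bar u := \lambda u_1 + (1-\lambda) u_2$. The one algebraic observation driving the proof is that $\bar u/\bar v$ is itself a convex combination of $u_1/v_1$ and $u_2/v_2$: setting $\mu := \lambda v_1/\bar v \in [0,1]$, so that $1-\mu = (1-\lambda)v_2/\bar v$, one checks directly that
\[ \frac{\bar u}{\bar v} \;=\; \mu\,\frac{u_1}{v_1} + (1-\mu)\,\frac{u_2}{v_2}. \]
In particular $\bar u/\bar v \in \mathrm{dom}(f)$ by convexity of $\mathrm{dom}(f)$, so $(\bar u,\bar v)$ lies in the domain of $g$.

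Next I would invoke convexity of $f$ on this convex combination and multiply through by $\bar v > 0$:
\[ g(\bar u,\bar v) = \bar v\, f\!\left(\mu\,\tfrac{u_1}{v_1} + (1-\mu)\,\tfrac{u_2}{v_2}\right) \le \bar v\left(\mu\, f(u_1/v_1) + (1-\mu)\, f(u_2/v_2)\right). \]
Since $\bar v\mu = \lambda v_1$ and $\bar v(1-\mu) = (1-\lambda)v_2$, the right-hand side equals $\lambda v_1 f(u_1/v_1) + (1-\lambda) v_2 f(u_2/v_2) = \lambda\, g(u_1,v_1) + (1-\lambda)\, g(u_2,v_2)$, which is exactly the convexity inequality for $g$.

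There is no genuine obstacle here — the statement is classical (\citet{boyd2004convex}) and the argument is a short computation. The only points needing a little care are bookkeeping: checking that the convex combination stays in the domain (handled by the identity above together with convexity of $\mathrm{dom}(f)$ and positivity of $\bar v$), and, if one instead prefers the epigraph formulation, verifying that $\mathrm{epi}(g) = \{(u,v,t): v>0,\ (u/v,t/v)\in\mathrm{epi}(f)\}$ is the preimage of the convex set $\mathrm{epi}(f)$ under the perspective map $(u,v,t)\mapsto(u/v,t/v)$ and hence convex — which reduces to the very same computation with $t$ carried along as an extra coordinate.
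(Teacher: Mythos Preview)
Your proof is correct and is the standard direct argument. The paper does not actually provide its own proof of this lemma; it simply cites the result from \citet{boyd2004convex} and uses it as a black box, so there is nothing to compare against.
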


We will apply the following lemma many times, with appropriate choice of differentiation operator $D$ and power $k$. 
\begin{lemma}[Restatement of Lemma \ref{lem:persp_ineq_linear_main}]
    Let $D: \mathcal{F}^{1} \rightarrow \mathcal{F}^m$ be a linear operator that maps from the space of all scalar-valued functions to the space of m-variate functions of $x \in \R^d$ and let $\theta$ be such that $p = p_{\theta}$. For $k \in \mathbb{N}$, and any norm $\| \cdot \|$ of interest
    \begin{align*}
        \E_{(x, \beta) \sim p(x, \beta)} \left\| \frac{(D p_{\theta})(x | \beta)}{p_{\theta}(x | \beta)} \right\|^k
        \leq \max_{\beta, i} \E_{x \sim p(x | \beta, i)} \left\| \frac{(D p_{\theta})(x | \beta, i)}{p_{\theta}(x | \beta, i)} \right\|^k
    \end{align*}
    \label{lem:persp_ineq_linear}
\end{lemma}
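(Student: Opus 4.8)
The plan is to exploit the mixture structure of $p_\theta$, which is preserved under Gaussian convolution, together with the convexity of the perspective map (Lemma~\ref{l:perspective}) applied to $f(y) = \|y\|^k$. Throughout, since $\theta$ is such that $p = p_\theta$, Assumption~\ref{a:mixture} and the closure of Gaussian mixtures under convolution let us write $p_\theta(x | \beta) = \sum_{i=1}^K w_i\, p_\theta(x | \beta, i)$, where each $p_\theta(x | \beta, i)$ is a strictly positive Gaussian density and the weights $w_i$ do not depend on $\beta$.

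The first step is a pointwise inequality. Since $D$ is linear and acts only on the $x$-argument, $(Dp_\theta)(x | \beta) = \sum_i w_i (Dp_\theta)(x | \beta, i)$. Writing $u_i := w_i (Dp_\theta)(x | \beta, i)$ and $v_i := w_i p_\theta(x | \beta, i) > 0$, we have $\sum_i v_i = p_\theta(x | \beta)$ and
\[
\frac{(Dp_\theta)(x | \beta)}{p_\theta(x | \beta)} \;=\; \frac{\sum_i u_i}{\sum_i v_i} \;=\; \sum_i \frac{v_i}{\sum_j v_j}\cdot \frac{u_i}{v_i}.
\]
Because $\{v_i/\sum_j v_j\}_i$ is a probability vector and $y \mapsto \|y\|^k$ is convex for $k \ge 1$ (the case $k=0$ being trivial), Jensen's inequality --- equivalently, the joint convexity of the perspective $g(u,v) = v\|u/v\|^k$ --- yields, after substituting $u_i/v_i = (Dp_\theta)(x | \beta, i)/p_\theta(x | \beta, i)$,
\[
\left\| \frac{(Dp_\theta)(x | \beta)}{p_\theta(x | \beta)} \right\|^k \;\le\; \sum_{i=1}^K \frac{w_i\, p_\theta(x | \beta, i)}{p_\theta(x | \beta)} \left\| \frac{(Dp_\theta)(x | \beta, i)}{p_\theta(x | \beta, i)} \right\|^k .
\]

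Finally I would integrate this against $p(x, \beta) = r(\beta)\, p_\theta(x | \beta) = r(\beta) \sum_i w_i\, p_\theta(x | \beta, i)$: the mixture density $p_\theta(x|\beta)$ cancels the denominator created by Jensen, leaving
\[
\E_{(x,\beta) \sim p(x,\beta)} \left\| \frac{(Dp_\theta)(x | \beta)}{p_\theta(x | \beta)} \right\|^k \;\le\; \E_{\beta \sim r}\, \sum_{i=1}^K w_i\, \E_{x \sim p(x | \beta, i)} \left\| \frac{(Dp_\theta)(x | \beta, i)}{p_\theta(x | \beta, i)} \right\|^k .
\]
Bounding each inner expectation by $\max_{\beta, i} \E_{x \sim p(x | \beta, i)}\| (Dp_\theta)(x | \beta, i)/p_\theta(x | \beta, i) \|^k$ and using $\sum_i w_i = 1$ and $\int r(\beta)\, d\beta = 1$ gives the claim.

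There is no genuine obstacle here; the only points requiring a word of care are that $D$ acts on the $x$-variable so that the mixture decomposition commutes with it, that the Gaussian component densities are everywhere positive (so no division by zero arises), and that convexity of $y \mapsto \|y\|^k$ requires $k \ge 1$. Each of these can be handled in a single line.
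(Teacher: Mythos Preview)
Your proof is correct and follows essentially the same route as the paper: both exploit linearity of $D$ to push the mixture decomposition inside, then apply the convexity of the perspective map $g(u,v)=v\|u/v\|^k$ (equivalently, Jensen for $y\mapsto\|y\|^k$) pointwise, and finally integrate and bound by the maximum over $(\beta,i)$. The only cosmetic difference is that you state the Jensen step pointwise before integrating, whereas the paper writes the whole thing under the integral sign from the start.
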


\begin{proof}
    Let us denote $g(u, v) := v \| \frac{u}{v} \|^k$. Note that since any norm is convex by definition, so is $g$, by Lemma~\ref{l:perspective}. Then, we proceed as follows: 

    %$g$ is a convex function by the previous lemma. Then the lemma follows by Jensen's Inequality.
    
    \begin{align}
        \E_{(x, \beta) \sim p(x, \beta)} \left\| \frac{(D p_{\theta})(x | \beta)}{p_{\theta}(x | \beta)} \right\|^k
        & = \E_{\beta \sim r(\beta)} \E_{x \sim p(x | \beta)} \left\| \frac{(D p_{\theta})(x | \beta)}{p_{\theta}(x | \beta)} \right\|^k \nonumber\\
        & = \E_{\beta \sim r(\beta)} \int g((D p_{\theta})(x | \beta), p_{\theta}(x | \beta)) dx \nonumber\\
        & = \E_{\beta \sim r(\beta)} \int g \left(\sum_{i=1}^K w_i (D p_{\theta})(x | \beta, i), \sum_{i=1}^K w_i p_{\theta}(x | \beta, i) \right) dx \label{eq:linearityD}\\
        & \leq \E_{\beta \sim r(\beta)} \int \sum_{i=1}^K w_i g ((D p_{\theta})(x | \beta, i), p_{\theta}(x | \beta, i)) dx \label{eq:convexg}\\
        & = \E_{\beta \sim r(\beta)} \sum_{i=1}^K w_i \E_{x \sim p(x | \beta, i)} \left\| \frac{(D p_{\theta})(x | \beta, i)}{p_{\theta}(x | \beta, i)} \right\|^k \nonumber\\
        & \leq \max_{\beta, i} \E_{x \sim p(x | \beta, i)} \left\| \frac{(D p_{\theta})(x | \beta, i)}{p_{\theta}(x | \beta, i)} \right\|^k \nonumber
    \end{align}
    where \eqref{eq:linearityD} follows by linearity of $D$, and \eqref{eq:convexg} by convexity of the function $g$. 
\end{proof}
\section{Generators and score losses for diffusions}
\label{a:dirichletform}

First, we derive the Dirichlet form of It\^o diffusions of the form \eqref{eq:statito}. Namely, we show:

\begin{lemma}[Dirichlet form of continuous Markov Process]
    For an It\^o diffusion of the form \eqref{eq:statito}, its Dirichlet form is:
    \begin{align*}
        \mathcal{E}(g) 
        = \E_p \| \sqrt{D(x)} \nabla g(x) \|_2^2
    \end{align*}
\end{lemma}

\begin{proof}
    By It\^o's Lemma, the generator $\generator$ of the It\^o diffusion \eqref{eq:statito} is:
    \begin{align*}
        (\generator g) (x) = \langle - [D(x) + Q(x)] \nabla f(x) + \Gamma(x), \nabla g(x) \rangle + \Tr (D(x) \nabla^2 g(x))
    \end{align*}

    The Dirichlet form is given by
    \begin{align*}
        \mathcal{E}(g)
        & = -\E_p \langle \generator g, g \rangle 
        & = -\int p(x) \left[ \underbrace{\langle - [D(x) + Q(x)] \nabla f(x) + \Gamma(x), \nabla g(x) \rangle}_\text{I} 
        + \underbrace{\Tr (D(x) \nabla^2 g(x))}_\text{II} \right] g(x) dx
    \end{align*}

    Expanding and using the definition of $\Gamma$, term I can be written as:
    \begin{align}
        \text{I} & = \int p(x) \langle D(x) \nabla f(x), \nabla g(x) \rangle g(x) dx \label{eq:complete_sde_dirchlet_form_I_a} \\
        & + \int p(x) \langle Q(x) \nabla_x f(x), \nabla g(x) \rangle g(x) dx \label{eq:complete_sde_dirchlet_form_I_b} \\
        & - \int p(x) \sum_{i, j} \partial_j D_{ij}(x) \partial_i g(x) g(x) dx \label{eq:complete_sde_dirchlet_form_I_c} \\
        & - \int p(x) \sum_{i, j} \partial_j Q_{ij}(x) \partial_i g(x) g(x) dx \label{eq:complete_sde_dirchlet_form_I_d}
    \end{align}

    We will simplify term II via a sequence of integration by parts\todo{this has some condition about the integration by parts term vanishing}: 
    \begin{align}
        \text{II} & = - \int p(x) \Tr (D(x) \nabla^2 g(x)) g(x) dx \nonumber \\
        & = - \int p(x) \left( \sum_{i, j} D_{ij} (x) \partial_{ij} g(x) \right) g(x) dx \nonumber \\
        & = - \sum_{i, j} \int p(x) D_{ij} (x) g(x) \partial_{ij} g(x) dx \nonumber \\
        & = - \sum_{i, j} \left( p(x) D_{ij} (x) g(x) \partial_{i} g(x) \Biggr|_{x = -\infty}^{\infty} - \int \partial_j [p(x) D_{ij} (x) g(x)] \partial_{i} g(x) dx \right) \nonumber \\
        & = \sum_{i, j} \int \partial_j [p(x) D_{ij} (x) g(x)] \partial_{i} g(x) dx \nonumber \\
        & = \sum_{i, j} \int \partial_j p(x) D_{ij} (x) g(x) \partial_{i} g(x) dx \label{eq:complete_sde_dirchlet_form_II_a} \\
        & + \sum_{i, j} \int  p(x) \partial_j D_{ij} (x) g(x) \partial_{i} g(x) dx \label{eq:complete_sde_dirchlet_form_II_b} \\
        & + \sum_{i, j} \int  p(x) D_{ij} (x) \partial_j g(x) \partial_{i} g(x) dx \label{eq:complete_sde_dirchlet_form_II_c}
    \end{align}

    The term \ref{eq:complete_sde_dirchlet_form_II_a} cancels out with term \ref{eq:complete_sde_dirchlet_form_I_a},
    \begin{align*}
        & \sum_{i, j} \int \partial_j p(x) D_{ij} (x) g(x) \partial_{i} g(x) dx \\
        & = \sum_{i, j} \int p(x) \partial_j \log p(x) D_{ij} (x) g(x) \partial_{i} g(x) dx \\
        & = - \int p(x) \langle D(x) \nabla_x f(x), \nabla_x g(x) \rangle g(x) dx
    \end{align*}

    The term \ref{eq:complete_sde_dirchlet_form_II_b} cancels out with the term \ref{eq:complete_sde_dirchlet_form_I_c}. 

    For term \ref{eq:complete_sde_dirchlet_form_I_b},
    \begin{align*}
        & \int p(x) \langle Q(x) \nabla_x f(x), \nabla_x g(x) \rangle g(x) dx \\
        & = - \int \langle Q(x) \nabla_x p(x), \nabla_x g(x) \rangle g(x) dx \\
        & = \int \langle \nabla_x p(x), Q(x) \nabla_x g(x) \rangle g(x) dx \\
        & = \int \sum_{i, j} \partial_j p(x) Q_{ji}(x) \partial_i g(x) g(x) dx \\
        & = - \int \sum_{i, j} \partial_j p(x) Q_{ij}(x) \partial_i g(x) g(x) dx \\
    \end{align*}

    Combining term \ref{eq:complete_sde_dirchlet_form_I_b} and term \ref{eq:complete_sde_dirchlet_form_I_d},
    \begin{align*}
        & \int p(x) \langle Q(x) \nabla_x f(x), \nabla_x g(x) \rangle g(x) dx
        - \int p(x) \sum_{i, j} \partial_j Q_{ij}(x) \partial_i g(x) g(x) dx \\
        & = - \int \sum_{i, j} [\partial_j p(x) Q_{ij}(x) + p(x) \partial_j Q_{ij}(x)] \partial_i g(x) g(x) dx \\
        & = - \sum_{i, j} \int \partial_j [p(x) Q_{ij}(x)] \partial_i g(x) g(x) dx \\
        & = - \sum_{i, j} \left( p(x) Q_{ij}(x) \partial_i g(x) g(x) \Biggr|_{x=-\infty}^{\infty} - \int p(x) Q_{ij}(x) \partial_j [\partial_i g(x) g(x)] dx \right) \\
        & = \sum_{i, j} \int p(x) Q_{ij}(x) [\partial_{ij} g(x) g(x) + \partial_{i} g(x) \partial_{j} g(x)] dx \\
        & = \frac{1}{2} \sum_{i, j} \int p(x) 
        \{Q_{ij}(x) [\partial_{ij} g(x) g(x) + \partial_{i} g(x) \partial_{j} g(x)] 
        + Q_{ji}(x) [\partial_{ji} g(x) g(x) + \partial_{j} g(x) \partial_{i} g(x)] \} dx \\
        & = \frac{1}{2} \sum_{i, j} \int p(x) 
        \{Q_{ij}(x) [\partial_{ij} g(x) g(x) + \partial_{i} g(x) \partial_{j} g(x)] 
        - Q_{ij}(x) [\partial_{ji} g(x) g(x) + \partial_{j} g(x) \partial_{i} g(x)] \} dx \\
        & = 0
    \end{align*}

    In the end, we are only left with term \ref{eq:complete_sde_dirchlet_form_II_c}:
    \begin{align*}
        \mathcal{E}(g)
        & = \sum_{i, j} \int  p(x) D_{ij} (x) \partial_j g(x) \partial_{i} g(x) dx \\
        & = \int p(x) \langle \nabla_x g(x), D(x) \nabla_x g(x) \rangle dx \\
        & = \E_p \| \sqrt{D(x)} \nabla_x g(x) \|_2^2
    \end{align*}
\end{proof}

We also calculate the integration by parts version of the generalized score matching loss for \eqref{eq:precondsm}. 
\begin{lemma}[Integration by parts for the GSM in \eqref{eq:precondsm}] The generalized score matching objective in \eqref{eq:precondsm} satisfies the equality 
\[D_{GSM}(p,q) = \frac{1}{2} \left[ \mathbb{E}_p \|B(x) \nabla \log q\|^2 + 2 \mathbb{E}_p \mbox{div}\left(B(x)^2 \nabla \log q\right) \right]  + K_p \]
\end{lemma}
\begin{proof}
    Expanding the squares in \eqref{eq:precondsm}, we have: 
    \begin{align*}
        D_{GSM}(p,q) &= \frac{1}{2} \left[  \mathbb{E}_p \|B(x) \nabla \log p\|^2 + \mathbb{E}_p \|B(x) \nabla \log q\|^2 - 2 \mathbb{E}_p \langle B(x) \nabla \log p, B(x) \nabla \log q \rangle \right]      \end{align*}
        The cross-term can be rewritten using integration by parts as\todo{suitable decay conditions need to be added}:
        \begin{align*}
            \mathbb{E}_p \langle B(x) \nabla \log p, B(x) \nabla \log q \rangle &= \int_x \langle \nabla p, B(x)^2 \nabla \log q \rangle \\  
            &= -\int_x p(x) \mbox{div} \left(B(x)^2 \nabla \log q\right) \\ 
            &= -\mathbb{E}_p \mbox{div} \left(B(x)^2 \nabla \log q\right) 
        \end{align*}
        
\end{proof}
\section{A Framework for Analyzing Generalized Score Matching}
\label{a:framework}
\iffalse
\begin{prop}[Asymptotic normality of generalized score matching] \todo{This lemma is superfluous now, but careful about numbering}
Let $l_{\theta}(x) =  \frac{1}{2} \left [ \left \Vert \frac{\smo p_{\theta}(x)}{p_{\theta}(x)} \right \Vert ^2 _2 - 2 \smo^+ \left ( \frac{\smo p_{\theta}(x)}{p_{\theta}(x)} \right ) \right ]$, such that $L(\theta) = \E_p l_{\theta}$.
    Let $\smo$ satisfy \anote{something}. Then, the generalized score matching estimator satisfies  
    $$
    \sqrt{n} (\hat{\theta} - \theta^*) \xrightarrow{d} \mathcal{N} \left(0, (\nabla_\theta^2 L(\theta^*))^{-1} \mbox{Cov}(\nabla_\theta \ell(x; \theta^*)) (\nabla_\theta^2 L(\theta^*))^{-1} \right)$$
\
\label{l:asymptnormal}
\end{prop}
\begin{proof}
    
\end{proof}
\fi
\begin{prop}[Hessian of GSM loss] The Hessian of $D_{GSM}$ satisfies
\begin{align*}
    \nabla_\theta^2 D_{GSM} (p, p_{\theta^*}) = \E_p \left[ \nabla_\theta \nabla_x \log p_{\theta^*}(x)^\top D(x) \nabla_\theta \nabla_x \log p_{\theta^*}(x) \right]
\end{align*}
\label{lem:hessian_gsm}
\end{prop}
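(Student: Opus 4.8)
The plan is to differentiate the GSM loss twice in $\theta$ and then invoke realizability to discard one of the two resulting terms. Writing $u_\theta(x) := \frac{\smo p_\theta(x)}{p_\theta(x)} \in \R^m$ and $u(x) := \frac{\smo p(x)}{p(x)}$, Definition~\ref{def:gsm} gives $D_{GSM}(p, p_\theta) = \frac12 \E_p \|u(x) - u_\theta(x)\|_2^2$, where crucially $u$ does not depend on $\theta$. Under enough regularity on the family $\{p_\theta\}$ to exchange $\nabla_\theta$ with $\E_p$ (and with the continuous linear operator $\smo$), the first derivative is
\begin{align*}
\nabla_\theta D_{GSM}(p, p_\theta) = -\E_p\big[(\nabla_\theta u_\theta(x))^\top (u(x) - u_\theta(x))\big],
\end{align*}
where $\nabla_\theta u_\theta(x)$ denotes the $m \times d_\theta$ Jacobian of $x \mapsto u_\theta(x)$ in $\theta$.

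Next I would differentiate once more by the product rule. Differentiating the residual factor $u - u_\theta$ yields the ``Gauss--Newton'' term $\E_p[(\nabla_\theta u_\theta(x))^\top \nabla_\theta u_\theta(x)]$, while differentiating the Jacobian $\nabla_\theta u_\theta(x)$ yields a term $-\E_p[\mathcal{H}_\theta(x)[u(x) - u_\theta(x)]]$, where $\mathcal{H}_\theta(x)$ is the contraction of the $\theta$-Hessian of the vector-valued map $u_\theta(x)$ against the residual. Hence
\begin{align*}
\nabla_\theta^2 D_{GSM}(p, p_\theta) = \E_p\big[(\nabla_\theta u_\theta(x))^\top \nabla_\theta u_\theta(x)\big] - \E_p\big[\mathcal{H}_\theta(x)[u(x) - u_\theta(x)]\big].
\end{align*}

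To finish, I would evaluate at $\theta^* \in \Theta^*$ and use realizability, i.e.\ $p_{\theta^*} = p$ almost everywhere, which gives $u_{\theta^*}(x) = u(x)$ for $p$-almost every $x$; the residual $u - u_{\theta^*}$ then vanishes $p$-a.e., so the second term is identically zero, leaving $\nabla_\theta^2 D_{GSM}(p, p_{\theta^*}) = \E_p[(\nabla_\theta u_{\theta^*}(x))^\top \nabla_\theta u_{\theta^*}(x)]$, which is the claimed identity after unfolding $u_{\theta^*} = \smo p_{\theta^*}/p_{\theta^*}$. The main obstacle is purely bookkeeping: justifying the interchange of $\nabla_\theta$ with the expectation and with $\smo$ under the standing regularity assumptions, and carefully tracking tensor indices in the second derivative of the vector-valued map $u_\theta$ --- neither a genuine difficulty.
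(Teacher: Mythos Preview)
Your proposal is correct and follows essentially the same approach as the paper: differentiate the squared-residual form of $D_{GSM}$ twice, obtain a Gauss--Newton term plus a term linear in the residual $u - u_\theta$, and use realizability $p_{\theta^*} = p$ to kill the latter at $\theta^*$. The paper's proof is the same calculation written slightly more tersely (with the residual-times-Hessian term displayed coordinatewise as $\sum_i (\cdot)_i \nabla_\theta^2(\cdot)_i$), and without the explicit bookkeeping remarks on exchanging $\nabla_\theta$ with $\E_p$.
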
 
\begin{proof}
   By a straightforward calculation, we have: 
\begin{align*}
    \nabla_{\theta} D_{GSM}(p, p_{\theta}) 
    & = \E_p \nabla_{\theta} \left( \frac{\sqrt{D(x)} \nabla_x p_{\theta}(x)}{p_{\theta}(x)} \right)\left(\frac{\sqrt{D(x)} \nabla_x p_{\theta}(x)}{p_{\theta}(x)} - \frac{\sqrt{D(x)} \nabla_x p(x)}{p(x)}\right)  \\
    \nabla_\theta^2 D_{GSM} (p, p_\theta) 
    & = \E_p \nabla_\theta \left( \frac{\sqrt{D(x)} \nabla_x p_\theta(x)}{p_\theta(x)} \right) ^\top \nabla_\theta \left( \frac{\sqrt{D(x)} \nabla_x p_\theta(x)}{p_\theta(x)} \right) \\
    & - \left( \frac{\sqrt{D(x)} \nabla_x p_\theta(x)}{p_\theta(x)} - \frac{\sqrt{D(x)} \nabla_x p(x)}{p(x)} \right)^\top \nabla_\theta^2 \left( \frac{\sqrt{D(x)} \nabla_x p_\theta(x)}{p_\theta(x)} \right)
\end{align*}

Since $\frac{\sqrt{D(x)} \nabla_x p_{\theta^*} (x)}{p_{\theta^*} (x)} = \frac{\sqrt{D(x)} \nabla_x p(x)}{p(x)}$, the second term vanishes at $\theta = \theta^*$.

\begin{align*}
    \nabla_\theta^2 D_{GSM} (p, p_{\theta^*}) = \E_p \left[ \nabla_\theta \left( \frac{\sqrt{D(x)} \nabla_x p_{\theta^*}(x)}{p_{\theta^*}(x)} \right) ^\top \nabla_\theta \left( \frac{\sqrt{D(x)} \nabla_x p_{\theta^*}(x)}{p_{\theta^*}(x)} \right) \right]
\end{align*}

\end{proof}

\begin{proof}We have 
\begin{align*}
    \nabla_{\theta} l_{\theta}(x) 
    & = \frac{1}{2} \nabla_{\theta} \left[ \|\sqrt{D(x)} \nabla_x \log p_{\theta}(x)\|^2 + 2 \mbox{div}\left(D(x) \nabla_x \log p_{\theta}(x)\right) \right] \\
    & = \nabla_\theta \nabla_x \log p_{\theta}(x) D(x) \nabla_x \log p_{\theta}(x)
    + \nabla_\theta \nabla_x \log p_{\theta}(x) ^\top \mbox{div}(D(x))
    + \nabla_\theta \Tr[D(x) \Delta \log p_{\theta}(x)]
\end{align*}
%\anote{check notation above makes sense --- boundedness??}  

By Lemma 2 in \cite{koehler2022statistical}, we also have 
\begin{align*}
    \mbox{cov}(\nabla_{\theta} l_{\theta}(x)) 
    & \precsim \mbox{cov}\left(\nabla_\theta \nabla_x \log p_{\theta}(x) D(x) \nabla_x \log p_{\theta}(x)\right) \\
    & + \mbox{cov}\left(\nabla_\theta \nabla_x \log p_{\theta}(x) ^\top \mbox{div}(D(x)) \right) \\
    & + \mbox{cov}\left(\nabla_\theta \Tr[D(x) \Delta \log p_{\theta}(x)]\right)
\end{align*}
which completes the proof.
\end{proof}

\section{Overview of Continuously Tempered Langevin Dynamics} 
\label{a:overviewctld}

\begin{lemma}[$\beta$ derivatives via Fokker Planck] 
    For any distribution $p^\beta$ such that $p^\beta = p \ast \mathcal{N}(0, \smin \beta I)$ for some $p$, we have the following PDE for its log-density:
    \begin{align*}
        \nabla_\beta \log p^\beta (x) 
        = \smin \left(\Tr \left(\nabla_x^2 \log p^\beta (x) \right) 
        + \| \nabla_x \log p^\beta (x) \|_2^2 \right)
    \end{align*}
    As a consequence, both $p(x | \beta, i)$ and $p(x | \beta)$ follow the above PDE.
    
    \label{l:logp_fokker_planck}
\end{lemma}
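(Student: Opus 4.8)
The plan is to notice that $p^\beta$, regarded as a function of $x$ with $\beta$ playing the role of a (rescaled) time variable, solves a heat equation, and that the claimed identity is simply that heat equation rewritten in terms of the log-density.

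First I would record the elementary fact that the isotropic Gaussian kernel satisfies a heat equation in its variance: differentiating $\mathcal{N}(x;\sigma^2 I_d) = (2\pi\sigma^2)^{-d/2}\exp(-\|x\|_2^2/(2\sigma^2))$ once in $\sigma^2$ and twice in $x$ shows $\partial_{\sigma^2}\mathcal{N}(x;\sigma^2 I_d) = c\,\Delta_x \mathcal{N}(x;\sigma^2 I_d)$ for the numerical constant $c$ fixed by this normalization. Since $p^\beta = p \ast \mathcal{N}(\,\cdot\,;\smin\beta I_d)$ and convolution in $x$ commutes with both $\partial_\beta$ and $\Delta_x$ — the interchange being justified because a Gaussian-smoothed probability density is $C^\infty$ with all derivatives integrable — the chain rule with $\sigma^2 = \smin\beta$ gives
\[
\partial_\beta p^\beta(x) \;=\; \smin\,\bigl(p \ast \partial_{\sigma^2}\mathcal{N}(\,\cdot\,;\sigma^2 I_d)\bigr)\big|_{\sigma^2=\smin\beta}(x) \;=\; c\,\smin\,\Delta_x p^\beta(x).
\]
Dividing through by $p^\beta(x) > 0$ and invoking the elementary identity $\Delta_x f / f = \Delta_x \log f + \|\nabla_x \log f\|_2^2$, applied with $f = p^\beta$, turns this into $\nabla_\beta \log p^\beta(x) = c\,\smin\bigl(\Tr(\nabla_x^2 \log p^\beta(x)) + \|\nabla_x \log p^\beta(x)\|_2^2\bigr)$, which is the asserted PDE (using $\Delta_x = \Tr\nabla_x^2$), with the constant read off from the heat-kernel normalization.

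For the ``as a consequence'' claim, I would check that $p(x\mid\beta,i)$ and $p(x\mid\beta)$ each fall under exactly this template, namely a fixed base distribution convolved with $\mathcal{N}(0,\smin\beta I_d)$, with $\beta$ entering only through the convolving Gaussian. Chasing the normalization, $p(\beta\mid i) = \int p(x,\beta\mid i)\,dx = r(\beta)$, so the $r(\beta)$ factor cancels and $p(x\mid\beta,i) = \mathcal{N}(x;\mu_i,\Sigma+\smin\beta I_d) = \bigl(\mathcal{N}(\,\cdot\,;\mu_i,\Sigma)\ast \mathcal{N}(0,\smin\beta I_d)\bigr)(x)$; likewise $p(x\mid\beta) = \sum_i w_i\,\mathcal{N}(x;\mu_i,\Sigma+\smin\beta I_d) = p \ast \mathcal{N}(0,\smin\beta I_d) = p^\beta(x)$. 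Hence the lemma applies verbatim to each, with base distribution $\mathcal{N}(\mu_i,\Sigma)$ and $p$ respectively.

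I do not expect a genuine obstacle here: the heat-equation core is immediate. The only points needing a bit of care are (i) justifying differentiation under the convolution integral (routine, thanks to the Gaussian smoothing and the resulting rapidly decaying smooth integrands), and (ii) the normalization bookkeeping showing that the $r(\beta)$ factors drop out so the conditionals really do match the convolution template; I would also double-check that the numerical constant produced by the heat-kernel computation matches the $\smin$ appearing in the statement.
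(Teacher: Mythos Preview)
Your approach is essentially the paper's: obtain the heat equation $\partial_\beta p^\beta \propto \smin\,\Delta_x p^\beta$ (the paper via the Fokker--Planck equation for $dX_t = \sqrt{2\smin}\,dB_t$, you by differentiating the Gaussian kernel directly and passing the derivatives through the convolution), then divide by $p^\beta$ and apply $\Delta_x f/f = \Delta_x\log f + \|\nabla_x\log f\|_2^2$. Your caution about the constant is warranted: the computation gives $c=\tfrac12$, so the right-hand side should carry $\tfrac{\smin}{2}$ rather than $\smin$; the paper's proof has the matching slip (it writes $q_t = q_0 \ast \mathcal{N}(0,\smin t I)$ for $dX_t=\sqrt{2\smin}\,dB_t$, where the variance should be $2\smin t$).
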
 

\begin{proof}
Consider the SDE $dX_t = \sqrt{2 \smin} dB_t $. Let $q_{t}$ be the law of $X_t$. Then, $q_t = q_0 \ast N(0, \smin t I)$. On the other hand, by the Fokker-Planck equation, $\frac{d}{dt} q_t(x) = \smin \Delta_x q_t(x)$. From this, it follows that 
\begin{align*} 
\nabla_\beta p^\beta (x) &= \smin \Delta_x p^\beta (x) \\
&= \smin \Tr(\nabla^2_x p^\beta (x)) 
\end{align*}  
Hence, by the chain rule, 
\begin{align} 
\nabla_\beta \log p^\beta (x) &= \frac{\smin \Tr(\nabla^2_x p^\beta (x))}{p^\beta (x)} \label{eq:byfp}
\end{align} 
Furthermore, by a straightforward calculation, we have 
\begin{align*} 
\nabla_x^2 \log p^\beta (x) &= \frac{\nabla^2_x p^\beta (x)}{p^\beta (x)} - \left(\nabla_x \log p^\beta (x)\right)\left(\nabla_x \log p^\beta (x)\right)^\top 
\end{align*} 
Plugging this in \eqref{eq:byfp}, we have
\begin{align*} 
\frac{\smin \Tr (\nabla^2_x p^\beta (x))}{p^\beta (x)} 
& = \smin\left(\Tr \left(\nabla_x^2 \log p^\beta (x)\right) 
+ \Tr \left(\left(\nabla_x \log p^\beta (x)\right)\left(\nabla_x \log p^\beta (x)\right)^\top \right)\right) \\
& = \smin\left(\Tr \left(\nabla_x^2 \log p^\beta (x)\right) 
+ \Tr \left(\left(\nabla_x \log p^\beta (x)\right)^\top \left(\nabla_x \log p^\beta (x)\right)\right)\right) \\
& = \smin\left(\Tr \left(\nabla_x^2 \log p^\beta (x)\right) 
+ \| \nabla_x \log p^\beta (x) \|_2^2 \right)
\end{align*} 
as we needed. 
\end{proof} 

We also provide the proof of Lemma \ref{l:ibpctld}: 
\begin{proof}[Proof of Lemma \ref{l:ibpctld}]
    \begin{align*}
        &D_{GSM} \left( p, p_\theta \right) 
         = \frac{1}{2} \E_p \left [ \left \| \frac{\smo p_\theta}{p_\theta} \right \| ^2 _2 - 2 \smo^+ \left ( \frac{\smo p_\theta}{p_\theta} \right ) \right ] \\
        & = \frac{1}{2} \E_p [ \left \| \nabla_{(x, \beta)} \log p_\theta(x, \beta) \right \| ^2 _2 + 2 \Delta_{(x, \beta)} \log p_\theta(x, \beta) ] \\
        & = \frac{1}{2} \E_p [ \left \| \nabla_{x} \log p_\theta(x, \beta) \right \| ^2 _2 + 2 \Delta_{x} \log p_\theta(x, \beta)  + \left \| \nabla_{\beta} \log p_\theta(x, \beta) \right \| ^2 _2 + 2 \Delta_{\beta} \log p_\theta(x, \beta) ] \\
        & = \frac{1}{2} \E_p [ \left \| \nabla_{x} \log p_\theta(x | \beta) + \nabla_{x} \log r(\beta) \right \| ^2 _2 + 2 \Delta_{x} \log p_\theta(x | \beta) + 2 \Delta_{x} \log r(\beta) \\
        & + \left \| \nabla_{\beta} \log p_\theta(x | \beta) + \nabla_{\beta} \log r(\beta) \right \| ^2 _2  + 2 \Delta_{\beta} \log p_\theta(x | \beta) + 2 \Delta_{\beta} \log r(\beta) ] \\
        & = \E_p [ \frac{1}{2} \left \| \nabla_{x} \log p_\theta(x | \beta) \right \| ^2 _2 + \Delta_{x} \log p_\theta(x | \beta) \\
        & + \frac{1}{2} \left \| \nabla_{\beta} \log p_\theta(x | \beta) \right \| ^2 _2 + \nabla_{\beta} \log r(\beta) \nabla_{\beta} \log p_\theta(x | \beta) + \Delta_{\beta} \log p_\theta(x | \beta) ] + C
    \end{align*}
    % The term 
    % $$ \frac{1}{2} \E_p [ \left \| \nabla_{x} \log p_\theta(x | \beta) \right \| ^2 _2 + 2 \Delta_{x} \log p_\theta(x | \beta) $$
    % covers the $L_1(\theta)$ part of the loss. 

    By Lemma \ref{l:logp_fokker_planck}, $ \nabla_{\beta} \log p_\theta(x | \beta)$ is a function of partial derivatives of the score $ \nabla_{x} \log p_\theta(x | \beta)$. Similarly, $ \nabla_{\beta}^2 \log p_\theta(x | \beta)$ can be shown to be a function of partial derivatives of the score $ \nabla_{x} \log p_\theta(x | \beta)$ as well:
    \begin{align*}
        \Delta_{\beta} \log p_\theta(x | \beta)
        & = \nabla_{\beta} \smin(\Tr ( \nabla_x^2 \log p_\theta(x | \beta) ) 
        + \| \nabla_x \log p_\theta(x | \beta) \|_2^2 ) \\
        & = \smin(\Tr ( \nabla_x^2 \nabla_{\beta} \log p_\theta(x | \beta) ) 
        + 2 \nabla_x \nabla_{\beta} \log p_\theta(x | \beta)^\top \nabla_x \log p_\theta(x | \beta) )
    \end{align*}

    \iffalse
    \begin{align}
        \frac{\partial}{\partial x_i} \nabla_{\beta} \log p_\theta(x | \beta)
        & = \frac{\partial}{\partial x_i} \smin\left(\Tr \left( \nabla_x^2 \log p_\theta(x | \beta) \right) 
        + \| \nabla_x \log p_\theta(x | \beta) \|_2^2 \right) \\
        & = \smin \left( \sum_{j=1}^d \frac{\partial^2}{\partial x_i \partial x_j} \log p_\theta(x | \beta) 
        + 2 \sum_{j=1}^d \frac{\partial}{\partial x_i} \log p_\theta(x | \beta) \frac{\partial^2}{\partial x_i \partial x_j} \log p_\theta(x | \beta) \right)
    \end{align}

    \begin{align}
        \frac{\partial^2}{\partial x_i^2} \nabla_{\beta} \log p_\theta(x | \beta)
        & = \frac{\partial^2}{\partial x_i^2} \smin\left(\Tr \left( \nabla_x^2 \log p_\theta(x | \beta) \right) 
        + \| \nabla_x \log p_\theta(x | \beta) \|_2^2 \right) \\
        & = \smin \sum_{j=1}^d \frac{\partial^3}{\partial x_i^2 \partial x_j} \log p_\theta(x | \beta) \\
        & + 2 \smin \sum_{j=1}^d \frac{\partial^2}{\partial x_i^2} \log p_\theta(x | \beta) \frac{\partial^2}{\partial x_i \partial x_j} \log p_\theta(x | \beta) \\
        & + 2 \smin \sum_{j=1}^d \frac{\partial}{\partial x_i} \log p_\theta(x | \beta) \frac{\partial^3}{\partial x_i^2 \partial x_j} \log p_\theta(x | \beta)
    \end{align}
    \fi
\end{proof}
\section{Polynomial mixing time bound: proof of Theorem~\ref{t:mainpcstld}}

\begin{proof}
    The proof will follow by applying Theorem~\ref{t:decomp}. Towards that, we need to verify the three conditions of the theorem:  
    \begin{enumerate}[leftmargin=*]
        \item (Decomposition of Dirichlet form) The Dirichlet energy of CTLD for $p(x, \beta)$, by the tower rule of expectation, decomposes into a linear combination of the Dirichlet forms of Langevin with stationary distribution $p(x, \beta | i)$. Precisely, we have
        \begin{align*}
            \E_{(x,\beta) \sim p(x, \beta)} \| \nabla f(x,\beta) \|^2 = \sum_i w_i \E_{(x,\beta) \sim p(x, \beta | i)} \| \nabla f(x,\beta) \|^2
        \end{align*}
        \item (Polynomial mixing for individual modes) By Lemma \ref{lem:pc_x_beta}, for all $i \in [K]$ the distribution $p(x, \beta | i)$ has Poincaré constant $C_{x, \beta | i}$ with respect to the Langevin generator 
        \iffalse 
        %bounded as
        \begin{align}
            \Var_{p(x, \beta | i)} f(x, \beta) \leq C_{x, \beta | i} \E_{(x,\beta) \sim p(x, \beta | i)} \| \nabla f(x,\beta) \|^2
        \end{align}
        \fi 
        that satisfies: 
        \begin{align*}
            C_{x, \beta | i} \lesssim D^{20} d^2 \smax^9\smin^{-1}
        \end{align*}
        \item (Polynomial mixing for projected chain) 
        \iffalse 
        Define the projected chain over modes $\{1, \cdots, K\}$ with transition probability 
        \begin{equation}
            T(i, j) = \frac{w_j}{\max \{ \chi_{\max}^2 (p(x, \beta | i), p(x, \beta | j)), 1 \}} 
        \label{eq:projected} 
        \end{equation}
        where $\chi_{\max}^2(p, q) = \max \{ \chi^2(p, q), \chi^2(q, p) \}$. In the case that $\sum_{j \neq i} T(i,j) < 1$, the remaining mass is assigned to the self-loop $T(i, i)$. 
        Its Poincaré constant $C_i$ is given by
        \begin{align}
            \Var_{w_i} f(i) \leq C_i \sum_{j, k} (f(j) - f(k))^2 T(j, k)
        \end{align}
        \fi 
        To bound the Poincar\'e constant of the projected chain, by Lemma \ref{lem:pc_projected} we have
        \begin{align*}
            \bar{C} \lesssim D^2 \smin^{-1}
        \end{align*}
    \end{enumerate}

    Putting the above together, by Theorem 6.1 in \cite{ge2018simulated} we have:
    \begin{align*}
        C_P & \leq C_{x, \beta | i} \left( 1 + \frac{\bar{C}}{2} \right) \\
        & \leq C_{x, \beta | i} \bar{C} \\
        & \lesssim D^{22} d^2 \smax^9\smin^{-2}
    \end{align*}
\end{proof}
\subsection{Mixing inside components: proof of Lemma~\ref{lem:pc_x_beta}} 
\label{a:pcxbeta}

\begin{proof}
The proof will follow by an application of a continuous decomposition result (Theorem D.3 in \cite{ge2018simulated}, repeated as Theorem \ref{t:decomp-cts})  , which requires three bounds:

\begin{enumerate}[leftmargin=*]
    \item A bound on the Poincar\'e constants of the distributions $p(\beta | i)$: since $\beta$ is independent of $i$, we have $p(\beta | i)=r(\beta)$. Since $r(\beta)$ is a log-concave distribution over a convex set (an interval), we can bound its Poincar\'e constant by standard results \citep{bebendorf2003note}. The details are in Lemma \ref{lem:pc_beta}, $C_\beta \leq \frac{14D^2}{\pi \smin}$.
    \item A bound on the Poincar\'e constant $C_{x | \beta, i}$ of the conditional distribution $p(x | \beta, i)$: We claim $C_{x | \beta, i} \leq \smax + \beta \smin$. This follows from standard results on Poincar\'e inequalities for strongly log-concave distributions. Namely, by the Bakry-Emery criterion, an $\alpha$-strongly log-concave distribution has Poincaré constant $\frac{1}{\alpha}$ \citep{bakry2006diffusions}. Since  $p(x | \beta, i)$ is a Gaussian whose covariance matrix has smallest eigenvalue lower bounded by $\smax + \beta \smin$, it is $(\smax + \beta \smin)^{-1}$-strongly log-concave.
    Since $\beta \in [0, \beta_{\max}]$, we have $C_{x | \beta, i} \leq \smax + \beta_{\max} \smin \leq \smax + 14 D^2$.
    \item A bound on the ``rate of change'' of the density $p(x | \beta, i)$, i.e. $\left\| \int \frac{\| \nabla_\beta p(x | \beta, i) \|_2^2}{p(x | \beta, i)} dx \right\|_{L^\infty}$: This is done via an explicit calculation, the details of which are in Lemma \ref{lem:change_in_pdf}.
\end{enumerate}

By Theorem D.3 in \cite{ge2018simulated}, the Poincaré constant $C_{x, \beta | i}$ of $p(x, \beta | i)$ enjoys the upper bound:
\begin{align*}
    C_{x, \beta | i} 
    & \leq \max \left\{ C_{x | \beta_{\max}, i} \left( 1 + C_\beta \left\| \int \frac{\| \nabla_\beta p(x | \beta, i) \|_2^2}{p(x | \beta, i)} dx \right\|_{L^\infty (\beta)} \right), 2 C_\beta \right\} \\
    & \lesssim \max \left\{ \left( \smax + 14 D^2 \right) \left( 1 + \frac{14D^2}{\pi \smin} d^2 \max \{ \smax^8, D^{16} \} \right), \frac{28 D^2}{\pi \smin} \right\} \\
    & \lesssim \frac{D^{20} d^2 \smax^9}{\smin}
\end{align*}
which completes the proof.
\end{proof}

\begin{lemma}[Bound on the Poincar\'e constant of $r(\beta)$]
    Let $C_\beta$ be the Poincaré constant of the distribution $r(\beta)$ with respect to reflected Langevin diffusion. Then,
    \begin{align*}
        C_\beta \leq \frac{14D^2}{\pi \smin}
    \end{align*}
    \label{lem:pc_beta}
\end{lemma}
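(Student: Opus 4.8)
The plan is to reduce the statement to the classical Poincar\'e inequality for log-concave measures supported on a bounded convex set (here, an interval). First I would check that $r$ has a log-concave density on $[0,\bmax]$: since $\log r(\beta) = -\frac{7D^2}{\smin(1+\beta)} + \mathrm{const}$, a direct computation gives $\frac{d^2}{d\beta^2}\log r(\beta) = -\frac{14D^2}{\smin(1+\beta)^3} < 0$ for all $\beta \ge 0$, so $r$ is strictly log-concave on its support, the interval $[0,\bmax]$, whose length is $\bmax = \frac{14D^2}{\smin}-1 \le \frac{14D^2}{\smin}$.

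Next I would invoke the standard fact that a log-concave probability measure on an interval of length $L$ satisfies a Poincar\'e inequality with respect to the reflected (Neumann) Langevin diffusion --- whose Dirichlet form is $\mathcal{E}(f) = \E_r (f')^2$ --- with constant at most $(L/\pi)^2$. This is the one-dimensional Payne--Weinberger bound extended to log-concave weights (see \cite{bebendorf2003note}); it depends only on the diameter of the support and not at all on the shape of the weight. Taking $L = \bmax \le \frac{14D^2}{\smin}$ then yields the claimed bound on $C_\beta$.

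The only substantive point is to use the \emph{weighted} form of the Payne--Weinberger/Bebendorf estimate rather than a cruder reduction to the uniform measure on $[0,\bmax]$. A naive reduction via the Holley--Stroock perturbation principle would be far too lossy here: the log-density of $r$ oscillates by $\Theta(D^2/\smin)$ across $[0,\bmax]$, so Holley--Stroock would inflate the uniform-measure constant $(\bmax/\pi)^2$ by a factor $e^{\Theta(D^2/\smin)}$. It is precisely the log-concavity of $r$ that rules out any such exponential dependence and leaves a bound controlled purely by $\bmax$; modulo citing the correct version of that estimate, everything else is a one-line computation.
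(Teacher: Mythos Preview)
Your approach is exactly the paper's: verify $\frac{d^2}{d\beta^2}\log r(\beta) = -\frac{14D^2}{\smin(1+\beta)^3}<0$, so $r$ is log-concave on the interval $[0,\bmax]$, and then invoke the Payne--Weinberger/Bebendorf bound for log-concave measures on a bounded interval. The additional paragraph contrasting with Holley--Stroock is a nice piece of intuition that the paper does not include, but it is commentary rather than a different argument.

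One numerical point to flag: the Bebendorf bound you (correctly) quote is $C_\beta \le (L/\pi)^2$, which with $L=\bmax\le 14D^2/\smin$ gives $C_\beta \le \bigl(\tfrac{14D^2}{\pi\smin}\bigr)^2$, not the linear quantity $\tfrac{14D^2}{\pi\smin}$ stated in the lemma. So your final sentence ``then yields the claimed bound'' does not literally close; you get the square of the stated constant. The paper's own proof writes the Bebendorf conclusion as $\bmax/\pi$ (matching the lemma statement but misquoting the theorem), so the discrepancy originates there. Since only a polynomial bound in $D,\smin^{-1}$ is needed downstream (Lemma~\ref{lem:pc_x_beta} and Theorem~\ref{t:mainpcstld}), the extra square is harmless for the paper's purposes, but you should either state the bound you actually obtain or note that the lemma as written needs the exponent fixed.
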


\begin{proof}
    We first show that $r(\beta)$ is a log-concave distribution. By a direct calculation, the second derivative in $\beta$ satisfies:
    \begin{align*}
        \nabla_\beta^2 \log r(\beta) = - \frac{14 D^2}{\smin (1 + \beta)^3} \leq 0
    \end{align*}

    Since the interval is a convex set, with diameter $\beta_{\max}$, by \cite{bebendorf2003note} we have 
    \begin{align*}
        C_\beta \leq \frac{\beta_{\max}}{\pi} = \frac{14D^2}{\pi \smin} - \frac{1}{\pi}
    \end{align*} 
    from which the Lemma immediately follows.
\end{proof}

\begin{lemma}[Bound on ``rate of change" of the density $p(x | \beta, i)$] 
    \begin{align*}
        \left\| \int \frac{\| \nabla_\beta p(x | \beta, i) \|_2^2}{p(x | \beta, i)} dx \right\|_{L^\infty (\beta)} \lesssim d^2 \max \{ \smax^8, D^{16} \}
    \end{align*}
    \label{lem:change_in_pdf}
\end{lemma}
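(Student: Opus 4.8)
The plan is to reduce the integral to a second moment of a Gaussian quadratic form and then apply a standard identity. Since $\beta$ is one-dimensional, $\|\nabla_\beta p(x|\beta,i)\|_2^2 = (\nabla_\beta p(x|\beta,i))^2$, and writing $\nabla_\beta p(x|\beta,i) = p(x|\beta,i)\,\nabla_\beta\log p(x|\beta,i)$ gives
\[
\int \frac{\|\nabla_\beta p(x|\beta,i)\|_2^2}{p(x|\beta,i)}\,dx = \E_{x \sim p(x|\beta,i)}\big[(\nabla_\beta \log p(x|\beta,i))^2\big].
\]
So it suffices to bound, uniformly over $\beta \in [0,\beta_{\max}]$ and $i \in [K]$, the second moment of $\nabla_\beta \log p(x|\beta,i)$ under the Gaussian $p(x|\beta,i) = \mathcal{N}(x;\mu_i,\Sigma_\beta)$ itself.

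Next I would compute $\nabla_\beta \log p(x|\beta,i)$ explicitly. One clean route is to invoke the Fokker--Planck identity of Lemma~\ref{l:logp_fokker_planck} with $p^\beta = p(x|\beta,i) = p_i \ast \mathcal{N}(0,\smin\beta I_d)$, combined with $\nabla_x \log p(x|\beta,i) = -\Sigma_\beta^{-1}(x-\mu_i)$ and $\nabla_x^2 \log p(x|\beta,i) = -\Sigma_\beta^{-1}$; equivalently one differentiates the Gaussian log-density directly, using $\partial_\beta \log\det\Sigma_\beta = \smin \Tr(\Sigma_\beta^{-1})$ and $\partial_\beta \Sigma_\beta^{-1} = -\smin \Sigma_\beta^{-2}$. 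Either way, $\nabla_\beta \log p(x|\beta,i)$ equals, up to an absolute constant and the $\smin$ prefactor, the centered quadratic form $(x-\mu_i)^\top \Sigma_\beta^{-2}(x-\mu_i) - \Tr(\Sigma_\beta^{-1})$, since $\E_{p(x|\beta,i)}[(x-\mu_i)^\top \Sigma_\beta^{-2}(x-\mu_i)] = \Tr(\Sigma_\beta^{-1})$.

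Then I would apply the standard variance formula for Gaussian quadratic forms: for $z \sim \mathcal{N}(0,I_d)$ and symmetric $A$, $\Var(z^\top A z) = 2\Tr(A^2)$. Taking $z = \Sigma_\beta^{-1/2}(x-\mu_i)$ and $A = \Sigma_\beta^{-1}$ yields $\E_{p(x|\beta,i)}[(\nabla_\beta \log p(x|\beta,i))^2] \lesssim \smin^2 \Tr(\Sigma_\beta^{-2})$. Finally, since $\Sigma \succeq \smin I_d$ and $\beta \ge 0$ we have $\Sigma_\beta \succeq \smin I_d$, hence $\Sigma_\beta^{-2} \preceq \smin^{-2} I_d$ and $\Tr(\Sigma_\beta^{-2}) \le d\,\smin^{-2}$; the whole expression is therefore $O(d)$, uniformly in $\beta$ and $i$, which is certainly $\lesssim d^2 \max\{\smax^8, D^{16}\}$. (If one prefers not to exploit the centering, bounding $\E[(z^\top \Sigma_\beta^{-1} z)^2] = \Tr(\Sigma_\beta^{-1})^2 + 2\Tr(\Sigma_\beta^{-2})$ still loses only a factor of $d$, giving $O(d^2)$.)

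There is no serious obstacle here: the only points requiring care are the bookkeeping of constants (in particular the factor of two relating the convolution variance $\smin\beta$ to the diffusion coefficient in Lemma~\ref{l:logp_fokker_planck}) and checking uniformity over $\beta \in [0,\beta_{\max}]$ --- which holds automatically because $\Tr(\Sigma_\beta^{-2})$ is decreasing in $\beta$ and maximized at $\beta = 0$. Since the target bound $d^2\max\{\smax^8,D^{16}\}$ is far from tight, any of these routes closes the argument with room to spare.
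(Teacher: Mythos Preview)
Your proposal is correct and follows essentially the same route as the paper: rewrite the integral as $\E_{p(x|\beta,i)}[(\nabla_\beta\log p(x|\beta,i))^2]$ and compute $\nabla_\beta\log p(x|\beta,i)$ explicitly via Fokker--Planck (equivalently, direct differentiation of the Gaussian log-density). Your finishing step is in fact cleaner than the paper's---by recognizing the expression as a \emph{centered} Gaussian quadratic form and invoking the exact identity $\Var(z^\top A z)=2\Tr(A^2)$ with $A=\Sigma_\beta^{-1}$, you obtain an $O(d)$ bound uniformly in $\beta$, whereas the paper splits via $(a+b)^2\le 2(a^2+b^2)$ and estimates each piece separately, landing on the much looser $d^2\max\{\smax^8,D^{16}\}$.
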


\begin{proof}
    \begin{align*}
        \left\| \int \frac{\| \nabla_\beta p(x | \beta, i) \|_2^2}{p(x | \beta, i)} dx \right\|_{L^\infty (\beta)}
        & = \left\| \int \left( \nabla_\beta \log p(x | \beta, i) \right)^2 p(x | \beta, i) dx \right\|_{L^\infty (\beta)} \\
        & = \sup_\beta \E_{x \sim p(x | \beta, i)} \left( \nabla_\beta \log p(x | \beta, i) \right)^2
    \end{align*}
    We can apply Lemma \ref{l:logp_fokker_planck} to derive explicit expressions for the right-hand side:
    \begin{align*}
        \left\| \int \frac{\| \nabla_\beta p(x | \beta, i) \|_2^2}{p(x | \beta, i)} dx \right\|_{L^\infty (\beta)}
        & = \sup_\beta \E_{x \sim p(x | \beta, i)} \smin^2 \left[ \Tr(\Sigma_\beta^{-1}) + \| \Sigma_\beta (x-\mu_i) \|_2^2 \right]^2 \\
        & \stackrel{\mathclap{\circled{1}}}{\leq} 2 \smin^2 \sup_\beta \left[ \Tr(\Sigma_\beta^{-1})^2 + \E_{x \sim p(x | \beta, i)} \| \Sigma_\beta (x-\mu_i) \|_2^4 \right] \\
        & \leq 2 \smin^2 \sup_\beta \left[ d^2 ((1 + \beta) \smin)^{-2} + \E_{z \sim \mathcal{N}(0, I)} \| \Sigma_\beta^{\frac{3}{2}} z \Sigma_\beta^{\frac{1}{2}} \|_2^4 \right] \\
        & \leq 2 \smin^2 \sup_\beta \left[ d^2 ((1 + \beta) \smin)^{-2} + \| \Sigma_\beta^{\frac{3}{2}} \|_{OP}^4 \| \Sigma_\beta^{\frac{1}{2}} \|_{OP}^4 \E_{z \sim \mathcal{N}(0, I)} \| z \|_2^4 \right] \\
        & \stackrel{\mathclap{\circled{2}}}{\leq} 4 \sup_\beta \left[ d^2 (1+\beta)^{-2} + \smin^2 \| \Sigma_\beta \|_{OP}^8 d^2 \right] \\
        & = 4 \sup_\beta \left[ d^2 (1+\beta)^{-2} + \smin^2 (\smax + \beta \smin)^8 d^2 \right] \\
        & = 4 \left( d^2 + \smin^2 (\smax + \beta_{\max} \smin)^8 d^2 \right) \\
        & \stackrel{\mathclap{\circled{3}}}{\leq} 4 d^2 + 4 d^2 \smin^2 (\smax + 14 D^2)^8 \\
        %& \leq 4 d^2 + 8 d^2 \smin^2 \max \{ \smax^8, 14^8 D^{16} \} \\
        & \leq 16 d^2 \max \{ \smax^8, 14^8 D^{16} \}
    \end{align*}

    In $\circled{1}$, we use $(a+b)^2 \leq 2(a^2+b^2)$ for $a,b\geq 0$; in $\circled{2}$ we apply the moment bound for the Chi-Squared distribution of degree-of-freedom $d$ in Lemma \ref{lem:chi_squared_moment_bound}; and in $\circled{3}$ we plug in the bound on $\beta_{\max}$. 
\end{proof}

\subsection{Mixing between components: proof of Lemma~\ref{lem:pc_projected}
}
\label{a:btwcmps}

\iffalse
\begin{lemma}[Poincaré constant of projected chain] Define the projected chain over modes $\{1, \cdots, K\}$ with transition probability 
\begin{equation}
    T(i, j) = \frac{w_j}{\max \{ \chi_{\max}^2 (p(x, \beta | i), p(x, \beta | j)), 1 \}} 
\label{eq:projected} 
\end{equation}
where $\chi_{\max}^2(p, q) = \max \{ \chi^2(p, q), \chi^2(q, p) \}$. In the case that $\sum_{j \neq i} T(i,j) < 1$, the remaining mass is assigned to the self-loop $T(i,i)$. \fnote{I want to reference the previous definition but it was defined in a proof so I don't know what's the best way to do this.} The stationary distribution of the projected chain is the mode proportion $w_i$. Furthermore, the projected chain has Poincaré constant 
\begin{align}
    C_i \leq \frac{7 D^2 w_{\max}^2}{\smin w_{\min}}
\end{align}
\label{lem:pc_projected}
\end{lemma}
\fi
\begin{proof}
The stationary distribution follows from the detailed balance condition $w_i T(i, j) = w_j T(j, i)$. 

\iffalse
\paragraph{Cheeger Bound} \unsure{Cheeger bound is looser than canonical path bound by a square}
Let $G = (V, E) = ([K], \{ (i, j) \in [K]^2 \})$ be the graph that represents the state space of the projected chain. The conductance of a set $\Phi(S) = \frac{\sum_{(i, j) \in \partial S} T(i, j)}{\sum_{(i, j) \in E: i \in S} T(i, j)}$ measures how well-connected a set is to the rest of the graph. The conductance of the graph is the minimum conductance of a set $\Phi(G) = \min_{S : S \subseteq [K], |S| \leq |\bar{S}|} \Phi(S)$.

By Cheeger's Inequality,
\begin{align*}
    \bar{C} 
    & \leq 2 \Phi(G)^{-2} \\
    & = 2 \max_{S : S \subseteq [K], |S| \leq |\bar{S}|} \Phi(S)^{-2} \\
    & = 2 \max_{S : S \subseteq [K], |S| \leq |\bar{S}|} \left( \frac{\sum_{(i, j) \in E: i \in S} T(i, j)}{\sum_{(i, j) \in \partial S} T(i, j)} \right)^2 \\
    & \leq 2 \max_{S : S \subseteq [K], |S| \leq |\bar{S}|} \left( \frac{|S| |V|}{|S| |\bar{S}| \frac{\smin w_{\min}}{14 D^2}} \right)^2 \\
    & \lesssim \frac{D^4}{\smin^2 w_{\min}^2}
\end{align*}

\paragraph{Canonical Path Bound}\fi
We upper bound the Poincaré constant using the method of canonical paths \citep{diaconis1991geometric}. For all $i, j \in [K]$, we set $\gamma_{ij} = \{(i, j)\}$ to be the canonical path. Define the weighted length of the path
\begin{align*}
    \| \gamma_{ij} \|_T 
    & = \sum_{(k, l) \in \gamma_{ij}, k, l \in [K]} T(k, l)^{-1} \\
    & = T(i, j)^{-1} \\
    & = \frac{\max \{ \chi_{\max}^2 (p(x, \beta | i), p(x, \beta | j)), 1 \}}{w_j} \\
    & \leq \frac{14 D^2}{\smin w_{j}}
\end{align*}
where the inequality comes from Lemma \ref{lem:chi_square_bound} which provides an upper bound for the chi-squared divergence. Since $D$ is an upper bound and $\smin$ is a lower bound, we may assume without loss of generality that $\chi_{\max}^2 \geq 1$. 

Finally, we can upper bound the Poincaré constant using Proposition 1 in \cite{diaconis1991geometric}
\begin{align*}
    \bar{C} 
    & \leq \max_{k, l \in [K]} \sum_{\gamma_{ij} \ni (k, l)} \| \gamma_{ij} \|_T w_i w_j \\
    & = \max_{k, l \in [K]} \| \gamma_{kl} \|_T w_k w_l \\
    & \leq \frac{14 D^2 w_{\max}}{\smin} \\
    & \leq \frac{14 D^2}{\smin}
\end{align*}

\end{proof}

Next, we will prove a bound on the chi-square distance between the joint distributions $p(x,\beta|i)$ and $p(x,\beta|j)$. Intuitively, this bound is proven by showing bounds on the chi-square distances between $p(x | \beta, i)$ and $p(x | \beta, j)$ (Lemma~\ref{lem:chi_square_two_gauss_same_cov}) --- which can be explicitly calculated since they are Gaussian, along with tracking how much weight $r(\beta)$ places on each of the $\beta$. Moreover, the Gaussians are flatter for larger $\beta$, so they overlap more --- making the chi-square distance smaller.  

\begin{lemma}[$\chi^2$-divergence between joint ``annealed'' Gaussians]
    \begin{align*}
        \chi^2(p(x, \beta | i), p(x, \beta | j)) \leq \frac{14D^2}{\smin}
    \end{align*}
\end{lemma}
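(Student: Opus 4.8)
The plan is to reduce the $\chi^2$-divergence between the joint distributions to an average over $\beta$ of $\chi^2$-divergences between Gaussians, evaluate the latter in closed form, and bound the resulting one-dimensional quantity.

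\textbf{Step 1 (decomposition over $\beta$).} Since the temperature is independent of the component index, both $p(x,\beta|i)$ and $p(x,\beta|j)$ have the same $\beta$-marginal $r(\beta)$; writing $p(x,\beta|i) = r(\beta)\,p(x|\beta,i)$ and plugging into $\chi^2(P,Q) = \int P^2/Q \,-\, 1$, one copy of the $r(\beta)$ factor cancels and Fubini gives
\[
\chi^2\!\big(p(x,\beta|i),\,p(x,\beta|j)\big) \;=\; \E_{\beta\sim r(\beta)}\Big[\chi^2\!\big(p(x|\beta,i),\,p(x|\beta,j)\big)\Big].
\]

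\textbf{Step 2 (Gaussian $\chi^2$ and reduction to a scalar bound).} For fixed $\beta$, $p(x|\beta,i)=\mathcal{N}(\mu_i,\Sigma_\beta)$ and $p(x|\beta,j)=\mathcal{N}(\mu_j,\Sigma_\beta)$ share the covariance $\Sigma_\beta=\Sigma+\beta\smin I$, so by the closed form for the $\chi^2$-divergence between two equal-covariance Gaussians (Lemma~\ref{lem:chi_square_two_gauss_same_cov}),
\[
\chi^2\!\big(p(x|\beta,i),p(x|\beta,j)\big)=\exp\!\big((\mu_i-\mu_j)^\top\Sigma_\beta^{-1}(\mu_i-\mu_j)\big)-1 .
\]
Because the means lie in a ball of diameter $D$ and $\Sigma_\beta\succeq(1+\beta)\smin I$, the exponent is at most $D^2/\big((1+\beta)\smin\big)$. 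Setting $a:=D^2/\smin$ (note $a\ge 1/14$, since $\beta_{\max}=14a-1\ge 0$), it suffices to prove $\E_{\beta\sim r}\!\big[e^{a/(1+\beta)}\big]-1\le 14a$, where $r(\beta)\propto e^{-7a/(1+\beta)}$ on $[0,\beta_{\max}]$.

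\textbf{Step 3 (the scalar bound, in two regimes).} If $a\le\ln 14$, apply the elementary inequality $e^t-1\le te^t$ with $t=a/(1+\beta)\le a$ to get $e^{a/(1+\beta)}-1\le \tfrac{a}{1+\beta}e^{a}$; taking expectations and using $1+\beta\ge 1$ yields $\E_{\beta\sim r}[e^{a/(1+\beta)}]-1\le ae^{a}\le 14a$. If instead $a>1$, substitute $u=1+\beta\in[1,14a]$, so the left side equals $\big(\int_1^{14a}e^{-6a/u}\,du\big)\big/\big(\int_1^{14a}e^{-7a/u}\,du\big)-1$; bound the numerator by $\int_1^{14a}1\,du\le 14a$ and the denominator from below by $\int_{7a}^{14a}e^{-7a/u}\,du\ge 7a\,e^{-1}$ (using $7a/u\le 1$ on that range, valid since $a>1$), so the ratio is at most $2e$ and the left side is at most $2e-1<14a$. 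Since $\ln 14>1$, the two regimes overlap and cover all admissible $a$, which proves $\chi^2\!\big(p(x,\beta|i),p(x,\beta|j)\big)\le 14D^2/\smin$. No step is a genuine obstacle: the only care needed is the density bookkeeping and Fubini in Step~1 and the elementary estimates in Step~3. The argument goes through precisely because $r(\beta)$ was designed so that its exponent $e^{-7a/(1+\beta)}$ dominates the factor $e^{a/(1+\beta)}$ after normalization — the constant $7$ is large relative to the $1$ coming from the Gaussian $\chi^2$ exponent, and $\beta_{\max}$ is chosen so that enough of $r$'s mass sits at large $\beta$, where the Gaussians overlap heavily.
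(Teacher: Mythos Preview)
Your proof is correct, with one minor citation quibble: Lemma~\ref{lem:chi_square_two_gauss_same_cov} in the paper states an \emph{upper bound} $\exp(7D^2/(\smin(1+\beta)))$, not the exact closed form you invoke. The exact identity $\chi^2(\mathcal{N}(\mu_1,\Sigma),\mathcal{N}(\mu_2,\Sigma))=\exp((\mu_1-\mu_2)^\top\Sigma^{-1}(\mu_1-\mu_2))-1$ is of course standard and your use of it is fine, but you should not attribute it to that lemma.

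Your route and the paper's share Step~1 and the Gaussian-$\chi^2$ reduction, but diverge at the integration. The paper deliberately uses the \emph{looser} bound $\chi^2(p(x|\beta,i),p(x|\beta,j))\le\exp(7D^2/(\smin(1+\beta)))$, chosen so the exponent cancels exactly against $r(\beta)\propto\exp(-7D^2/(\smin(1+\beta)))$; the expectation then collapses to $\beta_{\max}/Z$, and a change of variables shows $Z\ge 1$. You instead keep the sharp exponent $a/(1+\beta)$ with $a=D^2/\smin$ and handle the mismatch $e^{-6a/(1+\beta)}$ versus $e^{-7a/(1+\beta)}$ by an elementary two-regime argument. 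The paper's approach is slicker and reveals why the constant $7$ in $r(\beta)$ was chosen; yours is more robust in that it would still work (with adjusted constants) if $r$'s exponent were, say, $2a/(1+\beta)$ rather than $7a/(1+\beta)$.
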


\begin{proof} Expanding the definition of $\chi^2$-divergence, we have:
    \begin{align}
        \chi^2(p(x, \beta | i), p(x, \beta | j))
        & = \int \left( \frac{p(x, \beta | i)}{p(x, \beta | j)} - 1 \right)^2 p(x, \beta | i) dx d\beta \nonumber \\
        & = \int_{0}^{\bmax} \int_{-\infty}^{+\infty} \left( \frac{p(x | \beta, i) r(\beta)}{p(x | \beta, j) r(\beta)} - 1 \right)^2 p(x | \beta, i) r(\beta) dx d\beta \nonumber \\
        & = \int_{0}^{\bmax} \chi^2(p(x | \beta, i), p(x | \beta, j)) r(\beta) d\beta \nonumber \\
        & \leq \int_{0}^{\bmax} \exp \left( \frac{7D^2}{\smin (1 + \beta)} \right) r(\beta) d\beta \label{eq:chi_square_two_gauss_same_cov} \\
        & = \int_{0}^{\bmax} \exp \left( \frac{7D^2}{\smin (1 + \beta)} \right) \frac{1}{Z(D, \smin)} \exp \left( - \frac{7D^2}{\smin (1 + \beta)} \right) d\beta \nonumber \\
        & = \frac{\beta_{\max}}{Z(D, \smin)} \nonumber
    \end{align}

    where in Line \ref{eq:chi_square_two_gauss_same_cov}, we apply our Lemma \ref{lem:chi_square_two_gauss_same_cov} to bound the $\chi^2$-divergence between two Gaussians with identical covariance. By a change of variable $\tilde{\beta} := \frac{7D^2}{\smin (1 + \beta)}$, $\beta = \frac{7D^2}{\smin \tilde{\beta}} - 1$, $d\beta = - \frac{7D^2}{\smin} \frac{1}{\tilde{\beta}^2} d\tilde{\beta}$, we can rewrite the integral as:   
    \begin{align*}
        Z(D, \smin)
        & = \int_{0}^{\bmax} \exp \left( - \frac{7D^2}{\smin (1 + \beta)} \right) d\beta \\
        & = - \frac{7D^2}{\smin} \int_{\frac{7D^2}{\smin}}^{\frac{7D^2}{\smin (1 + \beta_{\max})}} \exp \left( - \tilde{\beta} \right) \frac{1}{\tilde{\beta}^2} d\tilde{\beta} \\
        & = \frac{7D^2}{\smin} \int^{\frac{7D^2}{\smin}}_{\frac{7D^2}{\smin (1 + \beta_{\max})}} \exp \left( - \tilde{\beta} \right) \frac{1}{\tilde{\beta}^2} d\tilde{\beta} \\
        & \geq \frac{7D^2}{\smin} \int^{\frac{7D^2}{\smin}}_{\frac{7D^2}{\smin (1 + \beta_{\max})}} \exp \left( - 2 \tilde{\beta} \right) d\tilde{\beta} \\
        & = \frac{7D^2}{2\smin} \left( \exp \left( - \frac{14D^2}{\smin (1 + \beta_{\max})} \right) - \exp \left( - \frac{14D^2}{\smin} \right) \right)
    \end{align*}
    
    Since $D$ is an upper bound and $\smin$ is a lower bound, we can assume $\frac{D^2}{\smin} \geq 1$ without loss of generality. Plugging in $\beta_{\max} = \frac{14D^2}{\smin} - 1$, we get
    \begin{align*}
        Z(D, \smin) \geq \frac{7}{2} \left( \exp \left( - 1 \right) - \exp \left( - 14\right) \right) \geq 1
    \end{align*}
    
    Finally, we get the desired bound $$\chi^2(p(x, \beta | i), p(x, \beta | j)) \leq \beta_{\max} = \frac{14D^2}{\smin} - 1$$
\end{proof}

The next lemma bounds the $\chi^2$-divergence between two Gaussians with the same covariance. 
\begin{lemma}[$\chi^2$-divergence between Gaussians with same covariance]
    $$\chi^2(p(x | \beta, i), p(x | \beta, j)) \leq \exp \left( \frac{7D^2}{\smin (1 + \beta)} \right)$$
    \label{lem:chi_square_two_gauss_same_cov}
\end{lemma}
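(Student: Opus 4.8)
The plan is to reduce the statement to the standard closed form for the $\chi^2$-divergence between two Gaussians with a common covariance matrix. Observe that, by the definition just above, $p(x\mid\beta,i) = \mathcal{N}(x;\mu_i,\Sigma_\beta)$ and $p(x\mid\beta,j) = \mathcal{N}(x;\mu_j,\Sigma_\beta)$, where $\Sigma_\beta = \Sigma + \beta\smin I_d$. Expanding $p(x\mid\beta,i)^2/p(x\mid\beta,j)$, the normalizing constants collapse and the exponent is a quadratic in $x$; completing the square in $x$ and using that the residual integrand is an (un-normalized) Gaussian density, one obtains the identity
\[
\chi^2\big(p(x\mid\beta,i),\,p(x\mid\beta,j)\big) \;=\; \exp\!\Big((\mu_i-\mu_j)^\top \Sigma_\beta^{-1}(\mu_i-\mu_j)\Big) - 1 .
\]
The first step is simply to carry out (or invoke) this routine Gaussian computation; note that, since the quadratic form is symmetric in $i,j$, this quantity is unchanged if the roles of $i$ and $j$ are swapped, consistent with the use of $\chi^2_{\max}$ in Lemma~\ref{lem:chi_square_bound}.

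Next I would bound the quadratic form in the exponent. Since $\Sigma_\beta = \Sigma + \beta\smin I_d$ and $\lambda_{\min}(\Sigma) = \smin$, the smallest eigenvalue of $\Sigma_\beta$ is $\smin + \beta\smin = \smin(1+\beta)$, so $\|\Sigma_\beta^{-1}\|_{OP} = \tfrac{1}{\smin(1+\beta)}$ and hence
\[
(\mu_i-\mu_j)^\top \Sigma_\beta^{-1}(\mu_i-\mu_j) \;\le\; \frac{\|\mu_i-\mu_j\|_2^2}{\smin(1+\beta)} .
\]
By Assumption~\ref{a:mixture} the means lie in a ball of diameter $D$, so $\|\mu_i-\mu_j\|_2 \le D$, and in particular $\|\mu_i-\mu_j\|_2^2 \le D^2 \le 7D^2$ (the inequality leaves plenty of slack even under the looser ``radius $D$'' reading used elsewhere in the paper, where one would get $4D^2$).

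Combining the two displays, and using $e^t - 1 \le e^t$, gives
\[
\chi^2\big(p(x\mid\beta,i),\,p(x\mid\beta,j)\big) \;\le\; \exp\!\Big(\tfrac{7D^2}{\smin(1+\beta)}\Big) - 1 \;\le\; \exp\!\Big(\tfrac{7D^2}{\smin(1+\beta)}\Big),
\]
which is exactly the claim. There is essentially no obstacle: the only substantive ingredient is the Gaussian-integral identity for the $\chi^2$-divergence, and that is a standard computation. The constant $7$ is larger than what the argument actually delivers, and is chosen purely so that the exponent matches the normalizing exponent of $r(\beta)$, making the integral in the proof of Lemma~\ref{lem:chi_square_bound} telescope.
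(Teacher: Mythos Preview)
Your proof is correct and follows essentially the same approach as the paper: compute the $\chi^2$-divergence between two Gaussians sharing covariance $\Sigma_\beta$, then bound the resulting quadratic form via $\|\Sigma_\beta^{-1}\|_{OP} = 1/(\smin(1+\beta))$ and the diameter bound on the means. Your execution is in fact cleaner: you simplify the Gaussian integral directly to the closed form $\exp\big((\mu_i-\mu_j)^\top\Sigma_\beta^{-1}(\mu_i-\mu_j)\big)-1$, whereas the paper invokes a general two-Gaussian formula (Lemma~G.7 of \cite{ge2018simulated}), drops a favorable $\exp(-\mu_j^\top\Sigma_\beta^{-1}\mu_j)$ factor, and then bounds the individual $\|\mu_i\|,\|\mu_j\|$ terms separately---which is how the constant $7$ arises there rather than the $1$ (or $4$) your argument actually delivers.
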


\begin{proof}
Plugging in the definition of $\chi^2$-distance for Gaussians, we have: 
\begin{align}
    & \chi^2(p(x | \beta, i), p(x | \beta, j)) \nonumber \\
    & \leq \frac{\det(\Sigma_\beta) ^ \frac{1}{2}}{\det(\Sigma_\beta)} \det \left(\Sigma_\beta^{-1} \right)^{-\frac{1}{2}} \nonumber \\
    & \exp \left( \frac{1}{2} \left( \Sigma_\beta^{-1} (2 \mu_j - \mu_i) \right)^\top (\Sigma_\beta^{-1})^{-1} \left(\Sigma_\beta^{-1} (2 \mu_j - \mu_i) \right) + \frac{1}{2} \mu_i^\top \Sigma_\beta^{-1} \mu_i - \mu_j^\top \Sigma_\beta^{-1} \mu_j \right) \label{eq:chi_square_two_gauss} \\
    & = \exp \left( \frac{1}{2} \left( \Sigma_\beta^{-1} (2 \mu_j - \mu_i) \right)^\top (\Sigma_\beta^{-1})^{-1} \left(\Sigma_\beta^{-1} (2 \mu_j - \mu_i) \right) + \frac{1}{2} \mu_i^\top \Sigma_\beta^{-1} \mu_i \right) \nonumber \\
    & \exp \left( -\mu_j^\top \Sigma_\beta^{-1} \mu_j \right) \nonumber \\
    & \leq \exp \left( \frac{1}{2} (2 \mu_j - \mu_i)^\top \Sigma_\beta^{-1} (2 \mu_j - \mu_i) + \frac{1}{2} \mu_i^\top \Sigma_\beta^{-1} \mu_i \right) \label{eq:chi_square_cov_psd} \\
    & \leq \exp \left( \frac{\| 2 \mu_j - \mu_i \|_2^2 + \| 2 \mu_i \|_2^2}{2 \smin (1 + \beta)} \right) \nonumber \\
    & \leq \exp \left( \frac{(\| 2 \mu_j \|_2 + \| \mu_i \|_2)^2 + 4 \| \mu_i \|_2^2}{2 \smin (1 + \beta)} \right) \nonumber \\
    & \leq \exp \left( \frac{ 2 \| 2 \mu_j \|_2^2 + 2 \| \mu_i \|_2^2 + 4 \| \mu_i \|_2^2}{2 \smin (1 + \beta)} \right) \nonumber \\
    & \leq \exp \left( \frac{7D^2}{\smin (1 + \beta)} \right) \nonumber
\end{align}

In Equation \ref{eq:chi_square_two_gauss}, we apply Lemma G.7 from \cite{ge2018simulated} for the chi-square divergence between two Gaussian distributions. In Equation \ref{eq:chi_square_cov_psd}, we use the fact that $\Sigma_\beta^{-1}$ is PSD. 

\end{proof}

\iffalse
We will prove a kind of ``decomposition'' result for the variance of the $f$. By the law of total variance, we have 
\begin{align}
    \mbox{Var}(f) =  \E_{(\beta, j)} \mbox{Var}_{x \sim p_{\beta^2, j}}(f(x,\beta)) + \mbox{Var}_{(\beta,j)}(\E_{x \sim p_{\beta^2, j}}(f(x,\beta))
\end{align}
The first term can be bounded by the Poincar\'e inequality of a Gaussian with variance $\sigma_j^2 \beta^2$

For the second term, consider the Markov Chain over $(\beta, j)$, s.t. at time $t$, it transitions to $(\beta', j')$ with probability 
$$T((\beta, j), (\beta', j')) = \frac{r((\beta')^2)w_j'}{\chi_{\max}^2(r(\beta^2) w_j, r((\beta')^2) w_{j'})} $$
This Markov Chain has as stationary distribution $\bar{p}(\beta, j) = r(\beta^2) w_j$, by checking the detailed balance condition. Moreover, the Dirichlet form of this chain has the form 
\begin{align}
    \bar{\cE}(\bar{g}, \bar{g}) &= \int_{\beta, \beta'} \sum_{j,j'} (\bar{g}(\beta, j)-\bar{g}(\beta', j'))^2 r(\beta^2)w_j T((\beta, j), (\beta', j')) \\ 
    &\leq \int_{\beta, \beta'} \sum_{j,j'} (\bar{g}(\beta, j)-\bar{g}(\beta', j'))^2 r(\beta^2)w_j \frac{r((\beta')^2)w_{j'}}{\chi^2(r((\beta')^2) w_{j'}, r(\beta^2) w_j)} \\ 
    &\leq \int_{\beta, \beta'} \sum_{j,j'} (\bar{g}(\beta, j)-\bar{g}(\beta', j'))^2 r(\beta^2)w_j r((\beta')^2)w_{j'}\mbox{Var}_{(\beta,j)}(g)
\end{align}
\fi
\section{Asymptotic normality of generalized score matching for CTLD}
\label{a:ctldasymptotic}
The main theorem of this section is proving asymptotic normality for the generalized score matching loss corresponding to CTLD. Precisely, we show: 

\begin{theorem}[Asymptotic normality of generalized score matching for CTLD] 

Let the data distribution $p$ satisfy Assumption~\ref{a:mixture}. Then, the generalized score matching loss defined in Proposition~\ref{l:ibpctld}  satisfies: 
\begin{enumerate}
    \item The set of optima 
    \[\Theta^* := \{\theta^* = (\mu_1, \mu_2, \dots, \mu_K) \vert D_{GSM}(p, p_{\theta^*}) = \min_{\theta} D_{GSM}\left(p, p_{\theta}\right) \}\]
    satisfies  
\[\theta^* = (\mu_1, \mu_2, \dots, \mu_K) \in \Theta^* \mbox{ if and only if } \exists \pi:[K] \to [K] \mbox{ satisfying } \forall i \in [K], \mu_{\pi(i)} = \mu_i^*, w_{\pi(i)} = w_i\}\]
\item Let $\theta^* \in \Theta^*$ and let $C$ be any compact set containing $\theta^*$. Denote \[C_0 = \{\theta \in C: p_{\theta}(x) = p(x) \mbox{ almost everywhere }\}\]

Finally, let $D$ be any closed subset of $C$ not intersecting $C_0$. Then, we have: 
\[\lim_{n \to \infty} \mbox{Pr}\left[\inf_{\theta \in D} \widehat{D_{GSM}}(\theta) < \widehat{D_{GSM}}(\theta^*) \right] \to 0\]
\item For every $\theta^* \in \Theta^*$ and every sufficiently small neighborhood $S$ of $\theta^*$, there exists a sufficiently large $n$, such that there is a unique minimizer $\hat{\theta}_n$ of $\hat{\E}l_{\theta}(x)$ in $S$. Furthermore, $\hat{\theta}_n$ satisfies: 
\begin{align*}
    \sqrt{n} (\hat{\theta}_n - \theta^*) \xrightarrow{d} \mathcal{N} \left(0, \Gamma_{SM} \right)
\end{align*}
 for some matrix $\Gamma_{SM}$. \end{enumerate}

%The estimator 
%\[\hat{\theta}_n = \mbox{argmin } \hat{\mathbb{E}} l_{\theta}(x,\beta)\]
%where $l_{\theta}$ is defined as in Proposition \ref{l:ibpctld} is asymptotically normal, that is, as the number of samples $n \to \infty$, we have $ \sqrt{n} (\hat{\theta}_n - \theta^*) \xrightarrow{d} \mathcal{N} \left(0, \Gamma_{SM}\right)$ for some matrix $\Gamma_{SM}$. 
\end{theorem}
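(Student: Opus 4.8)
The plan is to verify the three hypotheses of Lemma~\ref{l:asymptotics} for the loss $l_\theta(x,\beta)$ of Proposition~\ref{l:ibpctld} (with the mean parametrization $\theta=(\mu_1,\dots,\mu_K)$ of Assumption~\ref{a:parametrization}), together with the Hessian lower bound of Lemma~\ref{l:boundhessian}, and to handle the identifiability claims of parts~1 and~2 directly. First, for part~1: $D_{GSM}(p,p_\theta)\ge 0$, and by Proposition~\ref{p:ctldloss} it equals an expectation (over $\beta\sim r(\beta)$, $x\sim p^\beta$) of $\|\nabla_x\log p(x,\beta)-\nabla_x\log p_\theta(x,\beta)\|^2 + \|\nabla_\beta\log p(x,\beta)-\nabla_\beta\log p_\theta(x,\beta)\|^2$; hence $D_{GSM}(p,p_\theta)=0$ iff $\nabla_{(x,\beta)}\log p_\theta(x,\beta)=\nabla_{(x,\beta)}\log p(x,\beta)$ for $p$-almost every $(x,\beta)$, so $\log p_\theta-\log p$ is constant on the (connected) support of $p$, and since both normalize to $1$ we get $p_\theta=p$ a.e.; restricting to a fixed $\beta$ and cancelling the shared $r(\beta)$ gives $p_\theta=p$ as mixtures. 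The classical identifiability of finite Gaussian mixtures with a known shared covariance \citep{yakowitz1968identifiability} then yields exactly that $\theta^*\in\Theta^*$ iff its components are a permutation of $\{\mu_i^*\}$ respecting the weights.

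For part~2 (uniform law of large numbers / consistency on compacts), I would follow the standard route of \cite{hyvarinen2005estimation, koehler2022statistical}: on a compact set $C$ of means the family $\{l_\theta\}$ is continuous in $\theta$ and admits an envelope $B(x,\beta)$ with $\E_p B<\infty$, which gives a Glivenko--Cantelli-type uniform convergence $\sup_{\theta\in C}|\widehat{D_{GSM}}(\theta)-D_{GSM}(\theta)|\xrightarrow{p}0$ (e.g.\ \cite{van2000asymptotic}). Combined with part~1, the set of empirical minimizers over $C$ converges to $C_0=\{\theta\in C:p_\theta=p\}$, which is the stated claim. The envelope integrability is the quantitative heart of this step: by Proposition~\ref{l:ibpctld} and Lemma~\ref{l:logp_fokker_planck}, $l_\theta$ is a polynomial in partial derivatives (up to third order) of $\nabla_x\log p_\theta(x|\beta)$; Corollary~\ref{c:hermite_logderivative} reduces these log-derivatives to powers of ratios $\partial_J p_\theta(x|\beta)/p_\theta(x|\beta)$, the perspective-map convexity Lemma~\ref{lem:persp_ineq_linear} reduces moments of those ratios for the mixture to moments for the individual Gaussian components, and the Hermite bounds (Lemmas~\ref{lem:hermite_norm_bound_grad_mu_grad_x} and \ref{lem:hermite_norm_bound_grad_mu_laplace_x}) give explicit bounds polynomial in $\|\Sigma_\beta^{-1}(x-\mu_i)\|$; since $\Sigma_\beta\preceq(\smax+\bmax\smin)I_d$ with $\bmax$ finite, all Gaussian moments involved are finite and bounded uniformly over $\beta\in[0,\bmax]$ and $\theta\in C$.

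For part~3, fix $\theta^*\in\Theta^*$ and verify the three bullets of Lemma~\ref{l:asymptotics}. (i) The local Lipschitz envelope with $\E_p B^2<\infty$ follows from the same Hermite/perspective machinery applied now to $\nabla_\theta l_\theta$ (one extra $\mu$-derivative, still covered by Lemma~\ref{lem:hermite_norm_bound_grad_mu_grad_x}, which already allows mixed $\nabla_\mu^{k_1}\nabla_x^{k_2}$ derivatives). (ii) Twice-differentiability of $L(\theta)=D_{GSM}(p,p_\theta)+K_p$ at $\theta^*$ is obtained by differentiating under the integral (justified by the same domination), yielding the Hessian formula of Appendix~\ref{a:framework}; its positive-definiteness follows from Lemma~\ref{l:boundhessian}, $\nabla_\theta^2 D_{GSM}(p,p_{\theta^*})\succeq \tfrac{1}{\cp}\gmle^{-1}$, together with $\gmle^{-1}=\E_p[\nabla_\theta\log p_\theta\,\nabla_\theta\log p_\theta^\top]\succ 0$ — the Fisher information being strictly positive definite because the component score functions $\{\partial_{\mu_{i,a}}\log p_\theta\}_{i\in[K],a\in[d]}$ are linearly independent for a Gaussian mixture with distinct means (which holds at each individual $\theta^*$, even when coincident weights make $\Theta^*$ a permutation orbit). (iii) The uniform law of large numbers is part~2. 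Lemma~\ref{l:asymptotics} then gives, for every sufficiently small neighborhood $S$ of $\theta^*$ and $n$ large, a unique minimizer $\hat\theta_n$ of $\hat\E l_\theta$ in $S$ with $\sqrt{n}(\hat\theta_n-\theta^*)\xrightarrow{d}\mathcal{N}(0,\Gamma_{SM})$, where $\Gamma_{SM}=(\nabla_\theta^2 L(\theta^*))^{-1}\Cov(\nabla_\theta l_{\theta^*})(\nabla_\theta^2 L(\theta^*))^{-1}$.

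The main obstacle is the envelope/domination step underlying parts~2 and~3(i): proving $\E_p B^2<\infty$ for the (Lipschitz envelope of the) loss uniformly over a neighborhood of $\theta^*$ is exactly what forces the chain of reductions — Fa\`a di Bruno to move from logarithmic derivatives to ratios of derivatives, perspective-map convexity to pass from the mixture to its components, and the Hermite integral representation to obtain explicit polynomial-in-$(x-\mu_i)$ bounds with controlled Gaussian moments that remain uniform over $\beta\in[0,\bmax]$. A secondary subtlety is the reflecting boundary of CTLD at $\beta\in\{0,\bmax\}$: one should check that the Fokker--Planck identity of Lemma~\ref{l:logp_fokker_planck} and the relevant integrals extend to the closed interval, but since $r$ is smooth and log-concave on $[0,\bmax]$ and $p^\beta$ is a smooth Gaussian mixture for every $\beta\ge 0$, this introduces no real difficulty; likewise the strict positive-definiteness of the Fisher information should be argued carefully in view of the permutation degeneracy of $\Theta^*$.
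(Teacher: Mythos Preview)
Your proposal is correct and follows essentially the same architecture as the paper: part~1 via mixture identifiability, part~2 via a uniform law of large numbers, and part~3 by verifying the conditions of Lemma~\ref{l:asymptotics} using the perspective-map/Hermite/Fa\`a~di~Bruno toolkit (Lemmas~\ref{lem:persp_ineq_linear}, \ref{lem:hermite_norm_bound_grad_mu_grad_x}--\ref{lem:hermite_norm_bound_grad_mu_laplace_x}, Corollary~\ref{c:hermite_logderivative}) together with Lemma~\ref{l:logp_fokker_planck}. Your identification of the envelope/domination step as the real work is exactly where the paper spends its effort (Lemma~\ref{l:expgrad} and the surrounding appendix machinery).

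Two minor differences are worth noting. For part~1, the paper (Lemma~\ref{l:uniqueness}) argues indirectly via the bound $D_{SM}(\theta)-D_{SM}(\theta^*)\ge \tfrac{1}{\LSI(p_\theta)}\kl(p_{\theta^*},p_\theta)$ from \cite{koehler2022statistical}, applied at each temperature, together with a finite log-Sobolev constant for Gaussian mixtures; your direct ``matching scores on a connected support forces $p_\theta=p$'' argument is cleaner and avoids the detour through the log-Sobolev constant. For part~2, the paper (Lemma~\ref{l:consistency}) does not invoke an abstract Glivenko--Cantelli theorem with an envelope but instead proves the uniform LLN from scratch via symmetrization and a Dudley entropy integral, using the gradient bound of Lemma~\ref{l:expgrad} to control the subgaussian metric; this gives an explicit $O(n^{-1/2})$ rate but relies on the compactness of $\Theta$ (a product of $K$ balls of radius~$D$) for the covering-number estimate. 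Either route suffices here; yours is shorter to state, the paper's is more self-contained and quantitative.
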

\begin{proof}

    Part 1 is shown in Lemma~\ref{l:uniqueness}: the claim roughly follows by classic results on the identifiability of the parameters of a mixture (up to permutations of the components) \citep{yakowitz1968identifiability}.   

    Part 2 is shown in Lemma~\ref{l:consistency}: it follows from a uniform law of large numbers.

    %The proof will proceed by verifying the conditions in Lemma~\ref{l:asymptotics}.  Each condition will be verified as a separate lemma in this section. 

    Finally, Part 3 follows from an application of Lemma~\ref{l:asymptotics}---so we verify the conditions of the lemma are satisfied. The gradient bounds on $l_{\theta}$ are verified Lemma~\ref{l:expgrad}---and it largely follows by moment bounds on gradients of the score derived in Section~\ref{a:smoothness}. Uniform law of large numbers is shown in Lemma~\ref{l:consistency}, and the the existence of Hessian of $L= D_{GSM}$ is trivially verified.  
    %Consistency will be verified in Lemma~\ref{l:consistency}. The main idea here will be to show a uniform law of large numbers, which along with uniqueness, will imply consistency. 
\end{proof}

For the sake of notational brevity, in this section, we will slightly abuse notation and denote $D_{GSM}(\theta):=D_{GSM}(p, p_{\theta})$. 

\begin{lemma}[Uniqueness of optima] Suppose for $\theta:=(\mu_1, \mu_2, \dots, \mu_K)$ there is no permutation $\pi:[K] \to [K]$, such that 
$\mu_{\pi(i)} = \mu^*_i$ and $w_{\pi(i)} = w_i, \forall i \in [K]$. 
%(\mu^*_1, \mu^*_2, \dots, %\mu^*_K):=\theta^*$. 
Then, $D_{GSM}(\theta) > D_{GSM}(\theta^*)$   
\label{l:uniqueness}
\end{lemma}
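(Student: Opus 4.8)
The plan is to prove the contrapositive, using realizability of the model. Since the true means $\mu_i^*$ generate the data distribution, $p_{\theta^*} = p$, and hence $D_{GSM}(\theta^*) = \tfrac12\E_p\|\tfrac{\smo p}{p} - \tfrac{\smo p}{p}\|_2^2 = 0$; moreover $D_{GSM}(\theta)\ge 0$ for every $\theta$ since it is a squared $L^2$-norm. So it suffices to show that $D_{GSM}(\theta) = 0$ implies there exists a permutation $\pi:[K]\to[K]$ with $\mu_{\pi(i)} = \mu_i^*$ and $w_{\pi(i)} = w_i$ for all $i$ — precisely the negation of the lemma's hypothesis.

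First I would invoke the explicit form of the CTLD loss from Proposition~\ref{p:ctldloss}: $D_{GSM}(p,p_\theta)$ equals $\E_{\beta\sim r}\E_{x\sim p^\beta}\big(\|\nabla_x\log p(x,\beta) - \nabla_x\log p_\theta(x,\beta)\|_2^2 + \|\nabla_\beta\log p(x,\beta) - \nabla_\beta\log p_\theta(x,\beta)\|_2^2\big)$. This is an average of a nonnegative integrand against the strictly positive density $r(\beta)\,p^\beta(x)$ on $\R^d\times[0,\beta_{\max}]$, so it vanishes only if $\nabla_{(x,\beta)}\log p(x,\beta) = \nabla_{(x,\beta)}\log p_\theta(x,\beta)$ for Lebesgue-a.e. $(x,\beta)$. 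Both $p$ and $p_\theta$ are finite Gaussian mixtures convolved with Gaussians (Proposition on convolution with Gaussians), so $\log p(x,\beta)$ and $\log p_\theta(x,\beta)$ are real-analytic with everywhere-positive density on the connected interior $\R^d\times(0,\beta_{\max})$; hence the gradient identity in fact holds at every point there, forcing $\log p(x,\beta) - \log p_\theta(x,\beta)$ to be constant, and since both are normalized probability densities the constant is $0$. Thus $r(\beta)\,p^\beta(x) = r(\beta)\,p_\theta^\beta(x)$ for all $(x,\beta)$, i.e. $p^\beta = p_\theta^\beta$ for every $\beta\in(0,\beta_{\max})$, where $p_\theta^\beta := p_\theta\ast\mathcal{N}(0,\beta\smin I_d)$. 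Fixing any such $\beta$ and taking Fourier transforms, $\widehat p(\xi)\,e^{-\beta\smin\|\xi\|^2/2} = \widehat{p_\theta}(\xi)\,e^{-\beta\smin\|\xi\|^2/2}$, so $\widehat p = \widehat{p_\theta}$ and therefore $p = p_\theta$. (One could equivalently skip the deconvolution and apply identifiability directly to the equality $p^\beta = p_\theta^\beta$ of two $K$-component Gaussian mixtures with common covariance $\Sigma_\beta$.)

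Finally, I would apply the classical identifiability of finite Gaussian mixtures \citep{yakowitz1968identifiability}: from $\sum_i w_i\, p_0(\cdot-\mu_i^*) = p = p_\theta = \sum_i w_i\, p_0(\cdot-\mu_i)$ with $p_0 = \mathcal{N}(0,\Sigma)$, merging any components that share a common mean yields, in each representation, a list of distinct means with positive weights, and identifiability forces these two $(\text{weight},\text{mean})$ multisets to coincide; "unmerging" back to $K$ components then produces a permutation $\pi$ with $\mu_{\pi(i)} = \mu_i^*$ and $w_{\pi(i)} = w_i$ for every $i$, contradicting the hypothesis. The only delicate point is this last bookkeeping step — arranging the permutation to match means and weights simultaneously in the presence of repeated weights or coincident means within the parametrization — but it is routine once identifiability is available; the substantive content of the proof is the implication $D_{GSM}(\theta) = 0 \Rightarrow p = p_\theta$ established above.
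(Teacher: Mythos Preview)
Your proof is correct and takes a genuinely different route from the paper. The paper argues via a functional inequality: it invokes Proposition~1 of \cite{koehler2022statistical}, which gives $D_{SM}(p,p_\theta) - D_{SM}(p,p_{\theta^*}) \ge \tfrac{1}{\LSI(p_\theta)}\kl(p_{\theta^*},p_\theta)$, then combines identifiability \citep{yakowitz1968identifiability} (so $\kl>0$) with the fact that a Gaussian mixture has finite log-Sobolev constant \citep{chen2021dimension} to conclude $D_{SM}>0$ at each temperature $\beta$, and finally observes that $D_{GSM}$ dominates a weighted average of these per-$\beta$ score matching losses. Your argument instead works directly from $D_{GSM}(\theta)=0$: vanishing of a nonnegative integrand against a strictly positive density forces the scores to agree a.e., real-analyticity of Gaussian-mixture densities upgrades this to an everywhere identity on the connected domain, so $\log p(x,\beta)-\log p_\theta(x,\beta)$ is constant and hence zero, and then deconvolution (or identifiability at a fixed $\beta$) gives $p=p_\theta$. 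Your route is more elementary and self-contained---no external LSI estimates are needed---at the price of leaning on the specific analytic structure of Gaussian mixtures; the paper's route is less direct but in principle yields a quantitative lower bound on the gap and would transfer to any family with finite log-Sobolev constant.
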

\begin{proof} 
For notational convenience, let $D_{SM}$ denote the standard score matching loss, and let us denote $D_{SM}(\theta):=D_{SM}(p, p_{\theta})$. For any distributions $p_{\theta}$, by Proposition 1 in \cite{koehler2022statistical}, it holds that 
$$D_{SM}(\theta) - D_{SM}(\theta^*) \geq \frac{1}{LSI(p_{\theta})} \kl(p_{\theta^*},p_{\theta})$$
where $LSI(q)$ denotes the Log-Sobolev constant of the distribution $q$. %If $p_{\theta}, p_{\theta^*}$ are two mixtures of $K$ Gaussians with the same mixing weights and covariance, and mean parameters given by the vectors $\theta, \theta^*$, then  \cite{yakowitz1968identifiability}%\todo{This is usually stated as ``uniqueness'' --- maybe say a few words about how this implies KL > 0}
If $\theta=(\mu_1, \mu_2, \dots, \mu_K)$ is such that there is no permutation $\pi:[K] \to [K]$ satisfying 
$\mu_{\pi(i)} = \mu^*_i$ and $w_{\pi(i)} = w_i, \forall i \in [K]$, 
by \cite{yakowitz1968identifiability} we have $\kl(p_{\theta^*}, p_{\theta}) > 0$. Furthermore, the distribution $p_{\theta}$, by virtue of being a mixture of Gaussians, has a finite log-Sobolev constant (Theorem 1 in \cite{chen2021dimension}). Therefore,  $D_{SM}(\theta) > D_{SM}(\theta^*)$. 

However, note that $D_{GSM}(p_{\theta})$ is a (weighted) average of $D_{SM}$ losses, treating the data distribution as $p^{\beta}_{\theta^*}$, a convolution of $p_{\theta^*}$ with a Gaussian with covariance $\beta \smin I_d$; and the distribution being fitted as $p^{\beta}_{\theta}$. Thus, the above argument implies that if $\theta \neq \theta^*$,  we have $D_{GSM}(\theta) > D_{GSM}(\theta^*)$, as we need.    
\end{proof}

\begin{lemma}[Gradient bounds of $l_{\theta}$] 
\label{l:expgrad}
Let $l_{\theta}(x,\beta)$ be as defined in Proposition~\ref{l:ibpctld}. Then, there exists a constant $C(d,D,\frac{1}{\smin})$ (depending on $d,D, \frac{1}{\smin}$), such that $$\mathbb{E}\|\nabla_{\theta}l(x,\beta)\|^2 \leq C\left(d, D, \frac{1}{\smin}\right)$$
\end{lemma}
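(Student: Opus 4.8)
The goal is to show $\mathbb{E}\|\nabla_{\theta}l_{\theta}(x,\beta)\|_2^2 \leq C(d,D,1/\smin)$, where $l_{\theta} = l^1_{\theta} + l^2_{\theta}$ is the integration-by-parts form of the CTLD loss from Proposition~\ref{l:ibpctld}. The overall strategy is to differentiate $l_{\theta}$ term by term in $\theta$, observe that every term is a polynomial in the partial derivatives (in $x$ and in $\mu_i$) of $\log p_{\theta}(x\mid\beta)$ — using Lemma~\ref{l:logp_fokker_planck} to convert the $\beta$-derivatives appearing in $l^2_{\theta}$ into $x$-derivatives — and then bound the resulting moments using the machinery already developed: the perspective-map reduction (Lemma~\ref{lem:persp_ineq_linear}), the Hermite-polynomial bounds on component-level scores (Lemmas~\ref{lem:hermite_norm_bound_grad_mu_grad_x} and \ref{lem:hermite_norm_bound_grad_mu_laplace_x}), and the Fa\`a di Bruno / logarithmic-derivative bound (Corollary~\ref{c:hermite_logderivative}).

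\textbf{Step 1: expand $\nabla_{\theta} l_{\theta}$.} Recall $l^1_{\theta}(x,\beta) = \frac12\|\nabla_x\log p_{\theta}(x\mid\beta)\|_2^2 + \Delta_x\log p_{\theta}(x\mid\beta)$ and $l^2_{\theta}(x,\beta) = \frac12(\nabla_{\beta}\log p_{\theta}(x\mid\beta))^2 + \nabla_{\beta}\log r(\beta)\,\nabla_{\beta}\log p_{\theta}(x\mid\beta) + \Delta_{\beta}\log p_{\theta}(x\mid\beta)$. Under Assumption~\ref{a:parametrization} the parameters are the means $\mu_1,\dots,\mu_K$, so $\nabla_{\theta}$ amounts to the collection of $\nabla_{\mu_j}$. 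Since $\nabla_{\theta}$ commutes with $\nabla_x$, $\Delta_x$ (Lemma~\ref{l:commute}), differentiating each summand by the chain rule produces expressions that are finite sums of products of terms of the form $\partial_x^{a}\partial_{\mu_j}^{b}\log p_{\theta}(x\mid\beta)$ with $a+b$ bounded by a small constant (at most $3$ or $4$, coming from the $\Delta_{\beta}$ term, which via Lemma~\ref{l:logp_fokker_planck} involves $\Tr\nabla_x^2\nabla_{\beta}\log p_{\theta}$ and hence third-order $x$-derivatives, plus one $\mu$-derivative). I would write out these expansions once, keeping track only of the \emph{degree} in each logarithmic derivative and the \emph{order} of each derivative, since that is all the moment bounds need.

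\textbf{Step 2: bound the moments.} For each product term, apply Cauchy--Schwarz (or Hölder for products of more than two factors) to reduce to bounding $\mathbb{E}_{\beta\sim r}\mathbb{E}_{x\sim p^{\beta}}|\partial_x^{a}\partial_{\mu_j}^{b}\log p_{\theta}(x\mid\beta)|^{2c}$ for bounded $a,b,c$. By Corollary~\ref{c:hermite_logderivative}, each such logarithmic derivative is controlled by a bounded power of the raw ratios $\frac{\partial_J p_{\theta}(x\mid\beta)}{p_{\theta}(x\mid\beta)}$ (mixed $x,\mu$ derivatives of the mixture density over the density), so it suffices to bound $\mathbb{E}_{\beta\sim r}\mathbb{E}_{x\sim p^{\beta}}\big\|\frac{\nabla_{\mu}^{k_1}\nabla_x^{k_2} p_{\theta}(x\mid\beta)}{p_{\theta}(x\mid\beta)}\big\|^{k}$ for bounded $k_1,k_2,k$. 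Now invoke Lemma~\ref{lem:persp_ineq_linear} with $D=\nabla_{\mu}^{k_1}\nabla_x^{k_2}$ to pass from the mixture $p^{\beta}$ to the individual components $p(x\mid\beta,i)=\mathcal N(x;\mu_i,\Sigma_{\beta})$, and then apply Lemma~\ref{lem:hermite_norm_bound_grad_mu_grad_x} (and Lemma~\ref{lem:hermite_norm_bound_grad_mu_laplace_x} for the Laplacian terms) to get a bound of the form $\|\Sigma_{\beta}^{-1}(x-\mu_i)\|_2^{k_1+k_2} + d^{(k_1+k_2)/2}\smin^{-(k_1+k_2)/2}$; taking the expectation over $x\sim p(x\mid\beta,i)=\mathcal N(\mu_i,\Sigma_\beta)$ turns the first term into a bounded moment of a Gaussian, giving $d^{O(1)}\smin^{-O(1)}$ (using $\|\Sigma_\beta^{-1}\|_{OP}\le \smin^{-1}$ and a chi-squared moment bound as in Lemma~\ref{lem:chi_squared_moment_bound}). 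Since $\beta_{\max}$ and $\|\nabla_\beta\log r(\beta)\|$ are themselves bounded by $\poly(D,1/\smin)$, all constants assemble into a single $C(d,D,1/\smin)$, independent of $K$ and the weights.

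\textbf{Main obstacle.} The conceptually routine but genuinely laborious part is the bookkeeping in Step~1 for the $\Delta_{\beta}\log p_{\theta}$ term: by Lemma~\ref{l:logp_fokker_planck} this equals $\smin(\Tr\nabla_x^2\nabla_{\beta}\log p_{\theta} + 2\nabla_x\nabla_{\beta}\log p_{\theta}^{\top}\nabla_x\log p_{\theta})$ and $\nabla_{\beta}\log p_{\theta}$ itself expands again into $x$-derivatives, so $\nabla_{\theta}\Delta_{\beta}\log p_{\theta}$ is a product/sum of up to fourth-order $x$-derivatives and a $\mu$-derivative — one must check the total derivative order stays bounded and that the highest-degree product (which becomes, after Hölder, something like a degree-$8$ moment of $\|\Sigma_\beta^{-1}(x-\mu_i)\|_2$) is still finite, which it is since Gaussians have all moments. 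The other mild subtlety is that $p^{\beta}$ is the \emph{data} distribution $p\ast\mathcal N(0,\beta\smin I)$ while $p_{\theta}$ is the \emph{fitted} family; since we evaluate at $\theta=\theta^*$ where $p_{\theta^*}=p$ these coincide, so the perspective lemma applies as stated and no separate argument is needed. The full details are deferred to Appendix~\ref{a:smoothness}, where these lemmas are restated and the expansions carried out explicitly.
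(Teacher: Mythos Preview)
Your proposal is correct and follows essentially the same approach as the paper: split $l_{\theta}$ into $l^1_{\theta}$ and $l^2_{\theta}$, convert the $\beta$-derivatives to $x$-derivatives via Lemma~\ref{l:logp_fokker_planck}, and bound all resulting moments by combining the perspective-map reduction (Lemma~\ref{lem:persp_ineq_linear}), the Fa\`a di Bruno logarithmic-derivative bound (Corollary~\ref{c:hermite_logderivative}), and the component-level Hermite estimates (Corollary~\ref{cor:hermite_norm_bound}). The paper's written proof in fact bounds $\mathbb{E}\|l_{\theta}\|_2^2$ rather than $\mathbb{E}\|\nabla_{\theta} l_{\theta}\|_2^2$ (an apparent slip), but the machinery is identical and your version correctly targets the gradient as the lemma statement requires.
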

\begin{proof}    
By Proposition \ref{l:ibpctld}, 
\begin{align*}
    l_\theta(x,\beta) &= l^1_\theta(x,\beta) + l^2_\theta(x,\beta), \mbox{ and } \\
    l^1_{\theta}(x,\beta) &:= \frac{1}{2} \left \| \nabla_{x} \log p_\theta(x | \beta) \right \| ^2 _2 + \Delta_{x} \log p_\theta(x | \beta) \\
    l^2_{\theta}(x,\beta) &:= \frac{1}{2} (\nabla_{\beta} \log p_\theta(x | \beta))^2 + \nabla_{\beta} \log r(\beta) \nabla_{\beta} \log p_\theta(x | \beta) + \Delta_{\beta} \log p_\theta(x | \beta)
\end{align*}
Using repeatedly the fact that $\|a+b\|^2 \leq 2\left(\|a\|^2+\|b\|^2\right)$, we have: 
\begin{align*}
    \mathbb{E}\left\|l_{\theta}(x,\beta)\right\|_2^2 &\lesssim \mathbb{E}\left\|l^2_{\theta}(x,\beta)\right\|_2^2 + \mathbb{E}\left\|l^2_{\theta}(x,\beta)\right\|_2^2 \\
    \mathbb{E}\left\|l^1_{\theta}(x,\beta)\right\|_2^2 &\lesssim \mathbb{E} \left\|\nabla_x \log p_{\theta}(x,\beta)\right\|_2^4 + \mathbb{E}\left(\Delta_x \log p_{\theta}(x,\beta)\right)^2 \\ 
    \mathbb{E}\left\|l^2_{\theta}(x,\beta)\right\|^2_2 &\lesssim \mathbb{E}\left( \nabla_{\beta} \log p_\theta(x | \beta) \right)^4 + \mathbb{E} \left(\nabla_{\beta} \log r(\beta) \nabla_{\beta} \log p_\theta(x | \beta)\right)^2 + \mathbb{E}\left(\Delta_{\beta} \log p_\theta(x | \beta)\right)^2
\end{align*}
We proceed to bound the right hand sides above. We have: 
\begin{align}        \mathbb{E}\left\|l^1_{\theta}(x,\beta)\right\|_2^2 &\lesssim \mathbb{E} \left\|\nabla_x \log p_{\theta}(x,\beta)\right\|_2^4 + \mathbb{E}\left(\Delta_x \log p_{\theta}(x,\beta)\right)^2 \nonumber\\ 
&\lesssim \max_{\beta, i} \mathbb{E}_{x \sim p(x|\beta,i)} \left\|\nabla_x \log p_{\theta}(x|\beta,i)\right\|_2^4 + \max_{\beta,i} \mathbb{E}_{x \sim p(x|\beta,i)}\left(\Delta_x \log p_{\theta}(x | \beta,i)\right)^2 \label{eq:convexjensen4}\\ 
&\leq \poly\left(d,\frac{1}{\smin}\right) \label{eq:finalbound_l1}
\end{align}
Where \eqref{eq:convexjensen4} follows by Lemma~\ref{lem:persp_ineq_linear}, and \eqref{eq:finalbound_l1} follows by combining Corollaries~\ref{cor:hermite_norm_bound} and \ref{c:hermite_logderivative}.

The same argument, along with Lemma \ref{l:logp_fokker_planck}, and     the fact that $\max_\beta ( \nabla_\beta \log r(\beta))^4 \lesssim D^8 \smin^{-4}$ by a direct calculation shows that
\begin{align*}
\mathbb{E}\left\|l^2_{\theta}(x,\beta)\right\|^2_2 &\lesssim  \mathbb{E}\left( \nabla_{\beta} \log p_\theta(x | \beta) \right)^4 + \mathbb{E} \left(\nabla_{\beta} \log r(\beta) \nabla_{\beta} \log p_\theta(x | \beta)\right)^2 + \mathbb{E}\left(\Delta_{\beta} \log p_\theta(x | \beta)\right)^2 \\
&\leq \poly\left(d,D,\frac{1}{\smin}\right)    
\end{align*}

\iffalse 
By Lemmas \ref{lem:norm_of_grad_x} and \ref{lem:smoothness_max_norm_bound}, we have $\mathbb{E} \|\nabla_x \log p_{\theta}(x,\beta)\|_2^4 \leq C(d,D, \frac{1}{\smin})$. By Lemmas \ref{lem:smoothness_trace_hessian_bound} and \ref{lem:smoothness_max_norm_bound}, we have $\mathbb{E}\left(\Delta_x \log p_{\theta}(x,\beta)\right)^2 \leq C(d,D,\frac{1}{\smin})$. By Lemmas \ref{lem:smoothness_grad_beta_bound} and \ref{lem:smoothness_max_norm_bound}, we have $\mathbb{E} \left(\nabla_{\beta} \log r(\beta) \nabla_{\beta} \log p_\theta(x | \beta)\right)^2  \leq C(d,D,\frac{1}{\smin})$. By Cauchy-Schwartz, and Lemmas \ref{lem:smoothness_grad_beta_bound} and \ref{lem:smoothness_max_norm_bound}, we have $\mathbb{E} \left(\nabla_{\beta} \log r(\beta) \nabla_{\beta} \log p_\theta(x | \beta)\right)^2 \leq C(d,D,\frac{1}{\smin})$. By Lemmas \ref{lem:beta_second_derivative} and \ref{lem:smoothness_max_norm_bound}, we have $\mathbb{E}\left(\Delta_{\beta} \log p_\theta(x | \beta)\right)^2 \leq C(d,D,\frac{1}{\smin})$.
\fi
\end{proof}

\begin{lemma}[Uniform convergence] The generalized score matching loss satisfies a uniform law of large numbers: %$\hat{\theta}_n = \mbox{argmin } \widehat{D_{GSM}}(\theta)$ satisfies 
\[\sup_{\theta \in \Theta} \left|\widehat{D_{GSM}}(\theta) - D_{GSM}(\theta) \right| \xrightarrow{p} 0 \]

%$\hat{\theta}_n \xrightarrow{p} \theta^*.$
%$$ D_{GSM}(\hat{\theta}_n) \xrightarrow{p} D_{GSM}(\theta^*)$$
\label{l:consistency}
\end{lemma}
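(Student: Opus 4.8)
The plan is to deduce this from a standard uniform law of large numbers (ULLN) for $M$-estimators: a family $\{l_\theta\}$ indexed by a \emph{compact} $\Theta$ is Glivenko--Cantelli as soon as $\theta \mapsto l_\theta(x,\beta)$ is continuous for $p$-almost every $(x,\beta)$ and is dominated by a $p$-integrable envelope (see \citep{van2000asymptotic}). Since $\widehat{D_{GSM}}(\theta) = \hat{\E}\, l_\theta(x,\beta) + K_p$ and $D_{GSM}(\theta) = \E_p\, l_\theta(x,\beta) + K_p$ with the \emph{same} constant $K_p$, the claim $\sup_\theta |\widehat{D_{GSM}}(\theta) - D_{GSM}(\theta)| \xrightarrow{p} 0$ reduces to checking these two hypotheses for $l_\theta$. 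We prove the statement for $\Theta$ compact, which is all that is needed downstream: the lemma is invoked only with $\theta$ ranging over a compact set — the set $C$ in Part~2 of the theorem, and a bounded neighborhood of $\theta^*$ when verifying the conditions of Lemma~\ref{l:asymptotics} in Part~3.

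\emph{Continuity.} By Proposition~\ref{l:ibpctld}, $l_\theta(x,\beta)$ is a finite sum of products of partial derivatives in $x$ — and, via the Fokker--Planck identity of Lemma~\ref{l:logp_fokker_planck}, in $\beta$ — of $\nabla_x \log p_\theta(x\mid\beta)$, together with the elementary factor $\nabla_\beta \log r(\beta) = \tfrac{7D^2}{\smin(1+\beta)^2}$. Since $p_\theta(x\mid\beta) = \sum_{i=1}^K w_i\, \phi(x-\mu_i; \Sigma_\beta)$ is a strictly positive $C^\infty$ function of $(x,\beta,\theta)$ jointly on $\R^d \times [0,\bmax] \times \R^{dK}$, each term of $l_\theta(x,\beta)$ is jointly continuous; in particular $\theta \mapsto l_\theta(x,\beta)$ is continuous for every fixed $(x,\beta)$.

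\emph{Integrable envelope (the crux).} We must exhibit $F(x,\beta) \ge \sup_{\theta \in \Theta} |l_\theta(x,\beta)|$ with $\E_p F < \infty$. This uses the same estimates assembled for the proof of Theorem~\ref{thm:smoothness_bound} and Lemma~\ref{l:expgrad}, the only new point being that we need a pointwise supremum over $\theta$ rather than a bound on $\E\|\nabla_\theta l\|^2$. First, the Fa\`a di Bruno bound of Lemma~\ref{c:hermite_logderivative} reduces each log-derivative occurring in $l_\theta$ to a polynomial in ratios $\partial_J p_\theta(x\mid\beta)/p_\theta(x\mid\beta)$. Next, the pointwise analogue of the perspective inequality — for scalars $b_i > 0$ and vectors $a_i$, $\bigl\|\tfrac{\sum_i w_i a_i}{\sum_i w_i b_i}\bigr\| = \bigl\|\sum_i \tfrac{w_i b_i}{\sum_j w_j b_j}\,\tfrac{a_i}{b_i}\bigr\| \le \max_i \bigl\|\tfrac{a_i}{b_i}\bigr\|$ by convexity of $\|\cdot\|$, applied with $b_i = \phi(x-\mu_i;\Sigma_\beta)$ — reduces these ratios for the mixture to the corresponding ratios for a single Gaussian component. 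Finally Lemmas~\ref{lem:hermite_norm_bound_grad_mu_grad_x} and~\ref{lem:hermite_norm_bound_grad_mu_laplace_x} bound the latter by $\poly\bigl(\|\Sigma_\beta^{-1}(x-\mu_i)\|_2,\, d,\, \smin^{-1}\bigr)$. For $\theta$ in a fixed compact set all $\mu_i$ lie in some ball of radius $R$, and $\Sigma_\beta^{-1} \preceq \smin^{-1} I_d$ on $[0,\bmax]$, so the resulting bound is $F(x,\beta) := \poly\bigl(\|x\|_2,\, R,\, d,\, \smin^{-1}\bigr)$, independent of $\theta$ and bounded uniformly in $\beta$. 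Then $\E_p F < \infty$: under the true stationary law $p(x,\beta) = r(\beta) p^\beta(x)$, conditioned on $\beta$ the distribution $p^\beta$ is a mixture of Gaussians with fixed means and covariance $\Sigma_\beta$ whose eigenvalues are bounded uniformly over $\beta \in [0,\bmax]$, hence has all polynomial moments finite uniformly in $\beta$; integrating over $r(\beta)$ gives $\E_p F < \infty$.

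With continuity and domination established, the cited ULLN yields $\sup_{\theta\in\Theta}|\hat{\E}\, l_\theta - \E_p\, l_\theta| \xrightarrow{p} 0$, which is the claim. The main obstacle is the construction of the $\theta$-uniform envelope $F$: it requires chaining the pointwise perspective bound, the Hermite/Fa\`a di Bruno estimates with the fitted means allowed to vary over a compact set, and the (routine) verification that a polynomial-in-$\|x\|_2$ bound is integrable against the fixed data distribution — the remaining steps are a black-box invocation of a classical ULLN.
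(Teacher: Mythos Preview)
Your proof is correct and takes a genuinely different route from the paper's. The paper proceeds via symmetrization and a Dudley entropy integral: it introduces Rademacher variables, shows the process $\theta\mapsto \frac{1}{n}\sum_i \varepsilon_i l_\theta(x_i,\beta_i)$ is subgaussian in the metric $d(\theta,\theta')=\frac{1}{\sqrt{n}}R\|\theta-\theta'\|$ with $R^2=\frac{1}{n}\sum_i\|\nabla_\theta l_\theta(x_i,\beta_i)\|^2$, bounds the Dudley integral using covering numbers of the product of $K$ balls, and invokes Lemma~\ref{l:expgrad} to control $\E_T[R]$. This yields the quantitative bound $\E_T\sup_\theta|\widehat{D_{GSM}}-D_{GSM}|\lesssim C(K,d,D,\smin^{-1})/\sqrt{n}$, and the lemma follows by Markov. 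Your approach instead verifies the hypotheses of a classical domination-type ULLN: you construct a $\theta$-free, $p$-integrable envelope $F$ for $|l_\theta|$ by chaining the pointwise perspective bound, Fa\`a di Bruno, and the Hermite estimates, then invoke the black-box Glivenko--Cantelli result. Both arguments rest on the same pointwise inequalities from Section~\ref{a:smoothness}; the paper packages them through $\E\|\nabla_\theta l_\theta\|^2$ and Lipschitz structure, while you package them through a direct $\sup_\theta$ envelope. Your argument is shorter and avoids chaining, at the cost of not producing a rate --- which is harmless here, since the downstream applications (Parts~2 and~3 of the theorem) only need the qualitative convergence.
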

\begin{proof}
The proof will proceed by a fairly standard argument, using symmetrization and covering number bounds. Precisely, let $T = \{(x_i, \beta_i)\}_{i=1}^n$ be the training data. We will denote by $\hat{\mathbb{E}}_{T}$ the empirical expectation (i.e. the average over) a training set $T$. 

We will first show that 
\begin{equation}
    \mathbb{E}_T \mbox{sup}_{\theta \in \Theta}\left|\widehat{D_{GSM}}(\theta) - D_{GSM}(\theta) \right| \leq \frac{C\left(K, d, D, \frac{1}{\smin}\right)}{\sqrt{n}} \label{eq:expsup}
\end{equation}  
from which the claim will follow. 
First, we will apply the symmetrization trick, by introducing a ``ghost training set'' $T'=\{(x_i', \beta_i')\}_{i=1}^n$. Precisely, we have: 
\begin{align}
    \mathbb{E}_T \mbox{sup}_{\theta \in \Theta}\left|\widehat{D_{GSM}}(\theta) - D_{GSM}(\theta) \right| &= \mathbb{E}_T \mbox{sup}_{\theta \in \Theta}\left|\hat{\mathbb{E}}_{T} l_{\theta}(x,\beta) - D_{GSM}(\theta) \right| \nonumber\\
    &= \mathbb{E}_T \mbox{sup}_{\theta \in \Theta}\left|\hat{\mathbb{E}}_{T} l_{\theta}(x,\beta) - \mathbb{E}_{T'} \hat{\mathbb{E}}_{T'} l_{\theta}(x,\beta) \right| \label{eq:expectationoverT}\\
    &\leq  \mathbb{E}_{T,T'} \mbox{sup}_{\theta \in \Theta}\left|\frac{1}{n}\sum_{i=1}^n \left(l_{\theta}(x_i, \beta_i) - l_{\theta}(x_i', \beta_i')\right) \right| \label{eq:jensenT} 
    %&= \mathbb{E}_{T,T'} \mbox{sup}_{\theta \in \Theta}\left|\frac{1}{n}\sum_{i=1}^n \left(l_{\theta}(x_i, \beta_i) - l_{\theta}(x_i', \beta_i')\right) \right| \\
\end{align}
where \eqref{eq:expectationoverT} follows by noting the population expectation can be expressed as the expectation over a choice of a (fresh) training set $T'$, \eqref{eq:jensenT} follows by applying Jensen's inequality. Next, consider Rademacher variables $\{\varepsilon_i\}_{i=1}^n$. Since a Rademacher random variable is symmetric about 0, we have 
\begin{align*}
    \mathbb{E}_{T,T'} \mbox{sup}_{\theta \in \Theta}\left|\frac{1}{n}\sum_{i=1}^n \left(l_{\theta}(x_i, \beta_i) - l_{\theta}(x_i', \beta_i')\right) \right| &= \mathbb{E}_{T,T'} \mbox{sup}_{\theta \in \Theta}\left|\frac{1}{n}\sum_{i=1}^n \varepsilon_i\left(l_{\theta}(x_i, \beta_i) - l_{\theta}(x_i', \beta_i')\right) \right| \\ 
    &\leq 2 \mathbb{E}_{T} \mbox{sup}_{\theta \in \Theta}\left|\frac{1}{n}\sum_{i=1}^n \varepsilon_i l_{\theta}(x_i, \beta_i) \right|
\end{align*}

For notational convenience, let us denote by 
\[R:= \sqrt{\frac{1}{n} \sum_{i=1}^n \|\nabla_\theta l_{\theta}(x_i, \beta_i)\|^2}\]
We will bound this supremum by a Dudley integral, along with covering number bounds. Considering $T$ as fixed, with respect to the randomness in $\{\varepsilon_i\}$, the process $\frac{1}{n} \sum_{i=1}^n \varepsilon_i l_{\theta}(x_i, \beta_i)$ is subgaussian with respect to the metric \[ d(\theta, \theta') := \frac{1}{\sqrt{n}} R\|\theta-\theta'\|_2 \]
In other words, we have 
\begin{equation}
    \mathbb{E}_{\{\varepsilon_i\}} \exp
\left(\lambda \frac{1}{n} \sum_{i=1}^n \varepsilon_i \left(l_{\theta}(x_i, \beta_i)-l_{\theta'}(x_i, \beta_i) \right)\right) \leq \exp\left(\lambda^2 d(\theta,\theta')\right)
\label{eq:subgaussianmetric}
\end{equation} 

The proof of this is as follows: since $\varepsilon_i$ is 1-subgaussian, and \[\left|l_{\theta}(x_i, \beta_i)-l_{\theta'}(x_i, \beta_i)\right| \leq \|\nabla_{\theta}l_{\theta}(x_i,\beta_i)\|\|\theta-\theta'\|\]
we have that $\varepsilon_i\left(l_{\theta}(x_i, \beta_i)-l_{\theta'}(x_i, \beta_i)\right)$ is subgaussian with variance proxy $\|\nabla_{\theta}(x_i,\beta_i)\|^2\|\theta-\theta'\|^2$. Thus, $\frac{1}{n} \sum_{i=1}^n \varepsilon_i l_{\theta}(x_i, \beta_i)$ is subgaussian with variance proxy $\frac{1}{n^2} \sum_{i=1}^n \|\nabla_\theta l_{\theta}(x_i, \beta_i)\|^2\|\theta-\theta'\|^2_2$, which is equivalent to \eqref{eq:subgaussianmetric}. 

The Dudley entropy integral then gives 
\begin{equation}
     \mbox{sup}_{\theta \in \Theta}\left|\frac{1}{n}\sum_{i=1}^n \varepsilon_i l_{\theta}(x_i, \beta_i) \right| \lesssim \int_{0}^{\infty} \sqrt{\log N(\epsilon, \Theta, d)} d\epsilon
    \label{eq:dudley}
\end{equation}
where $N(\epsilon, \Theta, d)$ denotes the size of the smallest possible $\epsilon$-cover of the set of parameters $\Theta$ in the metric $d$. 

Note that the $\epsilon$ in the integral bigger than the diameter of $\Theta$ in the metric $d$ does not contribute to the integral, so we may assume the integral has an upper limit 
\[ M = \frac{2}{\sqrt{n}} R D \] 
Moreover, $\Theta$ is a product of $K$ $d$-dimensional balls of (Euclidean) radius $D$, so 
\begin{align*}
\log N(\epsilon, \Theta, d) &\leq \log \left(\left(1+\frac{RD}{\sqrt{n}\epsilon}\right)^{Kd} \right) \\
&\leq \frac{KdRD}{\sqrt{n} \epsilon}
\end{align*}

Plugging this estimate back in \eqref{eq:dudley}, we get 
\begin{align*}
\mbox{sup}_{\theta \in \Theta}\left|\frac{1}{n}\sum_{i=1}^n \varepsilon_i l_{\theta}(x_i, \beta_i) \right| &\lesssim \sqrt{KdRD/\sqrt{n}}\int_{0}^{M} \frac{1}{\sqrt{\epsilon}} d\epsilon \\  
     &\lesssim \sqrt{M K d R D/\sqrt{n}} \\ 
     &\lesssim RD \sqrt{\frac{Kd}{n}} 
\end{align*}
Taking expectations over the set $T$ (keeping in mind that $R$ is a function of $T$), by Lemma~\ref{l:expgrad} we get 
\begin{align*}
\mathbb{E}_T\mbox{sup}_{\theta \in \Theta}\left|\frac{1}{n}\sum_{i=1}^n \varepsilon_i l_{\theta}(x_i, \beta_i) \right| &\lesssim \mathbb{E}_T[R]D \sqrt{\frac{Kd}{n}} \\ 
&\lesssim \frac{C\left(K, d, D, \frac{1}{\smin}\right)}{\sqrt{n}}
\end{align*}
This completes the proof of \eqref{eq:expsup}. By Markov's inequality, \eqref{eq:expsup} implies that for every $\epsilon > 0$, 
\begin{equation}
    \mbox{Pr}_T \left[\mbox{sup}_{\theta \in \Theta}\left|\widehat{D_{GSM}}(\theta) - D_{GSM}(\theta) \right| > \epsilon \right] \leq \frac{C\left(K, d, D, \frac{1}{\smin}\right)}{\sqrt{n}\epsilon} \nonumber
\end{equation}  
Thus, for every $\epsilon > 0$, 
\[\lim_{n \to \infty} \mbox{Pr}_T \left[\mbox{sup}_{\theta \in \Theta}\left|\widehat{D_{GSM}}(\theta) - D_{GSM}(\theta) \right| > \epsilon \right] = 0\]
Thus, 
\[\sup_{\theta \in \Theta} \left|\widehat{D_{GSM}}(\theta) - D_{GSM}(\theta) \right| \xrightarrow{p} 0 \]
as we need. 
\end{proof}

\section{Polynomial smoothness bound: proof of Theorem \ref{thm:smoothness_bound}}
\label{a:smoothness}

First, we need several easy consequences of the machinery developed in Section \ref{sec:hermite}, specialized to Gaussians appearing in CTLD. 
\begin{lemma} For all $k \in \mathbb{N}$, we have: 
    \begin{align*}
        \max_{\beta, i} \E_{x \sim p(x | \beta, i)} \| \Sigma_\beta^{-1} (x - \mu_i) \|_2^{2k}
        \leq d^k \smin^{-k}
    \end{align*}
\end{lemma}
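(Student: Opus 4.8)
The plan is to reduce this moment computation to a standard chi-squared moment bound via a linear change of variables, exactly mirroring the calculations already done in the proof of Lemma~\ref{lem:hermite_norm_bound_grad_mu_grad_x}.

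First I would fix $\beta \in [0,\bmax]$ and $i \in [K]$. Since $x \sim p(x \mid \beta, i) = \mathcal{N}(\mu_i, \Sigma_\beta)$, the vector $z := \Sigma_\beta^{-1/2}(x - \mu_i)$ is distributed as $\mathcal{N}(0, I_d)$, and $\Sigma_\beta^{-1}(x - \mu_i) = \Sigma_\beta^{-1/2} z$. Hence pointwise
\[
\| \Sigma_\beta^{-1}(x - \mu_i) \|_2^2 = z^\top \Sigma_\beta^{-1} z \le \| \Sigma_\beta^{-1} \|_{OP} \, \| z \|_2^2 .
\]
Next I would bound $\| \Sigma_\beta^{-1} \|_{OP}$: since $\Sigma_\beta = \Sigma + \beta \smin I_d$ with $\beta \ge 0$ and $\smin(\Sigma) = \smin$, we have $\smin(\Sigma_\beta) = \smin + \beta \smin \ge \smin$, so $\| \Sigma_\beta^{-1} \|_{OP} = \smin(\Sigma_\beta)^{-1} \le \smin^{-1}$. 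Raising the previous display to the $k$-th power and taking expectations gives
\[
\E_{x \sim p(x \mid \beta, i)} \| \Sigma_\beta^{-1}(x - \mu_i) \|_2^{2k} \le \smin^{-k} \, \E_{z \sim \mathcal{N}(0, I_d)} \| z \|_2^{2k} .
\]

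Finally, $\| z \|_2^2 \sim \chi^2_d$, so by the chi-squared moment bound (Lemma~\ref{lem:chi_squared_moment_bound}), with $k$ treated as a constant, $\E_{z \sim \mathcal{N}(0, I_d)} \| z \|_2^{2k} \le d^k$. Since the resulting estimate $\smin^{-k} d^k$ is uniform in $\beta$ and $i$, taking the maximum over $\beta, i$ yields the claim. There is no substantial obstacle here; the only things requiring (minor) care are the elementary spectral inequality $\smin(\Sigma_\beta) \ge \smin$ (which uses $\beta \ge 0$ together with Assumption~\ref{a:mixture}) and the invocation of the chi-squared moment bound. This lemma is precisely the moment input consumed, together with Lemma~\ref{lem:hermite_norm_bound_grad_mu_grad_x} applied with $\Sigma \leftarrow \Sigma_\beta$, in the smoothness estimates of this appendix.
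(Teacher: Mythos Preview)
Your proposal is correct and is essentially identical to the paper's proof: both substitute $z=\Sigma_\beta^{-1/2}(x-\mu_i)\sim\mathcal N(0,I_d)$, use $\|\Sigma_\beta^{-1/2}z\|_2^2\le\|\Sigma_\beta^{-1}\|_{OP}\|z\|_2^2$ together with $\smin(\Sigma_\beta)\ge\smin$, and then invoke Lemma~\ref{lem:chi_squared_moment_bound}. Your version even spells out the spectral lower bound on $\Sigma_\beta$ a bit more explicitly than the paper does.
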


\begin{proof}
    \begin{align*}
        \E_{x \sim p(x | \beta, i)} \| \Sigma_\beta^{-1} (x - \mu_i) \|_2^{2k}
        & = \E_{z \sim \mathcal{N}(0, I_d)} \| \Sigma_\beta^{-\frac{1}{2}} z \|_2^{2k} \\
        & \leq \E_{z \sim \mathcal{N}(0, I_d)} \| \Sigma_\beta^{-1} \|_{OP}^k \| z \|_2^{2k} \\
        & \leq \smin^{-k} \E_{z \sim \mathcal{N}(0, I_d)} \| z \|_2^{2k} \\
        & \leq d^{k} \smin^{-k} \\
    \end{align*}
    where the last inequality follows by Lemma \ref{lem:chi_squared_moment_bound}.
\end{proof}

Combining this Lemma with Lemmas~\ref{lem:hermite_norm_bound_grad_mu_grad_x} and \ref{lem:hermite_norm_bound_grad_mu_laplace_x}, we get the following corollary: 
\begin{corollary}
    \begin{align*}
        \max_{\beta, i} \E_{x \sim p(x | \beta, i)} \left\| \frac{\nabla_{\mu_i}^{k_1} \nabla_x^{k_2} p(x | \beta, i)}{p(x | \beta, i)} \right\|^{2k}
        &\lesssim d^{(k_1 + k_2) k} \smin^{-(k_1 + k_2) k} \\
        \max_{\beta, i} \E_{(x, \beta) \sim p(x | \beta, i)} \left\| \frac{\nabla_{\mu_i}^{k_1} \Delta_x^{k_2} p(x | \beta, i)}{p(x | \beta, i)} \right\|^{2k} 
        &\lesssim d^{(k_1 + 3 k_2) k} \smin^{-(k_1 + 3 k_2) k}
    \end{align*}
    
    \label{cor:hermite_norm_bound}
\end{corollary}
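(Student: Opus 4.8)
The plan is to obtain both bounds by feeding the pointwise Hermite estimates of Lemmas~\ref{lem:hermite_norm_bound_grad_mu_grad_x} and~\ref{lem:hermite_norm_bound_grad_mu_laplace_x} into the moment bound $\max_{\beta,i}\E_{x\sim p(x\mid\beta,i)}\|\Sigma_\beta^{-1}(x-\mu_i)\|_2^{2k}\le d^k\smin^{-k}$ established in the lemma stated immediately before the corollary. First I would note that $p(x\mid\beta,i)=\phi(x-\mu_i;\Sigma_\beta)$ is exactly the density of $\mathcal N(\mu_i,\Sigma_\beta)$ with $\Sigma_\beta=\Sigma+\beta\smin I_d$, and that $\lambda_{\min}(\Sigma_\beta)=(1+\beta)\smin\ge\smin$ for every $\beta\in[0,\bmax]$; this uniform lower bound is what lets all covariance-dependent constants in those lemmas be replaced by $\smin^{-1}$ and, crucially, makes every resulting bound independent of $\beta$ (and of $i$), so that the $\max_{\beta,i}$ in the statement is taken of a constant.

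For the first inequality I would apply Lemma~\ref{lem:hermite_norm_bound_grad_mu_grad_x} with $\Sigma=\Sigma_\beta$, $\mu=\mu_i$ to get the pointwise estimate $\bigl\|\frac{\nabla_{\mu_i}^{k_1}\nabla_x^{k_2}p(x\mid\beta,i)}{p(x\mid\beta,i)}\bigr\|_2\lesssim\|\Sigma_\beta^{-1}(x-\mu_i)\|_2^{\,k_1+k_2}+d^{(k_1+k_2)/2}\smin^{-(k_1+k_2)/2}$, raise both sides to the $2k$-th power using $(a+b)^{2k}\lesssim a^{2k}+b^{2k}$ (the implied constant depends only on the fixed $k,k_1,k_2$), and then take $\E_{x\sim p(x\mid\beta,i)}$. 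The deterministic term passes straight through, while the random term becomes $\E_{x\sim p(x\mid\beta,i)}\|\Sigma_\beta^{-1}(x-\mu_i)\|_2^{2k(k_1+k_2)}$, which the preceding moment lemma (invoked with $k(k_1+k_2)$ in place of $k$) bounds by $d^{k(k_1+k_2)}\smin^{-k(k_1+k_2)}$. Collecting the two terms gives the claimed $d^{(k_1+k_2)k}\smin^{-(k_1+k_2)k}$, and since this is already free of $\beta$ and $i$ the $\max$ is trivial.

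The second inequality is proved identically, starting from Lemma~\ref{lem:hermite_norm_bound_grad_mu_laplace_x} instead: raising its pointwise bound to the $2k$-th power and integrating, the polynomial-in-$x$ piece is $\E\|\Sigma_\beta^{-1}(x-\mu_i)\|_2^{2k(k_1+2k_2)}\lesssim d^{k(k_1+2k_2)}\smin^{-k(k_1+2k_2)}$ by the same moment lemma, which after multiplication by the leftover factor $d^{kk_2}$ contributes $d^{k(k_1+3k_2)}$, matching the $d$-exponent in the statement; the deterministic piece contributes the same $d$-power together with $\smin^{-(k_1+2k_2)k}$. The argument thus in fact yields the slightly stronger $\smin^{-(k_1+2k_2)k}$; bounding $\smin^{-(k_1+2k_2)k}\le\smin^{-(k_1+3k_2)k}$ (legitimate under the harmless normalization $\smin\le1$) recovers the exponents exactly as written.

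There is no substantive obstacle here: the only care needed is (i) the uniformity in $\beta$, dealt with once and for all via $\lambda_{\min}(\Sigma_\beta)\ge\smin$, and (ii) keeping track of the exponents when raising the two-term pointwise bounds to the $2k$-th power and matching the order of the polynomial tail to the moment lemma — both of which are purely mechanical.
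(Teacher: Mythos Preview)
Your proposal is correct and follows exactly the approach the paper intends: the paper simply says ``Combining this Lemma with Lemmas~\ref{lem:hermite_norm_bound_grad_mu_grad_x} and~\ref{lem:hermite_norm_bound_grad_mu_laplace_x}, we get the following corollary,'' and you have spelled out precisely that combination. Your observation that the argument actually yields the sharper exponent $\smin^{-(k_1+2k_2)k}$ in the second bound (so that the stated $\smin^{-(k_1+3k_2)k}$ requires the harmless normalization $\smin\le 1$) is accurate and worth noting.
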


\iffalse 
\begin{corollary}
    \begin{align*}
        \max_{\beta, i} \E_{(x, \beta) \sim p(x | \beta, i)} \left\| \frac{\nabla_{\mu_i}^{k_1} \Delta_x^{k_2} p(x | \beta, i)}{p(x | \beta, i)} \right\|^{2k} 
        \lesssim d^{(k_1 + 3 k_2) k} \smin^{-(k_1 + 3 k_2) k}
    \end{align*}
    \label{cor:hermite_norm_bound_grad_mu_laplace_x}
\end{corollary}
\fi
\iffalse
We briefly recall the theorem statement for readers' convenience:
\begin{theorem}[Smoothness under the natural parameterization]
Let $\theta = (\mu_1, \cdots, \mu_K)$. The smoothness defined in Theorem \ref{thm:generic_sample_complexity} is upper bounded by
\begin{align} 
    \|\Cov\left(\smo \nabla_{\theta} \log p_{\theta}\right)\|_{OP} +  \|\Cov\left((\smo^+ \smo) \nabla_{\theta} \log p_{\theta} \right)\|_{OP} 
    \lesssim \smin^{-8} d^5 D^{24} \smax^{10}
\end{align}
\end{theorem}
\fi 

Finally, we will need the following simple technical lemma:
\begin{lemma}
    Let $X$ be a vector-valued random variable with finite $\mbox{Var}(X)$. Then, we have
    \begin{equation*}  
        \|\Var(X)\|_{OP} \leq 6 \E\|X\|_2^2 
    \end{equation*}
    \label{lem:covariance_moments_bound}
\end{lemma}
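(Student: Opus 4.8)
The plan is to reduce the operator-norm bound to a one-dimensional statement via the variational characterization of the operator norm, and then use the elementary fact that the variance of a scalar random variable is dominated by its second moment. Concretely, first I would write
\[
\|\Var(X)\|_{OP} = \sup_{\|v\|_2 = 1} v^\top \Var(X)\, v = \sup_{\|v\|_2 = 1} \Var\!\left(\langle v, X\rangle\right),
\]
where the last equality uses $v^\top \Var(X) v = \E[\langle v, X - \E X\rangle^2] = \Var(\langle v, X\rangle)$. Next, for each fixed unit vector $v$, bound the variance by the second moment, $\Var(\langle v, X\rangle) \le \E[\langle v, X\rangle^2]$, and apply Cauchy--Schwarz pointwise, $\langle v, X\rangle^2 \le \|v\|_2^2 \|X\|_2^2 = \|X\|_2^2$. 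Taking expectations and then the supremum over $v$ gives $\|\Var(X)\|_{OP} \le \E\|X\|_2^2 \le 6\,\E\|X\|_2^2$, which is the claim (with a great deal of slack).

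I do not expect any obstacle here: the statement is deliberately loose (even the constant $1$ works), and the only ingredients are the spectral characterization of the operator norm, the identity relating quadratic forms in $\Var(X)$ to variances of linear functionals, and Cauchy--Schwarz. If one prefers to avoid the variational characterization, an equally short route is $\|\Var(X)\|_{OP} \le \Tr\Var(X) = \E\|X - \E X\|_2^2 \le 2\E\|X\|_2^2 + 2\|\E X\|_2^2 \le 4\,\E\|X\|_2^2$ by Jensen; either argument comfortably fits inside the stated bound, so the factor $6$ is simply chosen for convenience in the places where the lemma is invoked.
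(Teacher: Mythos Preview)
Your argument is correct and in fact yields the sharper bound $\|\Var(X)\|_{OP}\le \E\|X\|_2^2$; the paper's route is slightly different and explains where the constant $6$ comes from. The paper does not pass through the variational characterization but instead applies convexity/subadditivity of the operator norm directly to the matrix expectation, obtaining $\|\Var(X)\|_{OP}\le \E\|X-\E X\|_2^2$, and then expands $\|X-\E X\|_2^2$ using the crude inequality $\|x+y\|_2^2\le 3(\|x\|_2^2+\|y\|_2^2)$ together with Jensen for $\|\E X\|_2^2$, which is how the factor $6$ arises. Your reduction to the scalar inequality $\Var(Y)\le \E Y^2$ (with $Y=\langle v,X\rangle$) sidesteps that expansion entirely and is both shorter and tighter; your alternative trace route is essentially the paper's first step followed by a cleaner bound. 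Either way the lemma is trivially loose, so nothing is at stake beyond presentation.
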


\begin{proof} 
    We have 
    \begin{align}
        \|\Var(X)\|_{OP} &= \left\|\E \left[(X - \E [X]) \left(X - \E [X] \right)^\top \right]\right\|_{OP}  \nonumber \\ 
        &\leq \E \left\|X - \E[X]\right\|^2_2 \label{eq:subaddspec}\\ 
        &\leq 6 \E \|X\|_2^2 \label{eq:times4}
    \end{align}
    where \eqref{eq:subaddspec} follows from the subadditivity of the spectral norm, \eqref{eq:times4} follows from the fact that
    \begin{align*}
        \|x+y\|_2^2  = \|x\|_2^2 + \|y\|_2^2 + 2 \langle x, y \rangle \leq 3 (\|x\|_2^2 + \|y\|_2^2)
    \end{align*} 
    for any two vectors $x,y$, as well as the fact that by Jensen's inequality, $\left\|\E[X]\right\|_2^2 \leq \E\|X\|_2^2$.  
\end{proof} 

    Given this lemma, it suffices to bound 
    $\E\|(\smo \nabla_{\theta} \log p_{\theta}) \frac{\smo p_{\theta}}{p_{\theta}}\|_2^2$ and $\E\|(\smo^+ \smo) \nabla_{\theta} \log p_{\theta}\|_2^2$, which are given by Lemma \ref{lem:smoothness_bound_term_1} and Lemma \ref{lem:smoothness_bound_term_2}, respectively.
 
\begin{lemma}
    \begin{align*}
        \E_{(x, \beta) \sim p(x, \beta)} \left\|(\smo \nabla_{\theta} \log p_{\theta} (x, \beta)) \frac{\smo p_{\theta} (x, \beta)}{p_{\theta} (x, \beta)}\right\|_2^2
        \leq \poly\left(D, d, \frac{1}{\smin}\right)
        %D^4 d^9 \smin^{-4}
    \end{align*}
    \label{lem:smoothness_bound_term_1}
\end{lemma}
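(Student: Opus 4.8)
\textbf{Proof proposal for Lemma~\ref{lem:smoothness_bound_term_1}.}

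The plan is to first unpack the operator $\smo = (-\generator)^{1/2}$ for CTLD and reduce the quantity $\E_{(x,\beta)\sim p(x,\beta)}\big\|(\smo \nabla_\theta \log p_\theta(x,\beta))\,\tfrac{\smo p_\theta(x,\beta)}{p_\theta(x,\beta)}\big\|_2^2$ to moments of ordinary derivatives in $x$ and $\mu_i$. By the computation underlying Proposition~\ref{p:ctldloss} and Lemma~\ref{l:ibpctld}, applying $\smo$ to a function of $(x,\beta)$ amounts to taking $\nabla_x$ together with $\nabla_\beta$, and by the Fokker--Planck identity (Lemma~\ref{l:logp_fokker_planck}) every $\beta$-derivative of $\log p_\theta(x|\beta)$ (or of $p_\theta(x|\beta)/p_\theta(x|\beta)$-type ratios) is a polynomial in partial $x$-derivatives of $\log p_\theta(x|\beta)$ with coefficients involving $\smin$; similarly $\nabla_\beta \log r(\beta)$ is bounded by $\poly(D,\smin^{-1})$ by direct calculation. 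Hence, up to $\poly(D,d,\smin^{-1})$ factors and products of such terms, it suffices to bound, for bounded $k_1,k_2$ and bounded power $k$, quantities of the form $\E_{(x,\beta)\sim p(x,\beta)}\big\|\tfrac{(Dp_\theta)(x|\beta)}{p_\theta(x|\beta)}\big\|^{2k}$ where $D$ ranges over composites of $\nabla_x, \nabla_\mu, \Delta_x$, and products of $\nabla_x \log p_\theta(x|\beta)$-type logarithmic derivatives (the latter handled by Corollary~\ref{c:hermite_logderivative}, which bounds log-derivatives in terms of powers of $\tfrac{\partial_J f}{f}$ ratios). After expanding by Cauchy--Schwarz/AM--GM to separate the factors in the product, each factor is exactly of the ``ratio of a differential operator applied to $p_\theta$, over $p_\theta$'' shape, or a power thereof.

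Next I would invoke the perspective-map inequality, Lemma~\ref{lem:persp_ineq_linear}, to reduce each such expectation over the mixture $p(x,\beta)$ to the maximum over $\beta$ and over components $i\in[K]$ of the same expectation under the single Gaussian $p(x|\beta,i)$; this is precisely why the final bound has no dependence on $K$ or $w_{\min}$. Then, for a single Gaussian component with covariance $\Sigma_\beta = \Sigma + \beta\smin I_d$, the Hermite-polynomial machinery (Lemmas~\ref{lem:hermite_norm_bound_grad_mu_grad_x} and \ref{lem:hermite_norm_bound_grad_mu_laplace_x}, packaged as Corollary~\ref{cor:hermite_norm_bound}) gives that $\E_{x\sim p(x|\beta,i)}\big\|\tfrac{\nabla_{\mu_i}^{k_1}\nabla_x^{k_2} p(x|\beta,i)}{p(x|\beta,i)}\big\|^{2k} \lesssim d^{(k_1+k_2)k}\smin^{-(k_1+k_2)k}$ and the analogous bound with a Laplacian; the key input there is $\smax(\Sigma_\beta^{-1})\le \smin^{-1}$ and the $\chi^2$-moment bounds for Gaussians (Lemma~\ref{lem:chi_squared_moment_bound}). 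Combining: each factor is bounded by a fixed power of $d$ and $\smin^{-1}$ (and, for the $\beta$-derivative contributions, of $D$), so their product, and the whole expectation, is $\poly(D,d,\smin^{-1})$. Since $k_1,k_2,k$ arising from $l_\theta$ are absolute constants, this is a legitimate polynomial bound.

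The main obstacle is bookkeeping rather than conceptual: carefully enumerating, via the Fa\`a di Bruno expansions and the Fokker--Planck substitutions, exactly which differential operators $D$ and which powers $k$ appear when $\smo$ is applied to $\nabla_\theta \log p_\theta(x,\beta)$ and to $p_\theta(x,\beta)/p_\theta(x,\beta)$ (recalling $\nabla_\theta = \nabla_{(\mu_1,\dots,\mu_K)}$, so these are $\nabla_\mu$ derivatives), and then verifying that after the Cauchy--Schwarz splitting every resulting factor genuinely has the ``operator-over-density'' form that Lemma~\ref{lem:persp_ineq_linear} and Corollary~\ref{cor:hermite_norm_bound} can absorb. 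One subtlety to check is that $(\smo \nabla_\theta \log p_\theta)$ is a ratio of the form $\tfrac{\smo \nabla_\theta p_\theta}{p_\theta} - (\text{lower order})$ (so that after expansion everything is a genuine score-type ratio rather than a bare logarithmic derivative that is not of perspective-compatible shape); here Corollary~\ref{c:hermite_logderivative} is exactly the tool that converts any residual log-derivatives into powers of such ratios, at the cost of a larger but still constant power. Once the enumeration is pinned down, each piece is a one-line application of an already-proven lemma, and the final $\poly(D,d,\smin^{-1})$ follows by taking the product of finitely many such bounds.
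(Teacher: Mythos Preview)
Your proposal is correct and follows essentially the same route as the paper's proof: split the product via Cauchy--Schwarz into fourth moments of $\smo \nabla_\theta \log p_\theta$ and of $\smo p_\theta/p_\theta$, unpack $\smo$ as $(\nabla_x,\nabla_\beta)$, convert $\beta$-derivatives to $x$-derivatives via Fokker--Planck (Lemma~\ref{l:logp_fokker_planck}), reduce mixture moments to single-component moments using the perspective inequality (Lemma~\ref{lem:persp_ineq_linear}), and finish with the Hermite-based bounds (Corollary~\ref{cor:hermite_norm_bound}) together with Corollary~\ref{c:hermite_logderivative} for log-derivatives and the direct bound on $\nabla_\beta \log r(\beta)$. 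The paper's presentation is slightly more concrete in that it writes out the two factors and their reductions explicitly rather than describing the enumeration abstractly, but there is no substantive difference in strategy.
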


\begin{proof} 
    Recall that $\theta = (\mu_1, \mu_2, \dots, \mu_K)$, where each $\mu_i$ is a $d$-dimensional vector, and we are viewing $\theta$ as a $dK$-dimensional vector. 
    % \begin{align}
    %     \left\|(\smo \nabla_{\theta} \log p_{\theta}) \frac{\smo p_{\theta}}{p_{\theta}}\right\|_2^2
    %     & \leq \left\| \smo \nabla_{\theta} \log p_{\theta} \right\|_{OP}^2
    %     \left\| \frac{\smo p_{\theta}}{p_{\theta}} \right\|_2^2 \label{eq:mxvector}\\
    %     & \leq \left( \sum_i \left\| \smo \nabla_{\mu_i} \log p_{\theta} \right\|_{OP}^2 \right)
    %     \left\| \frac{\smo p_{\theta}}{p_{\theta}} \right\|_2^2 \label{eq:submatrices}
    % \end{align}
    % For \eqref{eq:mxvector}, we use the fact that $\|Ax\|_2 \leq \|A\|_{OP} \|x\|_2$. In \eqref{eq:submatrices}, we use Lemma \ref{lem:submatrix_op_norm} to bound the operator norm of a matrix with the operator norms of its submatrices --- since we can view $\smo \nabla_{\mu_i} \log p_{\theta}$ as a submatrix of $\smo \nabla_{\theta} \log p_{\theta}$. Taking expectations on both sides, we get: 

    \begin{align*}
        & \E_{(x, \beta) \sim p(x, \beta)} \left\|(\smo \nabla_{\theta} \log p_{\theta}(x, \beta)) \frac{\smo p_{\theta}(x, \beta)}{p_{\theta}(x, \beta)}\right\|_2^2 \\
        & \leq \E_{(x, \beta) \sim p(x, \beta)} \left[\left\| \smo \nabla_{\theta} \log p_{\theta}(x, \beta) \right\|_{OP}^2
        \left\| \frac{\smo p_{\theta}(x, \beta)}{p_{\theta}(x, \beta)} \right\|_2^2\right] \\ 	
        & \leq \sqrt{\E_{(x, \beta) \sim p(x, \beta)} \left\| \smo \nabla_{\theta} \log p_{\theta}(x, \beta) \right\|_{OP}^4} \sqrt{\E_{(x, \beta) \sim p(x, \beta)} \left(\frac{\|\smo p_{\theta}(x, \beta) \|_2}{p_{\theta}(x, \beta)}\right)^4}
        % & \leq \sqrt{\smin^{-8} D^8 (\smax^{12} + D^{24}) d^{10}} \sqrt{\smin^{-4}d^4 (\smax^{8} + D^{16})} \label{eq:smoothness_term_1_plug_in} \\
        % %& \lesssim \smin^{-6} D^4 d^5 \max \{ \smax^6, D^{12} \} \max \{ \smax^4, D^8 \} \\
        % & \lesssim \smin^{-6} d^7 \smax^{10} D^{24} \label{eq:algebraicm1}
    \end{align*}
    where the last step follows by Cauchy-Schwartz. %and Line \eqref{eq:smoothness_term_1_plug_in} follows from the bounds shown in Lemma \ref{lem:smoothness_bound_term_1_1} and Lemma \ref{lem:smoothness_bound_term_1_2}. The last line \eqref{eq:algebraicm1} follows from straightforward calculation. 
    To bound both factors above, we will essentially first use Lemma~\ref{lem:persp_ineq_linear} to relate moments over the mixture, with moments over the components of the mixture. Subsequently, we will use estimates for a single Gaussian, i.e.  %Lemmas~\ref{lem:hermite_norm_bound_grad_mu_grad_x} and \ref{lem:hermite_norm_bound_grad_mu_laplace_x} and
    Corollaries~\ref{cor:hermite_norm_bound} and \ref{c:hermite_logderivative}. 
    
    Proceeding to the first factor, we have:
    %We use Lemma \ref{lem:persp_ineq_2nd_score} to relate mixture moments with component moments and Corollary \ref{cor:single_gaussian_moment_bounds} to bound the component moments.
    \begin{align}
        & \E_{(x, \beta) \sim p(x, \beta)} \left\| \smo \nabla_{\theta} \log p_{\theta}(x, \beta) \right\|_{OP}^4 \nonumber\\
        & \lesssim \E_{(x, \beta) \sim p(x, \beta)} \left\| \nabla_x \nabla_{\theta} \log p_{\theta}(x, \beta) \right\|_{OP}^4 
        + \E_{(x, \beta) \sim p(x, \beta)} \left\| \nabla_\beta \nabla_{\theta} \log p_{\theta}(x, \beta) \right\|_{2}^4 \label{eq:defO}\\
        & \lesssim \E_{(x, \beta) \sim p(x, \beta)} \left\| \nabla_x \nabla_{\theta} \log p_{\theta}(x | \beta) \right\|_{OP}^4 
        + \E_{(x, \beta) \sim p(x, \beta)} \left\| \nabla_\beta \nabla_{\theta} \log p_{\theta}(x | \beta) \right\|_{2}^4 \nonumber \\ 
        &\lesssim \max_{\beta,i}\E_{x \sim p(x|\beta,i)} \left\| \nabla_x \nabla_{\theta} \log p_{\theta}(x | \beta,i) \right\|_{OP}^4 
        + \max_{\beta,i}\E_{x \sim p(x|\beta,i)} \left\| \nabla_\beta \nabla_{\theta} \log p_{\theta}(x | \beta,i) \right\|_{2}^4 \label{eq:jensenmoments}\\
        &\leq \poly(d, 1/\lambda_{\min}) \label{eq:finalbound_onablatheta}
        %& \lesssim \max_{\beta, i} \E_{x \sim p(x | \beta, i)} \left\| \frac{\nabla_x \nabla_\theta p_\theta(x | \beta, i)}{p_\theta(x | \beta, i)} \right\|_{OP}^{4}
        %+ \max_{\beta, i} \E_{x \sim p(x | \beta, i)} \| \nabla_x \log p_{\theta}(x | \beta, i) \|_2^{8} \\
        %& + \max_{\beta, i} \E_{x \sim p(x | \beta, i)} \| \nabla_\theta \log p_{\theta}(x | \beta, i)  \|_2^{8} 
        %+ \max_{\beta, i} \E_{x \sim p(x | \beta, i)} \left\| \frac{\nabla_\beta \nabla_\theta p_\theta(x | \beta, i)}{p_\theta(x | \beta, i)} \right\|_{2}^{4} \\
        %& + \max_{\beta, i} \E_{x \sim p(x | \beta, i)} (\nabla_\beta \log p_{\theta}(x | \beta, i))^{8}
        %+ \max_{\beta, i} \E_{x \sim p(x | \beta, i)} \| \nabla_\theta \log p_{\theta}(x | \beta, i)  \|_2^{8} \\
        %& \lesssim d^4 \smin^{-4} + d^4 \smin^{-4} + d^4 \smin^{-4} + d^8 \smin^{-4} + d^{12} \smin^{-4} + d^4 \smin^{-4} \\
        %& \lesssim d^{12} \smin^{-4}
    \end{align}
     where \eqref{eq:defO} follows from the fact that $\smo f = (\nabla_x f, \nabla_{\beta} f)^T$, \eqref{eq:jensenmoments} follows from Lemma~\ref{lem:persp_ineq_linear}, and \eqref{eq:finalbound_onablatheta} follows by combining Corollaries~\ref{cor:hermite_norm_bound} and \ref{c:hermite_logderivative}  and Lemma \ref{l:logp_fokker_planck}. 
     %\ref{lem:hermite_norm_bound_grad_mu_grad_x}, \ref{lem:hermite_norm_bound_grad_mu_laplace_x} and \ref{l:logp_fokker_planck}, 
     %along with Corollary \ref{c:hermite_logderivative}.  

%    We use Corollary \ref{cor:persp_ineq_1st_score} to relate mixture moments with component moments and Corollary \ref{cor:single_gaussian_moment_bounds} to bound the component moments.
    The second factor is handled similarly\footnote{Note, $\nabla_{\beta}f(\beta)$ for $f:\mathbb{R} \to \mathbb{R}$ is a scalar, since $\beta$ is scalar.}. We have:
    \begin{align}
        & \E_{(x, \beta) \sim p(x, \beta)} \left(\frac{\|\smo p_{\theta}(x, \beta) \|_2}{p_{\theta}(x, \beta)}\right)^4 \nonumber\\
        & \lesssim \E_{(x, \beta) \sim p(x, \beta)} \left(\frac{\|\nabla_x p_{\theta}(x, \beta) \|_2}{p_{\theta}(x, \beta)}\right)^4 
        + \E_{(x, \beta) \sim p(x, \beta)} \left(\frac{\nabla_\beta p_{\theta}(x, \beta) }{p_{\theta}(x, \beta)}\right)^4 \nonumber\\
        & = \E_{(x, \beta) \sim p(x, \beta)} \|\nabla_x \log p_{\theta}(x, \beta) \|_2^4 
        + \E_{(x, \beta) \sim p(x, \beta)} \left(\nabla_\beta \log p_{\theta}(x, \beta)\right)^4 \nonumber\\
        & \lesssim \E_{(x, \beta) \sim p(x, \beta)} \|\nabla_x \log p_{\theta}(x | \beta) \|_2^4 
        + \E_{(x, \beta) \sim p(x, \beta)} \left(\nabla_\beta \log p_{\theta}(x | \beta)\right)^4 
        + \E_{\beta \sim r(\beta)} \left( \nabla_\beta \log r(\beta)\right)^4 \nonumber\\
        & \lesssim \max_{\beta, i} \E_{x \sim p(x | \beta, i)} \| \nabla_x \log p_{\theta}(x | \beta, i) \|_2^{4} + \max_{\beta, i} \E_{x \sim p(x | \beta, i)} (\nabla_\beta \log p_{\theta}(x | \beta, i))^{4} + \max_\beta ( \nabla_\beta \log r(\beta))^4 \label{eq:jensenconvex2}\\
        &\leq \poly(d, D, 1/\smin) \label{eq:finalbound_otheta} 
        %\lesssim d^{2} \smin^{-2} + d^{6} \smin^{-2} + D^8 \smin^{-4} \\
        %& \lesssim d^{6} \smin^{-2} + D^8 \smin^{-4}
    \end{align}
    where \eqref{eq:jensenconvex2} follows from Lemma~\ref{lem:persp_ineq_linear}, and \eqref{eq:finalbound_otheta} follows by combining Corollaries~\ref{cor:hermite_norm_bound} and \ref{c:hermite_logderivative}  and Lemma \ref{l:logp_fokker_planck}, as well as 
    %Lemmas \ref{lem:hermite_norm_bound_grad_mu_grad_x}, \ref{lem:hermite_norm_bound_grad_mu_laplace_x} and \ref{l:logp_fokker_planck}, 
     %along with Corollary \ref{c:hermite_logderivative} and 
     the fact that $\max_\beta ( \nabla_\beta \log r(\beta))^4 \lesssim D^8 \smin^{-4}$ by a direct calculation. 

     Together the estimates \eqref{eq:finalbound_onablatheta} and \eqref{eq:finalbound_otheta} complete the proof of the lemma.
\end{proof}

\begin{lemma}
    \begin{align*}
        \E_{(x, \beta) \sim p(x, \beta)} \|(\smo^+ \smo) \nabla_{\theta} \log p_{\theta} (x, \beta) \|_2^2 \leq \poly\left(d,\frac{1}{\smin}\right)
        %\lesssim d^{12} \smin^{-6}
    \end{align*}
    \label{lem:smoothness_bound_term_2}
\end{lemma}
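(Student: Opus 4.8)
The plan is to reduce the quantity $\E_{(x,\beta)\sim p(x,\beta)}\|(\smo^+\smo)\nabla_\theta\log p_\theta(x,\beta)\|_2^2$ to moment bounds for a single Gaussian component, exactly as in the proof of Lemma~\ref{lem:smoothness_bound_term_1}. First I would unpack the operator $\smo^+\smo$. Since $\generator$ is the generator of CTLD and $\smo=(-\generator)^{1/2}$ is self-adjoint, we have $\smo^+\smo = -\generator$, and by Proposition~\ref{l:dirichletl}/Proposition~\ref{p:ctlddirichlet} the generator of reflected Langevin on $p(x,\beta)$ acts on a function $g$ as $-\generator g = -\Delta_{(x,\beta)} g - \langle \nabla_{(x,\beta)}\log p(x,\beta),\nabla_{(x,\beta)} g\rangle$. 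Applying this to each coordinate $g=\nabla_{\theta_j}\log p_\theta(x,\beta)$, the resulting expression is a sum of terms, each of which is a product of (at most second-order) partial derivatives in $x$ and $\beta$ of $\log p_\theta(x|\beta)$ with partial derivatives in $x,\beta$ and $\mu_i$ of $\log p_\theta(x|\beta)$ (using that $\log p_\theta(x,\beta)=\log p_\theta(x|\beta)+\log r(\beta)$, and that $\log r(\beta)$ has bounded derivatives by the explicit formula, with $\poly(D,1/\smin)$ bounds).

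Next I would invoke the Fokker--Planck identity (Lemma~\ref{l:logp_fokker_planck}) to rewrite every $\beta$-derivative of $\log p_\theta(x|\beta)$ in terms of $x$-derivatives of $\log p_\theta(x|\beta)$ (at the cost of a $\smin$ factor and raising the order of $x$-differentiation by a constant amount), so that the entire integrand is a polynomial in $x$- and $\mu_i$-partial derivatives of $\log p_\theta(x|\beta)$ of bounded degree and bounded total order. Then, after expanding the square and using $\|a+b\|^2\le 2(\|a\|^2+\|b\|^2)$ repeatedly together with Cauchy--Schwarz to split products, it suffices to bound a finite list of terms of the form $\E_{(x,\beta)\sim p(x,\beta)}\big\|\,\partial_x^{k}\partial_{\mu_i}^{l}\log p_\theta(x|\beta)\,\big\|^{2m}$ (and analogues with a Laplacian in $x$) for constants $k,l,m$. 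For each such term I would apply Lemma~\ref{lem:persp_ineq_linear} with the appropriate differentiation operator $D$ and power, reducing the expectation over the mixture $p(x|\beta)$ to a maximum over components $\max_{\beta,i}\E_{x\sim p(x|\beta,i)}$.

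Finally, for a single Gaussian component $p(x|\beta,i)=\mathcal{N}(x;\mu_i,\Sigma_\beta)$, I would bound $\big\|\partial_x^{k}\partial_{\mu_i}^{l}\log p_\theta(x|\beta,i)\big\|$ by Corollary~\ref{c:hermite_logderivative} (the logarithmic-derivative Fa\`a di Bruno bound) in terms of powers of $\big\|\tfrac{\partial_x^{k'}\partial_{\mu_i}^{l'}p(x|\beta,i)}{p(x|\beta,i)}\big\|$ for $k'\le k, l'\le l$, and then bound those raw higher-order score moments by Corollary~\ref{cor:hermite_norm_bound} (the Hermite-polynomial moment bounds), which gives $\poly(d,1/\smin)$ — with no dependence on $K$ or $w_{\min}$, and with no $D$-dependence for these pure component-score terms (the $D$-dependence entering only through the $\log r(\beta)$ terms). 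Assembling the finitely many contributions yields the claimed $\poly(d,1/\smin)$ (more precisely $\poly(d,D,1/\smin)$, absorbed in the stated bound).

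The main obstacle I anticipate is purely bookkeeping: carefully writing out $\smo^+\smo = -\generator$ acting coordinatewise on $\nabla_\theta\log p_\theta$, tracking which orders of $x$- and $\mu_i$-derivatives appear after the Fokker--Planck substitution, and confirming that every term is indeed reducible via the perspective inequality (i.e. that the operator applied is linear in $p_\theta$ before taking the log) — in particular being careful that terms like $\langle\nabla_x\log p_\theta,\nabla_x\nabla_{\theta_j}\log p_\theta\rangle$ are products of log-derivatives, not of linear-in-$p_\theta$ quantities, so one must first bound each log-derivative factor pointwise by Corollary~\ref{c:hermite_logderivative} and only then apply Lemma~\ref{lem:persp_ineq_linear} to the resulting monomials in the raw scores. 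None of the individual estimates is hard; the care is in the reduction.
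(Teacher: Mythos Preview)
Your overall strategy---reduce to moments of single Gaussian components via the perspective inequality (Lemma~\ref{lem:persp_ineq_linear}), then bound those via the Hermite machinery (Corollary~\ref{cor:hermite_norm_bound}) and the Fa\`a di Bruno bound (Corollary~\ref{c:hermite_logderivative})---is exactly what the paper does. However, your identification of $\smo^+\smo$ is off, and this makes your route more complicated than necessary.

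The paper takes $\smo=\nabla_{(x,\beta)}$ concretely (this is the operator satisfying the compatibility condition $\E_p\|\smo g\|^2=-\langle g,\generator g\rangle_p$ for CTLD; see Proposition~\ref{p:ctldloss} and \ref{l:ibpctld}). The adjoint $\smo^+$ in Lemma~\ref{l:ibplyu} is the $L^2$(Lebesgue) adjoint, so $\smo^+\smo=-\Delta_{(x,\beta)}$, \emph{not} $-\generator$. Your claim $\smo^+\smo=-\generator$ would be correct only if the adjoint were taken in $L^2(p)$, which is not what the integration-by-parts formula (and hence Lemma~\ref{l:smoothness}) uses. With the correct identification the paper gets simply
\[
(\smo^+\smo)\nabla_\theta\log p_\theta(x,\beta)=\nabla_\theta\Delta_{(x,\beta)}\log p_\theta(x,\beta)=\nabla_\theta\Delta_x\log p_\theta(x|\beta)+\nabla_\theta\nabla_\beta^2\log p_\theta(x|\beta),
\]
the $r(\beta)$ terms vanishing under $\nabla_\theta$. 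Your drift term $\langle\nabla_{(x,\beta)}\log p(x,\beta),\nabla_{(x,\beta)}\nabla_\theta\log p_\theta\rangle$ is therefore spurious.

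This is not fatal: those extra drift terms are precisely of the type already handled in Lemma~\ref{lem:smoothness_bound_term_1}, so your machinery bounds them too. But it costs you: the $\nabla_\beta\log r(\beta)$ factor hidden in the drift introduces a $D$-dependence, so you end up with $\poly(d,D,1/\smin)$ rather than the $\poly(d,1/\smin)$ stated in the lemma (your parenthetical remark that the $D$ can be ``absorbed'' is not quite right at the level of this lemma, though it does not affect Theorem~\ref{thm:smoothness_bound}). The paper's cleaner identification avoids this entirely.
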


\begin{proof} Since $\smo^+ \smo = \Delta_{(x, \beta)}$, we have
\begin{align}
    & (\smo^+ \smo) \nabla_{\theta} \log p_{\theta} (x, \beta) \nonumber\\
    & = \nabla_{\theta} \Delta_{(x, \beta)} \log p_{\theta} (x, \beta) \label{eq:exchangeorderdelta}\\
    & = \nabla_{\theta} \Delta_{x} \log p_{\theta} (x, \beta) + \nabla_{\theta} \nabla^2_\beta \log p_{\theta} (x, \beta) \label{eq:betaonedim}\\
    & = \nabla_{\theta} \Delta_{x} \log p_{\theta} (x | \beta)
    + \nabla_{\theta} \Delta_{x} \log r(\beta)
    + \nabla_{\theta} \nabla^2_\beta \log p_{\theta} (x | \beta) 
    + \nabla_{\theta} \nabla^2_\beta \log r(\beta) \nonumber\\
    & = \nabla_{\theta} \Delta_{x} \log p_{\theta} (x | \beta) 
    + \nabla_{\theta} \nabla^2_\beta \log p_{\theta} (x | \beta) \label{eq:dropzeroterms} 
\end{align}
where \eqref{eq:exchangeorderdelta} follows by exchanging the order of derivatives, \eqref{eq:betaonedim} since $\beta$ is a scalar, so the Laplacian just equals to the Hessian, \eqref{eq:dropzeroterms} by dropping the derivatives that are zero in the prior expression.  

To bound both summands above, we will essentially first use Lemma~\ref{lem:persp_ineq_linear} to relate moments over the mixture, with moments over the components of the mixture. Subsequently, we will use estimates for a single Gaussian, i.e. %Lemmas~\ref{lem:hermite_norm_bound_grad_mu_grad_x} and \ref{lem:hermite_norm_bound_grad_mu_laplace_x} and 
Corollaries~\ref{c:hermite_logderivative} and \ref{cor:hermite_norm_bound}. Precisely, we have: 
\begin{align}
    & \E_{(x, \beta) \sim p(x, \beta)} \|(\smo^+ \smo) \nabla_{\theta} \log p_{\theta}\|_2^2 \nonumber\\
    & \lesssim \E_{(x, \beta) \sim p(x, \beta)} \| \nabla_{\theta} \Tr (\nabla_x^2 \log p_{\theta} (x | \beta)) \|_2^2 
    + \E_{(x, \beta) \sim p(x, \beta)} \| \nabla_{\theta} \nabla^2_\beta \log p_{\theta} (x | \beta) \|_2^2 \nonumber\\
    &\lesssim \max_{\beta, i} \E_{x \sim p(x | \beta, i)} \left\|
    \frac{\nabla_{\theta} \Delta_x p_{\theta}(x | \beta, i)}{p_{\theta}(x | \beta, i)} \right\|_2^2 
    + \max_{\beta, i} \E_{x \sim p(x | \beta, i)} \left\| \frac{\nabla_{\theta} \nabla_x p_{\theta} (x | \beta, i)}{p_{\theta} (x | \beta, i)} \right\|_{OP}^4 \label{eq:jensenconvex3} \\
    &\leq \poly(d, 1/\smin) \label{eq:finalbound_opluso}
\end{align} 
where \eqref{eq:jensenconvex3} follows from Lemma~\ref{lem:persp_ineq_linear} and Lemma \ref{l:logp_fokker_planck}, and \eqref{eq:finalbound_opluso} follows by combining Corollaries~\ref{c:hermite_logderivative} and \ref{cor:hermite_norm_bound}.
\end{proof}

\section{Technical Lemmas}

\subsection{Moments of a chi-squared random variable}

For the lemmas in this subsection, we consider a random variable $z \sim \mathcal{N} (0, I_d)$ and random variable $x \sim \mathcal{N} (\mu, \Sigma)$ where $\Vert \mu \Vert \leq D$ and $\Sigma \preceq \sigma_{\max}^2 I$.

\begin{lemma}[Norm of Gaussian]  The random variable $z$ enjoys the bound
    \begin{align*}
        \E \| z \|_2 & \leq \sqrt{d}
    \end{align*}
\end{lemma}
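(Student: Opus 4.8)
The plan is to bound $\E\|z\|_2$ for $z \sim \mathcal{N}(0, I_d)$ by a simple application of Jensen's inequality. First I would note that the map $t \mapsto \sqrt{t}$ is concave on $[0,\infty)$, so by Jensen's inequality $\E\|z\|_2 = \E\sqrt{\|z\|_2^2} \leq \sqrt{\E\|z\|_2^2}$. Then I would compute $\E\|z\|_2^2 = \sum_{i=1}^d \E z_i^2 = d$, since each coordinate $z_i$ is a standard one-dimensional Gaussian with $\E z_i^2 = 1$. Combining these two facts gives $\E\|z\|_2 \leq \sqrt{d}$, as claimed.

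There is essentially no obstacle here — this is a one-line consequence of Jensen's inequality together with the elementary moment computation for a chi-squared random variable with $d$ degrees of freedom (which is what $\|z\|_2^2$ is). The only thing worth stating carefully is the direction of the Jensen inequality (concavity of $\sqrt{\cdot}$ yields an upper bound on $\E\sqrt{X}$ in terms of $\sqrt{\E X}$).

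\begin{proof}
    Since $t \mapsto \sqrt{t}$ is concave on $[0,\infty)$, Jensen's inequality gives
    \begin{align*}
        \E \| z \|_2 = \E \sqrt{\| z \|_2^2} \leq \sqrt{\E \| z \|_2^2}.
    \end{align*}
    Writing $z = (z_1, \dots, z_d)$ with each $z_i \sim \mathcal{N}(0,1)$ independent, we have
    \begin{align*}
        \E \| z \|_2^2 = \sum_{i=1}^d \E z_i^2 = d.
    \end{align*}
    Combining the two displays yields $\E \| z \|_2 \leq \sqrt{d}$.
\end{proof}
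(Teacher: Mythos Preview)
Your proof is correct and follows essentially the same approach as the paper: apply Jensen's inequality (concavity of $\sqrt{\cdot}$, equivalently $(\E\|z\|_2)^2 \le \E\|z\|_2^2$) and then compute $\E\|z\|_2^2 = d$.
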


\begin{proof}
    \begin{align}
        \left( \E \| z \|_2 \right)^2 
        & \leq \E \| z \|_2^2 \label{eq:jensen}\\
        & = \E \sum_{i=1}^d z_i^2 \nonumber \\
        & = d \label{eq:chi_square_var}
    \end{align}
    where \eqref{eq:jensen} follows from Jensen, and \eqref{eq:chi_square_var} by plugging in the mean of a chi-squared distribution with $d$ degree of freedom.
\end{proof}

\begin{lemma}[Moments of Gaussian]
    Let $z \sim \mathcal{N}(0, I_d)$. For $l \in \mathbb{Z}^+$, $\E \| z \|_2^{2l} \lesssim d^l$.
    \label{lem:chi_squared_moment_bound}
\end{lemma}

\begin{proof}
    The key observation required is $\| z \|_2^2 = \sum_{i=1}^d z_i^2$ is a Chi-Squared distribution of degree $d$.
    
    \begin{align*}
        \E \| z \|_2^{2l}
        & = \E \left( \| z \|_2^2 \right)^l = \E_{q \sim \chi^2(d)} q^l \\
        & = \frac{(d+2l-2)!!}{(d-2)!!}
        \leq (d+2l-2)^l \\
        & \lesssim d^l
    \end{align*}
\end{proof}

\iffalse
\subsection{Operator norm of sub-matrices}

\begin{lemma}
    Let $A = \begin{bmatrix}
        A_1 \\
        A_2
    \end{bmatrix}$. Then
    $\| A \|_{OP}^2 \leq \| A_1 \|_{OP}^2 + \| A_2 \|_{OP}^2$.
    \label{lem:submatrix_op_norm}
\end{lemma}

\begin{proof}
    \begin{align*}
        \| A \|_{OP}^2 
        & = \sup_{\| v \|_2 \leq 1} \| Av \|_2^2 \\
        & = \sup_{\| v \|_2 \leq 1} \| A_1 v \|_2^2 + \| A_2 v \|_2^2 \\
        & \leq \sup_{\| v \|_2 \leq 1} \| A_1 v \|_2^2 + \sup_{\| v \|_2 \leq 1} \| A_2 v \|_2^2 \\
        & = \| A_1 \|_{OP}^2 + \| A_2 \|_{OP}^2
    \end{align*}
\end{proof}
\fi
\section{Additional related work} 

\paragraph{Decomposition theorems and mixing times} 
The mixing time bounds we prove for CTLD rely on decomposition techniques. At the level of the state space of a Markov Chain, these techniques ``decompose'' the Markov chain by partitioning the state space into sets, such that: (1) the mixing time of the Markov chain inside the sets is good; (2) the ``projected'' chain, which transitions between sets with probability equal to the probability flow between sets, also mixes fast. These techniques also can be thought of through the lens of functional inequalities, like Poincar\'e and Log-Sobolev inequalities. Namely, these inequalities relate the variance or entropy of functions to the Dirichlet energy of the Markov Chain: the decomposition can be thought of as decomposing the variance/entropy inside the sets of the partition, as well as between the sets.    

Most related to our work are \cite{ge2018simulated, moitra2020fast, madras2002markov}, who largely focus on decomposition techniques for bounding the Poincar\'e constant. Related ``multiscale'' techniques for bounding the log-Sobolev constant have also appeared in the literature \cite{otto2007new, lelievre2009general, grunewald2009two}. 

\paragraph{Learning mixtures of Gaussians} 
Even though not the focus of our work, the annealed score-matching estimator with the natural parametrization (i.e. the unknown means) can be used to learn the parameters of a mixture from data. This is a rich line of work with a long history. Identifiability of the parameters from data has been known since the works of \cite{teicher1963identifiability, yakowitz1968identifiability}.  Early work in the theoretical computer science community provided guarantees for clustering-based algorithms \citep{dasgupta1999learning, sanjeev2001learning}; subsequent work provided polynomial-time algorithms down to the information theoretic threshold for identifiability based on the method of moments \citep{moitra2010settling, belkin2010polynomial}; even more recent work tackles robust algorithms for learning mixtures in the presence of outliers \citep{hopkins2018mixture, bakshi2022robustly}; finally, there has been a lot of interest in understanding the success and failure modes of practical heuristics like expectation-maximization \citep{balakrishnan2017statistical, daskalakis2017ten}.   

% \paragraph{Speeding up mixing via tempering techniques} There are many related techniques for constructing Markov Chains by introducing an annealing parameter (typically called a ``temperature''). Our chain is augmented by a temperature random variable, akin to the simulated tempering chain proposed by \cite{marinari1992simulated}. In parallel tempering \citep{swendsen1986replica,hukushima1996exchange}, one maintains multiple particles (replicas), each evolving according to the Markov Chain at some particular temperature, along with allowing swapping moves. Sequential Monte Carlo \citep{yang2013sequential} is a related technique available when gradients of the log-likelihood can be evaluated.    

% Analyses of such techniques are few and far between. Most related to our work, \cite{ge2018simulated} analyze a variant of simulated tempering when the data distribution looks like a mixture of (unknown) Gaussians with identical covariance, and can be accessed via gradients to the log-pdf. We compare in more detail to this work in Section~\ref{s:overviewctld}. In the discrete case (i.e. for Ising models),  \cite{woodard2009conditions, woodard2009sufficient} provide some cases in which simulated and parallel tempering provide some benefits to mixing time.

\paragraph{Techniques to speed up mixing time of Markov chains} 

SDEs with different choices of the drift and covariance term are common when designing faster mixing Markov chains. A lot of such schemas ``precondition'' by a judiciously chosen $D(x)$ in the formalism of equation \eqref{eq:statito}. A particularly common choice is a Newton-like method, which amounts to preconditioning by the Fisher matrix \citep{girolami2011riemann, li2016preconditioned, simsekli2016stochastic}, or some cheaper approximation thereof. More generally, non-reversible SDEs by judicious choice of $D, Q$ have been shown to be quite helpful practically \citep{ma2015complete}.

``Lifting'' the Markov chain by introducing new variables is also a very rich and useful paradigms. 
There are many related techniques for constructing Markov Chains by introducing an annealing parameter (typically called a ``temperature''). Our chain is augmented by a temperature random variable, akin to the simulated tempering chain proposed by \cite{marinari1992simulated}. In parallel tempering \citep{swendsen1986replica,hukushima1996exchange}, one maintains multiple particles (replicas), each evolving according to the Markov Chain at some particular temperature, along with allowing swapping moves. Sequential Monte Carlo \citep{yang2013sequential} is a related technique available when gradients of the log-likelihood can be evaluated.    

Analyses of such techniques are few and far between. Most related to our work, \cite{ge2018simulated} analyze a variant of simulated tempering when the data distribution looks like a mixture of (unknown) Gaussians with identical covariance, and can be accessed via gradients to the log-pdf. We compare in more detail to this work in Section~\ref{s:overviewctld}. In the discrete case (i.e. for Ising models),  \cite{woodard2009conditions, woodard2009sufficient} provide some cases in which simulated and parallel tempering provide some benefits to mixing time.

Another way to ``lift'' the Markov chain is to introduce a velocity variable, and come up with ``momentum-like'' variants of Langevin. The two most widely known ones are underdamped Langevin and Hamiltonian Monte Carlo. There are many recent results showing (both theoretically and practically) the benefit of such variants of Langevin, e.g. \citep{chen2019optimal, cao2023explicit}. The proofs of convergence times of these chains is unfortunately more involved than merely a bound on a Poincar\'e constant (in fact, one can prove that they don't satisfy a Poincar\'e constant) --- and it's not so clear how to ``translate'' them into a statistical complexity analysis using the toolkit we provide in this paper. This is fertile ground for future work, as score losses including a velocity term have already shown useful in training score-based models \citep{dockhorn2021score}.

\end{document}